\newcommand{\w}{\mathbf{w}}
\newcommand{\e}{\exp}
\newcommand{\R}{\mathbb{R}}
\newcommand{\vect}[1]{\mathbf{#1}}
\newcommand{\what}{\hat{\mathbf{w}}}
\newcommand{\wtilde}{\tilde{\mathbf{w}}}
\newcommand{\wcheck}{\bar{\mathbf{w}}}
\newcommand{\wcheckSec}{{\wcheck}_2}
\newcommand{\htilde}{\tilde{h}}
\newcommand{\ftilde}{\tilde{f}}
\newcommand{\wvec}{{\mathbf{w}}}
\newcommand{\rvec}{{\mathbf{r}}}
\newcommand{\bm}[1]{\ensuremath{\boldsymbol{#1}}}
\newcommand{\rhoVec}{\bm{\rho}(t)}
\newcommand{\whatNorm}{\Vert\what\Vert}
\newcommand{\wNorm}{\Vert \wvec(t)\Vert}
\newcommand{\x}{{\mathbf{x}}}
\newcommand{\innerprod}[2]{#1^{\top}#2}
\newcommand{\cL}{\mathcal{L}}
\renewcommand{\c}{\mathcal}
\newcommand{\norm}[1]{\left\lVert#1\right\rVert}
\newcommand{\abs}[1]{\left\lvert#1\right\rvert}
\renewcommand{\b}{\mathbb}
\newcommand{\genericConstVec}{{\bm{a}}}
\newcommand{\xn}{{\mathbf{x}_n}}
\newcommand{\xk}{{\mathbf{x}_k}}
\newcommand{\xnT}{{\mathbf{x}_n^\top }}
\newcommand{\xkT}{{\mathbf{x}_k^\top }}
\newcommand{\sumn}{\sum\limits_{n=1}^N}
\newcommand{\sumnsv}{\sum\limits_{n\in \mathcal{S}}}
\newcommand{\sumnnsv}{\sum\limits_{n \notin \mathcal{S}}}
\newcommand{\set}{\mathcal{S}}
\newcommand{\sumnnsvp}{\sum\limits_{ n \notin \mathcal{S} \atop  \xnT \rvec(t)\ge 0}}
\newcommand{\derL}{\nabla \mathcal{L}}
\newcommand{\lossPower}{\nu}
\DeclareMathOperator*{\argmin}{\arg\!\min}
\newcommand{\fsym}{\psi}
\newcommand{\fsymSec}{\zeta}
\newcommand{\fsymThird}{\tau}
\newcommand{\eqw}{\vect{w}} 
\newcommand{\np}{\mathcal{W}} 
\newcommand{\lw}[1]{\vect{W}_{#1}}
\newcommand{\lwinfty}[1]{\overline{\vect{W}}_{#1}^{\infty}}
\newcommand{\winfty}{\overline{\vect{W}}_{\infty}}
\newcommand{\bdelta}{\bm{\delta}}
\newcommand{\note}[1]{{}}
\newcommand{\ip}[2]{\left\langle#1,#2\right\rangle}
\renewcommand{\P}{\mathcal{P}}
\newcommand{\winf}{\bar{\wvec}_{\infty}}
\newcommand{\op}{\vect{Q}} 
\newcommand{\setinf}{\mathcal{S}^\prime}
\newcommand{\sumnsvinf}{\sum\limits_{n\in \setinf}}
\newcommand{\sumnnsvinf}{\sum\limits_{n \notin \setinf}}
\newtheorem{theorem}{Theorem}
\newtheorem{assumption}{Assumption}
\newtheorem{remark}{Remark}
\newtheorem{lemma}{Lemma}
\newtheorem{definition}{Definition}
\newtheorem{claim}{Claim}
\newtheorem{corollary}{Corollary}
\newcommand{\dnote}[1]{}
\newcommand{\mnote}[1]{}
\newcommand{\jnote}[1]{}
\newcommand{\snote}[1]{}
\newcommand{\remove}[1]{{}}
\begin{document}
\twocolumn[

\aistatstitle{Convergence of Gradient Descent on Separable Data}
\runningauthor{Mor Shpigel Nacson, Jason D. Lee, Suriya Gunasekar, Pedro H. P. Savarese, Nathan Srebro, Daniel Soudry}

\aistatsauthor{ Mor Shpigel Nacson\textsuperscript{1} \ \ \  Jason D. Lee\textsuperscript{2} \ \ \  Suriya Gunasekar\textsuperscript{3} \ \ \  Pedro H. P. Savarese\textsuperscript{3} \ \ \ Nathan Srebro\textsuperscript{3} \ \ \ Daniel Soudry\textsuperscript{1}}

\aistatsaddress{ \textsuperscript{1}Technion, Israel,\ \ \ \  \textsuperscript{2}USC Los Angeles, USA,\ \ \ \ \textsuperscript{3}TTI Chicago, USA} 
]

\begin{abstract}
We provide a detailed study on the implicit bias of gradient descent when optimizing loss functions with strictly monotone tails, such as the logistic loss, over separable datasets. We look at two basic questions: (a) what are the conditions on the tail of the loss function under which gradient descent converges in the direction of the $L_2$ maximum-margin separator? (b) how does the rate of margin convergence depend on the tail of the loss function and the choice of the step size? We  show that for a large family of super-polynomial tailed losses, gradient descent iterates on linear networks of any depth converge in the direction of $L_2$ maximum-margin solution, while this does not hold for losses with heavier tails. Within this family, for simple linear models we show that the optimal rates with fixed step size is indeed obtained for the commonly used exponentially tailed losses such as logistic loss. However, with a fixed step size the optimal convergence rate is extremely slow as $1/\log(t)$, as also proved in \citet{soudry2017implicit}. 
For linear models with exponential loss, we further prove that the convergence rate could be improved to $\log (t) /\sqrt{t}$  by using aggressive step sizes that compensates for the rapidly vanishing gradients. Numerical results suggest this method might be useful for deep networks.

\remove{
The implicit bias of gradient descent is not fully understood even in simple linear classification tasks (e.g., logistic regression). \cite{soudry2017implicit} studied this bias on separable data, where there are multiple solutions that correctly classify the data. It was found that, when optimizing monotonically decreasing loss functions with exponential tails using gradient descent, the linear classifier specified by the gradient descent iterates converge to the $L_2$ max margin separator. 
However, the convergence rate to the maximum-margin solution with fixed step size was found to be extremely slow: $1/\log(t)$. 

Here we examine how the convergence is influenced by (a) using  different loss functions and (b) by  using variable step sizes. 
First, we calculate the convergence rate for loss functions with poly-exponential tails near $\exp(-u^{\nu})$. We prove that $\nu=1$ yields the optimal convergence rate in the range $\nu>0.25$. Based on further analysis we conjecture that this remains the optimal rate for $\nu \leq 0.25$, and even for sub-poly-exponential tails --- until loss functions with polynomial tails no longer converge to the max margin. 
Second, we prove the convergence rate could be improved to $(\log t) /\sqrt{t}$ for the exponential loss, by using aggressive step sizes which compensate for the rapidly vanishing gradients. }
\end{abstract}

	\section{INTRODUCTION}
    In learning over-parameterized models, where the training objective has multiple global optima, each optimization algorithm can have a distinct implicit bias. Hence, different algorithms learn different models with  different generalization to the population loss. This effect of the implicit bias of the optimization algorithm on the learned model is particularly prominent in deep learning, where the generalization or the inductive bias is not sufficiently driven by explicit regularization or restrictions on the model capacity \citep{neyshabur2015search,zhang2017understanding,Hoffer2017}. Thus, in order to understand what is the true inductive bias in such high capacity models, it is important to rigorously understand how optimization affects the implicit bias. 
    
    Consider learning a homogeneous linear predictor $\x\to\innerprod{\w}{\x}$ using  unregularized logistic regression over separable data. For this problem, \citet{soudry2017implicit}  showed that the gradient descent iterates converge in direction to the maximum-margin separator with unit $L_2$ norm, and this implicit bias holds independently of initialization and step size (given the step size is small enough). This is exactly the solution of the homogeneous hard margin support vector machine (SVM) where the $L_2$ norm constraint on the parameters $\w$ is explicitly added. 
    More surprisingly, 
    \citet{soudry2017implicit} also showed that the rate of convergence to the maximum-margin solution is $O(1/\log{(t)})$. This is much slower compared to the rate of convergence of the loss function itself, which is shown to be $O(1/t)$. This implies that the classification boundary of logistic regression, and hence the generalization of the classifier, continues to change long after the  $0$-$1$ error on training examples has diminished to zero, or the logistic loss is very small. 
    In a follow up work, \citet{gunasekar2018conv} showed that for exponential loss, gradient descent on fully connected deep linear networks also has the same bias asymptotically. However, the convergence rates were not analyzed in this work on deep linear networks. 
    
    Despite this recent line of interesting results, the implicit bias of gradient descent is not entirely understood even in simple linear classification tasks. For example, the analysis of \citet{soudry2017implicit} and \citet{gunasekar2018conv} crucially relied on strict monotonicity of the loss function to get an initialization--independent  characterization of the bias of gradient descent. However, in these works the results are derived specifically for tight exponential tailed losses and exponential loss, respectively. While exponential tailed losses such as logistic and cross entropy losses are indeed the most widely used losses in training deep neural networks, we do not yet know: Do such losses with tight exponential tail have a special significance? Can a similar convergence to maximum-margin separator be achieved by other strictly monotonic losses? How is the rate of convergence to maximum-margin solution affected by the tail? Are there other ways to accelerate the convergence?
    
    \remove{Here we provide a more detailed study of this problem, focusing on the rate of convergence of gradient descent to the maximum-margin solution.  First, we ask whether this result can be extended to different loss functions, with different tails, beyond the tight exponential tail of logistic loss or exponential loss: do we still get convergence to the $L_2$ maximum-margin separator? Does a heavier or lighter tail gives a faster rate of convergence?  
    
    We show that convergence to the $L_2$ maximum-margin solution can be extended to various losses with faster than polynomial tails, but not to losses with polynomial tails. However, our analysis suggests that the (popular) exponential tail is optimal in terms of the rates. We then focus on the optimal case of the exponential loss and ask whether we can accelerate the convergence to the maximum-margin by using more aggressive and variable step sizes.  The answer is yes, and we show that using normalized gradient updates, i.e. step size proportional to the inverse gradient, we can get rates as fast as $O(\log{t}/\sqrt{t})$ instead of $1/\log{t}$. Preliminary numerical results suggests we might also be able to improve similarly the convergence rates for deep networks.   
    }
    
 Here we provide a detailed study of this problem, focusing on the rate of convergence of the margin:
\begin{compactenum}
\item \textit{What are the conditions on the tail of the loss function under which gradient descent converges to the $L_2$ maximum-margin separator?} We show that convergence to the $L_2$ maximum-margin solution can be extended to losses with super polynomial tails, but not to losses with (sub) polynomial tails.
\item \textit{Does a heavier or lighter tail gives a faster rate of convergence?} In our analysis, losses with exponential tails, which include the commonly used logistic loss, can indeed be shown to have the optimal rate of convergence of the margin. 
\item \textit{Extensions to deep linear networks.} We show that similar analysis and the same asymptotic rates hold more generally for linear networks with fully connected layers. Interestingly, the results suggest that increasing the number of layers (depth)  decreases the convergence rate only marginally, even in the limit of infinite depth.
\item \textit{For exponential loss, which obtains the optimal margin convergence rate, can we accelerate the convergence to the maximum-margin by using variable step sizes?} The answer is yes, and we show that using normalized gradient updates, i.e., step size proportional to the inverse gradient, we can get a much faster rate of $O(\log{t}/\sqrt{t})$ instead of $1/\log{t}$. Experimental results suggest this improvement in rate over standard gradient descent  might also extend for non-linear neural networks. 
 \end{compactenum}

    \remove{
        \section{Setup and review of previous results \label{sec: prev-results}} 
    	\note{
    	\[
    	    \eqw(t)=\P(\np(t))=\lw{1}(t)\cdot...\cdot \lw{L}(t)\in \R^d
        \]
    	\begin{equation} \label{eq: Loss for net parameters}
    	    \mathcal{L}_\P\left(\np(t)\right) = \sumn \ell\left(y_n\ip{\P\left(\np(t)\right)}{\xn}\right)
    	\end{equation}
    	\begin{equation} \label{eq: GD dynamic}
    	    \np\left(t+1\right) = \np\left(t\right) - \eta\nabla_{\np}\mathcal{L}_{\P}\left(\np(t)\right)
    	\end{equation}
    	\begin{equation}
    	    \mathcal{L}\left(\eqw(t)\right) = \sumn \ell\left(y_n\ip{\eqw(t)}{\xn}\right)
    	\end{equation}
    	}
    	Consider a dataset $\left\{ \mathbf{x}_{n},y_{n}\right\} _{n=1}^{N}$,
        with binary labels $y_{n}\in\left\{ -1,1\right\} $. We denote the data matrix $\mathbf{X}=[\x_1,\dots,\x_N]$ and  $\sigma_{\max}\left(\text{\ensuremath{\mathbf{X}} }\right)$ as the maximal singular value of $\mathbf{X}$. We assume, without loss of generality that $\sigma_{\max}\left(\text{\ensuremath{\mathbf{X}} }\right) \leq 1$ and $\forall n: ||\x_n||<1$, where $||\cdot ||$ denotes the $L_2$ norm.
        
        We analyze learning by minimizing an empirical loss of the form 
        \begin{equation}
     \mathcal{L}\left(\mathbf{w}\right)=\sum_{n=1}^{N}\ell\left(y_{n}\mathbf{w}^{\top}\mathbf{x}_{n}\right)\,,\label{eq: general loss functions}
        \end{equation}
        where $\mathbf{w}\in\mathbb{R}^{d}$ is the weight vector. A bias
        term could be added in the usual way, extending $\mathbf{x}_{n}$
        by an additional `1' component. To simplify notation, we assume that
        $\forall n:\,y_{n}=1$ \textemdash{} this is  without loss of
        generality, since we can always re-define $y_{n}\mathbf{x}_{n}$ as
        $\mathbf{x}_{n}$.
        
    	The gradient descent (GD) iteration with fixed step size $\eta$ is given by
    	\begin{equation} \label{GD}
    	\wvec(t+1) = \wvec(t) - \eta \nabla \mathcal{L}(\wvec(t)) = \wvec(t) - \eta \sumn \ell'(\wvec(t)^\top \xn)\xn.
    	\end{equation}
    	
        We look at the iterates of GD on linearly separable datasets  with monotonic loss functions. 
    	\begin{definition} \label{def: the data is separable} [Linear separability]
    		The dataset is linearly separable if there exists a separator $\wvec_*$ such that $\forall n:\ \wvec_*^\top\xn>0$.
    	\end{definition}
    	\begin{definition} \label{def: l(u) assumptions}[Strict Monotone Loss]
    		$\ell(u)$ is a differentiable monotonically decreasing function bounded from below. Without loss of generality, let $\forall u,\,\ell\left(u\right)>0,\ \ell^{\prime}\left(u\right)<0$
    		and $\lim_{u\rightarrow\infty}\ell\left(u\right) =\lim_{u\rightarrow\infty}\ell^{\prime}\left(u\right)=0$ and $\lim_{u\rightarrow-\infty}\ell^{\prime}\left(u\right)\neq0$. 
    	\end{definition}
        \remove{
    We restate Lemma 1 from \cite{soudry2017implicit}, which gives the basic asymptotic behavior in this setting:
    
        \begin{lemma} \label{Lemma: w(t)->infty}
        Let $\mathbf{w}\left(t\right)$
        be the iterates of gradient descent (eq.~\eqref{GD})
        on a $\beta$-smooth $\mathcal{L}$ and any starting point $\wvec(0)$. If the data is linearly separable (Definition \ref{def: the data is separable}), $\ell$ is a strict monotone loss (Definition \ref{def: l(u) assumptions}), and $\eta<2\beta^{-1}$ then we have: (1) $\lim_{t\rightarrow\infty}\mathcal{L}\left(\mathbf{w}\left(t\right)\right)=0$,
    	(2) $\lim_{t\rightarrow\infty}\left\Vert \mathbf{w}\left(t\right)\right\Vert =\infty$, and
    	(3) $\forall n:\,\lim_{t\rightarrow\infty}\mathbf{w}\left(t\right)^{\top}\mathbf{x}_{n}=\infty$.
        \end{lemma}}
         For strictly monotone losses over separable data, there are no finite global minima of the objective in eq.~\eqref{eq: general loss functions}, and gradient descent iterates will diverge to infinity. While the norm of the iterates $\norm{\w(t)}$ diverges, the classification boundary is entirely specified by the direction of $\w(t)/\norm{\w(t)}$. Can we say something interesting about which direction the iterates $\w(t)$ converge to? \cite{soudry2017implicit}  characterized this direction for loss function with exponential tails, defined below,
      
        	\begin{definition} \label{def: exponential tail} [Tight Exponential Tail]	A function $f(u)$ has a ``tight exponential tail", if there exist positive constants $\mu_+,\mu_-$, and $\bar{u}$ such that $\forall u>\bar{u}$:
    		$$(1-\e(-\mu_- u))e^{-u} \le f(u) \le (1+\e(-\mu_+ u))e^{-u} $$ 
            
    	\end{definition}
        
        \begin{definition} \label{def: beta smooth} [$\beta$-smooth function]	A function is $\beta$-smooth if its derivative is Lipschitz continuous with a Lipschitz constant $\beta$.
    	\end{definition}
    
        \begin{theorem}[Theorem 3 in \cite{soudry2017implicit}, rephrased]\label{theorem: ICLR theorem 3} For almost all datasets that are linearly separable (Definition \ref{def: the data is separable}), and any $\beta$-smooth $\mathcal{L}$ (Definition \ref{def: beta smooth}) with a strictly monotone loss function $\ell$ (Definition \ref{def: l(u) assumptions}), for which $-\ell^{\prime}$ has a tight exponential tail (Definition \ref{def: exponential tail}), the gradient descent iterates in eq.~\eqref{GD}  with any step size $\eta<2\beta^{-1}$ and any initialization $\w(0)$ will behave as:
        \begin{equation}
        \mathbf{w}\left(t\right)=\hat{\mathbf{w}}\log t+\boldsymbol{\rho}\left(t\right),\label{eq: asymptotic form}
        \end{equation}
        where the residual $\boldsymbol{\rho}\left(t\right)$ is bounded 
        and $\hat{\mathbf{w}}$ is the following $L_{2}$ maximum-margin separator:
        \begin{equation}\label{w_hat equation}
        \hat{\mathbf{w}}=\underset{\mathbf{\mathbf{w}}\in\mathbb{R}^{d}}{\mathrm{argmin}}\left\lVert \mathbf{w}\right\rVert^2 \,\,\mathrm{s.t.}\,\,\mathbf{w}^{\top}\mathbf{x}_{n}\geq1.
        \end{equation}
        \end{theorem}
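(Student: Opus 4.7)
The plan is to use the ansatz
\[
\mathbf{w}(t) = \hat{\mathbf{w}} \log t + \tilde{\mathbf{w}} + \boldsymbol{\rho}(t),
\]
where $\tilde{\mathbf{w}}$ is a fixed bounded vector to be determined and $\boldsymbol{\rho}(t)$ is a bounded residual. I would first invoke the (removed) Lemma~1 from \citet{soudry2017implicit}, which already guarantees under $\eta<2\beta^{-1}$ that $\mathcal{L}(\mathbf{w}(t))\to 0$, $\|\mathbf{w}(t)\|\to\infty$, and $\mathbf{w}(t)^\top\mathbf{x}_n\to\infty$ for every $n$. Combined with the tight exponential tail assumption on $-\ell'$, this means that $-\ell'(\mathbf{w}(t)^\top\mathbf{x}_n) = e^{-\mathbf{w}(t)^\top\mathbf{x}_n}(1+o(1))$, so asymptotically the GD update is driven by something that looks like an exponentially weighted sum of the $\mathbf{x}_n$.

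The next step is to identify $\tilde{\mathbf{w}}$. Let $\mathcal{S}$ denote the support vectors of the hard-margin SVM in \eqref{w_hat equation}, i.e.\ indices with $\hat{\mathbf{w}}^\top\mathbf{x}_n=1$. KKT conditions give $\hat{\mathbf{w}}=\sum_{n\in\mathcal{S}}\alpha_n\mathbf{x}_n$ with $\alpha_n>0$. Plugging the ansatz into the GD recursion and using $\hat{\mathbf{w}}\log(t+1)-\hat{\mathbf{w}}\log t=\hat{\mathbf{w}}/t + O(1/t^2)$, the matching of $1/t$ terms becomes
\[
\hat{\mathbf{w}} \;=\; \eta\sum_{n\in\mathcal{S}} e^{-\tilde{\mathbf{w}}^\top \mathbf{x}_n}\mathbf{x}_n ,
\]
since for $n\notin\mathcal{S}$ one has $\hat{\mathbf{w}}^\top\mathbf{x}_n>1$ and the corresponding exponential factor is $o(1/t)$ and therefore negligible. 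The genericity qualifier ``for almost all datasets'' is precisely what ensures that the support vectors $\{\mathbf{x}_n\}_{n\in\mathcal{S}}$ are in sufficiently general position (linearly independent after accounting for the KKT multipliers), so that the system $\tilde{\mathbf{w}}^\top\mathbf{x}_n = \log(\eta/\alpha_n)$ for $n\in\mathcal{S}$ has a solution, uniquely determining the component of $\tilde{\mathbf{w}}$ in $\mathrm{span}(\mathbf{x}_n:n\in\mathcal{S})$.

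The last and by far the hardest step is to show that the residual $\boldsymbol{\rho}(t)$, defined by $\boldsymbol{\rho}(t) := \mathbf{w}(t)-\hat{\mathbf{w}}\log t - \tilde{\mathbf{w}}$, stays bounded. Writing the GD recursion for $\boldsymbol{\rho}(t)$ yields
\[
\boldsymbol{\rho}(t+1) - \boldsymbol{\rho}(t) = -\eta\nabla\mathcal{L}(\mathbf{w}(t)) - \hat{\mathbf{w}}\bigl[\log(t+1)-\log t\bigr].
\]
By construction of $\tilde{\mathbf{w}}$, the leading $1/t$ contribution from the support vectors cancels the $\hat{\mathbf{w}}/t$ term exactly up to factors of the form $(e^{-\boldsymbol{\rho}(t)^\top\mathbf{x}_n}-1)$. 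The contributions from $n\notin\mathcal{S}$ are summable in $t$ (they decay like $t^{-\hat{\mathbf{w}}^\top\mathbf{x}_n}$ with exponent $>1$), so they can only add a bounded perturbation. What remains is to show that the support-vector feedback term is contractive in $\boldsymbol{\rho}$: this is the key estimate. I would project $\boldsymbol{\rho}(t)$ onto $\mathrm{span}(\mathbf{x}_n:n\in\mathcal{S})$ and its orthogonal complement; on the orthogonal complement the dynamics is frozen to leading order (so that component is determined by initialization and the summable tail), while on the support-vector span the map $\boldsymbol{\rho}\mapsto\sum_{n\in\mathcal{S}}(e^{-\boldsymbol{\rho}^\top\mathbf{x}_n}-1)\mathbf{x}_n$ is, to first order, a negative-definite linear map, yielding a discrete Lyapunov/contraction argument in $\|\boldsymbol{\rho}(t)\|$. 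The main obstacle is exactly this last contraction estimate: one must control the nonlinear $e^{-\boldsymbol{\rho}^\top\mathbf{x}_n}$ terms, the $O(1/t^2)$ discretization errors, and the non-support-vector noise simultaneously and show they feed into a bounded-orbit argument. Once boundedness of $\boldsymbol{\rho}(t)$ is established, the theorem follows immediately by absorbing the constant $\tilde{\mathbf{w}}$ into $\boldsymbol{\rho}(t)$.
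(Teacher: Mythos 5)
Your plan is essentially the paper's own route (the Soudry et al.\ argument that this theorem restates, and which Appendix \ref{sec:main-proof} reproduces and generalizes to poly-exponential tails): the same ansatz $\w(t)=\what\log t+\wtilde+\bm{\rho}(t)$ with $\wtilde$ fixed by $\eta e^{-\wtilde^{\top}\x_n}=\alpha_n$ on the support vectors (solvable for almost all datasets, which is exactly where the genericity assumption enters), and boundedness of the residual obtained because the support-vector feedback opposes $\bm{\rho}$ while the non-support-vector and discretization terms are summable. The paper implements your ``contraction'' step as a Lyapunov-type bound on $\Vert\rvec(t)\Vert^2$ — using $\sum_t\Vert\nabla\cL(\w(t))\Vert^2<\infty$ for the quadratic term and a sign/case analysis on $\x_n^{\top}\rvec(t)$ (with the projections $\mathbf{P}_1,\bar{\mathbf{P}}_1$ you describe) for the cross term — so no genuinely different machinery is involved.
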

        
        Theorem \ref{theorem: ICLR theorem 3} holds for common classification loss functions, including the logistic loss and the exponential loss\footnote{Note that for exp-loss, $\mathcal{L}(\wvec)$ does not have a global $\beta$ smoothness parameter. However, if we initialize with $\eta < 1/ \mathcal{L}(\w(0))$ then it is straightforward to show the gradient descent iterates maintain bounded local smoothness $\beta(t) \leq \mathcal{L}(\w(t)) \leq  \mathcal{L}(\w(0))$, so we will have $\eta < \beta^{-1}(t)$ for all iterates.} and for all datasets except a measure zero set (e.g., with probability 1, for any dataset sampled from an absolutely continuous distribution). \cite{soudry2018journal} generalized this Theorem to include also the measure zero cases. Theorem \ref{theorem: ICLR theorem 3} implies logarithmically slow convergence in direction to the $L_2$ max-margin separator
            \begin{equation} \label{eq: direction convergence}
            \left \Vert \frac{\mathbf{w}\left(t\right)}{\left\Vert \mathbf{w}\left(t\right)\right\Vert}-\frac{\hat{\mathbf{w}}}{\left\Vert \hat{\mathbf{w}}\right\Vert}\right\Vert = O\left(\frac{1}{\log t}\right)
            \end{equation} 
            and in the margin
            \begin{equation}\label{eq: margin convergence}
             \min_{n} \frac{\w(t)^{\top} \x_n }{\norm{\w (t)}} = \gamma - O\left(\frac{1}{\log t}\right), \,\,\, \mathrm{with} \,\,\, \gamma=\max_{\w} \min_{n} \frac{\w^{\top} \x_n }{\norm{\w}} =\frac{1}{\whatNorm}.
            \end{equation}
        
        \cite{gunasekar2018implicit} further generalized this characterization to steepest descent with respect to an arbitrary norm, establishing convergence to the maximum-margin predictor with respect to the chosen norm.  The proof technique used for this more general setting is different and it is based on bounding the decrease in loss and increase in norm, generalizing the analysis of \cite{telgarsky2013margins} which shows how Boosting converges to the max-$\ell_1$-margin predictor.  This analysis does not rely on the data being non-degenerate as in Theorem \ref{theorem: ICLR theorem 3} (i.e.~it applies for any data set, not only almost all data sets).  Although \cite{gunasekar2018implicit} do not state a rate of convergence, the technique can be used to establish that the margin converges at the rate of $O(1/\log{t})$ as summarized in the following theorem (specialized here only for gradient descent), which is proved in appendix \ref{sec:proofs-jason}:
        	\begin{theorem} 
        For any separable data set (Definition \ref{def: the data is separable}), any initial point $\w(0)$, consider gradient descent iterates with a fixed step size $\eta< \frac{1}{\cL(\w(0))}$  for linear classification with the exponential loss $\ell(u)=\exp(-u)$.\\ 
        Then the iterates $\w(t)$ satisfy:
        $$  \min_n\frac{\w(t)^{\top}\x_n}{\|\w(t)\|}= \gamma -O\Big( \frac{1}{\log t}\Big) \, ,$$ 
        where $\gamma=\max_{\w} \min_{n} \frac{\w^{\top} \x_n }{\norm{\w}} =\frac{1}{\whatNorm}$ is the maximum-margin . 
        \label{thm:sd-exp}
        \end{theorem}
        Note that Theorem \ref{thm:sd-exp} ensures the rate of convergence of the margin, but does not specify how quickly $\w(t)$ itself converges to the max-margin predictor $\hat{\mathbf{w}}$.
        \note{
        \begin{theorem} [Theorem 1 in \cite{gunasekar2018conv}, rephrased]\label{Theorem: convnet Theorem 1}
        	    For any depth $L$, almost all linearly separable datasets $\left\{\xn,y_n\right\}_{n=1}^{N}$, almost all initialization $\np(0)$, and any bounded sequence of step sizes $\eta_t$, consider the sequence gradient descent iterates $\np(t)$ in eq.~\eqref{eq: GD dynamic} for minimizing $\mathcal{L}_\P \left(\np(t)\right)$ in eq.~\eqref{eq: Loss for net parameters} with exponential loss over $L$-layer fully connected linear network.
        	    
        		If the gradient descent iterates $\np(t)$ minimizes the objective, i.e., $\mathcal{L}_\P \left(\np(t)\right)\to0$, and incremental updates $\np(t+1)-\np(t)$ converge in direction, then $\eqw(t)=\P\left(\np\left(t\right)\right)$ converge in direction to $\bar{\eqw}_\infty=\lim_{t\to\infty}\frac{\eqw(t)}{\norm{\eqw(t)}}$ given by
        		\begin{equation} \label{eq: normalized max margin equation}
        		    \bar{\eqw}_\infty = \frac{\what}{\norm{\what}},\text{ where } \hat{\mathbf{w}}=\underset{\eqw\in\mathbb{R}^{d}}{\mathrm{argmin}}\left\lVert \eqw\right\rVert^2 \,\,\mathrm{s.t.}\,\,\eqw^{\top}\mathbf{x}_{n}\geq1.
        		\end{equation}
        \end{theorem}
    }
    }
	\section{SETUP AND REVIEW OF PREVIOUS RESULTS \label{sec: prev-results}} 
	
	Consider a dataset $\left\{ \mathbf{x}_{n},y_{n}\right\} _{n=1}^{N}$,
    with features $\mathbf{x}_{n}\in\mathbb{R}^d$ and binary labels $y_{n}\in\left\{ -1,1\right\}$. All the results in the paper are stated for data $\left\{ \mathbf{x}_{n},y_{n}\right\} _{n=1}^{N}$ which is \textit{strictly linearly separable}, i.e., there exists a separator $\wvec_*$ such that $\forall n:\ y_n \wvec_*^\top\xn>0$. 
    
    We study learning homogenous linear predictors by minimizing unregularized empirical losses of the form 
    \vspace{-0.5em}
    \begin{equation}
    \vspace{-0.5em}
 \mathcal{L}\left(\mathbf{w}\right)=\sum_{n=1}^{N}\ell\left(y_{n}\mathbf{w}^{\top}\mathbf{x}_{n}\right)\,,\label{eq: general loss functions}
    \end{equation}
    where $\mathbf{w}\in\mathbb{R}^{d}$ is the weight vector or the linear predictor. 
    To simplify notation, we assume that
    $\forall n:\,y_{n}=1$ \textemdash{} this is  without loss of
    generality, since we can always re-define $y_{n}\mathbf{x}_{n}$ as
    $\mathbf{x}_{n}$. We  denote the data matrix by $\mathbf{X}=[\x_1,\dots,\x_N]\in\mathbb{R}^{d\times N}$ and $\norm{\cdot}$ denotes the $L_2$ norm. 
    
	The gradient descent (GD) updates for minimizing $\c{L}(\w)$ in eq. \eqref{eq: general loss functions} with step size sequence $\{\eta_t\}_{t=0}^\infty$ is given by
	\begin{align} \label{GD}
	\wvec(t+1) &= \wvec(t) - \eta_t \nabla \mathcal{L}(\wvec(t)) \nonumber\\
	&= \wvec(t) - \eta_t \sumn \ell'(\wvec(t)^\top \xn)\xn.
	\vspace{-0.5em}
	\end{align}
    We look at the iterates of GD on linearly separable datasets  with monotonic loss functions.  
	\begin{definition} \label{def: l(u) assumptions}[Strict Monotone Loss]
		$\ell(u)$ is a differentiable strictly monotonically decreasing function bounded from below, i.e. $\forall u,\,\ell\left(u\right)^{\prime}<0$ and, without loss of generality, $\forall u,\, \ \ell\left(u\right)>0$
		and $\lim_{u\rightarrow\infty}\ell\left(u\right) =\lim_{u\rightarrow\infty}\ell^{\prime}\left(u\right)=0$. Also, $\lim\sup_{u\rightarrow-\infty}\ell^{\prime}\left(u\right)\neq0$. 
	\end{definition}
	Examples of strict monotone losses include common classification losses such as logistic loss, exponential loss, and probit loss. A key property of interest with such losses is that the empirical risk in eq.~\eqref{eq: general loss functions} over separable data does not have any finite global minimizers. Thus,  whenever the gradient descent updates in eq.~\eqref{GD} minimize the empirical loss $\c{L}(\w)$, the iterates $\wvec(t)$ will necessarily diverge to infinity. Nevertheless, in this case, even though the norm of the iterates $\norm{\w(t)}$ diverge, the classification boundary is entirely specified by the direction of $\w(t)/\norm{\w(t)}$. Can we say something interesting about which direction the iterates $\w(t)$ converge to? 
	
	For monotone losses with $-\ell'(u)$ satisfying the specific \textit{tight exponential tail} property (defined below), \citet{soudry2017implicit} characterized this direction to be the maximum-margin separator,
	\begin{definition} \label{def: exponential tail} [Tight Exponential Tail]	A scalar function $h(u)$ has a tight exponential tail, if there exist positive constants $\mu_+,\mu_-$, and $\bar{u}$ such that $\forall u>\bar{u}$:
	\[(1-\e(-\mu_- u))e^{-u} \le h(u) \le (1+\e(-\mu_+ u))e^{-u}.\]
	\end{definition}

    \begin{theorem}[Theorem 3 in \citet{soudry2017implicit}, rephrased]\label{theorem: ICLR theorem 3} For almost all  linearly separable  datasets $\{\xn,y_n\}_{n=1}^N$, and any $\beta$-smooth $\mathcal{L}$  with a strictly monotone loss function $\ell$ (Definition \ref{def: l(u) assumptions}), for which $-\ell^{\prime}$ has a tight exponential tail (Definition \ref{def: exponential tail}), the gradient descent iterates $\w(t)$  in eq.~\eqref{GD}  with any fixed step size satisfying\footnote{Note that for exponential loss $\ell(u)=\e(-u)$, $\mathcal{L}(\wvec)$ does not have a global smoothness parameter $\beta$. However,  with $\eta < 1/ \mathcal{L}(\w(0))$ it is straightforward to show the gradient descent iterates maintain bounded local smoothness $\beta(t) \leq \mathcal{L}(\w(t)) \leq  \mathcal{L}(\w(0))$, so we will have $\eta < \beta(t)^{-1}$ for all iterates, which suffices for the result to extend to exponential loss.} $\eta<2\beta^{-1}$ and any initialization $\w(0)$, will behave as:
    \begin{equation}
    \mathbf{w}\left(t\right)=\hat{\mathbf{w}}\log t+\boldsymbol{\rho}\left(t\right),\label{eq: asymptotic form}
    \end{equation}
    where the residual $\boldsymbol{\rho}\left(t\right)$ is bounded 
    and $\hat{\mathbf{w}}$ is the following $L_{2}$ maximum-margin separator:
    \begin{equation}\label{w_hat equation}
    \hat{\mathbf{w}}=\underset{\mathbf{\mathbf{w}}\in\mathbb{R}^{d}}{\mathrm{argmin}}\left\lVert \mathbf{w}\right\rVert^2 \,\,\mathrm{s.t.}\,\,y_n\mathbf{w}^{\top}\mathbf{x}_{n}\geq1.
    \end{equation}
    \end{theorem}
    
In Theorem \ref{theorem: ICLR theorem 3} and in the remainder of the paper, \textit{almost all} datasets refers to  
 all datasets except a measure zero set of $\{\xn\}_n$, e.g., with probability 1,  any dataset sampled from an absolutely continuous distribution. 
\remove{    
\begin{equation} \label{eq: direction convergence}
    \left \Vert \frac{\mathbf{w}\left(t\right)}{\left\Vert \mathbf{w}\left(t\right)\right\Vert}-\frac{\hat{\mathbf{w}}}{\left\Vert \hat{\mathbf{w}}\right\Vert}\right\Vert = O\left(\frac{1}{\log t}\right)
    \end{equation} 
    and in the margin
    \begin{equation}\label{eq: margin convergence}
     \min_{n} \frac{\w(t)^{\top} \x_n }{\norm{\w (t)}} = \gamma - O\left(\frac{1}{\log t}\right), \,\,\, \mathrm{with} \,\,\, \gamma=\max_{\w} \min_{n} \frac{\w^{\top} \x_n }{\norm{\w}} =\frac{1}{\whatNorm}
    \end{equation}

}

 Interestingly,  and somewhat surprisingly, Theorem \ref{theorem: ICLR theorem 3} implies logarithmically slow convergence in direction to the $L_2$ maximum-margin separator. This slow convergence rate also applies to the margin. This, in contrast, is much slower compared to the rate of convergence of the loss $\c{L}(\wvec(t))$ itself, which can be shown to decay as $O(1/t)$ (see Lemma $1$ in \citet{soudry2018journal}).
\remove{ as summarized in the following theorem (specialized here only for gradient descent), which is proved in appendix \ref{sec:proofs-jason}:
	\begin{theorem} 
For any separable data set (Definition \ref{def: the data is separable}), any initial point $\w(0)$, consider gradient descent iterates with a fixed step size $\eta< \frac{1}{\cL(\w(0))}$  for linear classification with the exponential loss $\ell(u)=\exp(-u)$.\\ 
Then the iterates $\w(t)$ satisfy:
$$  \min_n\frac{\w(t)^{\top}\x_n}{\|\w(t)\|}= \gamma -O\Big( \frac{1}{\log t}\Big) \, ,$$ 
where $\gamma=\max_{\w} \min_{n} \frac{\w^{\top} \x_n }{\norm{\w}} =\frac{1}{\whatNorm}$ is the maximum-margin . 
\label{thm:sd-exp}
\end{theorem}
Note that Theorem \ref{thm:sd-exp} ensures the rate of convergence of the margin, but does not specify how quickly $\w(t)$ itself converges to the max-margin predictor $\hat{\mathbf{w}}$.
\mnote{I think theorem \ref{thm:sd-exp} should move to the faster rates section}
\snote{i agree, see note above in the text}
}

\paragraph{Multilayer linear networks.} In a  recent follow up work, \citet{gunasekar2018conv} extend such results to fully connected deep linear networks, where the objective is non-convex. 
A multi-layer linear network consists of nodes arranged in $L$ layers. We use the convention that for an $L$ layer network, the inputs features $\mathbf{x}$ form the source nodes in the zeroth layer $l = 0$ and the output is sink node in the final layer  $l = L$. Let $d_l$ for $l=0,1,\ldots,L$ denote the number of nodes in layer $l$. The network is parameterized by weight matrices $\mathcal{W}=\{\lw{l}\in\mathbb{R}^{d_{l-1}\times d_l}:l=1,2,\ldots, L\}$. Every such network represents a linear mapping given as follows:
\[\w=\P(\mathcal{W}):=\lw{1}\cdot\lw{2}\cdot...\cdot \lw{L}\in \R^d.\]
Unlike logistic regression, where the parameters of the linear model $\wvec\in\mathbb{R}^d$ are learned directly by minimizing the training loss, in training linear networks, the objective is instead minimized over the parameters of the network $\mathcal{W}=\{\lw{l}\in\mathbb{R}^{d_{l-1}\times d_l}:l=1,2,\ldots, L\}$. The empirical loss is given by:
\vspace{-0.5em}
\begin{equation} \label{eq: Loss for net parameters}
    \mathcal{L}_\P\left(\np\right) = \mathcal{L}\left(\P\left(\np\right)\right)=\sumn \ell\left(y_n\ip{\P\left(\np\right)}{\xn}\right)\,.
\end{equation}
Gradient descent iterates $\np\left(t\right)=\{\lw{l}(t)\}_{l=1}^L$ for the above objective are given by:
\begin{equation} \label{eq: GD dynamic}
    \forall l, \lw{l}\left(t+1\right) = \lw{l}\left(t\right) - \eta_t\nabla_{\lw{l}}\mathcal{L}_{\P}\left(\np(t)\right),  
\end{equation}
and the corresponding sequence of linear predictors along the gradient descent path is given by, 
\begin{equation} \label{eq: GD dynamic in w}
\eqw(t)=\P(\np(t))=\lw{1}(t)\cdot...\cdot \lw{L}(t)\in \R^d.
\end{equation}
For the special case of exponential loss, \citet{gunasekar2018conv} showed  that the linear separator $\w(t)$ in eq. \eqref{eq: GD dynamic} learned by gradient descent on fully connected network  (under additional conditions on convergence of the net parameters and gradients, and convergence of the loss) again converges in the direction of the $L_2$ maximum-margin separator (Theorem~$1$ in \citet{gunasekar2018conv}). This result, however, only applies to exponential loss and does not specify how quickly the margin  of $\eqw(t)$ converges to the maximum-margin  (in case of convergence).


\remove{\snote{May be just give a one line summary of the result here and defer the notation to later section. Also, I dont think we need to rewrite the theorem. It makes sense to have thm 1 as a reference since it also has rates etc. but restating this theorem might be repetitive}

\snote{Instead it will be good to add a list of contributions at this point, I am usually not a fan of bulleted list, but in this paper, it might make sense.}
\note{
	\[
	    \eqw(t)=\P(\np(t))=\lw{1}(t)\cdot...\cdot \lw{L}(t)\in \R^d
    \]
	\begin{equation} \label{eq: Loss for net parameters}
	    \mathcal{L}_\P\left(\np(t)\right) = \sumn \ell\left(y_n\ip{\P\left(\np(t)\right)}{\xn}\right)
	\end{equation}
	\begin{equation} \label{eq: GD dynamic}
	    \np\left(t+1\right) = \np\left(t\right) - \eta_t\nabla_{\np}\mathcal{L}_{\P}\left(\np(t)\right)
	\end{equation}
	\begin{equation}
	    \mathcal{L}\left(\eqw(t)\right) = \sumn \ell\left(y_n\ip{\eqw(t)}{\xn}\right)
	\end{equation}
	}
\note{
\begin{theorem} [Theorem 1 in \cite{gunasekar2018conv}, rephrased]\label{Theorem: convnet Theorem 1}
	    For any depth $L$, almost all linearly separable datasets $\left\{\xn,y_n\right\}_{n=1}^{N}$, almost all initialization $\np(0)$, and any bounded sequence of step sizes $\left\{\eta_t\right\}$, consider the sequence gradient descent iterates $\np(t)$ in eq.~\eqref{eq: GD dynamic} for minimizing $\mathcal{L}_\P \left(\np(t)\right)$ in eq.~\eqref{eq: Loss for net parameters} with exponential loss over $L$-layer fully connected linear network.
	    
		If the gradient descent iterates $\np(t)$ minimizes the objective, i.e., $\mathcal{L}_\P \left(\np(t)\right)\to0$, and incremental updates $\np(t+1)-\np(t)$ converge in direction, then $\eqw(t)=\P\left(\np\left(t\right)\right)$ converge in direction to $\bar{\eqw}_\infty=\lim_{t\to\infty}\frac{\eqw(t)}{\norm{\eqw(t)}}$ given by
		\begin{equation}
		    \bar{\eqw}_\infty = \frac{\what}{\norm{\what}},\text{ where } \hat{\mathbf{w}}=\underset{\eqw\in\mathbb{R}^{d}}{\mathrm{argmin}}\left\lVert \eqw\right\rVert^2 \,\,\mathrm{s.t.}\,\,\eqw^{\top}\mathbf{x}_{n}\geq1.
		\end{equation}
\end{theorem}
    Theorem \ref{Theorem: convnet Theorem 1} states that if the incremental updates $\np(t+1)-\np(t)$ converge in direction and the loss is minimized then $\eqw(t)$ converge in the direction of the maximum-margin separator. Note that Theorem \ref{Theorem: convnet Theorem 1} only applies to exponential loss and does not specify how quickly $\eqw(t)$ converge to the max margin separator (in case of convergence). 
}}
\section{MAIN RESULTS}	
In this section, we provide a detailed analysis of the implicit bias in linear models focusing on convergence and rate of convergence of margin under general tails and with variable step sizes. We use the following standard notation on asymptotic behaviour:
    \begin{inparaenum}[(a)]
    \item $f(u)=\omega(g(u))\Leftrightarrow\lim\limits_{u\to\infty} \abs{\frac{f(u)}{g(u)}}=\infty$,
    \item $f(u)=o(g(u))\Leftrightarrow \lim\limits_{u\to\infty}\frac{f(u)}{g(u)}= 0$, 
    \item $f(u)=O(g(u))\Leftrightarrow \lim\sup\limits_{u\to\infty}\frac{|f(u)|}{g(u)}<\infty$, 
    \item $f(u)=\Omega(g(u))\Leftrightarrow \lim\inf\limits_{u\to\infty}\frac{f(u)}{g(u)}>0$, and 
    \item $f(u)=\Theta(g(u)) \Leftrightarrow \Omega\left(g(u)\right)=f(u)=O\left(g(u)\right)$. 
    \end{inparaenum}

Previous results, summarized in Section \ref{sec: prev-results}, show that when minimizing exponentially tailed losses on separable datasets, gradient descent converges to the $L_2$ maximum-margin separator with a very slow rate of  $1/\log(t)$. While commonly used classification losses such as logistic loss, cross entropy loss, and exponential loss indeed have tight exponential tail, the significance of the exponential tail is not fully understood. What are the general conditions on the tail under which gradient descent converges to the maximum-margin solution? Can the rate of convergence be accelerated by choosing a heavier or lighter tail? 

\subsection{Linear networks with general tails}
We first show that for a large family of strictly monotone losses with super--polynomial tails specified (Assumption~\ref{def: general tail loss} below), gradient descent iterates converge to the maximum-margin solution. We will later also analyze the rate of convergence for this family of loss functions. 
    
\begin{assumption} \label{def: general tail loss} $\ell(u)$ is analytic and  satisfies the following:
\begin{compactenum}
\item \textbf{Strict monotonicity: } $\ell$ satisfies Definition~\ref{def: l(u) assumptions}. Since, $\forall u,\, \ell^{\prime}\left(u\right)<0$, let  $\ell'(u) = -\e(-f(u))$.
\item \textbf{Super-polynomial tail:} $\ell(u)$ has a ``super-polynomial tail" if $\forall M>0$, $\exists u_0$ such that $\forall u\ge u_0$, $-\ell'(u) \le u^{-M}$. This is equivalent to $f(u)=\omega(\log(u))$.  
\item \textbf{Asymptotically convex}:  $\exists u_0$ such that $\forall u>u_0$, $\ell{''}(u)>0$. For strictly monotone decreasing losses, this is equivalent to $\forall u>u_0,$ $f'(u)=\frac{\ell^{\prime\prime}(u)}{-\ell'(u)}>0$. 

\item \textbf{Non-oscillatory tail}: $\lim_{u\to\infty} uf'(u)$ exists. For losses with super-polynomial tails where $f(u)=\omega(\log(u))$, this condition implies $f'(u)=\omega\left(u^{-1}\right)$.
\end{compactenum}
\end{assumption}
\begin{remark}
Assumption \ref{def: general tail loss}  captures a large family of strictly monotone losses with  super-polynomial tails that are relevant for binary classification tasks, and the last condition is rather technical to avoid undesirable oscillatory  behaviour like $f(u)=u+\sin(u)$. In particular, the assumption includes the following special cases:
\begin{asparaitem}
\item Logistic loss $\ell(u)=\log\left( 1+e^{-u}\right)$, for which $f(u)=\log(1+e^{u})=\omega(\log(u))$ and $f'(u)=\frac{e^u}{1+e^{u}}=\omega\left(u^{-1}\right)$.
\item Other losses with tight exponential tail (Definition~\ref{def: exponential tail}), like the exponential loss $\ell(u)=\e(-u)$.
\item ``Poly-exponential" tailed losses given by $\ell'(u)=-\e(-u^\nu)$ for degree $\nu>0$, e.g., the probit loss. 
\item Sub-exponential super-polynomial tails like  $\ell'(u)=-u^{-\log^\mu(u)}$ for $\mu>0$.
\end{asparaitem}
\end{remark}

\remove{
\begin{remark}The condition $f'(u)=\omega\left(u^{-1}\right)$ implies $f(u)=\omega\left(\log(u)\right)$, which, for strictly monotone losses, corresponds to strictly monotone losses\dnote{last sentence did not make sense}. If the loss derivative tail decreases to zero slower than $u^{-1}$ (which is the rate for $f(u)=\log(u)$) we may not converge to the maximum-margin separator. For example, hinge loss (which does not satisfy the necessary requirement) does not converge to the maximum-margin separator \mnote{elaborate on this example}. More examples for non-convergence and for sub-poly exponential tails the do converge to the max margin separator are in appendix section \ref{sec: Examples}.
\end{remark}
}


\remove{
\mnote{I'm not sure if we have the space but maybe explain the motivation behind $\omega(t^{-1})$ along the lines of what we have in the analysis for generic tails (without it the nsv part is not negligible compared to the sv part, in the proof this assumption is needed in the proof of Lemma \ref{lemma: f(g(t1))-f(g(t2)) to infty})} }

For depth--$L$ linear networks, we first show that the implicit bias of gradient descent for exponential loss from \citet{gunasekar2018conv} can be extended  more broadly to  super-polynomial tailed losses specified in Assumption~\ref{def: l(u) assumptions}. 

\begin{restatable}{theorem}{theoremMaxMarginGeneralTail} \label{Theorem: convergence to max margin for general tail}
	    For any depth $L$, almost all linearly separable datasets, almost all initialization and any bounded sequence of step sizes $\left\{\eta_t\right\}$, consider the sequence $\np(t)=\{\lw{l}(t)\}_{l=1}^L$ of gradient descent updates in eq. \eqref{eq: GD dynamic} for minimizing the empirical loss $\c{L}_\P(\mathcal{W})$ (eq.~\eqref{eq: Loss for net parameters})  with a strictly monotone loss function $\ell$ satisfying Assumption  \ref{def: general tail loss}, i.e.:
		$\ell'(u) = -\e(-f(u))<0$, 
		where asymptotically $f'(u)>0$ and $ f'(u)=\omega\left(u^{-1}\right)$.
		
	    If  \begin{inparaenum}[(a)] \item $\np(t)$ minimizes the empirical loss, i.e. $\mathcal{L}_{\P}\left(\np(t)\right)\to0$, \item $\np(t)$, and consequently $\eqw(t)=\P\left(\eqw(t)\right)$, converge in direction to yield a separator with positive margin, and \item the gradients with respect to the linear predictors $\nabla_\eqw \mathcal{L}\left(\eqw(t)\right)$ converge in direction, \end{inparaenum} then the limit direction is given by,  
	    \vspace{-0.5em}
	    \[
	    \vspace{-0.5em}
	    \bar{\eqw}_\infty=\lim\limits_{t\to\infty}\frac{\eqw(t)}{\norm{\eqw(t)}} = \frac{\what}{\norm{\what}}\, ,
	    \] 
	    where
		\begin{equation} 		     \hat{\mathbf{w}}=\underset{\eqw\in\mathbb{R}^{d}}{\mathrm{argmin}}\left\lVert \eqw\right\rVert^2 \,\,\mathrm{s.t.}\,\,\eqw^{\top}\mathbf{x}_{n}\geq1.
		     \label{eq: normalized max margin equation}
		\end{equation}
\end{restatable}

This theorem  is proved in Appendix \ref{sec: convergence to max margin for general tail}, while the basic ideas are sketched in Appendix \ref{sec: Generic Tails} for $L=1$.

\begin{remark} Theorem~\ref{Theorem: convergence to max margin for general tail} covers a large family of super-polynomial tails specified under Assumption~\ref{def: l(u) assumptions}. Conversely, for (sub) polynomial tails, we may not converge to the maximum-margin separator. In Appendix \ref{sec: Examples}, we show that we do not converge to the maximum-margin if $\ell(u)$ has polynomial tail. Additionally, with the hinge loss (which it is neither differentiable or strictly monotonic) we generally do not converge to the maximum-margin without regularization, as then GD typically converges to a finite minimizer that depends on the initialization.
\end{remark} 

\begin{remark}
    \citet{rosset2004margin} also investigated the connection between the loss function choice and the maximum-margin solution. In this work, \citet{rosset2004margin} considered linear models with monotone loss functions and explicit norm regularization. We discuss the connections between \citet{rosset2004margin} results and ours in appendix \ref{sec: Additional Related Work}.
\end{remark}

\begin{remark} \citet{gunasekar2018conv} characterized the implicit bias of gradient descent for fully connected linear networks for the special case of exponential loss $\ell(u)=\e(-u)$. Theorem~\ref{Theorem: convergence to max margin for general tail} generalizes this characterization to a larger family of losses, which in particular includes the commonly used logistic loss. Logistic loss, despite having the same exponential tail as the exponential loss, was not explicitly analyzed in \citet{gunasekar2018conv}. 
\end{remark}

We now continue to characterizing the convergence rates.
\subsection{Rates of convergence}
To calculate the convergence rates we will make an additional assumption.
    \begin{restatable}{assumption}{assumptionfexpansion} \label{assumption: f can be expended}
	    $f(u)$ is real analytic on $\mathbb{R}_{++}$ and satisfies $\forall k\in \mathbb{N}:\ \abs{\dfrac{f^{(k+1)}(u)}{f^\prime(u)}}=O\left(u^{-k}\right)$. 
	\end{restatable}
    While the above assumption is not required to show asymptotic convergence of gradient descent to the maximum-margin separator (Theorem~\ref{Theorem: convergence to max margin for general tail}), we do require the additional assumption to calculate the rates. This assumption implies that the loss tail does not decay too fast. In particular, Assumption~\ref{assumption: f can be expended} is \textit{not} satisfied by super-poly-exponential tails like $\ell'(u)=\e{(-\e({u^{\nu}}))}$ for $\nu>0$ or $\ell'(u)=\e{(-\e{(\log^\mu(u))})}$ for $\mu>1$, and additionally avoids oscillatory functions like $\sin(u)$.
	
	Nevertheless, a large class of interesting monotone functions satisfy this assumption, including cases where $f(u)$ is polynomial and poly-logarithmic functions. Within this family, we look at the margin rate of convergence of the gradient descent iterates, for $L=1$ in two regimes:
	\begin{compactenum}
	\item $f'(u)=\omega(1)$, which  implies $-\ell'(u)=\omega(\exp(-u))$. This case includes loss functions with tails \textit{lighter} than the exponential tail, for example poly-exponential tail $\ell(u)=\e(-u^\nu)$ with s strictly greater than one exponent, $\nu>1$. 
	\item $f'(u)=\omega(u^{-1})$ and $f'(u)=o(1)$:  or $-\ell'(u)=o(\exp(-u))$. This case includes loss functions with tails \textit{heavier} than the exponential tail, such as $\ell(u)=\e(-u^\nu)$ for $\nu<1$ or $\ell(u)=\e(-\log^\mu(u))$ for $\mu>0$. 
	\end{compactenum}
	
	We first look at the rates for the special case of $L=1$ where the parameters $\w$ of the linear models are directly learned using gradient descent. This is the setting analyzed in \cite{soudry2017implicit} with tight exponential tailed losses.
	The following theorem is proved  in Appendix \ref{sec: proof of theorem about general tail rates}.
	\begin{restatable}{theorem}{theoremGeneralRatesSimple} \label{theorem: general convergence rates simplified}
	    
	    For almost all linearly separable datasets, almost all initialization, any bounded sequence of step sizes $\left\{\eta_t\right\}<2\beta^{-1}$, and a single layer $L=1$,  consider the sequence of gradient descent updates in eq.~\eqref{GD} for minimizing the empirical loss $\mathcal{L}(\eqw)$ (eq.~\eqref{eq: general loss functions}) with a strictly monotone $\beta$-smooth loss function $\ell$ satisfying
		$\ell'(u) = -\e(-f(u))<0$, 
		where asymptotically $f'(u)=\Omega\left(\frac{1}{u}\log^{1+\epsilon}(u)\right)$ for some $\epsilon>0$ and satisfies Assumption \ref{assumption: f can be expended}.
		
		If  \begin{inparaenum}[(a)] \item $\eqw(t)$ converges in direction to yield a separator with positive margin, and \item the gradients with respect to the linear predictors $\nabla_\eqw \mathcal{L}\left(\eqw(t)\right)$ converge in direction, \end{inparaenum} then the margin convergence of $\eqw(t)$ to the max margin $\gamma=\max_{\w} \min_{n} \frac{\w^{\top} \x_n }{\norm{\w}}$ satisfies:
		
		\begin{enumerate}
			\item If $f'(u)=\omega(1)$ (which implies $f(t)=\omega(t)$), then
			\[
			 \gamma - \min\limits_{n} \frac{\xnT \wvec (t)}{\Vert \wvec(t)\Vert}=O\left(\frac{1}{f^{-1}\left(\log(t)\right)}\right).
			\]
			
			\item If $f'(u)=o(1)$ and $f$ is strictly concave, then
			\[
				\gamma - \min\limits_{n} \frac{\xnT \wvec (t)}{\Vert \wvec(t)\Vert}=\Omega\left(\frac{1}{\log(t)}\right) 
			\]
			and the optimal rate is obtained for exponential loss.
		\end{enumerate}
	\end{restatable}
	From the proof of Theorem~\ref{theorem: general convergence rates simplified}, we can also calculate the rates of convergence for the normalized direction $\w(t)/\norm{\w(t)}$ to the maximum-margin separator $\what/\norm{\what}$, as well as the convergence of the angle between them. 
	\begin{corollary} We examine super-polynomial tailed losses satisfying the assumptions of the previous Theorem, when the loss tail does not decay too fast, i.e. $\abs{\frac{f'(u)}{f(u)}}=O(u^{-1})$. The optimal rate of convergence to the maximum-margin of GD with fixed step size is $1/\log(t)$. This optimal rate is attained by exponentially tailed losses, where $f(u)=\Theta(u)$ (or $f'(u)=\Theta(1)$). This includes the popular losses of logistic loss and exponential loss. 
	\end{corollary}
	\begin{proof} 
	For the case of $f'(u)=\omega(1)$,  $f(t)=\omega(t)\Rightarrow f^{-1}(t)=o(t)$ and thus, the rate for this case $O\left(\frac{1}{f^{-1}\left(\log(t)\right)}\right)$ is sub-optimal compared with the rate for exponential loss which is $1/\log(t)$ (from Theorem~\ref{theorem: ICLR theorem 3}). In appendix sections \ref{sec: f=w(u) rate negative example}, \ref{sec: f=w(u) rate positive example} we give a positive example that demonstrates that this upper bound is tight, i.e., it is obtained for some datasets, and a negative example which shows a case in which the upper bound is not obtained. In general as long as the loss tail does not decay too fast, i.e. $\abs{\frac{f'(u)}{f(u)}}=O(u^{-1})$, the rate in this case is $\Omega\left(\frac{1}{\log(t)}\right)$ (see appendix \ref{sec: proof that rate is Omega(1/log(t))}). Secondly, for the case of $f'(u)=o(1)$ the asymptotic rate is $\Omega(1/\log(t))$, so the optimal rate we can hope for with any tail is $O(1/\log(t))$. In appendix \ref{sec: proof of theorem about general tail rates} we show that the exponential tail obtains this optimal rate. Additionally, in Appendix~\ref{sec: appendix results on poly-exp tails}, we show that for the special case of poly-exponential losses  $\ell'(u)=-\e(-u^\nu)$ with $0.25<\nu\le 1$, the rate is indeed $O(1/\log(t))$ and the constants in the rates for $\nu<1$ are strictly worse than that of exponential tail with $\nu=1$. 
	\end{proof}
	\begin{remark}  Note that for $L=1$ 
	by Lemma $1$ in \cite{soudry2017implicit}, the assumption in Theorem~\ref{Theorem: convergence to max margin for general tail} that $\mathcal{L}_{P}\left(\np(t)\right)\to0$ is  satisfied for appropriate choices of step size. Moreover for the special case of poly-exponential tails with $\ell'(u)=-\e(-u^\nu)$ for $\nu>0.25$, the convergence to the maximum-margin separator and the convergence rates can be obtained without the assumptions that $\eqw(t)$ and $\nabla_\eqw \mathcal{L}\left(\eqw(t)\right)$ converge in direction (see Appendix \ref{sec: appendix results on poly-exp tails}). 
    \end{remark}
	
	Now we state the results for the general case of $L$--layer linear network. 
    \begin{restatable}{theorem}{theoremGeneralRatesSimpleOde} \label{theorem: general convergence rates simplified with ode}
		Under assumption \ref{assumption: f can be expended} and the conditions and notations of Theorem \ref{Theorem: convergence to max margin for general tail}, if the SVM support vectors span the data then for any depth $L$ the network equivalent linear predictor $\eqw(t)$ satisfies:
			\[
				\gamma - \min\limits_{n} \frac{\xnT \wvec (t)}{\Vert \wvec(t)\Vert}=\begin{cases} O\left(\frac{1}{g(t)}\right), & f'(u)=\omega(1)\\
				\Theta\left(\frac{1}{g(t)f'\left(g\left(t\right)\right)}\right), & \text{otherwise}
				\end{cases}
			\]
			where $g(t)$ is the asymptotic solution of
			\begin{equation} \label{eq: linear NN ode}
			    \frac{dg(t)}{dt}=-\ell^{\prime}\left(g\left(t\right)\right)\left(g(t)\right)^{2\left(1-L^{-1}\right)}.
			\end{equation}
	\end{restatable}

\begin{remark}  Importantly, from Assumption \ref{def: general tail loss}, $-\ell^{\prime}(u)$ has super-polynomial tail, which suggests the factor $\left(g(t)\right)^{2\left(1-L^{-1}\right)}$ only negligibly affects the asymptotic solution of eq.~\eqref{eq: linear NN ode}. This implies that $\forall L>1$, and even in the limit $L\rightarrow \infty$, the rate predicted by this Theorem \ref{theorem: general convergence rates simplified with ode} will only be slightly smaller than the $L=1$ case of Theorem \ref{theorem: general convergence rates simplified}. This difference will become negligible in the limit $t\rightarrow\infty$. For example, for the case of exponential loss, we prove in appendix \ref{Sect: g(t)=log(t)+o(log(t)) proof for $L>1$ and f(u)=u} that the ODE solution is $g(t)=\log(t)+o\left(\log(t)\right)$. Thus, in this case, the margin converges as $O(1/\log(t))$ for any depth.
\end{remark}

\subsection{Faster rates using variable  step sizes}
Our analysis so far suggests that exponential tails have an optimal convergence rate, and for exponential tail losses with a bounded step size, we have an extremely slow rate of convergence, $O(1/\log t)$. Therefore, the question is can we somehow accelerate this rate using variable unbounded step sizes. Fortunately, at least for linear models trained with exponential loss, the answer is yes and we can indeed show faster rate of convergence by aggressively increasing the step size to compensate for the vanishing gradient. Specially, we examine the following normalized GD algorithm:
\begin{align}
\w_{t+1} =\w_t - \eta_t \frac{\nabla\cL(\w(t))}{ \norm{\nabla\cL(\w(t))}}\,.
\label{eq:norm-gd1}
\end{align}
Recall that $\gamma=\max_{\w:\norm{\w}\le1}\min_n\innerprod{\w}{\xn}$ is the maximum-margin of the dataset with unit $L_2$ norm separators, and without loss of generality assume $\forall n:\norm{\xn}\le1$. 

By the triangle inequality, we have that $\norm{\nabla\cL(\w(t))}=\norm{\sum_n\e(-\w(t)^\top\xn)\xn}\le \c{L}(\w(t))$. We additionally have the following inequality for all $t$,
\begin{align*}
\norm{\nabla\cL(\w(t))}&=\max_{\w:\norm{\w}\le1}\sum_n\e(-\w(t)^\top\xn)\w^\top\x_n\\
&\ge \gamma \sum_n\e(-\w(t)^\top\xn)=\gamma\cL(\w(t))\,.
\end{align*}
Thus, for all $\w$, the two-sided bound \[\gamma \cL(\w) \le \|\nabla \cL(\w)\|\le \cL(\w)\] holds, and, up to a scaling of step--sizes, the normalized GD in eq. \eqref{eq:norm-gd1} can be alternatively expressed as the following 
\begin{align}
\w_{t+1} =\w_t - \eta_t \frac{\nabla\cL(\w(t))}{\cL(\w(t))}\,.
\label{eq:norm-gd}
\end{align}
We chose to state our results in terms of eq.~\eqref{eq:norm-gd} (normalizing GD by $\cL(\w(t))$) so that the stepsize choice $\eta_t$ does not depend on the optimal margin $\gamma$ which is unknown.
The following theorem proved in Appendix \ref{sec:proofs-jason} shows that using normalized GD can improve the rate of convergence of the margin of the separator to  $\log t/\sqrt{t}$ compared to $O(1/\log{t})$ for fixed step sizes. 
\begin{theorem}	
    For any separable data set and any initial point $\w(0)$, consider the normalized GD updates in eq.~\eqref{eq:norm-gd} with a variable step size $\eta_t=\frac{1}{\sqrt{t+1}}$ and exponential loss $\ell(u)=\exp(-u)$. 

Then the margin of the iterates $\w(t)$ converges to the max-margin $\gamma$ with rate $t^{-1/2} \log t$: 
\begin{align*} \frac{\w(t+1) ^T \x_n }{\norm{\w(t+1)}}\!&\!\ge\gamma\!-\!  \frac{1+\log (t+1)}{ \gamma (4 \sqrt{t+2}-4)}\!-\!\frac{\log \cL(\w(0))}{\gamma ( 2 \sqrt{t+2} -2)}. \end{align*} 
\label{thm:sd-fast}
\end{theorem}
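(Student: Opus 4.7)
The plan is to track two complementary quantities along the iterates---the projection onto the max-margin separator, $\what^{\top}\w(t)$, and the norm $\|\w(t)\|$---and to use the loss to convert them into a margin bound via the elementary inequality $\min_{n}\w(t)^{\top}\xn\ge-\log\cL(\w(t))$ (which follows from $\cL(\w(t))\ge e^{-\min_{n}\w(t)^{\top}\xn}$).

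First I would derive two per-step inequalities. Since $\what^{\top}\xn\ge1$ and $\|\xn\|\le1$ for every $n$,
\begin{equation*}
-\what^{\top}\nabla\cL(\w(t))=\sum_{n}e^{-\w(t)^{\top}\xn}\what^{\top}\xn\ge\cL(\w(t))\ge\|\nabla\cL(\w(t))\|,
\end{equation*}
so the normalized descent direction has inner product at least $1$ with $\what$. Plugging into the update with $\eta_{t}=1/\cL(\w(t))$ gives
\begin{equation*}
\what^{\top}\w(t+1)\ge\what^{\top}\w(t)+\eta_{t},\qquad\|\w(t+1)\|\le\|\w(t)\|+\eta_{t},
\end{equation*}
and telescoping with $S_{t}:=\sum_{s<t}\eta_{s}$ yields $\what^{\top}\w(t)\ge\what^{\top}\w(0)+S_{t}$ and $\|\w(t)\|\le\|\w(0)\|+S_{t}$.

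Second, I would use the local smoothness of the exponential loss ($\beta(t)\le\cL(\w(t))$, as in the paper's footnote) together with $\|\nabla\cL(\w(t))\|\ge\gamma\cL(\w(t))$ (again by projecting onto $\what$ and dividing by $\|\what\|=1/\gamma$) to get $\cL(\w(t+1))/\cL(\w(t))\le1-\gamma\eta_{t}+\eta_{t}^{2}/2$ from the standard smoothness inequality. Applying $-\log(1-x)\ge x$ produces the per-step progress inequality on $\Phi_{t}:=-\log\cL(\w(t))$:
\begin{equation*}
\Phi_{t+1}-\Phi_{t}\ge\gamma\eta_{t}-\eta_{t}^{2}/2,
\end{equation*}
and telescoping yields $\Phi_{t}\ge\Phi_{0}+\gamma S_{t}-Q_{t}/2$ with $Q_{t}:=\sum_{s<t}\eta_{s}^{2}$. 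Combining with $\min_{n}\w(t)^{\top}\xn\ge\Phi_{t}$ and the norm bound gives
\begin{equation*}
\gamma-\min_{n}\frac{\w(t)^{\top}\xn}{\|\w(t)\|}\le\frac{\gamma\|\w(0)\|+\log\cL(\w(0))+Q_{t}/2}{\|\w(0)\|+S_{t}}.
\end{equation*}

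The main obstacle is converting this into the stated explicit rate $\log(t)/\sqrt{t}$. Because $\eta_{t}=e^{\Phi_{t}}$, the step sizes are implicitly determined by the very potential they are supposed to grow, and the quadratic shape of $\gamma\eta_{t}-\eta_{t}^{2}/2$ (maximized at $\eta_{t}=\gamma$) acts as a homeostatic constraint that prevents $\eta_{t}$ from drifting far from this optimum. Carefully exploiting this self-consistency---for instance by solving the induced discrete inequality $\Phi_{t+1}-\Phi_{t}\ge\gamma e^{\Phi_{t}}-e^{2\Phi_{t}}/2$ or by splitting into cases according to whether $\cL(\w(t))$ is above or below $1/(2\gamma)$---should yield $S_{t}=\Omega(\sqrt{t})$ and $Q_{t}=O(\sqrt{t}\log t)$, with the $2\sqrt{t+2}-2$ appearing in the paper's denominator emerging from integrating $\int 1/\sqrt{s}\,ds$. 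A secondary subtlety is verifying that the local smoothness constant used in the quadratic correction remains valid over the whole segment from $\w(t)$ to $\w(t+1)$ even when $\eta_{t}$ is large, potentially via the line-integral bound $\cL(\w(t)+s\mathbf{d})\le\cL(\w(t))e^{s}$ along any unit direction $\mathbf{d}$.
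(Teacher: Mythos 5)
Your scaffolding matches the paper's proof: lower-bound the unnormalized margin by $-\log\cL(\w(t+1))$, upper-bound $\norm{\w(t+1)}$ by the sum of the step lengths, control the per-step loss ratio via smoothness together with the Polyak-type inequality $\norm{\nabla\cL(\w)}\ge\gamma\cL(\w)$ (you obtain it by projecting onto $\what$, the paper via Fenchel duality so that it extends to general norms), and then take the ratio. The genuine gap is exactly the step you flag as ``the main obstacle'': you never establish $S_t=\Omega(\sqrt{t})$ and $Q_t=O(\sqrt{t}\log t)$, and under your literal reading of the step size this cannot be rescued by the self-consistency argument you sketch. With the update $\w(t+1)=\w(t)-\eta_t\,\nabla\cL(\w(t))/\norm{\nabla\cL(\w(t))}$ and $\eta_t=1/\cL(\w(t))=e^{\Phi_t}$, the right-hand side of your per-step inequality $\Phi_{t+1}-\Phi_t\ge\gamma\eta_t-\eta_t^2/2$ is negative as soon as $\cL(\w(t))<1/(2\gamma)$, so telescoping certifies nothing beyond that point; worse, because $\eta_t$ grows as the loss shrinks, $Q_t=\sum_{s<t}e^{2\Phi_s}$ outgrows $S_t=\sum_{s<t}e^{\Phi_s}$, so your final bound $\gamma-\bigl(\gamma\norm{\w(0)}+\log\cL(\w(0))+Q_t/2\bigr)/\bigl(\norm{\w(0)}+S_t\bigr)$ degrades rather than converges: this route requires $Q_t=o(S_t)$, i.e.\ vanishing, not exploding, step lengths in the unit-gradient parametrization. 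The proposed case split at $\cL(\w(t))\gtrless 1/(2\gamma)$ does not help, since below the threshold the smoothness estimate certifies no loss decrease at all.

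The paper avoids this entirely: despite the main-text phrasing (which your reading follows), the algorithm actually analyzed in Appendix~\ref{sec:proofs-jason} is $\w(t+1)=\w(t)-\eta_t\gamma_t\mathbf{p}_t$ with $\gamma_t=\norm{\nabla\cL(\w(t))}_*/\cL(\w(t))\in[\gamma,1]$, unit direction $\mathbf{p}_t$, and an \emph{explicitly decaying} $\eta_t=1/\sqrt{t+1}$; for the $L_2$ norm this is gradient descent with effective step $1/\bigl(\sqrt{t+1}\,\cL(\w(t))\bigr)$. There the coefficient multiplying the unit direction is $\eta_t\gamma_t\le 1/\sqrt{t+1}$, so the quadratic term never dominates the linear one, and both sums are controlled deterministically: $\sum_{i\le t}\eta_i\gamma_i\ge\gamma(2\sqrt{t+2}-2)$ and $\sum_{i\le t}\eta_i^2\gamma_i^2\le\sum_{i\le t}(i+1)^{-1}\le 1+\log(t+1)$, which is precisely where the constants $2\sqrt{t+2}-2$ and $1+\log(t+1)$ in the statement come from; they cannot emerge from your $\eta_t=e^{\Phi_t}$ dynamics. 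Your secondary worry about the validity of the local smoothness constant over the whole segment is handled in the paper by noting that the effective step never exceeds $1/\cL(\w(t))$ and invoking Lemma~11 of \citet{gunasekar2018implicit} for monotone decrease of the loss. If you adopt the appendix's step-size convention (the $1/\sqrt{t+1}$ decay applied to the loss-normalized gradient), your inequalities close essentially verbatim and yield the stated bound.
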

\vspace{-1.5em}
In the appendix we prove a more general version of Theorem \ref{thm:sd-fast}, which obtains the same rate for any steepest descent algorithm. Also, note that normalized GD as in eq.~\eqref{eq:norm-gd1} was analyzed before, but for other purposes. For example, \citet{Levy2016} showed a stochastic version of it can better escape saddle points.
Here we study the effect of normalization on the implicit bias of the algorithm.

The observation that aggressive changes in the step size can improve convergence rate is applied in the AdaBoost literature \citep{schapire2012boosting}, where exact line-search is used. We use a slightly less aggressive strategy of decaying step sizes with normalized gradient descent, attaining a rate of $\log(t)/\sqrt{t}$. This rate almost matches $1/\sqrt{t}$, which is the optimal rate in terms of margin suboptimality for solving hard margin SVM. This rate is achieved by the best known methods.\footnote{The best known method in terms of margin suboptimality, and using vector operations (operations on all training examples), is the aggressive Perceptron, which achieves a rate of $\sqrt{N/t}$. \cite{clarkson2012sublinear} obtained an improved method which they showed is optimal, that does not use vector operations.  \cite{clarkson2012sublinear} method achieves a rate of $\sqrt{(N+d)/t}$, where now $t$ is the number of scalar operations.} This suggests that gradient descent with a more aggressive step size policy is quite efficient at margin maximization.

We emphasize our goal here is not to develop a faster SVM optimizer, but rather to understand and improve gradient descent and local search in a way that might be applicable also for deep neural networks, as indicated by the numerical results we present next.

\section{EXPERIMENTS WITH NORMALIZED GRADIENT DESCENT}

\begin{figure*}
\begin{centering}
\includegraphics[width=\textwidth] {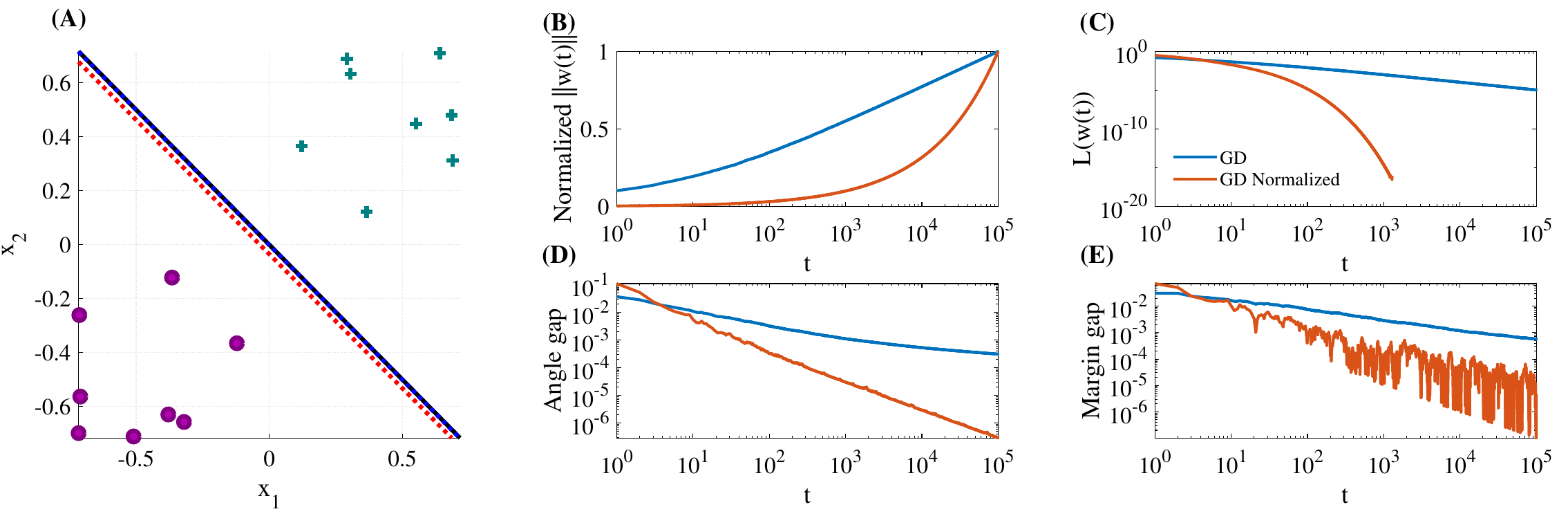}
\par\end{centering}
\vspace{-10pt}
\caption{ \label{Fig: NormGD} Visualization of the convergence of GD in comparison to normalized GD in a synthetic logistic regression dataset in which the $L_{2}$ maximum-margin vector $\hat{\mathbf{w}}$ is precisely known.
\textbf{(A)} The dataset (positive and negatives samples ($y=\pm1$)
are respectively denoted by $'+'$ and $'\circ'$), max margin separating
hyperplane (black line), and the solution of GD (dashed red) and
normalized GD (dashed blue) after $10^5$ iterations. For both GD and
Normalized GD, we show: \textbf{(B) }The norm of $\mathbf{w}\left(t\right)$,
normalized so it would equal to $1$ at the last iteration, to facilitate
comparison; \textbf{(C) }The training loss; and \textbf{(D\&E) }the
angle and margin gap of $\mathbf{w}\left(t\right)$ from $\hat{\mathbf{w}}$. As can be seen in panels \textbf{(C-E)}, normalized GD converges to the maximum-margin separator significantly faster, as expected from our results. More details are given in appendix \ref{sec:fig1 details}.
\label{fig:Synthetic-dataset}}
\vspace{-10pt}
\end{figure*}

\remove{
\begin{figure}%
    \centering
    \subfloat[2 layer Linear Net]
    {{\includegraphics[width=0.4\textwidth]{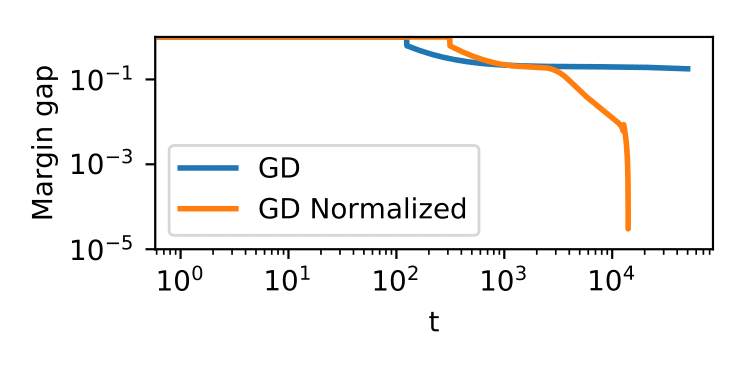} }
    }\par
    \subfloat[3 layer Linear Net]
    {{\includegraphics[width=0.4\textwidth]{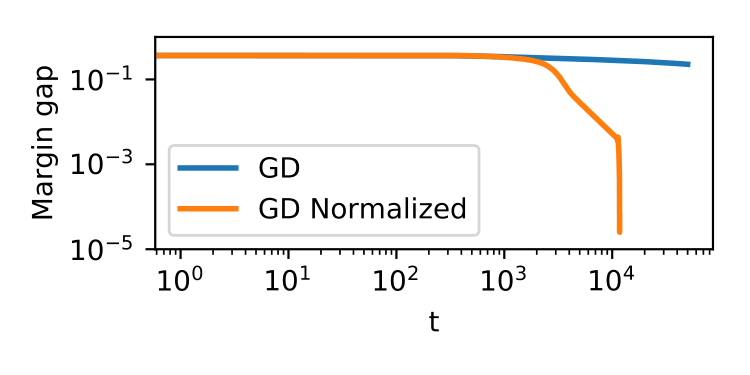} }
    }
    \caption{Margin convergence plots for 2 and 3-layered linear networks on synthetic clustered data, trained with GD and normalized GD --- the latter provides significantly faster convergence.}
    \label{fig:margin_convergence}%
\end{figure}
}

\remove{
\begin{figure*}%
    \centering
    {\includegraphics[width=0.32\textwidth]{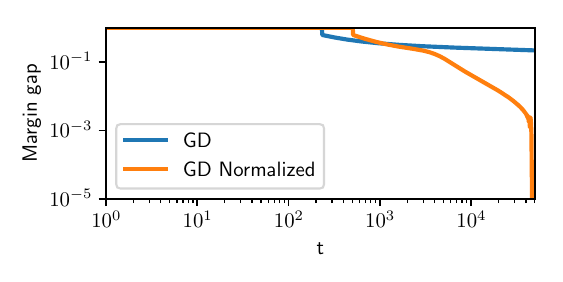}
    }
    {\includegraphics[width=0.32\textwidth]{figures/linear2gap}
    } 
    {\includegraphics[width=0.32\textwidth]{figures/linear3gap}
    }
    \vspace{-10pt}
    \caption{Margin convergence plots for 1, 2 and 3-layered linear networks on synthetic clustered data, trained with GD and normalized GD --- the latter provides significantly faster convergence.}
    \label{fig:margin_convergence}%
\end{figure*}
}

\begin{figure}%
    \centering
    {\includegraphics[width=0.4\textwidth]{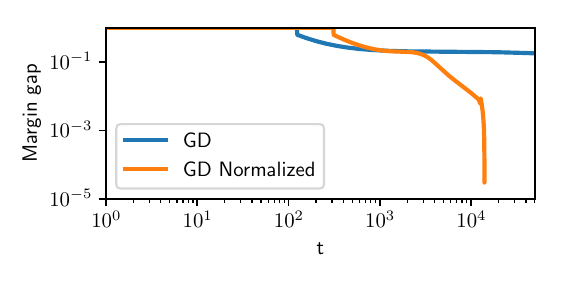}
    }\par
    \vspace{-10pt}
    {\includegraphics[width=0.4\textwidth]{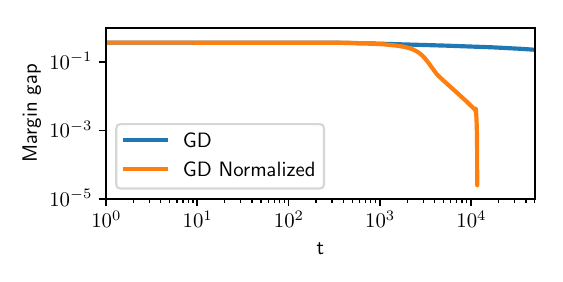}
    }
    \vspace{-15pt}
    \caption{Margin convergence plots for 2 (top) and 3 (bottom) layered linear networks on synthetic clustered data, trained with GD and normalized GD --- the latter provides significantly faster convergence.}
    \label{fig:margin_convergence}%
    \vspace{-10pt}
\end{figure}

\begin{figure}%
    \centering
    {\includegraphics[width=0.45\textwidth]{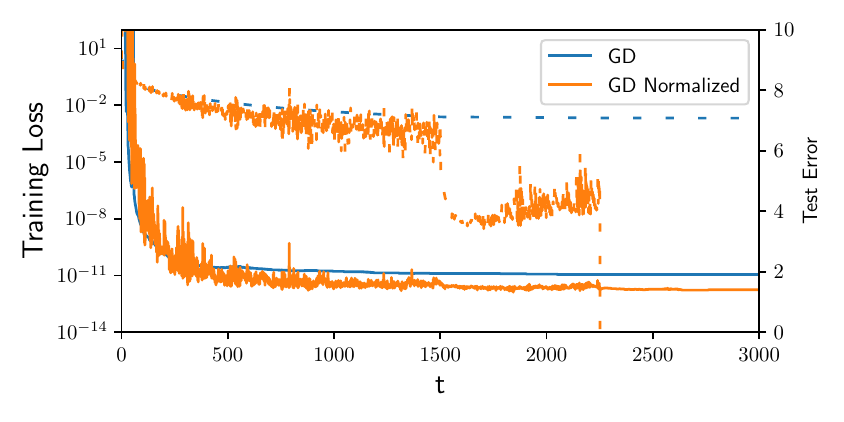} }%
    \vspace{-15pt}
    \caption{MNIST digit classification with a 2-layer feedforward neural network. Training loss (dashed lines) stagnates with GD once gradients become small, while normalized GD keeps making progress. Normalized GD also achieves lower test error (solid lines).}%
    \label{fig:mnist_both}%
    \vspace{-10pt}
\end{figure}

\begin{figure}%
    \centering
    {\includegraphics[width=0.45\textwidth]{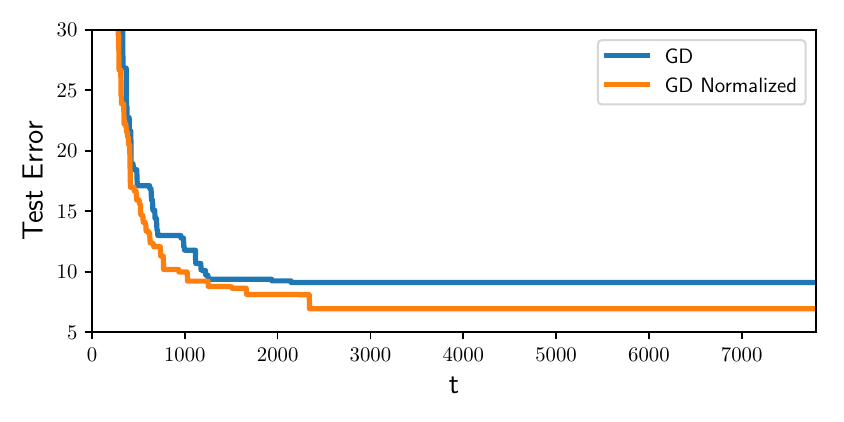} }%
    \vspace{-15pt}
    \caption{Test performance of a Wide ResNet 28-4 on CIFAR-10, with $\eta = 2.0$, where normalized GD outperforms GD by absolute $2.17\%$. We plot 'best yet' test error: the lowest error seen up to iteration $t$. Unlike curves reported in \cite{wideresnet}, progress stops early in training: there is no change in the 'best yet' test error after $t=2350$, even with the decays in learning rate. This suggests that regularization and/or momentum might be required to achieve better results.}%
    \label{fig:cifar22}%
    \vspace{-10pt}
\end{figure}

\remove{
\begin{figure*}%
    \centering
    \subfloat[1 layer Linear Net \\ (Logistic Regression)]
    {{\includegraphics[width=0.33\textwidth]{figures/linear1gap} }
    }
    \subfloat[2 layer Linear Net]
    {{\includegraphics[width=0.33\textwidth]{figures/linear2gap} }
    }
    \subfloat[3 layer Linear Net] {{\includegraphics[width=0.33\textwidth]{figures/linear3gap} }
    }
    \caption{Margin convergence plots for 1, 2 and 3-layered linear networks on synthetic clustered data, trained with GD and normalized GD --- the latter provides significantly faster convergence.}
    \label{fig:margin_convergence}%
\end{figure*}
}

\remove{
\begin{figure}%
    \centering
    \subfloat[1 layer Linear Net \\ (Logistic Regression)]
    {{\includegraphics[width=0.4\textwidth]{figures/linear1gap} }%
    }\par
    \subfloat[2 layer Linear Net]
    {{\includegraphics[width=0.4\textwidth]{figures/linear2gap} }%
    }\par
    \subfloat[3 layer Linear Net] {{\includegraphics[width=0.4\textwidth]{figures/linear3gap} }%
    }
    \caption{Margin convergence plots for 1 (left), 2 (middle) and 3 (right) layered linear networks on synthetic clustered data, trained with GD and normalized GD --- the latter provides significantly faster convergence.}
    \label{fig:margin_convergence}%
\end{figure}
}

In the following experiments, we implement the normalized GD  in eq.~\eqref{eq:norm-gd1} with step sizes separately tuned for each experiment. 
\subsection{Linear Networks on Synthetic Data}

First, in Figure \ref{Fig: NormGD} we visualize the different rates for GD and normalized GD when training a plain logistic regression model on synthetic data. As expected from Theorem~\ref{thm:sd-fast}, we find that normalized GD converges significantly faster than unnormalized GD. 

Additionally, we evaluate experimentally the convergence rates of GD and normalized GD for multi-layer linear models. Networks with $L \in \{1,2,3\}$ layers and $10$ neurons per hidden layer are trained with GD and normalized GD on a synthetic binary classification dataset composed of $600$ points, sampled from two normal distributions (one for each class). 

We use a fixed learning rate $\eta=5 \times 10^{-3}$ chosen through grid-search, and train each network for $5 \times 10^4$ total iterations. Figure \ref{fig:margin_convergence} shows the margin gaps during training, with normalized GD providing faster convergence rates across models. Appendix \ref{sec-ngd_linear} provides details on data generation and training, along with results on ReLU networks.

\subsection{Image Classification on MNIST}
The MNIST dataset is composed of 70,000 grayscale images of 0-9 digits (10 classes total), each having $28 \times 28$ pixels. We use 10,000 images for testing and the rest for training and validation. Unlike harder datasets such as CIFAR-10 and CIFAR-100, MNIST provides a task where simple models can successfully separate the training examples. Hence, we train a 2-layer feedforward network with 5,000 hidden neurons and ReLU activations (ReLU$(x) = \max(0,x)$) with full-batch GD and normalized GD using the cross-entropy loss, for a total of 3,000 iterations. We decay the learning rate by a factor of $5$ at $50\%, 75\%$ and $87.25\%$ of the total number of iterations. 

We performed grid-search over initial learning rate values $\{0.1, 0.3, 0.5, 1.0, 2.5, 5.0\}$ using 5,000 images randomly chosen from the training set as validation, and $\eta = 1.0$ yielded better results for both GD and normalized GD. We use no regularization nor data augmentation, since our goal is to observe the contrast between GD and normalized GD as the training loss decreases and gradients become small. Figure \ref{fig:mnist_both} shows the training loss and test error at each iteration $t$: while the training loss stagnates early for GD, normalized GD keeps decreasing it. Normalized GD also reaches lower test error: $1.4\%$ compared to $1.91\%$.

\subsection{Image Classification on CIFAR-10}

The CIFAR-10 dataset \citep{krizhevsky2009} consists of 60,000 colored $32 \times 32$ images belonging to one of 10 possible classes, and is split into 50,000 training and 10,000 test points. The goal of this experiment is to evaluate whether normalized GD can provide advantages for training complex models on more realistic tasks, when using the standard cross-entropy loss.

For that, we train a Wide ResNet 28-4 \citep{wideresnet}, a 28-layer convolutional neural network with residual connections and a total of 5.8M parameters. This architecture is capable of reaching less than $4\%$ test error on CIFAR-10 given more features per convolutional layer, making Wide ResNets a strong model baseline to compare the benefits of normalized GD against GD. Following \cite{wideresnet}, we pre-process the dataset by performing channel-wise normalization on each image using statistics computed from the training set. Horizontal flips and random crops are used during training for data-augmentation. We also follow the same learning rate schedule, decaying it by a factor of 5 at $30\%$, $60\%$ and $80\%$ of the total iterations.

To select a learning rate for each method, we train the network for 3,000 iterations with $\eta \in \{1.0, 1.5, 2.0, 2.5, 3.0\}$. Both methods performed better on a validation set of 5000 images with $\eta = 2.0$. Figure \ref{fig:cifar22} shows the test performance when training the model for 7,800 iterations with $\eta=2.0$, where normalized GD achieves $6.93\%$ test error, while GD yields $9.90\%$.

Note that, while normalized GD outperformed GD in this full-batch setting, its performance is still subpar when compared to the standard optimization for Wide ResNets, which includes SGD with Nesterov momentum and weight decay. To confirm whether momentum and weight decay can have strong positive impacts in a model's performance, we also trained a Wide ResNet 28-4 using SGD, with and without momentum/weight decay. We observed that removing momentum and weight decay resulted in a test error increase from $4.45\%$ to $7.75\%$ (larger error than normalized GD). This suggests an importance in reconciling weight decay, momentum and gradient normalization.

\remove{
\subsection{Conjecture and comparison with exact results}
This analysis provides the following characterization of the asymptotic solution:
      \begin{restatable}{conjecture}{conjectureGenericTail} \label{conjecture: generic tail}
		For almost all datasets which are linearly separable (Definition \ref{def: the data is separable}) and given a $\beta$-smooth $\mathcal{L}$ (Definition \ref{def: beta smooth}), with a strictly monotone loss function $\ell$ (Definition \ref{def: l(u) assumptions}) such that:
		\begin{equation}
		\ell'(u) = -\e(-f(u)),
		\end{equation}
		where $f'(t)>0$ and $ f(t)=\omega\left(\log(t)\right)$,
		gradient descent (as in eq.~\eqref{GD}) with stepsize $\eta<2\beta^{-1}$ and any starting point $\wvec(0)$ will behave as:
		\begin{equation} 
		\wvec(t) = \what g(t)+\bm{\rho}(t)
		\end{equation}
		where
        \begin{equation} \label{eq: g' = exp(...)}
			\dot{g}(t) = \e\left(-f\left(g(t)\right)\right),\ ||\rhoVec|| = o(g(t))
		\end{equation}
		and
		\begin{equation}
			\bm{\rho}(t) = \begin{cases}
			\left(f'(g(t))\right)^{-1}\genericConstVec+o\left(\left(f'\left(g
			\left(t\right)\right)\right)^{-1}\right), & \text{if $\left(f'\left(g
			\left(t\right)\right)\right)^{-1}=\Omega(1)$}\\
			O(1), & \text{otherwise},
			\end{cases}
		\end{equation}	
		where $\genericConstVec$ is not dependent on $f(u),\ g(u)$ and so
		\begin{equation}
		\lim_{t\to\infty} \frac{\wvec(t)}{\wNorm}=\frac{\what}{\whatNorm} \, ,
		\end{equation}
		where $\what$ is the $L_2$ max margin vector (eq.~\eqref{w_hat equation})
	\end{restatable}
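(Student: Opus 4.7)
The proof plan is to mirror the structure sketched in Section~\ref{sec: Generic Tails} but make each approximation rigorous, following the template established in the proof of Theorem~\ref{theorem: main}. First, since $f(t)=\omega(\log t)$ implies $-\ell'(u)=\e(-f(u))$ decays superpolynomially (and $\ell$ is strictly monotone, bounded, and $\mathcal{L}$ is $\beta$-smooth), Lemma~1 of \citet{soudry2017implicit} gives $\wNorm\to\infty$, $\cL(\w(t))\to 0$, and $\xnT\w(t)\to\infty$ for every $n$. Combining this with the fact that $f$ is strictly increasing, a standard KKT argument (as outlined after eq.~\eqref{eq: GD-ODE}) shows that any subsequential limit of $\w(t)/\wNorm$ must be a positive combination of support vectors satisfying the max-margin KKT conditions, and hence must equal $\what/\whatNorm$. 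This establishes the directional convergence and reduces the problem to characterizing the rate.

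Next, guided by the Ansatz $\w(t)=\what g(t)+\bm{\rho}(t)$ with $\|\bm{\rho}(t)\|=o(g(t))$, I would substitute into the continuous-time gradient flow and split the sum into support and non-support contributions. For the support contribution, since $\what^\top\xn=1$ for $n\in\set$, a first-order Taylor expansion of $f$ around $g(t)$ yields
\begin{equation}
-\derL(\w(t))\approx \e(-f(g(t)))\sumnsv \e\bigl(-f'(g(t))\,\xnT\bm{\rho}(t)\bigr)\xn + \text{(non-support)}.
\end{equation}
For the non-support contribution, since $\what^\top\xn>1$ strictly, the exponent contains an extra factor $\e(-[f(g(t)\,\what^\top\xn)-f(g(t))])$ that vanishes in relative terms because $f$ is strictly increasing and $g(t)\to\infty$; careful bookkeeping (analogous to Lemma~5 of the Theorem~\ref{theorem: main} proof) shows it is lower-order. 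Matching the leading component in the $\what$ direction then forces $\dot g(t)=\e(-f(g(t)))$, which gives eq.~\eqref{eq: g' = exp(...)}. Matching the remaining components on the support subspace forces $\e(-f'(g(t))\,\xnT\bm{\rho}(t))$ to play the role of the dual weights $\alpha_n$, which is solved by $\bm{\rho}(t)=(f'(g(t)))^{-1}\genericConstVec+o((f'(g(t)))^{-1})$ whenever this quantity is $\Omega(1)$; existence and uniqueness of $\genericConstVec$ on almost every dataset is inherited from Lemma~8 of \citet{soudry2017implicit}.

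To go from the heuristic to a proof, I would set up the discrete analogue of the above: define $\bm{\rho}(t):=\w(t)-\what g(t)$, write a one-step recursion for $\bm{\rho}(t+1)-\bm{\rho}(t)$ using the GD update, and bound this by its leading-order asymptotic plus a controlled remainder. The core estimate is to show $\|\bm{\rho}(t)\|^2$ grows no faster than the asserted rate; this is done by expanding $\|\bm{\rho}(t+1)\|^2-\|\bm{\rho}(t)\|^2$, using $\what^\top\x_n=1$ on $\set$ to cancel the dominant terms, and controlling the Taylor remainder via a uniform bound on $|f''(g(t))|\cdot\|\bm{\rho}(t)\|^2$, which is sub-leading precisely under the stated hypothesis $f(t)=\omega(\log t)$ (and, for the refined characterization, under $(f'(g(t)))^{-1}=\Omega(1)$). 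The auxiliary closed form $g(t)=f^{-1}(\log(t+C))$ follows whenever $\log f'(t)=o(f(t))$, by noting that this condition makes the extra $\log f'(g)$ correction in $\dot g$ vanish at leading order.

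The main obstacle is the remainder control for $\bm{\rho}(t)$ under only the mild assumption $f(t)=\omega(\log t)$. Unlike Theorem~\ref{theorem: main}, where $f(u)=u^\lossPower$ provides clean polynomial control on $f',f''$, here $f$ is generic, so one must bound $f'',f'''$ uniformly on the (slowly growing) interval $[g(t)-\|\bm{\rho}(t)\|,\,g(t)+\|\bm{\rho}(t)\|]$ using only $f'>0$ and the growth rate of $f$. This is the reason the statement is a conjecture rather than a theorem: verifying that for every admissible $f$ the higher-order derivatives behave compatibly with the leading order (i.e.~that $f$ is ``regular'' in the de~Bruijn/Karamata sense) is the missing ingredient, and counterexamples with wildly oscillating $f'$ may break the argument even though $f(t)=\omega(\log t)$ holds. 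A reasonable strengthening that would turn the proof through would be to additionally assume $f$ is eventually convex with $f''(t)/f'(t)^2$ bounded, which subsumes the poly-exponential case and rules out pathological oscillations.
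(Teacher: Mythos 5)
The statement you are proving is stated in the paper as a \emph{conjecture}: the paper itself offers no proof of it, only the non-rigorous derivation in Section~\ref{sec: Generic Tails} (substituting the ansatz $\wvec(t)=\what g(t)+\bm{\rho}(t)$ into the gradient-flow equation, Taylor-expanding $f$, matching leading orders to get $\dot g=\e(-f(g))$ and $\bm{\rho}\approx (f'(g))^{-1}\genericConstVec$ with $\genericConstVec$ from Lemma~8 of \citet{soudry2017implicit}), together with consistency checks: the rigorous Theorem~\ref{theorem: main} for $\lossPower>0.25$ and the worked examples in Appendix~\ref{sec: Examples}. Your proposal retraces exactly this heuristic and then outlines how one would try to rigorize it by imitating the proof of Theorem~\ref{theorem: main} (define a residual $\rvec(t)$, expand $\|\rvec(t+1)\|^2-\|\rvec(t)\|^2$, cancel dominant terms using $\what^\top\x_n=1$ on $\set$, control remainders). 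To your credit, you explicitly identify the missing ingredient --- uniform control of $f''$, $f'''$ (i.e.\ a regularity/Karamata-type condition on $f$) on the relevant interval --- and you correctly conclude that without such an additional hypothesis the argument does not close; this matches the paper's own assessment of why the statement remains a conjecture. So your submission is not a proof, and could not be graded as one, but the gap you leave open is the genuine one.

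One additional gap in your write-up, beyond the one you flag: in your first paragraph you assert that directional convergence to $\what/\whatNorm$ is ``established'' by Lemma~1 of \citet{soudry2017implicit} plus a ``standard KKT argument.'' That is too strong. Lemma~1 only gives $\wNorm\to\infty$ and $\xnT\wvec(t)\to\infty$; the KKT argument in Section~\ref{sec: Generic Tails} is conditional on assumptions that are themselves part of the conjecture's content --- that the limit direction exists, that $\|\bm{\rho}(t)\|=o(g(t))$, and that the non-support contributions to the gradient are asymptotically negligible. The last point is precisely where $f(t)=\omega(\log t)$ must enter quantitatively: for polynomial tails (where $f(t)=O(\log t)$) the paper's example in Appendix~\ref{sec: Examples} shows the conclusion is simply false, so no argument that treats the non-support terms as ``lower-order by monotonicity of $f$'' can be correct without a rate-dependent estimate. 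Any rigorous proof would have to handle the existence of the limit direction and the non-support bookkeeping simultaneously with the remainder control you already identified, which is why even the first part of your plan is not yet a reduction ``to characterizing the rate.''
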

    In appendix \ref{sec: CALCULATION OF CONVERGENCE RATES} we show this conjecture  implies the convergence rates specified in table \ref{table: generic convergence rates}.
    \begin{table} [ht]
        \centering
			\normalsize	
			\begin{tabular}{c  c c} 
				\toprule
				\multicolumn{1}{c}{$-\ell'(u) \asymp e^{-u^\lossPower}$} & \multicolumn{2}{c}{$-\ell'(u) \asymp e^{-f(u)}$, $f(u)=\omega(\log(t))$}\\
				 $0<\lossPower\le \frac{1}{4}$ \vphantom{$\displaystyle \frac{1}{f'(g(t))}$} & $\frac{1}{f'(g(t))}=\Omega(1)$ & \text{otherwise}\\ 
				\midrule
				 \multicolumn{1}{c}{$C_1\lossPower^{-1}\log^{-1}(t) + o(\log^{-1}(t)) $} \vphantom{$\displaystyle O\left(\frac{1}{\log^{\frac{1}{\lossPower}(t)}}\right)$} & $C_1 \left(
				g(t)f'(g(t))\right)^{-1} + o\left(\left(
				g(t)f'(g(t))\right)^{-1}\right) $ &  $O\left(g^{-1}(t)\right)$\\ 
				 \multicolumn{1}{c}{$C_2\lossPower^{-2}\log^{-2}(t) + o(\log^{-2}(t)) $} \vphantom{$\displaystyle  \frac{1}{o\left(\left(
				g(t)f'(g(t))\right)^{-1}\right)}$} & $C_2 \left(
				g(t)f'(g(t))\right)^{-2} + o\left(\left(
				g(t)f'(g(t))\right)^{-2}\right) $ &  $O\left(g^{-2}(t)\right)$\\
				\bottomrule
			\end{tabular}
			\captionsetup{width=\textwidth}
			\caption{Summary of convergence rates derived from Conjecture \ref{conjecture: generic tail}, to complete our exact results from Table \ref{table: convergence rates}. The two lines are the same as in Table \ref{table: convergence rates}: the first line is the convergence rate for both the distance and the suboptimality of the margin (with $C_3$ instead of $C_1$). The second line is the angle convergence rate. The function $g(t)$ satisfies eq.~\eqref{eq: g' = exp(...)}. If $\log\left(f^{\prime}\left(t\right)\right)=o\left(f\left(t\right)\right)$, then we can approximate $g(t) \approx f^{-1}\left(\log(t)\right)$. } \label{table: generic convergence rates}
	\end{table}
	
To prove this conjecture in general we have to justify various assumptions (e.g., the existence of certain limits) and approximations (e.g., Taylor expansions) we made during the analysis above. It turns out that this becomes more and more difficult as the tail of the loss derivative becomes heavier. Therefore, our exact results on Poly-exponential tails (Theorem \ref{theorem: main}), which assumed asymptotically for $u>0$ that
    $$f(u)=u^\lossPower,\ f'(u)=\lossPower u^{\lossPower-1}, f'(g(t)) = \lossPower g^{\lossPower-1}(t),$$    
were only proved for $\lossPower>0.25$. These results are consistent with Conjecture \ref{conjecture: generic tail}: \\
When $\lossPower>1$, $\left(f'(g(t))\right)^{-1}=\lossPower^{-1}g^{1-\lossPower}(t)$ is not $\Omega(1)$ and therefore $\rhoVec=O(1)$.\\ 
    When $\lossPower<1$, $\left(f'(g(t))\right)^{-1}=\lossPower^{-1} g^{1-\lossPower}=\Omega(1)$ and therefore $$\rhoVec=\left(f'(g(t))\right)^{-1}\genericConstVec+o\left(\left(f'(g(t))\right)^{-1}\right) = \lossPower^{-1}g^{1-\lossPower}(t)\bm{a}+o(g^{1-\lossPower}(t)).$$
    
This suggests that Theorem \ref{theorem: main} holds even for $\lossPower\leq 0.25$, and that indeed the exponential tail ($\lossPower =1$) obtains the optimal convergence rate over all poly-exponential loss functions.

In appendix \ref{sec: Examples}, we demonstrate the conjecture by examining two examples. The first example proves that when $f(t)$ is not $\omega(\log(t))$ (e.g. the loss has a power-law tail) we might not convergence to the max-margin separator. The second example demonstrates convergence with sub-poly-exponential tails (that satisfies $f(t)=\omega(\log(t))$).
    }
        
 \section{DISCUSSION}
In this work, we have examined the behavior of gradient descent on separable data, in binary linear classification tasks. First, in Theorem \ref{Theorem: convergence to max margin for general tail} we proved the linear classifier resulting from a multilayer linear neural networks converges in direction to the $L_2$ maximum-margin on almost all linearly separable data --- for a wide family of monotone, convex loss functions with super-exponential tails and some technical conditions (Assumption \ref{def: general tail loss}). In contrast, polynomially tailed loss function do not lead to convergence to the maximum-margin. Intuitvely, the reason behind this is that for super-polynomial loss functions the datapoints with the largest margin (i.e., the support vectors) become dominant in the gradient, while for polynomial or heavier tails the contribution of non-support vectors is never negligible.  

Next, we examine the convergence rate for a linear classifier with loss within this wide family of loss functions. We prove in Theorem \ref{theorem: general convergence rates simplified} that the exponential tail has the optimal rate. This offers a possible explanation to the empirical preference of the exponentially-tailed loss functions over other losses (e.g. the probit loss): that the exponential loss leads to a faster convergence to the asymptotic (implicitly biased) solution, as we showed here. This result is somewhat surprising, and we do not have an intuitive explanation why this should be true. 

In Theorem \ref{theorem: general convergence rates simplified with ode}, we extend these results to multilayer linear neural networks, and show similar convergence rates, with only a negligible decrease in the rate with the depth --- even when the number of layers is infinite. Note that in this Theorem we already assume convergence of the loss to zero. However, if we do converge, it is somewhat surprising that this rate does not depend much on the depth, as one might expect to have convergence rate issues due to exploding or vanishing gradients. 

In Theorem \ref{thm:sd-fast} we showed that the convergence of GD  for an exponential loss function could be significantly accelerated by simply increasing the learning rate. In fact, GD can also approximate the regularization path in the following sense. 
Let $R= \norm{\w_t}$, and $\w_{R} =\arg\min_{\norm{\w} \le R} \cL( \w)$. Then
\vspace{-0.3em}
\begin{align}
\cL(\w(t)) -\cL(\w_R ) \le \cL(\w(0)) \exp(-c \gamma^2 t)\,.
\end{align}
As a simple implication of this, the normalized GD path starting at $\w_0  =0$  has $\cL(\w(0)) = n$, so after $t\ge \log(n/\epsilon)/\gamma^2$ steps the loss achieved by $\w_t$ is $\epsilon$ close to the best predictor of the same norm. This shows that GD is closely approximating the regularization path. 

Finally, we show numerically that normalized GD can significantly improve the convergence speed of GD on synthetic datasets for linear predictors (Figure \ref{fig:Synthetic-dataset}), linear multilayer networks (Figure \ref{fig:margin_convergence}), and even non-linear ReLU multilayer networks (Appendix \ref{sec-ngd_linear}). Additionally, we show normalized GD can improve the results of GD on standard datasets such as MNIST (by $0.5\%$) and CIFAR-10 (by $3\%$). However, a gap remains from achieving state of the art results. Our experiments indicate the origin of this gap is the use of weight decay and momentum (which are outside the scope of this paper). This suggests that reconciling regularization, momentum and gradient normalization might be of particular interest for future work, possibly reducing the gap between mini-batch and full-batch training.

Recent work explore extensions of the implicit bias result for linear models to non-strictly-separable datasets \citep{ji2018risk} and to stochastic gradient descent \citep{ji2018risk,nacson2018stochastic,xu2018convergence}. It remains to be seen if the results of this work could be also extended to such settings. Additionally, combining our results with the results of a parallel work, \cite{ji2018gradient}, might enable us to weaken some of the assumptions in this paper. We discuss \cite{ji2018gradient} work in appendix \ref{sec: Additional Related Work}.

\subsubsection*{Acknowledgements}

The authors are grateful to C. Zeno, and N. Merlis for
helpful comments on the manuscript. This research was supported by the Israel Science foundation (grant No. 31/1031), and by the Taub foundation. A Titan Xp used for this research was donated by the NVIDIA Corporation. PS, SG and NS were partially supported by NSF awards IIS-1302662 and IIS-1764032.

\remove{
Theorems \ref{theorem: main} and \ref{thm:sd-fast}, and their proof methods, both seem to have their own strengths and weaknesses. The analysis behind Theorem \ref{theorem: main} allows exact calculation of convergence rates to the max-margin separator (including constants in some cases). These rates are easy to calculate and understand intuitively. However, it is significantly harder to prove them rigorously. Such a proof seems to become harder as the loss tail becomes heavier since we have to consider additional terms in the asymptotic calculations (this is why Theorem  \ref{theorem: main} stops at $\nu>0.25$). Additionally, the current results for $\nu\neq 1$ are limited to ``almost every dataset" (e.g., any dataset sampled from an absolutely continuous distribution). However, we believe that it is possible to derive the corrections to the convergence rate resulting from zero measure cases and that these should be of lower order (e.g., $O(\log\log(t))$ for the exponential tail), as proved in \cite{soudry2018journal} for exponential tails. 

In contrast, the proof of Theorem \ref{thm:sd-fast} is significantly simpler, does not require any assumptions on the dataset beyond its linear separability, and it easily generalizes to steepest descent, and to variable step sizes. However, this approach has some weaknesses. First, the results are only stated for the exact exp-loss. In contrast, Theorem \ref{theorem: main} only requires \emph{the tail} of the loss to be poly-exponential. Extension of this result to losses with exponential tails seems possible, based on the methods of \cite{telgarsky2013margins}, but less so to other types of tails. Second, this theorem only provides a bound, not an exact asymptotic result (in contrast to Theorem \ref{theorem: main}). Furthermore, this bound is only on the margin, so we may have a different rate on the convergence to max-margin separator itself; this is the case exactly in the zero measure cases of Theorem \ref{theorem: main}, as shown in \cite{soudry2018journal}.
}


\bibliographystyle{plainnat}
	\newpage
	\onecolumn
	\appendix

	\part*{Appendix}
	\section{Additional Related Work} \label{sec: Additional Related Work}
	\begin{enumerate}
	\item  \citet{rosset2004margin} investigated the connection between the loss function choice and the maximum-margin solution. They considered linear models with monotone loss functions and explicit norm regularization. The authors examined the solutions of the regularized loss function
	\[
	     \mathbf{w}(\lambda) = \argmin\limits_{\wvec}\mathcal{L}\left(\mathbf{w}\right) + \lambda \norm{\mathbf{w}}_p^p= \argmin\limits_{\wvec} \sum_{n=1}^{N}\ell\left(y_{n}\mathbf{w}^{\top}\mathbf{x}_{n}\right) + \lambda \norm{\mathbf{w}}_p^p
	\]
	for some $p$ as the regularization vanishes, i.e., $\lambda\to0$. We focus here on the euclidean norm, meaning $p=2$. In this case, they proved that if $\exists T$ (possibly $T=\infty$) so that
    \begin{equation}\label{eq: rosset2004 assumption}
        \forall \epsilon>0: \ \lim\limits_{t\to T}\frac{\ell(t\cdot(1-\epsilon))}{\ell(t)}=\infty\,,
    \end{equation}
    then $\mathbf{w}(\lambda)$ converge into the direction of the maximum-margin solution as $\lambda\to0$. Note, that since in this paper we assume that the loss is \underline{strictly} decreasing this implies that the last equation can only be satisfied with $T\to\infty$.
    
    We examine the main differences and similarities between \citet{rosset2004margin} results and ours.
    
    \textbf{Main differences in settings:}
    In our work, we examine the convergence of GD iterates and its implicit bias, while \citet{rosset2004margin} focused on the explicit bias in the limit of the regularization path, as the regularization vanishes. Thus, in our results we take into account the optimization dynamics. In addition, we examine linear fully connected networks while \citet{rosset2004margin} only considered linear models.
    
    \textbf{Relation between \citet{rosset2004margin} and our results:}
    In eq.~\eqref{eq: rosset2004 assumption}, \citet{rosset2004margin} states a condition on the loss function that guarantees convergence to the maximum-margin separator. In our Theorem \ref{Theorem: convergence to max margin for general tail}, the key assumption on the loss function is that $\ell'(u)=-\exp(-f(u))<0$ and $f'(u)=\omega(u^{-1})$ (Assumption \ref{def: l(u) assumptions}). \citet{rosset2004margin} condition is weaker than our assumption because
    $f'(u)=\omega(u^{-1})$ implies that $\forall \epsilon>0: \lim\limits_{t\to \infty}\exp\left(f(u) - f((1-\epsilon)u)\right)=\infty$ (Lemma \ref{lemma: f(g(t1))-f(g(t2)) to infty}).
    This also implies that $\forall \epsilon>0: \ \lim\limits_{t\to \infty}\frac{\ell'(t\cdot(1-\epsilon))}{\ell'(t)}=\infty$. Using L'Hospital's rule we have that
    $\forall \epsilon\in(0,1): \ \lim\limits_{t\to \infty}\frac{\ell(t\cdot(1-\epsilon))}{\ell(t)}
    =\lim\limits_{t\to \infty}(1-\epsilon)\frac{\ell'(t\cdot(1-\epsilon))}{\ell'(t)}=\infty$.
    Thus, our assumption \ref{def: l(u) assumptions} implies the assumption in eq.~\eqref{eq: rosset2004 assumption}. It is still unclear if the opposite direction is also true, \textit{i.e.}, if eq.~\eqref{eq: rosset2004 assumption} implies Assumption \ref{def: l(u) assumptions}.
    
    Furthermore, in Theorem \ref{Theorem: convergence to max margin for general tail} we also make additional assumptions on the the convergence of GD iterate and its gradients. These additional assumption are required in our analyses since analyzing the optimization dynamics in opposed to examining the regularization path limit poses additional technical challenges.
    
    \item After the submission of this paper to AISTATS, another related work appeared \cite{ji2018gradient}. \cite{ji2018gradient} consider fully connected linear networks, separable dataset and strictly decreasing loss functions which are $\beta$-smooth and $G$-Lipschitz. They show that for GD with particular decreasing step sizes and mild assumptions on the initialization, the loss converges to zero. They connect this result to an alignment phenomenon between different layers. In addition, for the logistic loss and under the additional assumption that the SVM support vectors span the all space $\mathbb{R}^d$, they show that the network equivalent linear predictor converges in the direction of the maximum-margin separator. It remains to be seen if combining our results and \cite{ji2018gradient}, we can weaken the assumptions we required to prove convergence rates for linear neural nets.   \end{enumerate}
    \remove{
    \citet{rosset2004margin} requirement on the loss function $\forall \epsilon>0: \ \lim\limits_{t\to \infty}\frac{\ell(t\cdot(1-\epsilon))}{\ell(t)}=\infty$ is closely related to a requirement we had in our analysis on the loss function derivative: $\forall \epsilon>0: \ \lim\limits_{t\to \infty}\frac{\ell'(t\cdot(1-\epsilon))}{\ell'(t)}=\infty$ as we explain next.
    WLOG, we can write $\ell(u)=\exp(-h(u))$ for some general function $h(u)$. Thus, the condition $\forall \epsilon>0: \lim\limits_{t\to \infty}\frac{\ell(t\cdot(1-\epsilon))}{\ell(t)}=\infty$ is equivalent to
    $\forall \epsilon>0: \lim\limits_{t\to \infty}\exp\left(h(u) - h((1-\epsilon)u)\right)=\infty$. In our analysis, we had a the same requirement for $f(u)$ that is used to describe the loss function derivative, $\ell'(u)=-\exp(-f(u))$. Our analysis required that $\forall \epsilon>0: \lim\limits_{t\to \infty}\exp\left(f(u) - f((1-\epsilon)u)\right)=\infty$. We showed that if $f'(u)=\omega(u^{-1})$ (Assumption \ref{def: general tail loss}) then this condition is satisfied (Lemma \ref{lemma: f(g(t1))-f(g(t2)) to infty}). If we assume that $\ell(u)=\exp(-h(u))$ then we also have that $\ell'(u)=-h'(u)\exp(-h(u)) = - \exp(-h(u)+\log(h'(u)))$. Assuming that $\log(h'(u))=o(h(u))$ we get that the loss function tail has the same asymptotic behaviour as the loss function itself.
	}
	\section{Adaptive Learning Rate}\label{app:learning rate}
For learning linear models with exponential loss, 
\citet{gunasekar2018implicit} provide an alternative proof for convergence to max-margin solution when using gradient descent. This result also generalized the characterization of implicit bias for general steepest descent algorithm. While  \citet{gunasekar2018implicit}
do not state a rate of convergence, the technique can be used to establish that the margin converges at the rate of $O(1/\log{t})$ 
as summarized in the following theorem (specialized here only for gradient descent):
	\begin{theorem} 
For any separable data set, any initial point $\w(0)$, consider gradient descent iterates with a fixed step size $\eta< \frac{1}{\cL(\w(0))}$  for linear classification with the exponential loss $\ell(u)=\exp(-u)$.\\ 
Then the iterates $\w(t)$ satisfy:
$$  \min_n\frac{\w(t)^{\top}\x_n}{\|\w(t)\|}= \gamma -O\Big( \frac{1}{\log t}\Big) \, ,$$ 
where $\gamma=\max_{\w} \min_{n} \frac{\w^{\top} \x_n }{\norm{\w}} =\frac{1}{\whatNorm}$ is the maximum-margin . 
\label{thm:sd-exp}
\end{theorem}
Note that Theorem \ref{thm:sd-exp} ensures the rate of convergence of the margin, but does not specify how quickly $\w(t)$ itself converges to the max-margin predictor $\hat{\mathbf{w}}$.
    	\subsection{Proof for Theorems \ref{thm:sd-exp} and \ref{thm:sd-fast} \label{sec:proofs-jason}}
 In this section we prove extended versions of Theorems \ref{thm:sd-exp} and \ref{thm:sd-fast}. In this section only, the norm $\norm{\cdot}$ is a general norm (not the $L_2$ norm, like in the rest of the paper). First, we state definitions and auxiliary results.
 
The following lemma is a standard result in convex analysis.
\begin{lemma}[Fenchel Duality]
Let $\mathbf{A} \in \b{R}^{m \times n}$, and $f,g$ be two closed convex functions. Then
\begin{align}
\max_\w - f^* ( -\mathbf{A}\w) - g^* (\w) \le \min_\mathbf{r} f(\mathbf{r}) +g(\mathbf{A}^\top \mathbf{r}).
\end{align}
\label{lem:fenchel}
\end{lemma}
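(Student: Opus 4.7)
The plan is to establish the inequality pointwise: for every choice of $\w$ and $\mathbf{r}$, show
$$-f^*(-\mathbf{A}\w) - g^*(-\w) \;\le\; f(\mathbf{r}) + g(\mathbf{A}^\top \mathbf{r}),$$
and then take the maximum in $\w$ on the left and the minimum in $\mathbf{r}$ on the right to recover the stated weak duality. The workhorse is the Fenchel-Young inequality $h(x) + h^*(y) \ge \langle x, y\rangle$, which holds for any closed convex $h$ and any pair $(x,y)$; closed convexity of $f$ and $g$ is exactly what is needed to invoke it.

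First I would apply Fenchel-Young to $(f,f^*)$ at the pair $\bigl(\mathbf{r},\,-\mathbf{A}\w\bigr)$, giving
$$f(\mathbf{r}) + f^*(-\mathbf{A}\w) \;\ge\; \langle \mathbf{r},\,-\mathbf{A}\w\rangle \;=\; -\langle \mathbf{A}^\top \mathbf{r},\,\w\rangle,$$
and then apply it to $(g,g^*)$ at an analogous pair involving $\mathbf{A}^\top \mathbf{r}$ and a sign of $\w$ chosen so that, when the two inequalities are added, the bilinear cross-term $\langle \mathbf{A}^\top \mathbf{r},\,\w\rangle$ cancels. Rearranging the sum then isolates exactly the desired pointwise bound. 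This is where the main bookkeeping lies, since the arguments of $f^*$ and $g^*$ in the lemma must come out with the precise sign convention displayed.

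An equivalent and perhaps cleaner route is via Lagrangian duality: rewrite the primal as the constrained problem $\min_{\mathbf{r},\mathbf{s}}\{f(\mathbf{r}) + g(\mathbf{s}) : \mathbf{s} - \mathbf{A}^\top \mathbf{r} = 0\}$, form the Lagrangian with multiplier $\w$, and compute the dual function by separately infimizing in $\mathbf{r}$ and $\mathbf{s}$; each inner infimum is recognized as a Fenchel conjugate evaluated at a linear functional in $\w$, and weak duality then delivers the lemma directly. The only real obstacle is tracking the signs through the inner products so that the conjugate arguments match the statement; beyond closed convexity of $f$ and $g$, no further regularity is needed, and the sup/inf step in the final display is immediate.
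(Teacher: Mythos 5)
Your strategy---prove the bound pointwise from Fenchel--Young and then take $\max_\w$ on one side and $\min_{\mathbf{r}}$ on the other---is the standard argument for weak Fenchel duality, and since the paper states this lemma without proof (citing it as a standard fact of convex analysis), it is the right route in principle. The Lagrangian variant you sketch (introduce $\mathbf{s}=\mathbf{A}^\top\mathbf{r}$, dualize the coupling constraint) is an equally valid packaging of the same computation.

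However, the sign bookkeeping you defer is not a formality, and it cannot be made to ``come out with the precise sign convention displayed.'' Fenchel--Young at $(\mathbf{r},-\mathbf{A}\w)$ gives $f(\mathbf{r})+f^*(-\mathbf{A}\w)\ge-\langle\mathbf{A}^\top\mathbf{r},\w\rangle$; to cancel this cross term you must apply Fenchel--Young to $g$ at $(\mathbf{A}^\top\mathbf{r},+\w)$, i.e. $g(\mathbf{A}^\top\mathbf{r})+g^*(\w)\ge\langle\mathbf{A}^\top\mathbf{r},\w\rangle$, and adding yields $-f^*(-\mathbf{A}\w)-g^*(\w)\le f(\mathbf{r})+g(\mathbf{A}^\top\mathbf{r})$. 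If instead \emph{both} conjugates are evaluated at negated arguments, as in the lemma's display, the two Fenchel--Young inequalities produce the same term $-\langle\mathbf{A}^\top\mathbf{r},\w\rangle$ and nothing cancels; in fact the inequality as literally written is false. Counterexample: $m=n=1$, $\mathbf{A}=1$, and $f=g$ the indicator of the single point $\{a\}$ with $a\neq0$; then the right-hand side equals $0$, while $-f^*(-w)-g^*(-w)=2aw$ is unbounded above. So your argument, carried out carefully, proves the corrected statement $\max_\w\,\bigl(-f^*(-\mathbf{A}\w)-g^*(\w)\bigr)\le\min_{\mathbf{r}} f(\mathbf{r})+g(\mathbf{A}^\top\mathbf{r})$ (equivalently, with $f^*(\mathbf{A}\w)$ and $g^*(-\w)$), which is exactly the form the paper actually uses in the subsequent margin-duality lemma, where the dual objective is written $-f^*(-\mathbf{X}^\top\w)-g^*(\w)$; the sign in the lemma's display is a typo. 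With that correction, either of your two routes is complete; as stated, no proof can succeed.
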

Let $\mathbf{X} \in \b{R}^{d \times N}$ be the data matrix and without loss of generality $\norm{\xn}_*\leq 1$. Define $\{\mathbf{e}_{n}\in\mathbb{R}^{N}\}_{n=1}^N$ denote the standard basis in $\mathbb{R}^d$ and the $\norm{\cdot}$--margin as $\gamma = \max_{\norm{\w}=1} \min_{n \in [N]} \mathbf{e}_n ^\top \mathbf{X}^\top \w$. 

In the first auxillary result, we wish to show that $\norm{\nabla \cL(\w)}_* \ge \gamma \cL(\w)$ for all $\w$, which is an analog of the Polyak condition. Define $r_n (\w) = \exp(-\w^T \x_n)$ and let $\mathbf{r}(w)=[r_n(\w)]_{n=1}^N\in\mathbb{R}^N$ denote the vector formed by stacking $r_n(\w)$.  By noting that $\cL(\w) = \norm{\mathbf{r}(\w)}_1$ and $\nabla \cL(\w) = -\mathbf{X} \mathbf{r}(\w) $, this can be restated as $\frac{\norm{\mathbf{X} \mathbf{r}(\w)}_ * }{\norm{\mathbf{r}(\w)}_1}\ge \gamma$. Since we require this for all $\w$, with $r_n(\w) \ge 0$, and norms are homogeneous, this condition follows from showing that 
\begin{align}
\min_{\mathbf{r} \in \Delta_N} \norm{\mathbf{X} \mathbf{r}}_* \ge \gamma,
\end{align}
where $\Delta_N$ is the $N$-dimensional probability simplex.

\begin{lemma}\label{lem:duality}
The following duality holds for all $\mathbf{X}$:
\begin{align}
\min_{\mathbf{r}  \in \Delta_N} \norm{ \mathbf{X} \mathbf{r}}_*\ge \max_{\norm{\w}=1} \min_{n \in [N]} \mathbf{e}_n ^\top \mathbf{X} ^\top \w= \gamma.
\end{align}
This in turn implies that for all $\w\in\mathbb{R}^d$, $\norm{\nabla \cL(\w)}_* \ge \gamma \cL(\w)$.
\end{lemma}
\begin{proof}
 Let $f(\mathbf{r}) = \mathbf{1}_{\mathbf{r} \in \Delta_N}$ and $g(\mathbf{z}) = \norm{\mathbf{z}}_*$.  Thus $\min_{\mathbf{r}  \in \Delta_N} \norm{ \mathbf{X} \mathbf{r}}_* = f(\mathbf{r}) +g(\mathbf{X} \mathbf{r})$. The conjugates are $f^*( \w) = \max_{\mathbf{z} \in \Delta_N} \w^\top \mathbf{z}  = \max_n \innerprod{\mathbf{e}_n}{\w} $, and $g^*(\w) = \mathbf{1}_{\norm{\w}\le 1}$. The LHS of Lemma \ref{lem:fenchel} is
\begin{align}
\max_\w\big( -f^* (-\mathbf{X}^\top \w ) -g^* (\w) \big)&=\max_\w\big( - \max_n -\mathbf{e}_n ^\top \mathbf{X}^\top \w - \mathbf{1}_{\norm{\w} \le 1}\big)\\
 &= \max_{\norm{\w} \le 1} -\max_i -\mathbf{e}_n ^\top \mathbf{X}^\top \w \\
 &= \max_{\norm{\w} \le 1} \min_i \mathbf{e}_n ^\top \mathbf{X}^\top \w = \gamma.
\end{align}
Thus the LHS is equal to $\gamma$, since it is precisely the optimization program of $\norm{\cdot}$ -SVM. By weak duality (Lemma~\ref{lem:fenchel}), we have shown that $\min_{\mathbf{r} \in \Delta_N} \norm{\mathbf{X} \mathbf{r}}_* \ge \gamma$.
\end{proof}

Using this lower bound we proceed with the optimization analysis which largely follows the standard arguments from the optimization literature on first-order methods and the analysis extends the proof by \citet{telgarsky2013margins}, where a similar result was derived for the case of $L_1$ margin using AdaBoost (coordinate descent). We prove the theorems for general steepest descent algorithms which includes gradient descent as a special case.

\subsection{Proof of Theorem \ref{thm:sd-exp}}
Consider the steepest descent algorithm described by the updates below:
\begin{equation}
\begin{split}
\w(t+1) &= \w(t) - \eta \gamma_t \Delta \w(t) \\
\Delta \w(t) ^T \nabla \cL(\w(t)) &= \norm{\nabla \cL(\w(t)) }_*\\
\norm{\Delta \w(t)} &=1\\
\gamma_t &\triangleq \norm{\nabla \cL(\w(t))}_*.
\end{split}\label{eq:steepest_descent}
\end{equation}

It is easy to check  that when $\norm{.}$ is the $L_2$ norm,  steepest descent  above is simply gradient descent.

Next, we prove the generalized version of Theorem \ref{thm:sd-exp}, which applies to steepest descent (instead of just gradient descent). For fixed step sizes, the proof is essentially same as in the proof of Theorem $5$ in \citet{gunasekar2018implicit} where in the end we add the computation of rates. 
\begin{theorem} {\emph{\textbf{[Generalized Theorem \ref{thm:sd-exp}]}}} For any separable data set  and any initial point $\w(0)$, consider steepest descent updates with a fixed step size $\eta$ and exponential loss $\ell(u)=\exp(-u)$. 

Let us assume that $\norm{\x_n}_* \le 1$. If the step size $\eta \le \frac{1}{\cL(\w(0))}$ , then the iterates $\w(t)$ satisfy: $$ \min_n\frac{\innerprod{\w(t)}{\x_n}}{\|\w(t)\|}= \max_{\w} \min_{n} \frac{\innerprod{\w}{\x_n}}{\norm{\w}} -O\Big( \frac{1}{\log t}\Big)$$ In particular, if there is a unique maximum-$\|.\|$ margin solution $\w^\star_{\|.\|}=\arg\max_{\w} \min_{n} \frac{\innerprod{\w}{\x_n}}{\|\w\|}$, then $\lim\limits_{t\to\infty}\frac{\w(t)}{\|\w(t)\|}=\w^\star _{\|.\|}$. \end{theorem}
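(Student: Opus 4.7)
The plan is to combine the Polyak-type lower bound $\|\nabla\cL(\w(t))\|_* \ge \gamma\, \cL(\w(t))$ (just proved via Fenchel duality) with a careful tracking of the logarithm of the loss against the norm of the iterate. First I would establish that the trajectory maintains a useful local smoothness: for the exponential loss with $\|\x_n\|_*\le 1$, a direct computation gives $\|\nabla^2 \cL(\w)\|_{\mathrm{op}} \le \cL(\w)$, so setting $\beta(t):=\cL(\w(t))$ and verifying inductively that the loss decreases along the trajectory produces $\eta\,\beta(t)\le \eta\,\cL(\w(0))=1$ for every $t$. Writing $L_t:=\cL(\w(t))$ and $\gamma_t:=\|\nabla\cL(\w(t))\|_*$, the descent lemma for steepest descent then yields the refined inequality
\begin{align}
L_{t+1} \le L_t - \eta\,\gamma_t^2\Bigl(1-\tfrac{\eta L_t}{2}\Bigr),
\end{align}
and combining this with $\gamma_t \ge \gamma L_t$ gives the standard sublinear decay $L_t = O(1/(\gamma^2 t))$ via the usual telescoping on $1/L_t$, which in particular implies $\sum_{t}L_t^2<\infty$.

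The key step is to derive a sharp norm--loss relationship $-\log L_T \ge \gamma\,\|\w(T)\| - O(1)$. Dividing the refined descent inequality by $L_t$, taking $-\log$, and using $\gamma_t^2/L_t \ge \gamma\,\gamma_t$ I would obtain
\begin{align}
\log\tfrac{L_t}{L_{t+1}} \ge \eta\,\gamma\,\gamma_t - \tfrac{\eta^2 \gamma}{2}\,\gamma_t L_t.
\end{align}
Telescoping from $0$ to $T-1$ and using the triangle inequality $\|\w(T)\|\le \|\w(0)\|+\sum_t \eta\,\gamma_t\,\|\Delta\w(t)\|=\|\w(0)\|+\eta\sum_t\gamma_t$ to lower-bound the first telescoped sum, combined with $\gamma_t\le L_t$ (immediate from $\|\x_n\|_*\le 1$) and the summability $\sum L_t^2 = O(1)$ to bound the correction term, produces
\begin{align}
-\log L_T \ge \gamma\,\|\w(T)\| - C
\end{align}
for a constant $C$ depending only on $\gamma$, $\eta$, $\|\w(0)\|$ and $L_0$.

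The margin conclusion then follows in a short calculation: since $L_T \ge \exp(-\min_n \w(T)^\top\x_n)$ we have $\min_n \w(T)^\top \x_n \ge -\log L_T \ge \gamma\,\|\w(T)\| - C$, and dividing by $\|\w(T)\|$ gives $\min_n \w(T)^\top \x_n/\|\w(T)\| \ge \gamma - C/\|\w(T)\|$. The matching lower bound $\|\w(T)\|\ge -\log L_T \ge \log T - O(1)$---which uses $|\w^\top\x_n|\le \|\w\|\,\|\x_n\|_*\le \|\w\|$ together with the $L_T=O(1/T)$ decay---then delivers the advertised rate $\gamma-O(1/\log T)$. Uniqueness of the limit direction in the case of a unique max-$\|\cdot\|$-margin solution is immediate, since the set of unit-norm predictors with margin at least $\gamma-\epsilon$ shrinks to $\{\w^\star_{\|\cdot\|}/\|\w^\star_{\|\cdot\|}\|\}$ as $\epsilon\to 0$. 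The main obstacle is the tight bookkeeping in the middle step: one must retain the $(1-\eta L_t/2)$ correction and exploit $\sum L_t^2<\infty$, because the na\"ive bound $(1-\eta L_t/2)\ge 1/2$ would cost a factor of two in the leading constant (yielding $\gamma/2$ instead of $\gamma$) and hence miss the stated rate entirely.
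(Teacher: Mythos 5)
Your proposal is correct and takes essentially the same route as the paper's proof: the Fenchel-duality bound $\norm{\nabla\cL(\w)}_*\ge\gamma\,\cL(\w)$, the second-order descent inequality with the $\tfrac12\eta^2\gamma_t^2\cL(\w(t))$ correction, relating $-\log\cL(\w(t))$ to the unnormalized margin together with $\norm{\w(t)}\le \eta\sum_i\gamma_i$, and the $O(1/t)$ loss decay to obtain the $\Omega(\log t)$ growth that yields the $\gamma-O(1/\log t)$ rate. The only differences are bookkeeping (you control the correction term via $\gamma_t\le L_t$ and $\sum_t L_t^2<\infty$ deduced from the $O(1/t)$ decay, whereas the paper sums the descent inequality to get $\sum_t\gamma_t^2<\infty$ directly), so the argument matches the paper's.
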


By Lemma $12$ of \citet{gunasekar2018implicit}, $\cL(\w(t)-\eta \gamma_t\Delta \w(t)) \le \cL(\w(t))$ for any $\eta < \frac{1}{\cL(w(0))}$. 
Then, we note the following:
\[\mathbf{v}^\top\nabla ^2 \cL (\w) \mathbf{v} = \sum_{n} r_n (\x_n ^\top \mathbf{v})^2 \le \sum_n r_n \norm{\x_n}_* ^2 \norm{\mathbf{v}}^2 \le \cL(\w) \max_n\norm{\x_n}_* ^2 \norm{\mathbf{v}}^2\le\cL(\w) \norm{\mathbf{v}}^2,\]
where the last inequality follows since we assumed without loss of generality $\norm{\xn}\le1$. 

Now using Taylor's theorem along with the steepest descent updates in eq.  \eqref{eq:steepest_descent} gives the following
\begin{align}
\cL(\w(t+1)) &\le\cL(\w(t) - \eta \gamma_t  \norm{\nabla \cL(\w(t)}_* + \frac12 \eta^2 \gamma_t ^2 \max_{r\in(0,1) }  \Delta \w(t) ^\top \nabla ^2 \cL( \w(t) -r \eta \gamma_t \Delta \w(t)) \Delta \w(t) \\
&\le \cL(\w(t)) - \eta \gamma_t ^2 + \frac12 \eta^2 \gamma_t^2 \cL(\w(t))   \label{eq:inter-step}\\
& \le \cL(\w(t))\left( 1- \eta  \frac{\gamma_t ^2}{\cL(\w(t))} +\frac12 \eta^2 \gamma_ t^2 \right) \\
&\le \cL(\w(t)) \exp\left( - \eta \frac{\gamma_t ^2}{\cL(\w(t))} + \frac12 \eta^2 \gamma_t ^2\right).
\end{align}
Recursing gives
\begin{align}
\cL(\w(t+1) ) &\le \cL(\w(0)) \exp \left( - \eta \left(\sum_{i=0}^t \frac{\gamma_i ^2}{\cL(\w_i)} + \frac12 \eta \gamma_i ^2\right) \right).
\end{align}

We show convergence of margin the following steps

\begin{asparaenum}[Step I.]
\item \textit{Lower bound on the un-normalized margin.}
\begin{equation}
\max_{n \in[N]} \exp(-\w(t+1)^T x_n) \le \cL(\w(t+1))\le \cL(\w(0)) \exp \Big( - \eta \big(\sum_{i=0}^t \frac{\gamma_i ^2}{\cL(\w_i)} + \frac12 \eta \gamma_i ^2\big) \Big).
\end{equation}

By  applying $-\log$,
\begin{equation}
\min_{j \in [n]} \w(t+1) ^\top \x_n \ge \eta \sum_{i=0}^t \frac{\gamma_i ^2}{\cL(\w_{(i)})}  - \frac12 \eta^2 \sum_{i=0}^t \gamma_i ^2 - \log \cL(\w_0).
\end{equation}

\item \textit{Upper bound on norm $\norm{\w(t+1)}$.}
\begin{align}
\norm{\w(t+1)} = \norm{\sum_{i=0}^t \eta \gamma_i \Delta \w(i)} \le \eta\sum_{i=0}^t \gamma_i .\label{eq:norm-UB}
\end{align}

\item \textit{Convergence of margin. }
For every $n \in [N]$, from the above steps we have that
\begin{align}
\frac{\w(t+1) ^\top \x_n} {\norm{\w(t+1)}} = \frac{ \sum_{i=0}^t \frac{\gamma_i ^2}{\cL(\w_{(i)})}}{\sum_{i=0}^t \gamma_i } - \frac{\eta}{2} \frac{\sum_{i=0}^t \gamma_i ^2}{\sum_{i=0}^t \gamma_i}  -\frac{ \log \cL(\w_0)}{\eta  \sum_{i=0}^t \gamma_i}.
\end{align}
Use that $\gamma_i \ge \gamma \cL(\w_i)$ by the duality result in Lemma~\ref{lem:duality},
\begin{align}
\frac{\w(t+1) ^\top \x_n}{\norm{\w(t+1)}} &\ge \gamma-  \frac{\eta}{2} \frac{\sum_{i=0}^t \gamma_i ^2}{\sum_{i=0}^t \gamma_i}  -\frac{ \log \cL(\w_0)}{\eta  \sum_{i=0}^t \gamma_i} \label{eq:margin-lower-bound}
\end{align}

In order to prove the rest of the Theroem, we show that \begin{inparaenum}\item $\sum_{i=0}^t \gamma_i ^2 < \infty$, and \item $\sum_{i=0}^t \gamma_i  =\Omega(\log{t})$\end{inparaenum}
\begin{compactenum}[(a)]
\item Proof that $\sum_{i=0}^t \gamma_i ^2 < \infty$. From eq. \eqref{eq:inter-step}, 
\begin{align}
\cL(\w(t+1)) & \le \cL(\w(t)) - \eta \gamma_t ^2 + \frac12 \eta^2 \gamma_t^2 \cL(\w(0))\le\cL(\w(t)) - \frac \eta 2 \gamma_t ^2 ,
\end{align}
where the last inequality follows since $\eta\le \frac{1}{\cL(\w(0))}$. Now using telescoping sum gives the following for all $t>0$,
\begin{align}
&\cL(\w(t+1)) \le \cL(\w(0) ) - \frac{\eta}{2}\sum_{i=0} ^t \gamma_i ^2\\
\implies &\sum_{i=0}^t\gamma_i ^2 \le \frac{\cL(\w(0)) - \cL(\w(t+1))}{ \eta/2}\le \frac{\cL(\w(0))}{ \eta/2}< \infty.
\label{eq:gamma-sq-bound}
\end{align}

\item Next we show that $\eta \sum_{i=0}^t \gamma_i =\Omega(  \log t)$. From eq.  \eqref{eq:inter-step} again, 
\begin{align}
\cL(\w(t+1)) &\le \cL(\w(t)) - \eta \gamma_t ^2 +\frac12 \eta^2 \gamma_t ^2 \cL(\w(0)).
\end{align}
Since we chose $\eta< \frac{1}{\cL(\w(0))}$ and $\gamma_t\ge\gamma\mathcal{L}(\w(t))$ from Lemma~\ref{lem:duality}, we have
\begin{align}
\cL(\w(t+1)) &\le \cL(\w(t))-\frac{\eta}{2} \gamma_t ^2\le \cL(\w(t)) -\frac{\eta}{2} \gamma^2 \cL(\w(t))^2.
\end{align}

The following claim is proved at the end of this section. 
\begin{claim}[Solve Recursion] For a non-negative sequence $\{a_t\}_t$, the recursion $a_{t+1} \le a_t - c^2 a_t^2$ implies
\begin{align}
a_{t+1} \le \frac{1}{(t+1)c^2/(1-c^2 a_0) + 1/a_0}.
\end{align}
\label{claim-recursion}
\end{claim}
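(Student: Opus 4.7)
The plan is the standard ``invert and telescope'' technique for harmonic-decay recurrences. I read the recursion as $a_{t+1} \le a_t(1-c^2 a_t)$, i.e., with a quadratic correction $-c^2 a_t^2$; this is what is actually produced by the preceding step bound $\cL(\w(t+1)) \le \cL(\w(t)) - (\eta/2)\gamma^2 \cL(\w(t))^2$ obtained by squaring the Polyak-type inequality $\gamma_t \ge \gamma \cL(\w(t))$, and it is what drives the harmonic (rather than geometric) decay appearing in the displayed bound. The key observation is that, while $\{a_t\}$ itself is awkward to control, its reciprocal $b_t := 1/a_t$ satisfies an essentially linear lower bound.

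First I would check the regime: since $c^2 a_0 < 1$ (guaranteed in the application by $\eta < 1/\cL(\w(0))$ and $\gamma \le 1$), the factor $1 - c^2 a_t$ lies in $(0,1]$, so the sequence is positive and monotone non-increasing with $a_t \le a_0$ for all $t$. This makes the reciprocal $b_t$ well-defined and non-decreasing. Dividing the recursion and using $1/(1-x) = 1 + x/(1-x)$ yields
\begin{equation*}
b_{t+1} \;\ge\; \frac{1}{a_t(1-c^2 a_t)} \;=\; b_t + \frac{c^2}{1-c^2 a_t}.
\end{equation*}
Telescoping over $i = 0, \dots, t$ gives $b_{t+1} \ge b_0 + \sum_{i=0}^t c^2/(1-c^2 a_i)$. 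The crude per-step bound $c^2/(1-c^2 a_i) \ge c^2$ already produces the harmonic estimate $a_{t+1} \le 1/(1/a_0 + c^2(t+1))$, which has the same $O(1/t)$ rate as the claim and is all that is needed downstream to recover $\eta \sum_i \gamma_i = \Omega(\log t)$.

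To pin down the exact constant $c^2/(1-c^2 a_0)$ stated in the claim, I would set up a direct induction on the ansatz $A_t := 1/\bigl(c^2 t/(1-c^2 a_0) + 1/a_0\bigr)$. The base $A_0 = a_0$ is tight. For the inductive step, note that $\phi(x) := x(1-c^2 x)$ is increasing on $[0, 1/(2c^2)]$, a range that contains $[0,a_0]$ under our step-size restriction, so from $a_t \le A_t$ one obtains $a_{t+1} \le \phi(A_t)$, and the remaining algebra reduces to checking the one-line inequality $\phi(A_t) \le A_{t+1}$, which follows from the explicit form of $A_t$ together with $A_t \le a_0$.

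The proof is elementary algebra and should present no real obstacle: the only bookkeeping is verifying the range $a_t \in [0,a_0]$ throughout, so that the reciprocal is legitimate and the strict inequality $c^2 a_t < 1$ continues to hold. Both are automatic once $c^2 a_0 < 1$ is assumed.
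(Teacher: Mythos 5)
Your reading of the recursion as $a_{t+1}\le a_t(1-c^2a_t)$ (i.e.\ $a_{t+1}\le a_t-c^2a_t^2$, which is what the preceding display actually produces) and your inversion-plus-telescoping step are exactly the paper's route, and your ``crude'' conclusion $a_{t+1}\le \left(1/a_0+c^2(t+1)\right)^{-1}$ is correct. The gap is in the refinement you defer as one-line algebra. Write $K=c^2/(1-c^2a_0)$, $D_t=Kt+1/a_0$, so $A_t=1/D_t$. Then with your map $\phi(x)=x(1-c^2x)$ one computes $\phi(A_t)=(D_t-c^2)/D_t^2$ and $A_{t+1}=1/(D_t+K)$, so $\phi(A_t)\le A_{t+1}$ is equivalent to $(K-c^2)D_t\le c^2K$; since $K-c^2=c^2a_0K$, this is equivalent to $D_t\le 1/a_0$, i.e.\ to $Kt\le 0$, which holds only at $t=0$. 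Moreover the fact you invoke, $A_t\le a_0$, says $D_t\ge 1/a_0$ --- the wrong direction --- so the induction cannot be closed for any $t\ge 1$.

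This is not a fixable bookkeeping issue: the claim with the constant $c^2/(1-c^2a_0)$ is false as stated. Take equality in the recursion with $a_0=1$, $c^2=\tfrac12$: then $a_1=\tfrac12$, $a_2=\tfrac38$, while the claimed bound at $t+1=2$ is $\tfrac13<\tfrac38$ (a strict-inequality version, e.g.\ $c^2=0.49$, fails the same way). The paper's own proof makes the mirror-image error: since the sequence is non-increasing, $a_i\le a_0$ gives $c^2/(1-c^2a_i)\le c^2/(1-c^2a_0)$, not $\ge$, so the only valid per-step lower bound is $c^2/(1-c^2a_i)\ge c^2$, which yields precisely your crude estimate $a_{t+1}\le\left(1/a_0+c^2(t+1)\right)^{-1}$. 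As you observe, this weaker constant only shifts an additive constant inside the logarithm in the subsequent bound on $\eta\sum_i\gamma_i$, so the $O(1/\log t)$ margin rate of the theorem is unaffected; the honest statement of the claim is with $c^2$ in place of $c^2/(1-c^2a_0)$ (and $a_t-c^2a_t^2$ in place of $a_t-c^2a_t$), proved exactly by your telescoping argument with the refinement dropped.
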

We use Claim \ref{claim-recursion}  with $c^2 = \frac{\eta}{2} \gamma^2 $ and $a_t=\cL(\w(t))$. 
Note that $c^2a_0\frac{\eta}{2} \gamma^2 \cL(\w(0)) \le \frac12$, since $\eta<\frac{1}{\cL(\w(0))}$ and  $\gamma=\max_{\norm{\w}\le1}\min_n\innerprod{\xn}{\w}\le\max_n\norm{\xn}_*\le1$, we have  $2c^2\ge \frac{c^2}{1-c^2 a_0} > 0$. Thus,
\begin{align}
\cL(\w(t+1)) &\le \frac{1}{\eta \gamma^2 (t+1) + 1/\cL(\w(0))}\le \frac{1}{\eta \gamma^2 (t+1)}\triangleq q(t+1).
\end{align}

We then lower bound $\norm{\w_{t+1}}$. Since $\norm{\x_n}_* \le 1$, then
\begin{align}
q(t) &\ge \cL(\w(t)) \ge \exp(-\w(t)^\top \x_n)\\
\implies \log \frac{1}{q(t)}&\le \w(t)^\top \x_n \le \norm{\w(t)}.
\end{align}
From eq. \eqref{eq:norm-UB},
\begin{align}
\eta \sum_{i=0}^{t}\gamma_i\ge \norm{\w(t+1)}\ge \log \frac{1}{q(t+1)}=\log (\eta \gamma^2 (t+1)).
\label{eq:gamma-bound}
\end{align}
\end{compactenum}
Putting together the inequalities from eqs.  \eqref{eq:margin-lower-bound}, \eqref{eq:gamma-sq-bound}, and \eqref{eq:gamma-bound}
\begin{align}
\frac{\w(t+1) ^\top \x_n}{\norm{\w(t+1)}} &\ge \gamma-  \frac{\eta}{2} \frac{\sum_{i=0}^t \gamma_i ^2}{\sum_{i=0}^t \gamma_i}  -\frac{ \log \cL(\w_0)}{\eta  \sum_{i=0}^t \gamma_i}\\
&\ge \gamma -\frac{\eta\cL(\w(0))+\log{\cL(\w(0))}}{\log{(\eta \gamma^2 (t+1))}}
\end{align}
\end{asparaenum}

This completes the proof of Theorem \ref{thm:sd-exp}, the proof of intermediate Claim~\ref{claim-recursion} is given below.

\begin{proof}[Proof of Claim \ref{claim-recursion}] For a non-negative decreasing sequence satisfying, $a_{t+1}\le a_t - c^2 a_t^2$, by inversion  we have
\begin{align}
\frac{1}{a_{t+1}} &\ge \frac{1}{a_t ( 1- c^2 a_t)}= \frac{1}{a_t} + \frac{c^2}{1-c^2 a_t}\ge \frac{1}{a_t} + \frac{c^2}{1-c^2a_0}.
\end{align}
Suming from $i=0,\ldots,t$,
\begin{align}
\frac{1}{a_{t+1} } \ge \frac{1}{a_0} + (t+1) \frac{c^2}{1-c^2a_0}
\implies a_{t+1} \le \frac{1}{1/a_0 +(t+1)c^2/(1-c^2a_0) }.
\end{align}
\end{proof}

\subsection{Proof of Theorem \ref{thm:sd-fast}}
We not look at the steepest descent with varying step sizes algorithm:
\begin{align}
\w(t+1) &= \w(t) - \eta_t \gamma_t \mathbf{p}_t \\
\mathbf{p}_t ^\top \nabla \cL(\w(t)) &= \norm{\nabla \cL(\w(t)) }_*\\
\norm{\mathbf{p}_t} &=1\\
\gamma_t &\triangleq \frac{\norm{\nabla \cL(\w(t))}_*}{\cL(\w(t))}.\label{alg:norm-sd-vary-step}
\end{align}

Note that for quadratic norm normalized steepest descent becomes normalized gradient descent. In this section we will prove the generalized version of Theorem \ref{thm:sd-fast}, which applies for normalized steepest descent (instead of just normalized gradient descent):
\begin{theorem} {\emph{\textbf{[Generalized Theorem \ref{thm:sd-fast}]}}}
. For any separable data set, any initial point $\w(0)$, consider the normalized steepest descent updates above with a variable step size $\eta_t=\frac{1}{\sqrt{t+1}}$ for linear classification with the exponential loss $\ell(u)=\exp(-u)$. 

The margin of the iterates $\w(t)$ converge to max margin $\gamma$ at rate 
$t^{-1/2} \log t$: 
\begin{align} \frac{\w(t+1) ^\top \x_n }{\norm{\w(t+1)}}& \ge\gamma- \frac12 \frac{1+\log (t+1)}{ \gamma (2 \sqrt{t+2}-2)}-\frac{\log \cL(\w(0))}{\gamma ( 2 \sqrt{t+2} -2)}. \end{align} 
\end{theorem}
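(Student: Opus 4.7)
The plan is to track the quantity $-\log\cL(\w(t))$, which lower-bounds $\min_n \w(t)^\top\x_n$ via the elementary inequality $\cL(\w)\ge\exp(-\min_n\w^\top\x_n)$, and then divide by a triangle-inequality upper bound on $\|\w(t+1)\|$. The recursion on $\log\cL$ will come from a local-smoothness descent step applied to $\w(t+1)=\w(t)-\eta_t\gamma_t\mathbf{p}_t$, and the duality inequality $\gamma_t\ge\gamma$ proved earlier in this appendix will be used to extract precisely the factor of $\gamma$ that matches the statement.

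\textbf{Descent recursion and telescoping.} Following the strategy used to prove Theorem \ref{thm:sd-exp}, I would Taylor expand $\cL$ along the update, use the self-bounding Hessian inequality $\mathbf{v}^\top\nabla^2\cL(\w)\mathbf{v}\le\cL(\w)\|\mathbf{v}\|^{2}$ (which holds because $\|\x_n\|_*\le 1$), and exploit $\mathbf{p}_t^\top\nabla\cL(\w(t))=\|\nabla\cL(\w(t))\|_*=\gamma_t\cL(\w(t))$ to obtain
\[
\cL(\w(t+1))\le\cL(\w(t))\bigl[1-\eta_t\gamma_t^{2}+\tfrac12\eta_t^{2}\gamma_t^{2}\bigr].
\]
Taking logarithms with $\log(1+x)\le x$ and telescoping from $0$ to $t$ then gives
\[
-\log\cL(\w(t+1))\ge -\log\cL(\w(0))+\sum_{i=0}^{t}\eta_i\gamma_i^{2}-\tfrac12\sum_{i=0}^{t}\eta_i^{2}\gamma_i^{2}.
\]

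\textbf{Extracting the factor of $\gamma$.} The key algebraic step is the substitution $\eta_i\gamma_i^{2}\ge\gamma\,\eta_i\gamma_i$, which uses $\gamma_i\ge\gamma$ and matches the leading sum to the triangle bound $\|\w(t+1)\|\le\sum_{i=0}^{t}\eta_i\gamma_i$. For the second-order term I would use $\gamma_i\le 1$ (which follows from $\|\nabla\cL(\w)\|_*\le\sum_n\exp(-\w^\top\x_n)\|\x_n\|_*\le\cL(\w)$) to get $\sum_{i=0}^{t}\eta_i^{2}\gamma_i^{2}\le\sum_{i=0}^{t}1/(i+1)\le 1+\log(t+1)$. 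Dividing $-\log\cL(\w(t+1))$ by $\|\w(t+1)\|$ yields
\[
\frac{\min_n\w(t+1)^\top\x_n}{\|\w(t+1)\|}\ge\gamma-\frac{\log\cL(\w(0))+\tfrac12(1+\log(t+1))}{\sum_{i=0}^{t}\eta_i\gamma_i},
\]
and a final lower bound $\sum_{i=0}^{t}\eta_i\gamma_i\ge\gamma\sum_{i=0}^{t}\eta_i\ge\gamma(2\sqrt{t+2}-2)$, obtained from integral comparison with $\eta_i=1/\sqrt{i+1}$, produces exactly the rate claimed by the theorem.

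\textbf{Main obstacle.} The most delicate point is justifying the one-step descent inequality: the unnormalized step length $\eta_t\gamma_t$ need not be uniformly small, so one has to be careful that the Taylor remainder really is controlled by $\cL(\w(t))$ rather than by $\cL$ evaluated at some intermediate point along the segment where the loss could a priori be larger. I would handle this by invoking Lemma 11 of \citet{gunasekar2018implicit} (already used in the proof of Theorem \ref{thm:sd-exp}) to guarantee monotone decrease of $\cL$, and then appeal to convexity of $\cL$ along the segment to bound the Hessian by $\cL(\w(t))$. Once this is in place, everything else is routine integral estimates for $\sum 1/\sqrt{i+1}$ and $\sum 1/(i+1)$; the only real insight is the single replacement $\eta_i\gamma_i^{2}\ge\gamma\,\eta_i\gamma_i$, which is precisely what upgrades the $\gamma^{2}$-rate one would obtain from the Polyak-type inequality alone to the $\gamma$-rate appearing in Theorem \ref{thm:sd-fast}.
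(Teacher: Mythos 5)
Your proposal is correct and follows essentially the same route as the paper's proof: a one-step descent bound $\cL(\w(t+1))\le\cL(\w(t))\bigl(1-\eta_t\gamma_t^{2}+\tfrac12\eta_t^{2}\gamma_t^{2}\bigr)$ (justified via the decrease guarantee from Lemma 11 of \citet{gunasekar2018implicit}), telescoping $-\log\cL$, lower-bounding the margin by $-\log\cL(\w(t+1))$ and dividing by the triangle-inequality bound $\norm{\w(t+1)}\le\sum_{i}\eta_i\gamma_i$, then using the duality bound $\gamma_i\ge\gamma$, the bound $\gamma_i\le 1$, and the integral estimates $\sum_i\eta_i^2\le 1+\log(t+1)$ and $\sum_i\eta_i\ge 2\sqrt{t+2}-2$. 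This matches the paper's argument step for step, including the key substitution $\eta_i\gamma_i^2\ge\gamma\,\eta_i\gamma_i$.
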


\begin{proof}
Since $\eta_t / L(w_t) < 1 / L(w_t)$, this stepsize choice satisfies the conditions of Lemma 12 in  \citep{gunasekar2018implicit}, and so the objective function is decreasing.

 The progress in one step is
\begin{align}
\cL(\w(t+1)) \le \cL(\w(t)) \exp \big( - \eta_t  \gamma_t ^2 +\frac{\eta_t ^2}{2} \gamma_t ^2)\\
\le \cL(w_0) \exp\big( - \sum_{i=0}^t \eta_i \gamma_i^2 + \sum_{i=0}^t \frac{\eta_i ^2}{2} \gamma_i ^2 \big)
\end{align}
The margin bound is
\begin{align}
\max_{n \in [N]} \exp( -\w(t+1) ^\top \x_n) \le \cL(\w(t+1) ) \le \exp\big( - \sum_{i=0}^t \eta_i  \gamma_i^2 + \sum_{i=0}^t \frac{\eta_i ^2}{2} \gamma_i ^2 \big).
\end{align}
By applying $-\log$,
\begin{align}
\min_{n \in [N]} \w(t+1) ^\top \x_n \ge \sum_{i=0}^t \eta_i \gamma_i^2  -  \sum_{i=0}^t \frac{\eta_i ^2}{2} \gamma_i ^2 -\log \cL(w_0).
\end{align}
The norm growth is
\begin{align}
\norm{\w(t+1)} = \norm{ \sum_{i=0}^t \eta_i \gamma_i p_i} \le \sum_{i=0}^t \eta_i \gamma_i .
\end{align}
Thus the margin of every point $j$ satisfies
\begin{align}
\frac{\w(t+1) ^\top \x_n }{\norm{\w(t+1)}} \ge \frac{\sum_{i=0}^t \eta_i  \gamma_i^2}{\sum_{i=0}^t \eta_i \gamma_i}  -  \frac{\sum_{i=0}^t \frac{\eta_i ^2}{2} \gamma_i ^2}{\sum_{i=0}^t \eta_i \gamma_i} -\frac{\log \cL(w_0)}{\sum_{i=0}^t \eta_i \gamma_i}.
\end{align}
Choose $\eta_i = \frac{1}{\sqrt{i+1}}$ so that $\sum_{i=0}^t \eta_i \ge 2 \sqrt{t+2} -2$. Since $\gamma_i \ge \gamma$, then
\begin{align}
\frac{\w(t+1) ^\top \x_n }{\norm{\w(t+1)}}& \ge \gamma \frac{\sum_{i=0}^t \eta_i  \gamma_i}{\sum_{i=0}^t \eta_i  \gamma_i^2} - \frac{\sum_{i=0}^t \frac{\eta_i ^2}{2} \gamma_i ^2}{\sum_{i=0}^t \eta_i \gamma_i}-\frac{\log \cL(w_0)}{\gamma ( 2 \sqrt{t+2} -2)}\\
&=\gamma  - \frac{\sum_{i=0}^t \frac{\eta_i ^2}{2} \gamma_i ^2}{\sum_{i=0}^t \eta_i \gamma_i}-\frac{\log \cL(w_0)}{\gamma ( 2 \sqrt{t+2} -2)}
\end{align}
Assume that $\norm{\x_n}_* \le 1$, so that $\norm{\nabla \cL(\w(t))}_* \le \max_{j \in[n]} \norm{\x_n}_* \cL(\w(t))$. Thus $\gamma_i \le 1$. Using this
\begin{align}
\frac{\w(t+1) ^\top \x_n }{\norm{\w(t+1)}}& \ge \gamma- \frac12 \frac{\sum_{i=0}^t \eta_i ^2}{ \gamma \sum_{i=0}^t \eta_i}-\frac{\log \cL(w_0)}{\gamma ( 2 \sqrt{t+2} -2)}\\
&\ge \gamma- \frac12 \frac{1+\log (t+1)}{ \gamma (2 \sqrt{t+2}-2)}-\frac{\log \cL(w_0)}{\gamma ( 2 \sqrt{t+2} -2)}
\end{align}
    \end{proof}
\section{Tail Analysis -- Proof sketch} \label{sec: Generic Tails}

In this section we describe non-rigorously the main ideas for our proofs on the results on the affect of loss tail on the convergence rate. In later appendix sections we give the complete proofs.
Recall we consider strictly monotone losses (Definition~\ref{def: l(u) assumptions}) with a general tail, given by
$-\ell^\prime(u)=\exp(-f(u))$, such that $f(u)$ is a strictly increasing function of $u$. 
\subsection{Convergence to the max-margin separator}
    From Lemma 1 in \cite{soudry2017implicit} we know that for linearly separable datasets, and smooth strictly monotonic loss functions, the iterates of GD entail that $\wNorm \rightarrow \infty$ and $\mathcal{L}(\w(t)) \rightarrow 0$ as $t\rightarrow \infty$, if the learning rate is sufficiently small. Now, if $\lim_{t\to\infty} \wvec(t)/\wNorm$  exists, then we can write $\wvec(t)=\wvec_\infty g(t)+\rhoVec$ where $\lim_{t\to\infty}g(t)=\infty$, $\forall n:\ \xnT\wvec_\infty>0$  and $\lim_{t\to\infty} \norm{\rhoVec}/g(t)=0$. Using this result, the gradients can be written as:
    \begin{equation}
	-\derL(\wvec(t)) = \sumn \e\left(-f\left(\wvec(t)^\top\xn\right)\right)\xn
	= \sumn \e\left(-f\left(g(t)\wvec_\infty^\top\xn +\rhoVec^\top\xn\right)\right)\xn
	\end{equation}
    As $g(t) \to \infty$ the exponents become more negative, since $f(t)$ is an increasing function, $\forall n:\ \xnT\wvec_\infty>0$ and $\norm{\rhoVec}=o(g(t))$. Therefore, if $f$ is increasing sufficiently fast, only samples with minimal margin $\wvec_\infty^\top \xn$ contribute to the sum. Examining the gradient descent dynamics, this implies that $\wvec(t)$ and also its scaling $\what = \frac{\wvec(t)}{\min_n \wvec_\infty^\top \xn}$ are a linear non negative combination of support vectors:
    \begin{equation}
    	\what = \sumn \alpha_n \xn\,\ \forall n: \left( \alpha_n \ge 0 \text{ and } \what^\top \xn =1 \right) \text{ or } \left( \alpha_n = 0 \text{ and } \what^\top \xn >1 \right)
    \end{equation}
    these are exactly the KKT conditions for the SVM problem and we can conclude that $\wvec_\infty$ is proportional to $\what$.

\subsection{Calculation of rates and validity conditions}
	Next, we aim to find $g(t)$ and $||\rhoVec||$ so we can calculate the convergence rates. Also, we aim to find what are the conditions on $f$ so this calculation would break.
    To simplify our analysis we examine the continuous time version of GD, in which we take the limit $\eta \rightarrow 0$. In this limit
	\begin{equation}
	\dot{\wvec}(t) = -\derL(\wvec(t)) = \sum_{n=1}^N \e\left(-f\left(\xnT\wvec(t)\right)\right)\xn,
	\end{equation}
	We define $\mathcal{S}= \argmin_n{\what^\top \xn}$, i.e., the set of indices of support vectors, so $\forall n \in \set $ we have $\what^{\top}\x_n=1$. From our reasoning above, if $f$ increases fast enough, then we expect that the contribution of the non-support vectors to the gradient would be negligible, and therefore
	\begin{equation} \label{eq: GD-ODE}
	\dot{\wvec}(t) \approx \sumnsv \e\left(-f\left(\xnT\wvec(t)\right)\right)\xn,
	\end{equation}
    Additionally, if we assume that $\rhoVec$ converges to some direction $\genericConstVec$,  and $\mathbf{b}$ is some vector orthogonal to the support vectors (if such direction exists), then we expect that asymptotic solution to be of the form
	$$ \wvec(t) = g(t)\what+h(t)\genericConstVec + \mathbf{b},\ \mathrm{s.t.} \,\, h(t) = o(g(t)). $$
	In order for this to be a valid solution, it must satisfy eq.~\eqref{eq: GD-ODE}. We verify this by substitution and examining the leading orders
\begin{align*}
	\dot{g}(t)\what & \approx \sumnsv \exp\left(-f\left(g(t)\xnT\what+h(t)\xnT\genericConstVec\right)\right)\xn\\
	& \overset{(1)}{\approx} \sumnsv \exp\left(-\left(f\left(g(t)\xnT\what\right)+f'\left(g(t)\xnT\what\right)h(t)\xnT\genericConstVec\right)\right)\xn\\
	&\overset{(2)}{\approx} \e(-f(g(t)))\sumnsv \e(-f'\left(g(t)\right)h(t)\xnT\genericConstVec)\xn ,
	\end{align*}
    
   where in (1) we used a Taylor approximation and in (2) we used that $\what^{\top} \x_n = 1,\, \forall n \in \set $. For the last equation to to hold, we require
	\begin{equation} \label{eq: g definition}
	\dot{g}(t) = \e\left(-f\left(g(t)\right)\right),\ h(t) = \frac{1}{f'\left(g(t)\right)}
	\end{equation}
    and $\genericConstVec$ satisfies the equations:
	\begin{flalign} \label{eq: a def} 
	    &\forall n\in \mathcal{S}\ :\ \e(-\xnT\genericConstVec)=\alpha_n,\ \bar{\op} \genericConstVec = 0,
	\end{flalign}
	where we define $\op\in \mathbb{R}^d$ as the orthogonal projection matrix to the subspace spanned by the support vectors, and $\bar{\op}=I-\op$ as the complementary projection matrix. Equation \ref{eq: a def} has a unique solution for almost every dataset from Lemma 8 in \cite{soudry2017implicit}. Specifically, this equation does not have a solution when one of the $\alpha_n$ must be equal to zero (i.e., some support vectors exert ``zero force'' on the the margin --- and this happens only in measure zero cases).
    
    Since we assume that $h(t)=o(g(t))$ we must have $\lim_{t\to\infty} g(t)f'(g(t))=\lim_{u\to\infty} uf'(u)=\infty$ meaning $f'(t)=\omega(t^{-1})$ which implies $f(t)=\omega(\log(t))$. This condition must hold for this analysis to make sense. Moreover, the differential equation that defines $g(t)$ (eq.~\eqref{eq: g definition}) is generally intractable. However, if the condition $\log\left(f^{\prime}\left(t\right)\right)=o\left(f\left(t\right)\right)$
holds (which is true for many functions), then we can approximate 
\[
\dot{g}\left(t\right)\approx\exp\left(-f\left(g\left(t\right)\right)-\log\left(f^{\prime}\left(g\left(t\right)\right)\right)\right)
\]
which has a closed form solution 
\[
g\left(t\right)=f^{-1}\left(\log\left(t+C\right)\right).
\] 

    \section{Proof of Theorem \ref{Theorem: convergence to max margin for general tail}} \label{sec: convergence to max margin for general tail}
    \subsection{Auxillary lemma}
    In order to prove Theorem \ref{Theorem: convergence to max margin for general tail}, we need to expand Lemma 8 in \cite{gunasekar2018implicit} to general loss functions. To this end, we first prove the following lemma:
	\begin{lemma} \label{lemma: f(g(t1))-f(g(t2)) to infty}
	Let $f$ be a differentiable function so that $f'(u)=\omega\left( u^{-1}\right)$ and let $g$ be a function satisfying $\lim_{t\to\infty} g(t)=\infty$. Then, for any $a_1>a_2$: 
	\[
	    \lim_{t\to\infty} \left( f\left(a_1\cdot g\left(t\right)\right)-f\left(a_2 \cdot g\left(t\right)\right)\right)=\infty
	\]
	\end{lemma}
	\begin{proof}
	We denote $\tilde{f}(t)\triangleq f\left(a_1\cdot g\left(t\right)\right)-f\left(a_2 \cdot g\left(t\right)\right)$. Let $M>0$ be any number. We need to prove that $\exists \tilde{t}>0$ s.t. $\forall t>\tilde{t}$: $\tilde{f}(t)\ge M$. We show this in the following steps:
	\begin{compactitem}
	\item First, 
	$f'(u)=\omega\left( u^{-1}\right)$ implies that $\forall M',\exists u_1$ so that $\forall u>u_1: f'(u)\ge M' u^{-1}$. We choose $M'=\dfrac{M}{\log(a_1/a_2)}>0$ (since $a_1>a_2$). 
	\item Secondly, $\lim_{t\to\infty} g(t)=\infty$ implies that $\exists t_2>0$ so that $\forall t>t_2:\ a_2\cdot g\left(t\right)>u_1$ (the corresponding $u_1$ for the $M'$ we chose). 
    \end{compactitem}
We define $\tilde{t}=t_2$, so that  $\forall t>\tilde{t}$:
	\[
	    f\left(a_1\cdot g\left(t\right)\right)-f\left(a_2 \cdot g\left(t\right)\right)=\int_{a_2\cdot g\left(t\right)}^{a_1 \cdot g\left(t\right)} f'(\tau)d\tau \ge \int_{a_2\cdot g\left(t\right)}^{a_1 \cdot g\left(t\right)} M' \frac{1}{\tau} d\tau = M' \log\left(\frac{a_1\cdot g\left(t\right)}{a_2\cdot g\left(t\right)}\right)=M.
	\]
	
	\end{proof}
	Using this lemma, we can prove an extended version of Lemma 8 in \cite{gunasekar2018implicit}, which only applied to exponential tail.
	\begin{lemma} \label{lemma: every accumulation point of z/norm(z) is a linear combination of sv}
	    For almost all datasets $\left\{ \mathbf{x}_{n},y_{n}\right\}_{n=1}^N$ which are linearly separable, consider any sequence $\eqw(t)$ that minimizes the empirical loss in eq. \eqref{eq: general loss functions}, i.e., $\mathcal{L}\left(\eqw(t)\right)\to0$ with a strictly monotone loss function $\ell$ satisfying Assumption  \ref{def: l(u) assumptions}, i.e., 
		$\ell'(u) = -\e(-f(u))$,
		with $f'(u)>0$ and $ f'(u)=\omega\left(u^{-1}\right)$.
		
		If \begin{inparaenum}[(a)] \item $\bar{\eqw}_{\infty}\triangleq\lim\limits_{t\to\infty}\frac{\eqw(t)}{\norm{\eqw(t)}}$ exists and has a positive margin, and \item  $ \vect{z}_\infty\triangleq\lim\limits_{t\to\infty} \frac{-\nabla_{\eqw} \mathcal{L}\left(\eqw(t)\right)}{\norm{\nabla_{\eqw} \mathcal{L}\left(\eqw(t)\right)}}$ exists \end{inparaenum}, then 
        $\exists \left\{ \alpha_n\ge0\right\}_{n=1}^N$ s.t.
		\[
		\vect{z}_\infty=\lim_{t\to\infty} \frac{-\nabla_{\eqw} \mathcal{L}\left(\eqw(t)\right)}{\norm{\nabla_{\eqw} \mathcal{L}\left(\eqw(t)\right)}}=\lim_{t\to\infty}\frac{-\sumnsvinf \ell^{\prime}\left(\eqw(t)^\top\xn\right)\xn}{\norm{\sumnsvinf \ell^{\prime}\left(\eqw(t)^\top\xn\right)\xn}}
		=\sumnsvinf \alpha_n y_n\xn
		\]
		where $S'$ denotes the indices  of support vectors (data points with the smallest margin) of the limit direction $\bar{\eqw}_{\infty}$ given by $\displaystyle \setinf=\left\{ n: y_n\bar{\eqw}_{\infty}^\top\xn = \min_{\bar{n}} y_{\bar{n}} \bar{\eqw}_{\infty}^\top\vect{x}_{\bar{n}} \right\}$.
	\end{lemma}
	\begin{proof}
	$\left\{ \mathbf{x}_{n},y_{n}\right\} _{n=1}^{N}$ is a linearly separable dataset. We assume that
    $\forall n:\,y_{n}=1$ without loss of
    generality, since we can always re-define $y_{n}\mathbf{x}_{n}$ as
    $\mathbf{x}_{n}$. 

	From the assumption in the lemma, we have that $\mathcal{L}(\eqw(t)) \rightarrow 0$ as $t\rightarrow \infty$ and that the loss is a strictly monotone loss function (Definition \ref{def: l(u) assumptions}). This implies that $\norm{\eqw(t)} \rightarrow \infty$ as $t\rightarrow \infty$.
	Also, since $\dfrac{\eqw(t)}{\norm{\eqw(t)}}$ converges in direction to $\bar{\eqw}_{\infty}$ we can write $\eqw(t)=\bar{\eqw}_{\infty} g(t)+\bm{\rho}(t)$ where $\lim\limits_{t\to\infty} g(t) = \infty$ and $\lim\limits_{t\to\infty} \dfrac{\norm{\bm{\rho}(t)}}{g(t)} =0$. From these conditions we also have that $\bar{\eqw}_\infty^\top\vect{X}>0$, where recall that $\mathbf{X}$  denotes the data matrix $\mathbf{X}=[\vect{x}_1,\dots,\vect{x}_N]$. 
	
	We define the following additional notations:
	\begin{itemize}
	    \item $\gamma = \min\limits_n \bar{\eqw}_\infty^\top\xn>0$. This is the minimal margin of $\bar{\eqw}_\infty$.
	    \item $\displaystyle \setinf=\left\{ n: \bar{\eqw}_{\infty}^\top\xn = \gamma \right\}$.
	    \item $\bar{\gamma} = \min_{n\notin\set} \bar{\eqw}_\infty^\top\xn$. This is the second smallest margin of $\bar{\eqw}_\infty$.
	    \item $\bar{\gamma}_n= \bar{\eqw}_\infty^\top\xn$. This is the margin of the datapoint $\xn$
	    \item $B= \max_n \norm{\xn}$.
	\end{itemize}
	Since $\lim\limits_{t\to\infty} \dfrac{\norm{\bm{\rho}(t)}}{g(t)} =0$ we have that $\forall \epsilon_1,\epsilon_2>0, \exists t_{\epsilon}>0$ such that $\forall t>t_\epsilon$, the following holds 
	\begin{align}
	    & \max_{n\in\set} \rhoVec^\top\xn\le \norm{\rhoVec}B\le \epsilon_1 \gamma g(t), \label{eq: max <rho,xn> upper bound}\\
	    & \min_{n\notin\set}\rhoVec^\top\xn \ge -\norm{\rhoVec}B \ge -\epsilon_2 \bar{\gamma} g(t). \label{eq: min <rho,xn> lower bound}
	\end{align}
	For the general loss we defined in the lemma the gradients are given by
	\begin{align*}
	    -\nabla_{\eqw} \mathcal{L}\left( \eqw(t) \right)&=\sumn \exp\left(-f\left(g(t)\xnT\winf+\rhoVec^\top\xn\right)\right)\xn\\
	    &= \sumnsvinf \exp\left(-f\left(g(t)\gamma+\rhoVec^\top\xn\right)\right)\xn
	    +
	    \sumnnsvinf \exp\left(-f\left(g(t)\bar{\gamma}_n+\rhoVec^\top\xn\right)\right)\xn\\
	    &= I(t)+II(t),
	\end{align*}
	where \mbox{$I(t) = \sumnsvinf \exp\left(-f\left(g(t)\gamma+\rhoVec^\top\xn\right)\right)\xn$ and $II(t)=\sumnnsvinf \exp\left(-f\left(g(t)\bar{\gamma}_n+\rhoVec^\top\xn\right)\right)\xn$.} \\
	\textbf{Step 1:} Show that $\displaystyle\lim_{t\to\infty} \frac{\norm{II(t)}}{\norm{I(t)}}=0$.\\
	We define $\vect{v}(t)\in\R^{\abs{\setinf}}$ as $\forall k=1,...,\abs{\setinf}$: $\vect{v}_k(t)=\exp\left(-f\left(g(t)\gamma+\rhoVec^\top\vect{x}_{i_k}\right)\right)$ where $i_{1},...,i_{\abs{\setinf}}\in\setinf$. In addition, with these indices, we define $X_{\setinf}=\left(
  \begin{array}{c c c}
    \mid &  &  \mid \\
    \vect{x}_{i_1}  & \ldots & \vect{x}_{i_{\abs{\setinf}}}    \\
    \mid &  & \mid 
  \end{array}
\right)$. We denote $\sigma_{\min}\left(\vect{X}_{\setinf}\right)$ as the minimal singular value of $\vect{X}_{\setinf}$. For almost all datasets, $\sigma_{\min}\left(\vect{X}_{\setinf}\right)>0$ (from Claim 1 in \cite{gunasekar2018implicit}).

Using these notations, we can lower bound $\norm{I(t)}$. $\forall t>t_{\epsilon}$:
	\begin{align} \label{eq: I(t) lower bound}
	    \norm{I(t)}&=\norm{\sumnsvinf \exp\left(-f\left(g(t)\gamma+\rhoVec^\top\xn\right)\right)\xn}\overset{(1)}{\ge} \sigma_{\min}\left( X_\setinf \right) \min_{n\in\setinf} \exp\left(-f\left(g(t)\gamma+\rhoVec^\top\xn\right)\right) \nonumber\\
	    & \overset{(2)}{=}  \sigma_{\min}\left( X_\setinf \right)\exp\left(-f\left(g(t)\gamma+\max_{n\in\setinf} \rhoVec^\top\xn\right)\right)\overset{(3)}{\ge}
	    \sigma_{\min}\left( X_\setinf \right)\exp\left(-f\left(g(t)\gamma\left(1+\epsilon_1 \right)\right)\right),
	\end{align}
	where in (1) we used the fact that
	\[
	    \norm{\sumnsvinf \exp\left(-f\left(g(t)\gamma+\rhoVec^\top\xn\right)\right)\xn} = \norm{X_\setinf \vect{v}}\ge \sigma_{\min} \left( X_\setinf \right) \norm{\vect{v}}\ge \sigma_{\min} \left( X_\setinf \right) \min_{n\in\setinf} {\vect{v}_n},
	\]
	in (2) we used the fact that $f$ is strictly increasing (or $f'(u)>0$) and in (3) we used eq.~\eqref{eq: max <rho,xn> upper bound}.
	
	Next, we upper bound $\norm{II(t)}$. $\forall t>t_{\epsilon}$:
	\begin{align} \label{eq: II(t) upper bound}
	    \norm{II(t)}&=\norm{\sumnnsvinf \exp\left(-f\left(g(t)\bar{\gamma}_n+\rhoVec^\top\xn\right)\right)\xn} 
	     \overset{(1)} {\le} NB \max_{n\notin \setinf}  \exp\left(-f\left(g\left(t\right)\bar{\gamma}_n+\rhoVec^\top\xn\right)\right)\nonumber\\
	    & \overset{(2)}{=}  NB\exp\left(-f\left(\min_{n\notin\setinf}g(t)\bar{\gamma}_n+\min_{n\notin\setinf}\rhoVec^\top\xn\right)\right) \overset{(3)}{\le} NB\exp\left(-f\left(g(t)\bar{\gamma}\left(1-\epsilon_2 \right)\right)\right),
	\end{align}
	where in (1) we used the triangle inequality along with $\norm{\xn}\le B$, in (2) we used the fact that $f$ is  strictly increasing and in (3) we used eq.~\eqref{eq: min <rho,xn> lower bound} and $\forall n\notin\setinf: \bar{\gamma}_n\ge\bar{\gamma}$.
	
	Combining equations \ref{eq: I(t) lower bound}, \ref{eq: II(t) upper bound} we have that $\forall \epsilon_1,\epsilon_2>0,\exists t_{\epsilon}>0$ so that $\forall t>t_\epsilon$:
	\[
	    \frac{\norm{II(t)}}{\norm{I(t)}}\le \frac{NB}{\sigma_{\min}\left( X_\setinf \right)}\exp\left(-\left(f\left(g(t)\bar{\gamma}\left(1-\epsilon_2 \right)\right)\right)-f\left(g(t)\gamma\left(1+\epsilon_1 \right)\right)\right).
	\]
	For the choice $\epsilon_1 = \dfrac{\bar{\gamma}-\gamma}{4\gamma}, \ \epsilon_2 = \dfrac{\bar{\gamma}-\gamma}{4\bar{\gamma}}$ we have that $\bar{\gamma}\left(1-\epsilon_2 \right)>\gamma\left(1+\epsilon_1 \right)$ and thus, using Lemma \ref{lemma: f(g(t1))-f(g(t2)) to infty} and Squeeze theorem we have that $\displaystyle \lim_{t\to\infty} \frac{\norm{II(t)}}{\norm{I(t)}}=0$.
	
	\noindent\textbf{Step 2:} Using $\displaystyle\lim_{t\to\infty} \frac{\norm{II(t)}}{\norm{I(t)}}=0$ show that, $\displaystyle \lim_{t\to\infty}\frac{-\nabla_{\eqw} \mathcal{L}\left(\eqw(t)\right)}{\norm{\nabla_{\eqw} \mathcal{L}\left(\eqw(t)\right)}}
	= \sumnsv \alpha_n \xn$ for some $\alpha_n\ge0$ (if the limit exists).

	Since $\displaystyle\lim_{t\to\infty} \frac{\norm{II(t)}}{\norm{I(t)}}=0$, we have  $\frac{\norm{I(t)+II(t)}}{\norm{I(t)}}$ satisfying 
   $1- \frac{\norm{II(t)}}{\norm{I(t)}}\le\frac{\norm{I(t)+II(t)}}{\norm{I(t)}}\le 1+ \frac{\norm{II(t)}}{\norm{I(t)}}.$
   Using squeeze theorem, we get $\frac{\norm{I(t)+II(t)}}{\norm{I(t)}}\to 1$.
   
   Now consdier the limit direction of gradients, 
	\[
	\frac{-\nabla_{\eqw} \mathcal{L}\left(\eqw(t)\right)}{\norm{\nabla_{\eqw} \mathcal{L}\left(\eqw(t)\right)}} = \frac{I(t)}{\norm{I(t)+II(t)}}+\frac{II(t)}{\norm{I(t)+II(t)}}.
	\]
	\begin{compactitem}
	\item $\displaystyle \norm{\frac{II(t)}{\norm{I(t)+II(t)}}}=\frac{\norm{II(t)}}{\norm{I(t)}}\frac{\norm{I(t)}}{I(t)+II(t)}\xrightarrow{t\to\infty} 0$. 
	\item Similarly, $\lim_{t\to\infty}\frac{\norm{I(t)}}{\norm{I(t)+II(t)}}=\lim_{t\to\infty}\frac{{I(t)}}{\norm{I(t)}}\frac{\norm{I(t)}}{I(t)+II(t)}
	\lim_{t\to\infty}\frac{{I(t)}}{\norm{I(t)}}$.
	\item Finally, since $I(t) \propto \sumnsv v_n(t) \xn$ for $v_k(t)>0$, then every limit point of $\displaystyle \frac{-\nabla_{\eqw} \mathcal{L}\left(\eqw(t)\right)}{\norm{\nabla_{\eqw} \mathcal{L}\left(\eqw(t)\right)}}$ converges to $\sumnsv \alpha_n \xn$ for some $\alpha_n\ge0$.
	\end{compactitem}
	Summarizing, if $\displaystyle \lim_{t\to\infty}\frac{-\nabla_{\eqw} \mathcal{L}\left(\eqw(t)\right)}{\norm{\nabla_{\eqw} \mathcal{L}\left(\eqw(t)\right)}}$ exists, then  \[ \frac{-\nabla_{\eqw} \mathcal{L}\left(\eqw(t)\right)}{\norm{\nabla_{\eqw} \mathcal{L}\left(\eqw(t)\right)}}=\lim_{t\to\infty}\frac{-\sumnsvinf \ell^{\prime}\left(\eqw(t)^\top\xn\right)\xn}{\norm{\sumnsvinf \ell^{\prime}\left(\eqw(t)^\top\xn\right)\xn}}=\sumnsvinf \alpha_n \xn.\]
	This completes the proof for general tail.
	\end{proof}
	\remove{
	The following lemma is a paraphrasing of Lemma 8 in \cite{gunasekar2018conv}.
	\mnote{need to update lemma to fix bug Suriya found (there is an updated version in the convnet nips version)}
	\begin{lemma}\label{lem:uconv}
	 For a sequence $\np(t)$ that asymptotically minimizes \ref{eq: Loss for net parameters}, i.e. $\mathcal{L}_{P}\left(\np(t)\right)\to0$, if the incremental updates $\np(t+1)-\np(t)$ converges in direction, then the sequence of iterates $\np(t)$ and the corresponding sequence of linear predictors $\eqw(t)=\P\left(\np\left(t\right)\right)$ also converge in direction, and moreover $\lim\limits_{t\to\infty}\frac{\np(t)}{\norm{\np(t)}}=\lim\limits_{t\to\infty}\frac{\np(t+1)-\np(t)}{\norm{\np(t+1)-\np(t)}}.$
	\end{lemma}
	}
	\subsection{Theorem \ref{Theorem: convergence to max margin for general tail} Proof}
	\theoremMaxMarginGeneralTail*
    The proof for this theorem is very similar to Theorem 1 proof in \cite{gunasekar2018conv}. The main exception is that instead of using Lemma 8 in \cite{gunasekar2018conv}, which only applies to exponential loss, we use the extended lemma which was proved in the previous section (Lemma \ref{lemma: every accumulation point of z/norm(z) is a linear combination of sv}). 
	\begin{proof}
        Let $\np(t)=[\lw{l}(t)\in\R^{D_{l-1}\times D_l}]_{l=1}^L$ denote the iterates of individual matrices $\lw{l}(t)$ along the gradient descent path, and $\eqw(t)=\lw{1}(t)...\lw{l}(t)$ denote the corresponding sequence of linear predictors. \\
        
        We first introduce the following notation. 
\begin{compactenum}
\item Let $\winfty=\lim\limits_{t\to \infty}\frac{\np(t)}{\norm{\np(t)}}$ denote the limit direction of the parameters, with component matrices in each layer denoted as $\winfty=[\lwinfty{l}]$. We have that for some $\bdelta_{\lw{l}}(t)\to 0$ the following representation of $\lw{l}(t)$ holds.
\begin{align}
\lw{l}(t)&=\lwinfty{l} g(t)+\bdelta_{\lw{l}}(t)\,g(t), \label{eq:ufcn}
\end{align}
where $g(t)=\norm{\np(t)}$ and $\bdelta_{\lw{l}}(t)\to 0$.
\item For $0<l_1< l_2\le L$, denote $\lw{l_1:l_2}(t)=\lw{l_1}(t) \lw{l_1+1}(t)\ldots\lw{l_2}(t)$ and $\lwinfty{l_1:l_2}=\lwinfty{l_1} \lwinfty{l_1+1}\ldots\lwinfty{l_2}$. Using eq.~\eqref{eq:ufcn}, we can check by induction on $l_2-l_1$ that $\exists$ $\bdelta_{\lw{l_1:l_2}}(t)\to0$ such that the following holds,
        \begin{align}
        \lw{l_1:l_2}(t)&=\lwinfty{l_1:l_2}\, g(t)^{l_2-l_1+1}+\bdelta_{\lw{l_1:l_2}}(t)\,g(t)^{l_2-l_1+1}. 
        \label{eq:ulfcn}
        \end{align}

\item Denote the gradients with respect to linear predictors as $\vect{z}(t)=-\nabla_{\eqw}\mathcal{L}(\eqw(t))$. Since we assume that $\vect{z}(t)$ converges in direction, let $\bar{\vect{z}}^{\infty}=\lim\limits_{t\to\infty} \frac{\vect{z}(t)}{\norm{\vect{z}(t)}}$. Denoting $p(t)=\norm{\vect{z}(t)}$, for some $\bdelta_{\vect{z}}(t)\to0$, we can write $\vect{z}(t)$ as, 
\begin{equation}
\vect{z}(t)=\bar{\vect{z}}^\infty p(t)+\bdelta_{\vect{z}}(t)\,p(t).
\label{eq:fcn-z}
\end{equation}
\item From Lemma~\ref{lemma: every accumulation point of z/norm(z) is a linear combination of sv}, we have that $\exists \{\alpha_n\}_{n\in S_\infty}$ such that $\bar{\vect{z}}^\infty=\sum\limits_{n\in S_\infty}\alpha_n\,y_n\x_n$, where $S_\infty$ are support vectors of $\bar{\eqw}^\infty=\lim\limits_{t\to\infty}\frac{\eqw(t)}{\norm{\eqw(t)}}$.
\end{compactenum}

The proof of Theorem \ref{Theorem: convergence to max margin for general tail} is fairly straight forward from using Lemma \ref{lemma: every accumulation point of z/norm(z) is a linear combination of sv}. In the following arguments we show that a positive scaling $\tilde{\eqw}_\infty=\gamma \lim\limits_{t\to\infty} \frac{\P(\np(t))}{\norm{\P(\np(t))}}$ satisfies the following KKT conditions for the optimality of explicitly regularized convex problem in eq.~\eqref{eq: normalized max margin equation}:
        \begin{equation}
        \begin{split}
        \exists \{\alpha_n\}_{n=1}^N\quad\text{ s.t. }\quad& \forall n,\, \ip{\xn}{\eqw} \ge 1, \eqw = \sum_n\alpha_n\, \xn,\\
        & \forall n, \alpha_n\ge0 \text{ and } \alpha_n=0, \forall i\notin {S}:=\{i\in[N]: \ip{\xn}{\eqw}=1\}.
        \end{split}
        \label{eq:kkt-fcn}
        \end{equation}
Since $\bar{\eqw}_\infty\triangleq\lwinfty{1:L}$ has strictly positive margin, we can scale $\lwinfty{1:L}$ to get $\tilde{\eqw}^\infty=\gamma \bar{\eqw}_\infty$ with unit margin, i.e., $\forall n,\,\ip{\xn}{\tilde{\eqw}^\infty} \ge 1$.
For the dual variables, we again use a positive scaling of $\alpha_n$ from Lemma~\ref{lemma: every accumulation point of z/norm(z) is a linear combination of sv}, such that $\bar{\vect{z}}^\infty=\sum_{n\in S_\infty}\alpha_n\,\xn$. In order to prove the theorem, we need to show that $\tilde{\eqw}^\infty\propto \bar{\vect{z}}^\infty$ or equivalently $\lwinfty{1:L}\propto\bar{\vect{z}}^\infty$. 

 Computing the gradients descent updates for $\lw{1}(t)$, we have 
            \[
            \lw{1}(t+1)-\lw{1}(t)=-\eta_t\nabla_{\lw{1}}\mathcal{L}_{\P}(\np(t))=-\eta_t\nabla_{\lw{1}}\mathcal{L}(\lw{1:L})=-\eta_t\nabla_{\eqw}\mathcal{L}(\eqw(t)){\lw{2:L}(t)}^\top=\eta_t\vect{z}(t){\lw{2:L}(t)}^\top\,.
            \]
            Using eq.~\eqref{eq:fcn-z} we obtain
            \begin{align*}
                \Delta \lw{1}\triangleq\lw{1}(t+1)-\lw{1}(t) &= \eta_t \Big( \bar{\vect{z}}^\infty p(t)+\bdelta_{\vect{z}}(t)\,p(t) \Big) \left(\lwinfty{2:L}\, g(t)^{L-1}+\bdelta_{\lw{2:L}}(t)\,g(t)^{L-1}\right)^\top \\
                &\overset{(1)}{=}\left(\eta_t p(t)g(t)^{L-1}\right)\left(\bar{\vect{z}}^\infty \right(\lwinfty{2:L}\left)^\top  +\bdelta(t)\right)\,,
            \end{align*}
            where in $(1)$ $\bdelta(t) = \bdelta_{\vect{z}}(t)\left(\lwinfty{2:L}+\bdelta_{\lw{2:L}}(t)\right)^\top + \bdelta_{\vect{z}}(t) \bdelta_{\lw{2:L}}(t)^\top \to0$.
            This implies that $\lw{1}(t+1)-\lw{1}(t)$ converge in direction with positive margin (see Claim 1 in \cite{gunasekar2018conv}).

Summing the last equation over $t$ we obtain
    \begin{equation} \label{eq:last1}
        \lw{1}(t)-\lw{1}(0) = \bar{\vect{z}}^\infty (\lwinfty{2:L})^\top \sum_{u<t}\eta_u p(u)g(u)^{L-1} + \sum_{u<t}\eta_u p(u)g(u)^{L-1}\bdelta(u)\,.
    \end{equation}
From Claim 1 in \cite{gunasekar2018conv} we have that $\norm{\bar{\vect{z}}^\infty (\lwinfty{2:L})^\top}>0$ and $ \sum_{u<t}\eta_t p(t)g(t)^{L-1}\to\infty$.
This implies that the sequence $b_t=\sum_{u<t}\eta_u p(u) g(u)^{L-1}$ is monotonic increasing and diverging. Thus, for $a_t=\sum_{u<t}\bdelta(u)\eta_u p(u) g(u)^{L-1}$, using Stolz-Cesaro theorem~(Theorem 11 in \cite{gunasekar2018conv}), we have  
\begin{flalign}
\nonumber\lim_{t\to\infty}\frac{a_t}{b_t}=\lim_{t\to\infty}\frac{\sum_{u<t}\bdelta(u)\eta_u p(u) g(u)^{L-1}}{\sum_{u<t}\eta_u p(u) g(u)^{L-1}}=\lim_{t\to\infty}\frac{a_{t+1}-a_{t}}{b_{t+1}-b_{t}}=\lim_{t\to\infty}\bdelta(t)=0.\\
\implies \text{for }\bdelta(t)_2\to0,\; \text{we have }\sum_{u<t}\bdelta(u)\eta_u p(u) g(u)^{L-1}=\bdelta_2(t)\sum_{u<t}\eta_u p(u) g(u)^{L-1}.
\label{eq:last2}
\end{flalign}

Substituting eq. \eqref{eq:last2} in eq. \eqref{eq:last1}, we have 
\begin{equation} 
        \lw{1}(t) = \left[\bar{\vect{z}}^\infty (\lwinfty{2:L})^\top +\bdelta_3(t) \right]\sum_{u<t}\eta_u p(u)g(u)^{L-1}\,,
    \end{equation}
    where we defined $\bdelta_3(t) \triangleq \bdelta_2(t) + \lw{1}(0)/ \sum_{u<t}\eta_u p(u)g(u)^{L-1} \to 0$. Note that the last equation implies that $\lw{1}(t)$ and $\Delta \lw{1}$ converge in the same direction. Multiplying the last equation from the right with $\lwinfty{2:L}$ we get that
    \[
        \lwinfty{1:L} \propto \bar{\vect{z}}^\infty = \sum\limits_{n\in S_\infty}\alpha_n\,y_n\x_n,
    \]
    where $S_\infty$ are support vectors of $\bar{\eqw}^\infty=\lim\limits_{t\to\infty}\frac{\eqw(t)}{\norm{\eqw(t)}}$. This concludes the proof for Theorem \ref{Theorem: convergence to max margin for general tail}.
    
    We continue our derivation in order to obtain a useful result which will be used in following proofs. Since $\lw{1}(t)$ and $\Delta \lw{1}$ converge in the same direction and 
            \begin{equation}
                \lw{l}(t) = \lwinfty{l}g(t)+\delta_{\lw{l}}(t)g(t) \label{eq:ufcn}
            \end{equation}
             we have that
            \begin{align}
                \Delta\lw{1}(t)&=\lwinfty{1} h(t)+\bdelta_{\Delta\lw{1}}(t)\,h(t)\label{eq:dufcn}
                \end{align}
        where $h(t)=\norm{\Delta\np(t)}$, $g(t)=\sum_{u<t} \eta_u h(u)\to\infty$, and $\bdelta_{\Delta\lw{l}}(t),\bdelta_{\lw{l}}[t]\to 0$. 
            Consider the following arguments on $\Delta\lw{1}(t) \lw{2:L}(t)$,
            \begin{equation}
            \begin{split}
            &\Delta\lw{1}(t) \lw{2:L}(t)=\vect{z}(t)\norm{\lw{2:L}(t)}^2\\
            \overset{(a)}\implies &\left(\lwinfty{1} h(t)+\bdelta_{\Delta\lw{1}}(t)\,h(t)\right)\left(\lwinfty{2:L}\, g(t)^{L-1}+\bdelta_{\lw{2:L}}(t)\,g(t)^{L-1}\right)=\vect{z}(t)\norm{\lw{2:L}(t)}^2\\
            \overset{(b)}\implies &\frac{\vect{z}(t)\norm{\lw{2:L}(t)}^2}{h(t)\,g(t)^{L-1}}=\lwinfty{1:L}+\bdelta(t)=\bar{\eqw}_\infty+\tilde{\bdelta}(t),
            \end{split}
            \label{eq:fcn-slackness}
            \end{equation}
            where in $(a)$, we used eqs. \ref{eq:dufcn}-\ref{eq:ufcn}, and in $(b)$ we have $\tilde\bdelta(t)=\bdelta_{\Delta\lw{1}}(t)\bdelta_{\lw{2:L}}(t)+\bdelta_{\Delta\lw{1}}(t)\lwinfty{2:L}+\lwinfty{1}\bdelta_{\lw{2:L}}(t)\to0$. 
             Denote $s(t):=\frac{\norm{\vect{z}(t)}\norm{\lw{2:L}(t)}^2}{h(t)\,g(t)^{L-1}}$ From eq.~\eqref{eq:fcn-slackness} using triangle inequality we have that 
            \begin{equation}
            \norm{\bar{\eqw}_\infty}-\norm{\tilde\bdelta(t)}\le s(t)\le \norm{\bar{\eqw}_\infty}+\norm{\tilde\bdelta(t)}.
            \end{equation}
            Since $\tilde\bdelta(t)\to0$,  by squeeze theorem, we have that $\lim_{t\to\infty}s(t)=\norm{\bar{\eqw}_\infty}$. Using this in eq.~\eqref{eq:fcn-slackness}, we get the following:
            \begin{equation}
            \frac{\vect{z}(t)}{\norm{\vect{z}(t)}}=\frac{\bar{\eqw}_\infty}{s(t)}+\frac{\bdelta(t)}{s(t)}.
            \label{eq:fcn-slackness1}
            \end{equation}
            \remove{
\remove{
\paragraph{Showing KKT conditions for $\tilde{\w}^\infty\propto\P_{full}(\bar{\u}^\infty)$.}
Using our notation described above, we have $\bar{\u}^\infty_{1:L}=\P_{full}(\bar{\u}^\infty)$.
In the following arguments we show that a positive scaling $\tilde{\w}^\infty=\gamma \bar{\u}^\infty_{1:L}$ satisfies the following KKT conditions for the optimality of $\ell_2$ maximum-margin  problem in eq. \eqref{eq:gd-fcn}:
\begin{equation}
\begin{split}
\exists \{\alpha_n\}_{n=1}^N\quad\st\quad& \forall n,\,y_n\innerprod{\x_n}{\w} \ge 1, \w = \sum_n\alpha_n\,y_n\x_{n},\\
& \forall n, \alpha_n\ge0 \tand \alpha_n=0, \forall i\notin {S}:=\{i\in[N]:y_{n}\innerprod{\x_n}{\w}=1\}.
\end{split}
\label{eq:kkt-fcn}
\end{equation}

As we saw in proof of Theorem~\ref{thm:metathm}, since $\bar{\u}^\infty_{1:L}=\P_{full}(\bar{\u}^\infty)$ has strictly positive margin, using homogeneity of $\P_{full}$, we can scale $\bar{\u}^\infty_{1:L}$ to get $\tilde{\w}^\infty=\gamma \bar{\u}^\infty_{1:L}$ with unit margin, \ie $\forall n,\,y_n\innerprod{\x_n}{\tilde{\w}^\infty} \ge 1$.
For  dual variables, we again use a positive scaling of $\alpha_n$ from Lemma~\ref{lem:grad-conv}, such that $\bar{\z}^\infty=\sum_{n\in S_\infty}\alpha_n\,y_n\x_n$. In order to prove the theorem, we need to show that $\tilde{\w}^\infty\propto \bar{\z}^\infty$ or equivalently $\bar{\u}^\infty_{1:L}\propto\bar{\z}^\infty$.

Recall that in the proof of Theorem~\ref{thm:metathm}, we showed a version of stationarity in the parameter space in eq. \eqref{eq:stationarity-meta}, repeated below.
\begin{equation}
\bar\u^\infty\propto\nabla{\u}\P(\bar{\u}^\infty)\bar\z^\infty. 
\label{eq:p-stat-fcn}
\end{equation} 

This case in particular includes $\P_{full}$ which is homogeneous with $\nu=L$. We special case the result fully connected network. In particular, for the parameters of the first layer $\u_1$, we have $\P(\u)=\u_1\u_{2:L}$, where $\u_1\in\bR^{d\times d_{1}}$ and $\u_{2:L}\in\bR^{d_1\times 1}$. 
This implies, for any $\z$, $ \nabla_{\u_1}\P(\u)\z=\z\u_{2:L}^{\top}$. Using this along with eq. \eqref{eq:p-stat-fcn}, we  get the following expression for some positive scalar $\bar{\gamma}$
\begin{equation}
\bar\u_1^\infty=\bar{\gamma}\,\nabla_{\u_1}\P(\bar\u^\infty)\bar\z^\infty=\bar{\gamma}\,\bar{\z}^\infty\bar\u_{2:L}^{\infty^\top}\implies \bar\u_{1:L}^\infty=\bar\u_1^\infty\bar\u_{2:L}^\infty=\bar{\gamma}\,\bar{\z}^\infty\cdot \norm{\bar\u_{2:L}^\infty}^2\propto \bar{\z}^\infty.
\end{equation}

Since $\bar\u_{1:L}^\infty\propto\tilde\w^\infty$, we have shown that $\tilde\w^\infty\propto \bar{\z}^\infty$, which completes our proof of Theorem~\ref{thm:fcn}. 
        }
        \mnote{end of old remove}
        
        The proof of Theorem \ref{Theorem: convergence to max margin for general tail} is fairly straight forward from using Lemma \ref{lem:uconv} and \ref{lemma: every accumulation point of z/norm(z) is a linear combination of sv}. In the rest of this section, $\norm{.}$ denotes the  Euclidean norm. In the following arguments we show that a positive scaling $\tilde{\eqw}_\infty=\gamma \lim\limits_{t\to\infty} \frac{\P(\np(t))}{\norm{\P(\np(t))}}$ satisfies the following KKT conditions for the optimality of explicitly regularized convex problem in eq.~\eqref{eq: normalized max margin equation}:
        \begin{equation}
        \begin{split}
        \exists \{\alpha_n\}_{n=1}^N\quad\text{ s.t. }\quad& \forall n,\, \ip{\xn}{\eqw} \ge 1, \eqw = \sum_n\alpha_n\, \xn,\\
        & \forall n, \alpha_n\ge0 \text{ and } \alpha_n=0, \forall i\notin {S}:=\{i\in[N]: \ip{\xn}{\eqw}=1\}.
        \end{split}
        \label{eq:kkt-fcn}
        \end{equation}
        
        \begin{enumerate}[leftmargin=*]
        \item For iterates $\np(t)=\left[\lw{l}\right]_{l=1}^L$, denote the incremental updates using $\Delta\lw{l}(t)=\eta_t^{-1}(\lw{l}(t+1)-\lw{l}(t))$ and $\Delta\np(t)=\left[\Delta\lw{l}(t)\right]_{l=1}^L$. 
        From Lemma \ref{lem:uconv}, we have that if the incremental updates $\np(t+1)-\np(t)$ converge in direction, then $\winfty=\lim\limits_{t\to \infty}\frac{\np(t)}{\norm{\np(t)}}=\lim\limits_{t\to \infty}\frac{\Delta\np(t)}{\norm{\Delta\np(t)}}$. 
        
        Let $\winfty=[\lwinfty{l}]$, where moved the $\infty$ to the super-script to avoid cluttering of notation for individual matrices $\lwinfty{l}$. For fully connected networks, we have:
        \begin{align}
        \Delta\lw{l}(t)&=\lwinfty{l} h(t)+\bdelta_{\Delta\lw{l}}(t)\,h(t)\label{eq:dufcn}\\
        \lw{l}(t)&=\lwinfty{l} g(t)+\bdelta_{\lw{l}}(t)\,g(t), \label{eq:ufcn}
        \end{align}
        where $h(t)=\norm{\Delta\np(t)}$, $g(t)=\sum_{u<t} \eta_u h(u)\to\infty$, and $\bdelta_{\Delta\lw{l}}(t),\bdelta_{\lw{l}}[t]\to 0$. 
        \item Let  $\bar{\eqw}_\infty=\lwinfty{1}\lwinfty{2}\ldots\lwinfty{L}$. Also, for $0<l_1<l_2<L$, denote $\lw{l_1:l_2}(t)=\lw{l_1}(t) \lw{l_1+1}(t)\ldots\lw{l_2}(t)$ and $\lwinfty{l_1:l_2}=\lwinfty{l_1} \lwinfty{l_1+1}\ldots\lwinfty{l_2}$. Using eq.~\eqref{eq:ufcn}, we can check by induction on $l_2-l_1$ that $\exists$ $\bdelta_{\lw{l_1:l_2}}(t)\to0$ such that the following holds,
        \begin{align}
        \lw{l_1:l_2}(t)&=\lwinfty{l_1:l_2}\, g(t)^{l_2-l_1+1}+\bdelta_{\lw{l_1:l_2}}(t)\,g(t)^{l_2-l_1+1}. 
        \label{eq:ulfcn}
        \end{align}
        \item Denote the gradients with respect to linear predictors as $\vect{z}(t)=-\nabla_{\eqw}\mathcal{L}(\eqw(t))$. Computing the gradients descent updates for $\lw{1}(t)$, we have 
        \[
        \Delta\lw{1}(t)=-\nabla_{\lw{1}}\mathcal{L}_{\P}(\np(t))=-\nabla_{\lw{1}}\mathcal{L}(\lw{1:L})=-\nabla_{\eqw}\mathcal{L}(\eqw(t)){\lw{2:L}(t)}^\top=\vect{z}(t){\lw{2:L}(t)}^\top
        \]
        Consider the following arguments on $\Delta\lw{1}(t) \lw{2:L}(t)$,
        \begin{equation}
        \begin{split}
        &\Delta\lw{1}(t) \lw{2:L}(t)=\vect{z}(t)\norm{\lw{2:L}(t)}^2\\
        \overset{(a)}\implies &\left(\lwinfty{1} h(t)+\bdelta_{\Delta\lw{1}}(t)\,h(t)\right)\left(\lwinfty{2:L}\, g(t)^{L-1}+\bdelta_{\lw{2:L}}(t)\,g(t)^{L-1}\right)=\vect{z}(t)\norm{\lw{2:L}(t)}^2\\
        \overset{(b)}\implies &\frac{\vect{z}(t)\norm{\lw{2:L}(t)}^2}{h(t)\,g(t)^{L-1}}=\lwinfty{1:L}+\bdelta(t)=\bar{\eqw}_\infty+\tilde{\bdelta}(t),
        \end{split}
        \label{eq:fcn-slackness}
        \end{equation}
        where in $(a)$, we used eqs. \ref{eq:dufcn}-\ref{eq:ufcn}, and in $(b)$ we have $\tilde\bdelta(t)=\bdelta_{\Delta\lw{1}}(t)\bdelta_{\lw{2:L}}(t)+\bdelta_{\Delta\lw{1}}(t)\lwinfty{2:L}+\lwinfty{1}\bdelta_{\lw{2:L}}(t)\to0$. 
        
         Denote $s(t):=\frac{\norm{\vect{z}(t)}\norm{\lw{2:L}(t)}^2}{h(t)\,g(t)^{L-1}}$ From eq.~\eqref{eq:fcn-slackness} using triangle inequality we have that 
        \begin{equation}
        \norm{\bar{\eqw}_\infty}-\norm{\tilde\bdelta(t)}\le s(t)\le \norm{\bar{\eqw}_\infty}+\norm{\tilde\bdelta(t)}.
        \end{equation}
        Since $\tilde\bdelta(t)\to0$,  by squeeze theorem, we have that $\lim_{t\to\infty}s(t)=\norm{\bar{\eqw}_\infty}$. Using this in eq.~\eqref{eq:fcn-slackness}, we get the following:
        \begin{equation}
        \frac{\vect{z}(t)}{\norm{\vect{z}(t)}}=\frac{\bar{\eqw}_\infty}{s(t)}+\frac{\bdelta(t)}{s(t)}.
        \label{eq:fcn-slackness1}
        \end{equation}
        Note that the limit as $t\to\infty$ of RHS in eq.~\eqref{eq:fcn-slackness1}  is $\frac{\bar{\eqw}_\infty}{\norm{\bar{\eqw}_\infty}}$. This implies, $\lim_{t\to\infty}\frac{\vect{z}(t)}{\norm{\vect{z}(t)}}$ exists. 
        
        \item Let $\bar{\vect{z}}_\infty=\lim_{t\to\infty}\frac{\vect{z}(t)}{\norm{\vect{z}(t)}}$ and let $\set_\infty=\{n:\ip{\bar{\eqw}_\infty}{\xn}=\min_{n}\ip{\bar{\eqw}_\infty}{\xn}\}$. From Lemma \ref{lemma: every accumulation point of z/norm(z) is a linear combination of sv}, $\exists\{\alpha_n\ge 0\}_{n\in \set_\infty}$ such that $\bar{\vect{z}}_\infty=\sum_{n\in \set_\infty} \alpha_n\,\xn$. 
        
        \item Finally, consider a positive scaling of $\bar{\eqw}_\infty$ such that $\tilde{\eqw}_\infty=\gamma \bar{\eqw}_\infty$ has unit margin, i.e. $\min_n \ip{\xn}{\tilde{\eqw}_\infty}= 1$. Taking limit of $t\to\infty$ in  eq.~\eqref{eq:fcn-slackness1} and using $\bar{\vect{z}}_\infty=\lim_{t\to\infty}\frac{\vect{z}(t)}{\norm{\vect{z}(t)}}=\sum_{n\in \set_\infty} \alpha_n\,\xn$, we have 
        \[\tilde{\eqw}_\infty=\gamma \norm{\bar{\eqw}_\infty}\bar{\vect{z}}_\infty=\sum_{n\in \set_\infty}(\gamma\norm{\bar{\eqw}_\infty}\alpha_n)\,\xn.\]
        \end{enumerate}
        Using $\tilde{\alpha}_n=\gamma\norm{\bar{\eqw}_\infty}\alpha_n\ge0$, we have shown that $\tilde{\np}_\infty$ satisfies the KKT conditions for \ref{eq: normalized max margin equation} specified in eq.~\eqref{eq:kkt-fcn}. This concludes the proof for Theorem \ref{Theorem: convergence to max margin for general tail}. }
	\end{proof}
	\remove{
	We add the proof here for completeness.
	\mnote{need to update}
	\begin{proof}
        Let $\np(t)=[\lw{l}(t)\in\R^{D_{l-1}\times D_l}]_{l=1}^L$ denote the iterates of individual matrices $\lw{l}(t)$ along the gradient descent path, and $\eqw(t)=\lw{1}(t)...\lw{l}(t)$ denote the corresponding sequence of linear predictors. \\
        
        \remove{
        We first introduce the following notation. 
\begin{compactenum}
\item Let $\bar\u^\infty=\lim\limits_{t\to \infty}\frac{\u[t]}{\norm{\u[t]}}$ denote the limit direction of the parameters, with component matrices in each layer denoted as $\bar\u^\infty=[\bar\u_l^\infty]$. Specializing \eqref{eq:u} for fully connected networks, we have:
\begin{align}
\u[t]_l&=\bar{\u}_l^\infty g(t)+\bdelta[t]_{\u_l}\,g(t), \label{eq:ufcn}
\end{align}
where $g(t)=\norm{\u[t]}$ and $\bdelta[t]_{\u_l}\to 0$. 
\item For $0<l_1< l_2\le L$, denote $\u[t]_{l_1:l_2}=\u[t]_{l_1} \u[t]_{l_1+1}\ldots\u[t]_{l_2}$ and $\bar\u_{l_1:l_2}^\infty=\bar\u_{l_1}^\infty \bar\u_{l_1+1}^\infty\ldots\bar\u_{l_2}^\infty$. Using eq. \eqref{eq:ufcn}, we can check by induction on $l_2-l_1$ that $\lim\limits_{t\to\infty}\frac{\u[t]_{l_1:l_2}}{g(t)^{l_2-l_1+1}}=\bar\u^\infty_{l_1:l_2}$, and hence $\exists\bdelta[t]_{\u_{l_1:l_2}}\to0$ such that the following holds,
\begin{align}
\u[t]_{l_1:l_2}&=\bar\u_{l_1:l_2}^\infty\, g(t)^{l_2-l_1+1}+\bdelta[t]_{\u_{l_1:l_2}}\,g(t)^{l_2-l_1+1}. 
\label{eq:ulfcn}
\end{align}
\item Let $\z[t]=-\nabla_{\w}\c{L}(\w[t])$.  Again repeating eq. \eqref{eq:z} for fully connected networks, we have for some $\bdelta[t]_{\z}\to0$ and $p(t)=\norm{\z[t]}$, 
\begin{equation}
\z[t]={\bar\z}^\infty p(t)+\bdelta[t]_{\z}\,p(t).
\label{eq:fcn-z}
\end{equation}
\item From Lemma~\ref{lem:grad-conv}, we have that $\exists \{\alpha_n\}_{n\in S_\infty}$ such that $\bar{\z}^\infty=\sum_{n\in S_\infty}\alpha_n\,y_n\x_n$, where $S_\infty$ are support vectors of $\bar{\w}^\infty=\lim\limits_{t\to\infty}\frac{\w[t]}{\norm{\w[t]}}\propto\P_{full}(\bar{\u}^\infty)$.
\end{compactenum}

The proof of Theorem~\ref{thm:fcn} is fairly straight forward from using Lemma \ref{lem:grad-conv} and the intermediate results in the proof of Theorem~\ref{thm:metathm}. 

\paragraph{Showing KKT conditions for $\tilde{\w}^\infty\propto\P_{full}(\bar{\u}^\infty)$.}
Using our notation described above, we have $\bar{\u}^\infty_{1:L}=\P_{full}(\bar{\u}^\infty)$.
In the following arguments we show that a positive scaling $\tilde{\w}^\infty=\gamma \bar{\u}^\infty_{1:L}$ satisfies the following KKT conditions for the optimality of $\ell_2$ maximum-margin  problem in eq. \eqref{eq:gd-fcn}:
\begin{equation}
\begin{split}
\exists \{\alpha_n\}_{n=1}^N\quad\st\quad& \forall n,\,y_n\innerprod{\x_n}{\w} \ge 1, \w = \sum_n\alpha_n\,y_n\x_{n},\\
& \forall n, \alpha_n\ge0 \tand \alpha_n=0, \forall i\notin {S}:=\{i\in[N]:y_{n}\innerprod{\x_n}{\w}=1\}.
\end{split}
\label{eq:kkt-fcn}
\end{equation}

As we saw in proof of Theorem~\ref{thm:metathm}, since $\bar{\u}^\infty_{1:L}=\P_{full}(\bar{\u}^\infty)$ has strictly positive margin, using homogeneity of $\P_{full}$, we can scale $\bar{\u}^\infty_{1:L}$ to get $\tilde{\w}^\infty=\gamma \bar{\u}^\infty_{1:L}$ with unit margin, \ie $\forall n,\,y_n\innerprod{\x_n}{\tilde{\w}^\infty} \ge 1$.
For  dual variables, we again use a positive scaling of $\alpha_n$ from Lemma~\ref{lem:grad-conv}, such that $\bar{\z}^\infty=\sum_{n\in S_\infty}\alpha_n\,y_n\x_n$. In order to prove the theorem, we need to show that $\tilde{\w}^\infty\propto \bar{\z}^\infty$ or equivalently $\bar{\u}^\infty_{1:L}\propto\bar{\z}^\infty$.

Recall that in the proof of Theorem~\ref{thm:metathm}, we showed a version of stationarity in the parameter space in eq. \eqref{eq:stationarity-meta}, repeated below.
\begin{equation}
\bar\u^\infty\propto\nabla{\u}\P(\bar{\u}^\infty)\bar\z^\infty. 
\label{eq:p-stat-fcn}
\end{equation} 

This case in particular includes $\P_{full}$ which is homogeneous with $\nu=L$. We special case the result fully connected network. In particular, for the parameters of the first layer $\u_1$, we have $\P(\u)=\u_1\u_{2:L}$, where $\u_1\in\bR^{d\times d_{1}}$ and $\u_{2:L}\in\bR^{d_1\times 1}$. 
This implies, for any $\z$, $ \nabla_{\u_1}\P(\u)\z=\z\u_{2:L}^{\top}$. Using this along with eq. \eqref{eq:p-stat-fcn}, we  get the following expression for some positive scalar $\bar{\gamma}$
\begin{equation}
\bar\u_1^\infty=\bar{\gamma}\,\nabla_{\u_1}\P(\bar\u^\infty)\bar\z^\infty=\bar{\gamma}\,\bar{\z}^\infty\bar\u_{2:L}^{\infty^\top}\implies \bar\u_{1:L}^\infty=\bar\u_1^\infty\bar\u_{2:L}^\infty=\bar{\gamma}\,\bar{\z}^\infty\cdot \norm{\bar\u_{2:L}^\infty}^2\propto \bar{\z}^\infty.
\end{equation}

Since $\bar\u_{1:L}^\infty\propto\tilde\w^\infty$, we have shown that $\tilde\w^\infty\propto \bar{\z}^\infty$, which completes our proof of Theorem~\ref{thm:fcn}. 
        }
        
        The proof of Theorem \ref{Theorem: convergence to max margin for general tail} is fairly straight forward from using Lemma \ref{lem:uconv} and \ref{lemma: every accumulation point of z/norm(z) is a linear combination of sv}. In the rest of this section, $\norm{.}$ denotes the  Euclidean norm. In the following arguments we show that a positive scaling $\tilde{\eqw}_\infty=\gamma \lim\limits_{t\to\infty} \frac{\P(\np(t))}{\norm{\P(\np(t))}}$ satisfies the following KKT conditions for the optimality of explicitly regularized convex problem in eq.~\eqref{eq: normalized max margin equation}:
        \begin{equation}
        \begin{split}
        \exists \{\alpha_n\}_{n=1}^N\quad\text{ s.t. }\quad& \forall n,\, \ip{\xn}{\eqw} \ge 1, \eqw = \sum_n\alpha_n\, \xn,\\
        & \forall n, \alpha_n\ge0 \text{ and } \alpha_n=0, \forall i\notin {S}:=\{i\in[N]: \ip{\xn}{\eqw}=1\}.
        \end{split}
        \label{eq:kkt-fcn}
        \end{equation}
        
        \begin{enumerate}[leftmargin=*]
        \item For iterates $\np(t)=\left[\lw{l}\right]_{l=1}^L$, denote the incremental updates using $\Delta\lw{l}(t)=\eta_t^{-1}(\lw{l}(t+1)-\lw{l}(t))$ and $\Delta\np(t)=\left[\Delta\lw{l}(t)\right]_{l=1}^L$. 
        From Lemma \ref{lem:uconv}, we have that if the incremental updates $\np(t+1)-\np(t)$ converge in direction, then $\winfty=\lim\limits_{t\to \infty}\frac{\np(t)}{\norm{\np(t)}}=\lim\limits_{t\to \infty}\frac{\Delta\np(t)}{\norm{\Delta\np(t)}}$. 
        
        Let $\winfty=[\lwinfty{l}]$, where moved the $\infty$ to the super-script to avoid cluttering of notation for individual matrices $\lwinfty{l}$. For fully connected networks, we have:
        \begin{align}
        \Delta\lw{l}(t)&=\lwinfty{l} h(t)+\bdelta_{\Delta\lw{l}}(t)\,h(t)\label{eq:dufcn}\\
        \lw{l}(t)&=\lwinfty{l} g(t)+\bdelta_{\lw{l}}(t)\,g(t), \label{eq:ufcn}
        \end{align}
        where $h(t)=\norm{\Delta\np(t)}$, $g(t)=\sum_{u<t} \eta_u h(u)\to\infty$, and $\bdelta_{\Delta\lw{l}}(t),\bdelta_{\lw{l}}[t]\to 0$. 
        \item Let  $\bar{\eqw}_\infty=\lwinfty{1}\lwinfty{2}\ldots\lwinfty{L}$. Also, for $0<l_1<l_2<L$, denote $\lw{l_1:l_2}(t)=\lw{l_1}(t) \lw{l_1+1}(t)\ldots\lw{l_2}(t)$ and $\lwinfty{l_1:l_2}=\lwinfty{l_1} \lwinfty{l_1+1}\ldots\lwinfty{l_2}$. Using eq.~\eqref{eq:ufcn}, we can check by induction on $l_2-l_1$ that $\exists$ $\bdelta_{\lw{l_1:l_2}}(t)\to0$ such that the following holds,
        \begin{align}
        \lw{l_1:l_2}(t)&=\lwinfty{l_1:l_2}\, g(t)^{l_2-l_1+1}+\bdelta_{\lw{l_1:l_2}}(t)\,g(t)^{l_2-l_1+1}. 
        \label{eq:ulfcn}
        \end{align}
        \item Denote the gradients with respect to linear predictors as $\vect{z}(t)=-\nabla_{\eqw}\mathcal{L}(\eqw(t))$. Computing the gradients descent updates for $\lw{1}(t)$, we have 
        \[
        \Delta\lw{1}(t)=-\nabla_{\lw{1}}\mathcal{L}_{\P}(\np(t))=-\nabla_{\lw{1}}\mathcal{L}(\lw{1:L})=-\nabla_{\eqw}\mathcal{L}(\eqw(t)){\lw{2:L}(t)}^\top=\vect{z}(t){\lw{2:L}(t)}^\top
        \]
        Consider the following arguments on $\Delta\lw{1}(t) \lw{2:L}(t)$,
        \begin{equation}
        \begin{split}
        &\Delta\lw{1}(t) \lw{2:L}(t)=\vect{z}(t)\norm{\lw{2:L}(t)}^2\\
        \overset{(a)}\implies &\left(\lwinfty{1} h(t)+\bdelta_{\Delta\lw{1}}(t)\,h(t)\right)\left(\lwinfty{2:L}\, g(t)^{L-1}+\bdelta_{\lw{2:L}}(t)\,g(t)^{L-1}\right)=\vect{z}(t)\norm{\lw{2:L}(t)}^2\\
        \overset{(b)}\implies &\frac{\vect{z}(t)\norm{\lw{2:L}(t)}^2}{h(t)\,g(t)^{L-1}}=\lwinfty{1:L}+\bdelta(t)=\bar{\eqw}_\infty+\tilde{\bdelta}(t),
        \end{split}
        \label{eq:fcn-slackness}
        \end{equation}
        where in $(a)$, we used eqs. \ref{eq:dufcn}-\ref{eq:ufcn}, and in $(b)$ we have $\tilde\bdelta(t)=\bdelta_{\Delta\lw{1}}(t)\bdelta_{\lw{2:L}}(t)+\bdelta_{\Delta\lw{1}}(t)\lwinfty{2:L}+\lwinfty{1}\bdelta_{\lw{2:L}}(t)\to0$. 
        
         Denote $s(t):=\frac{\norm{\vect{z}(t)}\norm{\lw{2:L}(t)}^2}{h(t)\,g(t)^{L-1}}$ From eq.~\eqref{eq:fcn-slackness} using triangle inequality we have that 
        \begin{equation}
        \norm{\bar{\eqw}_\infty}-\norm{\tilde\bdelta(t)}\le s(t)\le \norm{\bar{\eqw}_\infty}+\norm{\tilde\bdelta(t)}.
        \end{equation}
        Since $\tilde\bdelta(t)\to0$,  by squeeze theorem, we have that $\lim_{t\to\infty}s(t)=\norm{\bar{\eqw}_\infty}$. Using this in eq.~\eqref{eq:fcn-slackness}, we get the following:
        \begin{equation}
        \frac{\vect{z}(t)}{\norm{\vect{z}(t)}}=\frac{\bar{\eqw}_\infty}{s(t)}+\frac{\bdelta(t)}{s(t)}.
        \label{eq:fcn-slackness1}
        \end{equation}
        Note that the limit as $t\to\infty$ of RHS in eq.~\eqref{eq:fcn-slackness1}  is $\frac{\bar{\eqw}_\infty}{\norm{\bar{\eqw}_\infty}}$. This implies, $\lim_{t\to\infty}\frac{\vect{z}(t)}{\norm{\vect{z}(t)}}$ exists. 
        
        \item Let $\bar{\vect{z}}_\infty=\lim_{t\to\infty}\frac{\vect{z}(t)}{\norm{\vect{z}(t)}}$ and let $\set_\infty=\{n:\ip{\bar{\eqw}_\infty}{\xn}=\min_{n}\ip{\bar{\eqw}_\infty}{\xn}\}$. From Lemma \ref{lemma: every accumulation point of z/norm(z) is a linear combination of sv}, $\exists\{\alpha_n\ge 0\}_{n\in \set_\infty}$ such that $\bar{\vect{z}}_\infty=\sum_{n\in \set_\infty} \alpha_n\,\xn$. 
        
        \item Finally, consider a positive scaling of $\bar{\eqw}_\infty$ such that $\tilde{\eqw}_\infty=\gamma \bar{\eqw}_\infty$ has unit margin, i.e. $\min_n \ip{\xn}{\tilde{\eqw}_\infty}= 1$. Taking limit of $t\to\infty$ in  eq.~\eqref{eq:fcn-slackness1} and using $\bar{\vect{z}}_\infty=\lim_{t\to\infty}\frac{\vect{z}(t)}{\norm{\vect{z}(t)}}=\sum_{n\in \set_\infty} \alpha_n\,\xn$, we have 
        \[\tilde{\eqw}_\infty=\gamma \norm{\bar{\eqw}_\infty}\bar{\vect{z}}_\infty=\sum_{n\in \set_\infty}(\gamma\norm{\bar{\eqw}_\infty}\alpha_n)\,\xn.\]
        \end{enumerate}
        Using $\tilde{\alpha}_n=\gamma\norm{\bar{\eqw}_\infty}\alpha_n\ge0$, we have shown that $\tilde{\np}_\infty$ satisfies the KKT conditions for \ref{eq: normalized max margin equation} specified in eq.~\eqref{eq:kkt-fcn}. This concludes the proof for Theorem \ref{Theorem: convergence to max margin for general tail}. 
	\end{proof}}
	
	\section{Proof of Theorems \ref{theorem: general convergence rates simplified} and \ref{theorem: general convergence rates simplified with ode}}
	
	In Theorem~\ref{Theorem: convergence to max margin for general tail}, we showed that gradient descent on separable dataset converge in direction to the $L_2$ maximum-margin separator for a large family of super polynomial tailed loss functions specified by Assumption~\ref{def: general tail loss}. 
	Theorems \ref{theorem: general convergence rates simplified} and \ref{theorem: general convergence rates simplified with ode} show rate of convergence at which gradient descent converges to the maximum-margin separator. 
	
	We prove these theorems in the following steps:
	\begin{enumerate}
	    \item We first give a general result that specifies the weights model using an ordinary differential equation. This is stated in Theorem~\ref{theorem: general convergence rates}, the proof of which is provided in appendix section \ref{sec: proof of theorem about general tail rates}.
	    \item With the results from Theorems~\ref{theorem: general convergence rates}, we explicitly calculate the rates for general tails for deep linear networks. The calculation is in appendix section \ref{section: Asymptotic rates for depth L linear networks}. This completes the proof of Theorem~\ref{theorem: general convergence rates simplified with ode}.
	    \item We give an additional result characterizing $\rhoVec$ component that is not in the support vectors span ($\norm{\bar{\op}\rhoVec}$), for the special case of $L=1$. This result is stated in Theorem \ref{Lemma: non-support vectors direction converge for L=1} and proved in section \ref{section: Proof that non-support vectors direction converge for L=1}.
	    \item Finally, we special case Theorem~\ref{theorem: general convergence rates simplified with ode} and use Theorem \ref{Lemma: non-support vectors direction converge for L=1} to get simplified results for the case of $L=1$, thus proving Theorem~\ref{theorem: general convergence rates simplified}. 
	\end{enumerate}
	
	\begin{restatable}{theorem}{theoremGeneralRates} \label{theorem: general convergence rates}
	    Under Assumption \ref{assumption: f can be expended} and the conditions and notations of Theorem \ref{Theorem: convergence to max margin for general tail},  the  equivalent linear predictor of a depth $L$ linear network will behave as:
		\begin{equation} 
		\eqw(t) = \tilde{g}(t)\what +\bm{\rho}(t)
		\end{equation}
		where $\bm{\rho}(t)=o\left( \tilde{g}(t) \right)$, $\rho(t)^\top \what=0$, and $\what$ is the  $L_{2}$ max margin separator 
		\[\hat{\mathbf{w}}=\underset{\mathbf{\mathbf{w}}\in\mathbb{R}^{d}}{\mathrm{argmin}}\left\lVert \mathbf{w}\right\rVert^2 \,\,\mathrm{s.t.}\,\,\mathbf{w}^{\top}\mathbf{x}_{n}\geq1.\] 
            
		Further,  $\tilde{g}(t)$ and $\rhoVec$ are the asymptotic solution of the following, 
		\[
	         \lim_{t\to\infty}\frac{L\eta_t\e\left( -f\left(\tilde{g}\left(t\right)\right)\right)\left(\tilde{g}(t)\right)^{2\left(1-\frac{1}{L}\right)}\phi_1(t)}{ \gamma^{1-\nicefrac{2}{L}} \frac{d}{dt}\tilde{g}(t)}=1, \text{and}
	    \]
	    \[\norm{\op\rhoVec}=\phi_2(t)\left(f'\left(\tilde{g}(t)\right)\right)^{-1}+o\left(\left(f'\left(\tilde{g}(t)\right)\right)^{-1}\right).\]
	    
	    where $\phi_1(t)=\Theta(1)$ and $\phi_2(t)=\Theta(1)$ are positive functions that depend only on the data set and not on the loss function $\ell$;  $\gamma=\min_n\frac{\hat{\w}^\top{\xn}}{\norm{\hat{\w}}}$ is the maximum-margin attainable for the dataset with unit $L_2$ norm separator;  $\op\in\R^{d\times d}$ is the orthogonal projection matrix to the subspace spanned by the support vectors.\\
	    If, in addition, the support vector span the dataset then
	    \[
	        \norm{\bar{\op}\rhoVec}=O(1).
	    \]
	\end{restatable}
	For the case $L=1$, if we assume that the loss is $\beta$-smooth, $\eta_t<2\beta^{-1}$ and $f'(t)=\Omega\left(\frac{\log^{1+\epsilon}(t)}{t}\right)$ we can omit the requirement that the loss is minimized (this is guaranteed in this case) and that the support vector span the dataset and prove the following Theorem.
	\begin{restatable}{theorem}{lemmaOrthogonalProj} \label{Lemma: non-support vectors direction converge for L=1}
	For $L=1$, $\beta$-smooth loss and $\eta<2\beta^{-1}$, if $f'(t)=\Omega\left(\frac{\log^{1+\epsilon}(t)}{t}\right)$ for some $\epsilon>0$ then
	    \[
	        \norm{\bar{\op}\rhoVec}=O(1),
	    \]
	    where $\op\in\R^{d\times d}$ is the orthogonal projection matrix to the subspace spanned by the support vectors and $\bar{\op}=I-\op$ is the complementary projection.
	
	\end{restatable}

	\subsection{Asymptotic rates for depth $L$ linear networks} \label{section: Asymptotic rates for depth L linear networks}
	From Theorem \ref{theorem: general convergence rates}, we can write $\wvec(t) = \what g(t) + \rhoVec$ where $\rhoVec = o(g(t))$ and $\rho(t)^\top \what=0$.	
	We can use this to calculate the normalized weight vector:
	\begin{flalign} \label{Calculation of the normalized weight vector}
	&\frac{\wvec(t)}{\Vert \wvec(t)\Vert} =\frac{g(t)\what +\rhoVec}{\sqrt{g(t)^2\what^\top\what + \rhoVec^\top\rhoVec}} =\frac{\what + g^{-1}(t)\rhoVec}{\whatNorm\sqrt{1 + \frac{\Vert\rhoVec\Vert^2}{g^2(t)\whatNorm^2}}}\nonumber\\
	& \overset{(1)}= \frac{\what +g^{-1}(t)\rhoVec}{\whatNorm}\left[1-O\left(\frac{\Vert\rhoVec\Vert^2}{\whatNorm^2}\frac{1}{g^2(t)}\right)\right]\nonumber\\
	& = \frac{\what}{\whatNorm}+\frac{\rhoVec}{g(t)\whatNorm}-O\left(\frac{\Vert\rhoVec\Vert^2}{\whatNorm^2}\frac{1}{g^2(t)}\right)\frac{\what}{\norm{\what}}
	\end{flalign}
	where in $(1)$ we used $\frac{1}{\sqrt{1+x}}=1-\frac{1}{2}x+\frac{3}{4}x^2+O(x^3)$.\\
	
	Calculation of the margin:
	\begin{flalign} \label{eq: Calculation of the margin}
		&\min_{n} \frac{\xnT \wvec(t)}{\Vert \wvec(t)\Vert}\overset{(1)}{=}\min_{n\in\set} \frac{\xnT \wvec(t)}{\Vert \wvec(t)\Vert}\nonumber\\
		& =\min_{n\in\set} \xnT\left[\frac{\what}{\whatNorm}+\frac{\rhoVec}{g(t)\whatNorm}-\frac{\what}{\whatNorm}O\left(\frac{\Vert\rhoVec\Vert^2}{\whatNorm^2}\frac{1}{g^2(t)}\right)\right]\nonumber\\
		& =\frac{1}{\whatNorm}+\frac{\min_{n\in\set}\xnT\rhoVec}{g(t)\whatNorm}+O\left(\frac{\Vert\rhoVec\Vert^2}{\whatNorm^2}\frac{1}{g^2(t)}\right)\frac{1}{\norm{\what}},
	\end{flalign}
	where in (1) we used the fact that $\frac{\wvec(t)}{\Vert \wvec(t)\Vert}$ converge to the maximum-margin separator and thus the minimal value is obtained on the support vectors.\\
    From Theorem \ref{theorem: general convergence rates}, we can also characterize $\rhoVec$: 
	\begin{equation} \label{eq: rho def generic case appendix}
	        \norm{\op\rhoVec}=\begin{cases} \fsym_2(t)\left(f'\left(g(t)\right)\right)^{-1}+o\left(\left(f'\left(g(t)\right)\right)^{-1}\right), & \text{if } \left(f'\left(g(t)\right)\right)^{-1}=\Omega(1)\\
	        O\left(1\right), & \text{otherwise}
	        \end{cases}, \ \norm{\bar{\op}\rhoVec}=O(1)
	\end{equation}
	where $\fsym_2(t)=\Theta(1)$, $\op\in\R^{d\times d}$ is the orthogonal projection matrix to the subspace spanned by the support vectors and $\bar{\op}=I-\op$ is the complementary projection.\\
      Substituting eq.~\eqref{eq: rho def generic case appendix} into eqs. \ref{Calculation of the normalized weight vector}, \ref{eq: Calculation of the margin} we get
      \[
        \abs{\gamma - \min\limits_{n} \frac{\xnT \wvec (t)}{\Vert \wvec(t)\Vert}}=\begin{cases} O\left(\frac{1}{g(t)}\right), & f'(u)=\omega(1)\\
		C_1\frac{1}{g(t) f'\left(g\left(t\right)\right)}+O\left(\frac{1}{g(t)}\right), & \text{otherwise}
		\end{cases}
      \]
      where $C_1$ is a constant independent of $f(t)$.
     \subsection{Asymptotic rates for $L=1$} \label{sec: proving corollary L=1 optimal rate is obtained for exp loss}
	    In this section we want to show that, in the special case of $L=1$, the optimal margin convergence rate is obtained for exponential loss.
	    
	    Using Theorem \ref{theorem: general convergence rates} with general tail and $L=1$ and without assuming the support vectors span the data we have that $\tilde{g}(t)$ is the asymptotic solution of
	\[
	    \e\left(-f\left(g(t)\right)\right)=\frac{1}{\eta_t\fsym_1(t)}\frac{d}{dt}g(t)
	\]
	and
	\[
	        \norm{\op\rhoVec}=\begin{cases} \fsym_2(t)\left(f'\left(g(t)\right)\right)^{-1}+o\left(\left(f'\left(g(t)\right)\right)^{-1}\right), & \text{if } \left(f'\left(g(t)\right)\right)^{-1}=\Omega(1)\\
	        O\left(1\right), & \text{otherwise}
	        \end{cases}
    \]
    for positive functions $\fsym_1(t)=\Theta(1),\ \fsym_2(t)=\Theta(1)$ independent of $f$. 
    
    Additionally, From Theorem \ref{Lemma: non-support vectors direction converge for L=1} we have that
    \[
        \norm{\bar{\op}\rhoVec}=O(1).
    \]
	We note that under Theorem \ref{Lemma: non-support vectors direction converge for L=1} assumptions ($L=1$, $\beta$-smooth loss and $\eta<2\beta^{-1}$) we have $\lim_{t\to\infty} \mathcal{L}(\eqw(t))=0$ from Lemma \ref{Lemma: w(t)->infty}, so we can use Theorem \ref{theorem: general convergence rates} without this assumption.
	
	    We denote $\tilde{\fsym}_1(t)=\eta_t\fsym_1=\Theta(1)$. We define $u(t)=\int\limits_{0}^{t} \tilde{\fsym}_1(x)dx =H(t)\Rightarrow t=H^{-1}(u)$ (this is well defined since $H(t)$ is monotonic increasing) and $\hat{g}(u)=g\left(H^{-1}(u)\right)=g(t)$. 
	    Using these definition we have
        \begin{equation} \label{eq: ode for g_hat}
            \frac{d}{du} \hat{g}(u)= \frac{1}{\tilde{\fsym}_1(t)}\frac{d}{dt}g(t)=\e\left(-f\left(\hat{g}(u)\right)\right).
        \end{equation}
        Since $\tilde{\fsym}_1(t)=\Theta(1)$ we know that exists positive constants $C_L,C_U,t_1$ so that $\forall t>t_1:$ $C_L\le\tilde{\fsym}_1(t)\le C_U\Rightarrow C_L t\le H(t)\le C_U t$. This implies $\hat{g}\left(C_L t\right)\le g(t)=\hat{g}\left(H(t)\right)\le \hat{g}\left(C_U t\right)$ (since $\hat{g}(u)$ is an increasing function). This will enable us to characterize $\tilde{g}(t)$ asymptotic behaviour using $\hat{g}(u)$.
        
        For functions with tight exponential tail ($f(t)=\Theta(t)$) the margin convergence rate is $O(\frac{1}{\log(t)})$ and we know that this bound is tight (this result was proved in \cite{soudry2017implicit}).
        
        For $f(t)=\omega(t)$ (the tail goes to zero faster than exponential tail) the margin convergence rates are proportional to $1/g(t)$ (from the calculation in the previous section and Theorems \ref{theorem: general convergence rates} and \ref{Lemma: non-support vectors direction converge for L=1} results). Additionally, For functions with $f(t)=\omega(t)$, the asymptotic solution for eq.~\eqref{eq: ode for g_hat} is $\hat{g}(u)=f^{-1}\left(\log(u)\right)$, i.e. \mbox{$\displaystyle \lim_{u\to\infty} \frac{\hat{g}(u)}{f^{-1}\left(\log(u)\right)}=1$} (this result is proved in \cite{odeSolMathOverFlow}). This implies slower convergence rates than the rates obtained with exponential tail ($1/\log(t)$) since in this case $f^{-1}\left(\log(t)\right)=o\left(\log(t)\right)$.
        
	    For $f(t)=o(t)$ we first prove the following claim.
	    \begin{restatable}{claim}{claimidealrate}\label{claim: relation for f(t)=o(t)}
	    For a strictly concave function $f$ that satisfies $f'(t)>0,\ f'(t)=o(1)$ and $f'(t)=\Omega\left(t^{-1}\log^{1+\epsilon}\left(t\right)\right)$, $\exists x'$ so that $\forall x>x'$:
	    \begin{equation}
	            \frac{1}{f^{-1}\left(x\right)f'\left(f^{-1}\left(x\right)\right)}> \frac{1}{x}.
        \end{equation}
	    \end{restatable}
	\begin{proof}
	        We denote $h(x)=f^{-1}(x)$. $h(x)$ is strictly convex since $f$ is strictly increasing and strictly concave. 
	        Substituting $h(x)$ and $h'(x)=\frac{1}{f'\left(f^{-1}\left(x\right)\right)}$ into the equation, we need to show that $\exists x_1$ so that $\forall x>x_1$
	        \begin{equation}
	            \frac{h'(x)}{h\left(x\right)}> \frac{1}{x}.
	        \end{equation}
	        From the gradient inequality, $\forall x>x'>0$:
	        \begin{align*}
	            h'(x)\left(x-x'\right)&> h(x)-h(x')\\
	            h'(x)&> \frac{h(x)-h(x')}{x-x'}.
	        \end{align*}
	        Additionally, since $h(t)=\omega(t)$ (from definition and $f(t)=o(t)$) $\exists x''$ so that $\forall x>x''$:
	        \[
	            h(x)> \frac{h(x')}{x'} x \Leftrightarrow  -xh(x')> -x'h(x) \Leftrightarrow \frac{h(x)-h(x')}{x-x'}> \frac{h(x)}{x} .
 	        \]
 	        Thus, for $x>\max(x',x'')$
 	        \[
 	            h'(x)> \frac{h(x)}{x} \Leftrightarrow \frac{h'(x)}{h(x)}> \frac{1}{x}.
 	        \]
	    \end{proof}
	    
	    For $f(t)=o(t)$ we have
	    \[
	        \gamma - \min_n \frac{\xnT\eqw(t)}{\norm{\eqw(t)}}=\frac{C_1}{g(t)f'(g(t))}+o\left(\frac{1}{g(t)f'(g(t))}\right),
	    \]
	    where $C_1$ is a constant independent of $f$. In order to show that the optimal rate is obtained for exponential loss we need to show that, asymptotically,
	    \begin{equation} \label{eq: exp tail optimal for f(t)=o(t) requierment}
	        \frac{1}{g_1(t)f'\left(g_1\left(t\right)\right)}> \frac{1}{g_2(t)}.
	    \end{equation}
	    where $g_1(t)$ is the solution of the following equation
	    \[
	    \e\left(-f\left(g_1(t)\right)\right)=\frac{1}{\eta_t\fsym_1(t)}\frac{d}{dt}g_1(t)
	    \]
	    for $f(t)=o(t)$ and $g_2(t)$ is the solution of this equation for exp tail $f(u)=u$ (asymptotically), i.e.
	    \[
	         \e\left(-g_2(t)\right)=\frac{1}{\eta_t\fsym_1(t)}\frac{d}{dt}g_2(t).
	    \]
	    Substituting $t=H^{-1}(u)$ (time rescaling - as explained above) to eq.~\eqref{eq: exp tail optimal for f(t)=o(t) requierment} we obtain the equivalent equation
	    \begin{equation}
	        \frac{1}{\hat{g}_1(t)f'\left(\hat{g}_1\left(t\right)\right)}> \frac{1}{\hat{g}_2(t)},
	    \end{equation}
	    where $\hat{g}_1(u)=g_1(H^{-1}(u))$ and $\hat{g}_2(u)=g_2(H^{-1}(u))$.
	    The obtained ODE (as explained in eq.~\eqref{eq: ode for g_hat}) for $\hat{g}_1(t)$ and $\hat{g}_2(t)$ are
	    \[
	    \e\left(-f\left(\hat{g}_1(t)\right)\right)=\frac{d}{dt}\hat{g}_1(t)\,;\,\e\left(-\hat{g}_2(t)\right)=\frac{d}{dt}\hat{g}_2(t)\Rightarrow \hat{g}_2(t)=\log(t+C).
	    \]
	    Thus, we need to show that
	    \[
	        \frac{1}{\hat{g}_1(t)f'\left(\hat{g}_1\left(t\right)\right)}> \frac{1}{\log(t)}
	    \]
	    (the constant $C$ only contributes an $o(1/log(t))$ term and we are interested in the leading term for characterizing the rates).
	    From claim \ref{claim: g=finv_log for concave f} we have that $\hat{g}_1(t)=f^{-1}\left(\log\left(t\right)\right)+o\left( f^{-1}\left(\log\left(t\right)\right) \right)$. Thus, since we are only interested in the leading term, we need to show that
	    \remove{
	    From assumption \ref{assumption: f can be expended} we have that $\abs{\frac{f''(x)}{f'(x)}}=o\left(x^{-1}\right)$. This implies that for sufficiently large $x$, $h(x)=xf'(x)$ is an increasing function ($h'(x)=f'(x)+xf''(x)>0$). Additionally,
	    assuming $f'(t)=o(1)$ we have
	    \begin{align*}
	        \frac{d}{dt}{\hat{g}_1}(t)=\e\left(-f\left(\hat{g}_1\left(t\right)\right)\right)&\leq \frac{\e\left(-f\left(\hat{g}_1\left(t\right)\right)\right)}{f'\left(\hat{g}_1\left(t\right)\right)}\Leftrightarrow\\
	         \e\left(f\left(\hat{g}_1\left(t\right)\right)\right)f'\left(\hat{g}_1\left(t\right)\right)\frac{d}{dt}\hat{g}_1(t)&\le 1\Leftrightarrow\\
	        \frac{d}{dt}\e\left(f\left(\hat{g}_1\left(t\right)\right)\right)&\le 1 \Rightarrow\\
	        \e\left(f\left(\hat{g}_1\left(t\right)\right)\right) &\le t+C \Leftrightarrow\\
	        \hat{g}_1(t)&\le f^{-1}\left(\log(t+C)\right)
	    \end{align*}
	    where in the last transition we used the fact that $f$ is an increasing function (and so does $f^{-1}$).
	    Using the last two results we have
	    
	    \[
	        \frac{1}{g(t)f'\left(g\left(t\right)\right)}\ge \frac{1}{f^{-1}\left(\log(t)\right)f'\left(f^{-1}\left(\log(t)\right)\right)}
	    \]
	    
	    and thus it is sufficient to show that} $\exists t'$ so that $\forall t>t'$:
	    \begin{equation}
	            \frac{1}{f^{-1}\left(\log(t)\right)f'\left(f^{-1}\left(\log(t)\right)\right)}> \frac{1}{\log(t)}.
        \end{equation}
        Using claim \ref{claim: relation for f(t)=o(t)} with $x=\log(t)$ we get the desired result.
        
        \subsection{$g(t)=f^{-1}\left(\log(t)\right)+o\left(f^{-1}\left(\log(t)\right)\right)$ for $L=1$}
        \begin{claim} \label{claim: g=finv_log for concave f}
            If \[\frac{d}{dt} g(t)=\e\left(-f\left(g(t)\right)\right)\,,\] where $f$ is concave, $f'(t)=o(1)$, $f'(t)=\Omega\left(t^{-1}\log^{1+\epsilon}\left(t\right)\right)$ then \[g(t)=f^{-1}\left(\log(t)\right)+o\left(f^{-1}\left(\log(t)\right)\right)\,.\]
        \end{claim}
        \begin{proof}
            Note that $g\left(t\right)$ is increasing since $\forall t:\dot{g}(t)=\exp\left(-f\left(g\left(t\right)\right)\right)>0$
and unbounded (as we will prove next) and therefore, $\lim_{t\to\infty}g(t)=\infty$.
In order to prove that $g(t)$ is unbounded we assume in contradiction
that $\exists M,t_{0}$ such that $\forall t>t_{0}:$ $g\left(t\right)\le M$.
Thus, $\forall t>t_{0}:$ $\dot{g}(t)=\exp\left(-f\left(g\left(t\right)\right)\right)\ge\exp\left(-f\left(M\right)\right)$
(since $f$ is increasing) which implies $g(t)\ge\exp\left(-f\left(M\right)\right)t+c\to\infty$
in contradiction to our assumption.\\
We want to show that $g(t)=f^{-1}\left(\log(t)\right)+o\left(f^{-1}\left(\log(t)\right)\right)$.

\textbf{First step: proving that $g(t)\le f^{-1}\left(\log\left(t+C_{1}\right)\right)$}

\[
\dot{g}(t)=\exp\left(-f\left(g\left(t\right)\right)\right)\overset{(1)}{\le}\exp\left(-f\left(g\left(t\right)\right)\right)\frac{1}{f'\left(g\left(t\right)\right)}\ ,
\]
where in $(1)$ we used $f'(t)=o(1)$ and $\lim_{t\to\infty}g(t)=\infty$.
Thus, we have

\begin{align*}
\exp\left(f\left(g\left(t\right)\right)\right)f'\left(g\left(t\right)\right)\dot{g}(t) & \le1\Rightarrow\\
\exp\left(f\left(g\left(t\right)\right)\right) & \le t+C_{1}\Leftrightarrow\\
g\left(t\right) & \le f^{-1}\left(\log\left(t+C_{1}\right)\right)\ ,
\end{align*}
where in the last transition we used the fact that $f$ is increasing
(and so does $f^{-1}$).

\textbf{Second step: proving that $g(t)\ge f^{-1}\left(\log\left(t+C_{2}\right)\right)+o\left(f^{-1}\left(\log\left(t+C_{2}\right)\right)\right)$ }

\[
\dot{g}(t)=\exp\left(-f\left(g\left(t\right)\right)\right)\overset{(1)}{\ge}\exp\left(-f\left(g\left(t\right)\right)\right)\frac{1}{g(t)f'\left(g\left(t\right)\right)}\ ,
\]
where in $(1)$ we used $f'(t)=\omega\left(t^{-1}\right)$. We have

\begin{equation}
\exp\left(f\left(g\left(t\right)\right)\right)f'\left(g\left(t\right)\right)\dot{g}(t)g(t)\ge1\label{eq:lower bound}
\end{equation}
In addition

\begin{align}
\int\left[\exp\left(f\left(g\left(t\right)\right)\right)f'\left(g\left(t\right)\right)\dot{g}(t)g(t)\right]dt & \overset{(1)}{=}\int u\exp\left(f\left(u\right)\right)f'\left(u\right)du\nonumber \\
 & =\int u\frac{d}{du}\left[\exp\left(f\left(u\right)\right)\right]du\nonumber \\
 & =u\exp\left(f\left(u\right)\right)-\int\exp\left(f\left(u\right)\right)du\nonumber \\
 & \le u\exp\left(f\left(u\right)\right)\ ,\label{eq:upper bound}
\end{align}
where in $(1)$ we defined $u=g(t)$ and used $du=g'(t)dt$.\\
Combining the last two equations we obtain

\begin{align*}
g(t)\exp\left(f\left(g(t)\right)\right) & \ge t+C_{2}\\
f\left(g(t)\right)+\log\left(g\left(t\right)\right) & \ge\log\left(t+C_{2}\right)\\
f\left(g(t)\right) & \ge\log\left(t+C_{2}\right)-\log\left(g\left(t\right)\right)\\
 & \ge\log\left(t+C_{2}\right)-\log\left(f^{-1}\left(\log\left(t+C_{1}\right)\right)\right)\\
 & \ge\log\left(t+C_{2}\right)-h\left(t\right),
\end{align*}
where we defined $h\left(t\right)=\log\left(f^{-1}\left(\log\left(t+C_{1}\right)\right)\right)$.\\
Thus,

\[
g\left(t\right)\ge f^{-1}\left(\log\left(t+C_{2}\right)-h\left(t\right)\right)=f^{-1}\left(\log\left(t+C_{2}\right)\right)+o\left(f^{-1}\left(\log\left(t+C_{2}\right)\right)\right)
\]
since

\[
\lim_{t\to\infty}\frac{f^{-1}\left(\log\left(t+C_{2}\right)-h\left(t\right)\right)}{f^{-1}\left(\log\left(t+C_{2}\right)\right)}=1.
\]
This is true from the Squeeze Theorem since we have

\[
f^{-1}\left(\log\left(t+C_{2}\right)-h\left(t\right)\right)\le f^{-1}\left(\log\left(t+C_{2}\right)\right)
\]
because $f$ is increasing and $h(t)\ge0$ (for sufficiently large
$t$), and also, from the gradient inequality (we recall that $f^{-1}$
is convex since $f$ is concave and increasing)

\begin{align*}
f^{-1}\left(\log\left(t+C_{2}\right)-h\left(t\right)\right) & \ge f^{-1}\left(\log\left(t+C_{2}\right)\right)-\frac{1}{f'\left(f^{-1}\left(\log\left(t+C_{2}\right)\right)\right)}h\left(t\right)\\
 & =f^{-1}\left(\log\left(t+C_{2}\right)\right)-o\left(f^{-1}\left(\log\left(t+C_{2}\right)\right)\right)\ ,
\end{align*}
where in the last equality we used

\begin{align*}
 & \lim_{t\to\infty}\frac{h\left(t\right)}{f^{-1}\left(\log\left(t+C_{2}\right)\right)f'\left(f^{-1}\left(\log\left(t+C_{2}\right)\right)\right)}\\
 & =\lim_{t\to\infty}\frac{\log\left(f^{-1}\left(\log\left(t+C_{1}\right)\right)\right)}{f^{-1}\left(\log\left(t+C_{2}\right)\right)f'\left(f^{-1}\left(\log\left(t+C_{2}\right)\right)\right)}\\
 & =\lim_{t\to\infty}\underset{\to0}{\underbrace{\frac{\log\left(f^{-1}\left(\log\left(t+C_{2}\right)\right)\right)}{\log^{1+\epsilon}\left(f^{-1}\left(\log\left(t+C_{2}\right)\right)\right)}}}\cdot\underset{\to1}{\underbrace{\frac{\log\left(f^{-1}\left(\log\left(t+C_{1}\right)\right)\right)}{\log\left(f^{-1}\left(\log\left(t+C_{2}\right)\right)\right)}}}\cdot\underset{O\left(1\right)}{\underbrace{\frac{\log^{1+\epsilon}\left(f^{-1}\left(\log\left(t+C_{2}\right)\right)\right)}{f^{-1}\left(\log\left(t+C_{2}\right)\right)f'\left(f^{-1}\left(\log\left(t+C_{2}\right)\right)\right)}}}\\
 & =0\,,
\end{align*}
 where in the last transition we used $\frac{\log\left(f^{-1}\left(\log\left(t+C_{1}\right)\right)\right)}{\log\left(f^{-1}\left(\log\left(t+C_{2}\right)\right)\right)}\to1$ from the next claim.
\begin{claim}
$\forall C_{1},C_{2}$: $\lim_{t\to\infty}\frac{\log\left(f^{-1}\left(\log\left(t+C_{1}\right)\right)\right)}{\log\left(f^{-1}\left(\log\left(t+C_{2}\right)\right)\right)}=1$
\end{claim}

\begin{proof}
We assume WLOG $C_{2}>C_{1}$. Thus,

\[
\log\left(f^{-1}\left(\log\left(t+C_{1}\right)\right)\right)=\log\left(f^{-1}\left(\log\left(t+C_{2}\right)+\log\left(\frac{t+C_{1}}{t+C_{2}}\right)\right)\right)\le\log\left(f^{-1}\left(\log\left(t+C_{2}\right)\right)\right)
\]
and
\begin{align*}
& \log\left(f^{-1}\left(\log\left(t+C_{2}\right)+\log\left(\frac{t+C_{1}}{t+C_{2}}\right)\right)\right) \\
& \ge\log\left(f^{-1}\left(\log\left(t+C_{2}\right)\right)+\frac{\log\left(\frac{t+C_{1}}{t+C_{2}}\right)}{f'\left(f^{-1}\left(\log\left(t+C_{2}\right)\right)\right)}\right)\\
 & =\log\left(f^{-1}\left(\log\left(t+C_{2}\right)\right)\right)+\log\left(1+\frac{\log\left(\frac{t+C_{1}}{t+C_{2}}\right)}{f^{-1}\left(\log\left(t+C_{2}\right)\right)f'\left(f^{-1}\left(\log\left(t+C_{2}\right)\right)\right)}\right)\\
 & =\log\left(f^{-1}\left(\log\left(t+C_{2}\right)\right)\right)+o\left(1\right)
\end{align*}
\end{proof}
\textbf{Third step: proving that $g(t)=f^{-1}\left(\log(t)\right)+o\left(f^{-1}\left(\log(t)\right)\right)$}

We have
\[
f^{-1}\left(\log\left(t+C_{2}\right)\right)+o\left(f^{-1}\left(\log\left(t+C_{2}\right)\right)\right)\le g\left(t\right)\le f^{-1}\left(\log\left(t+C_{1}\right)\right).
\]
Using Claim \ref{claim:: f_inv(log(t)) ratio limit}, this eq. also
implies

\[
f^{-1}\left(\log\left(t+C_{2}\right)\right)+o\left(f^{-1}\left(\log\left(t+C_{2}\right)\right)\right)\le g\left(t\right)\le f^{-1}\left(\log\left(t+C_{2}\right)\right)+o\left(f^{-1}\left(\log\left(t+C_{2}\right)\right)\right).
\]
Therefore, we have
\[
\lim_{t\to\infty}\frac{g\left(t\right)}{f^{-1}\left(\log\left(t\right)\right)}=\frac{g\left(t\right)}{f^{-1}\left(\log\left(t+C_{2}\right)\right)}=1.
\]

\begin{claim}
\label{claim:: f_inv(log(t)) ratio limit}$\forall C_{1},C_{2}$:
$\lim_{t\to\infty}\frac{f^{-1}\left(\log\left(t+C_{1}\right)\right)}{f^{-1}\left(\log\left(t+C_{2}\right)\right)}=1$
\end{claim}

\begin{proof}
We assume WLOG $C_{2}>C_{1}$. Thus,

\[
f^{-1}\left(\log\left(t+C_{1}\right)\right)=f^{-1}\left(\log\left(t+C_{2}\right)+\log\left(\frac{t+C_{1}}{t+C_{2}}\right)\right)\le f^{-1}\left(\log\left(t+C_{2}\right)\right)
\]
and

\begin{align*}
f^{-1}\left(\log\left(t+C_{2}\right)+\log\left(\frac{t+C_{1}}{t+C_{2}}\right)\right) & \ge f^{-1}\left(\log\left(t+C_{2}\right)\right)+\frac{\log\left(\frac{t+C_{1}}{t+C_{2}}\right)}{f'\left(f^{-1}\left(\log\left(t+C_{2}\right)\right)\right)}\\
 & =f^{-1}\left(\log\left(t+C_{2}\right)\right)+o\left(f^{-1}\left(\log\left(t+C_{2}\right)\right)\right)
\end{align*}
\end{proof}
        \end{proof}
        \subsection{$g(t)=\log(t)+o(\log(t))$ proof for $L>1$ and f(u)=u}\label{Sect: g(t)=log(t)+o(log(t)) proof for $L>1$ and f(u)=u}

            We have that
            \begin{equation}
            \frac{dg\left(t\right)}{dt}=\exp\left(-g\left(t\right)\right)g^{b}(t),\ b\in[1,2]\,.\label{eq: dg/dt-1-1}
            \end{equation}
            We can write 
            
            \begin{equation}
            g(t)=\log(t)+b\log(g(t))+h(t).\label{eq: g(t) guess}
            \end{equation}
            First step: we want to show that $h(t)\le\log(\log(t)+C_{2})$. Substituting
            eq.~\eqref{eq: g(t) guess} into eq.~\eqref{eq: dg/dt-1-1} we get:
            
            \begin{align*}
            t^{-1}+b\frac{\dot{g}(t)}{g(t)}+h'(t) & =t^{-1}\exp(-h(t))\\
            t^{-1}+h'(t) & =t^{-1}\exp(-h(t))\left(1-\frac{b}{g(t)}\right).
            \end{align*}
            Since $\frac{b}{g(t)}<0$ and $t^{-1}>0$ we get
            \[
            h'(t)\le t^{-1}\exp(-h(t)).
            \]
            Integrating both sides we get:
            
            \begin{align*}
            \exp(h(t)) & \le\log(t)+C\\
            h(t) & \le\log\left(\log(t)+C\right)\\
             & =\log\left(\left(1+\frac{C}{\log(t)}\right)\log(t)\right)\\
             & =\log\left(\log(t)\right)+\log\left(1+\frac{C}{\log(t)}\right)
            \end{align*}
            and thus, $\exists t_{2},C_{2}>1$ so that $\forall t>t_{2}:$
            
            \[
            h(t)\le\log\left(\log(t)+C_{2}\right).
            \]
            Step 2: Showing that $g(t)\le C_{4}\log(t)$
            
            \[
            g(t)=log(t)+b\log(g(t))+h(t)\le log(t)+b\log(g(t))+\log\left(\log(t)+C_{2}\right).
            \]
            Since $g(t)-b\log(g(t))=\Theta(g(t))$, $\exists t_{3},C_{3},C_{4}$
            so that $\forall t>t_{3}$:
            
            \[
            g(t)\le C_{3}\left(\log(t)+\log\left(\log(t)+C_{2}\right)\right)\le C_{4}\log(t).
            \]
            Step 3: Showing that $g(t)\ge\log(t)$\\
            We define $s(t)=\exp(g(t))\Rightarrow\dot{s}(t)=\exp(g(t))\dot{g}(t)=g^{b}(t)=\left[\log(s(t))\right]^{b}$.
            Note that $g(t)\to\infty$ implies $s(t)\to\infty$.\\
            We have
            
            \[
            \lim_{t\to\infty}\frac{t}{s(t)}=\lim_{t\to\infty}\frac{1}{\left[\log(s(t))\right]^{b}}=0
            \]
            and therefore $s(t)=\omega(t)$. This implies that $\exists t_{4}$
            so that $\forall t>t_{4}$:
            
            \begin{align*}
            s(t) & \ge t\\
            \exp(g(t)) & \ge t\\
            g(t) & \ge\log(t).
            \end{align*}
            Combining the results from steps 2 and 3 we obtain $g(t)=\theta\left(\log(t)\right)$.
            
            \begin{align*}
            1 & \le\frac{g(t)}{\log(t)}=\frac{g(t)-b\log\left(g(t)\right)}{\log(t)}+\frac{b\log\left(g(t)\right)}{log(t)}\le\frac{\log(t)+\log\left(\log(t)+C_{2}\right)}{\log(t)}+\frac{b\log\left(C_{3}\left(\log(t)+\log\left(\log(t)+C_{2}\right)\right)\right)}{\log(t)}\to1
            \end{align*}
            From the Squeeze Theorem $\lim_{t\to\infty}\frac{g(t)}{log(t)}=1$.
            Thus, $g(t)=\log(t)+o\left(\log(t)\right)$.
        
    \remove{
	From Conjecture \ref{conjecture: generic tail}, we can write $\wvec(t) = \what g(t) + \rhoVec$ where $\rhoVec = o(g(t))$.	
	We can use this to calculate the normalized weight vector:
	\begin{flalign} \label{Calculation of the normalized weight vector}
	&\frac{\wvec(t)}{\Vert \wvec(t)\Vert}&\nonumber\\
	& =\frac{g(t)\what +\rhoVec}{\sqrt{g(t)^2\what^\top\what + \rhoVec^\top\rhoVec+2g(t)\what^\top\rhoVec}}\nonumber\\
	& =\frac{\what + g^{-1}(t)\rhoVec}{\whatNorm\sqrt{1 +2\frac{\what^\top\rhoVec}{g(t)\whatNorm^2}+ \frac{\Vert\rhoVec\Vert^2}{g^2(t)\whatNorm^2}}}\nonumber\\
	& = \frac{1}{\whatNorm}\left(\what +g^{-1}(t)\rhoVec\right)\left[1-\frac{\what^\top\rhoVec}{g(t)\whatNorm^2}+\left[\frac{3}{4}\left(2\frac{\what^\top\rhoVec}{\whatNorm^2}\right)^2-\frac{\Vert\rhoVec\Vert^2}{2\whatNorm^2}\right]\frac{1}{g^2(t)}+O\left(\left(\frac{\what^\top\rhoVec}{g(t)}\right)^3\right)\right]\nonumber\\
	& = \frac{\what}{\whatNorm}+\left(\frac{\rhoVec}{\whatNorm}-\frac{\what}{\whatNorm}\frac{\what^\top\rhoVec}{\whatNorm^2}\right)\frac{1}{g(t)}+O\left(\left(\frac{\what^\top\rhoVec}{g(t)}\right)^2\right)\nonumber\\
	& = \frac{\what}{\whatNorm}+\left(I-\frac{\what\what^\top}{\whatNorm^2}\right)\frac{1}{\whatNorm}\frac{\rhoVec}{g(t)}+O\left(\left(\frac{\what^\top\rhoVec}{g(t)}\right)^2\right)
	\end{flalign}
	we used $\frac{1}{\sqrt{1+x}}=1-\frac{1}{2}x+\frac{3}{4}x^2+O(x^3)$.\\
	We use eq.~\eqref{Calculation of the normalized weight vector} to calculate the angle:
	\begin{align} \label{eq: Calculation of the angle}
	&\frac{\wvec(t)^\top\what}{\wNorm\whatNorm}&\nonumber\\
	&=\frac{\what^\top}{\whatNorm^2}\left(\what +g^{-1}(t)\rhoVec\right)\left[1-\frac{\what^\top\rhoVec}{g(t)\whatNorm^2}+\left[\frac{3}{4}\left(2\frac{\what^\top\rhoVec}{\whatNorm^2}\right)^2-\frac{\Vert\rhoVec\Vert^2}{2\whatNorm^2}\right]\frac{1}{g^2(t)}+O\left(\left(\frac{\what^\top\rhoVec}{g(t)}\right)^3\right)\right]\nonumber\\
	& = 1 + \frac{2}{\whatNorm^2}\left[\left(\frac{\rhoVec^\top\what}{\whatNorm \Vert\rhoVec\Vert}\right)^2 - \frac{1}{4}\right]\frac{\Vert\rhoVec\Vert^2}{g^2(t)}+O\left(\left(\frac{\what^\top\rhoVec}{g(t)}\right)^3\right)
	\end{align}
	Calculation of the margin:
	\begin{flalign} \label{eq: Calculation of the margin}
		&\min_{n} \frac{\xnT \wvec(t)}{\Vert \wvec(t)\Vert}\nonumber\\
		& =\min_{n} \xnT\left[\frac{\what}{\whatNorm}+\left(\frac{\rhoVec}{\whatNorm}-\frac{\what}{\whatNorm}\frac{\what^\top\rhoVec}{\whatNorm^2}\right)\frac{1}{g(t)}+O\left(\left(\frac{\what^\top\rhoVec}{g(t)}\right)^2\right)\right]\nonumber\\
		& =\frac{1}{\whatNorm}+\frac{1}{\whatNorm}\left(\min_{n}\xnT\rhoVec-\frac{\what^\top\rhoVec}{\whatNorm^2}\right)\frac{1}{g(t)}
		+O\left(\left(\frac{\what^\top\rhoVec}{g(t)}\right)^2\right)
	\end{flalign}
    From Conjecture \ref{conjecture: generic tail}, we can also characterize $\rhoVec$: 
	\begin{equation} \label{eq: rho def generic case appendix}
			\bm{\rho}(t) = \begin{cases}
\left(f'(g(t))\right)^{-1}\genericConstVec+o\left(\left(f'(g(t))\right)^{-1}\right), & \text{if $\frac{1}{f'(g(t))}=\Omega(1)$}\\
			O(1), & \text{otherwise}
			\end{cases}
		\end{equation}	
      Substituting eq.~\eqref{eq: rho def generic case appendix} into eqs. \ref{Calculation of the normalized weight vector}, \ref{eq: Calculation of the angle}, \ref{eq: Calculation of the margin} we get:
	\begin{flalign} \label{eq: generic norm results}
		&\left \Vert \frac{\wvec(t)}{\Vert \wvec(t)\Vert} - \frac{\what}{\whatNorm} \right \Vert =\begin{cases}
		\left\Vert\left(I-\dfrac{\what\what^\top}{\whatNorm^2}\right)\genericConstVec\right\Vert\dfrac{1}{\whatNorm}\dfrac{1}{g(t)f'(g(t))}, & \text{if $\dfrac{1}{f'(g(t))}=\Omega(1)$}\\
			O(g^{-1}(t))\vphantom{\dfrac{0}{0}}, & \text{otherwise}
		\end{cases}&
	\end{flalign}
    \begin{equation} \label{eq: generic angle results}
	 1 - \frac{\wvec(t)^\top\what}{\wNorm\whatNorm}  = \begin{cases}
		\left(\dfrac{1}{4}-\left(\dfrac{\genericConstVec^\top\what}{\whatNorm \Vert\genericConstVec\Vert}\right)^2\right)\dfrac{2\Vert\genericConstVec\Vert^2}{\whatNorm^2}\dfrac{1}{\left(g(t)f'(g(t))\right)^2}, & \text{if $\dfrac{1}{f'(g(t))}=\Omega(1)$}\\
			O(g^{-2}(t))\vphantom{\dfrac{0}{0}}, & \text{otherwise}
		\end{cases}
	\end{equation}
	\begin{equation} \label{eq: generic margin results}
	\frac{1}{\whatNorm} - \min_{n} \frac{\xnT \wvec(t)}{\Vert \wvec(t)\Vert}  = \begin{cases}
		\dfrac{1}{\whatNorm}\left(\dfrac{\what^\top\genericConstVec}{\whatNorm^2} - \min_{n}\xnT\genericConstVec\right)\dfrac{1}{g(t)f'(g(t))}, & \text{if $\dfrac{1}{f'(g(t))}=\Omega(1)$}\\
			O(g^{-1}(t))\vphantom{\dfrac{0}{0}}, & \text{otherwise}
		\end{cases}
	\end{equation}
   }

	\remove{\mnote{To prove these Theorems we need:
	\begin{enumerate}
	    \item Use Theorem 10 which is stated below and proved in appendix section \ref{sec: proof of theorem about general tail rates} to get the model and ODE. This part is done.
	    \item With this model we can calculate $\gamma - \min\limits_{n} \frac{\xnT \wvec (t)}{\Vert \wvec(t)\Vert}$ to get that the rate is $C\cdot \frac{\norm{\op\rhoVec}}{g(t)}$. The calculation is in appendix section  \ref{sec: generic tail rates calculation}. This completes the proof of Theorem \ref{theorem: general convergence rates simplified with ode}. Notice the part about time rescaling which needs to be addressed (there is a note about this in the Theorem).
	    \item To complete the proof for Theorem \ref{theorem: general convergence rates simplified} we focus on the case $L=1$ and need to explain why the rate $C\cdot \frac{\norm{\op\rhoVec}}{g(t)}$ is optimal for exponential tail. I started doing this in appendix section \ref{sec: proving corollary L=1 optimal rate is obtained for exp loss} (which needs to be moved and renamed). The part for $f(t)=\omega(t)$ is done. The part for $f(t)=o(t)$ is almost done. I added a note there about what is missing.
	\end{enumerate}
	}
	
	For a specific family of functions this results can be proved without assuming the loss is minimized and the incremental update converge in direction. results on poly-exp tail are on appendix section \ref{sec: appendix results on poly-exp tails}.
	Using this Theorem we can calculate the convergence rates and show two important results:
	\begin{itemize}
	    \item In the case of a single layer, $L=1$, and general tail loss function the optimal rate is achieved for exponential loss (\mnote{We need to say that in the case $f(t)=o(t)$ we also need to assume $f'(t)=o(1)$ and concave to show this result}). This is the result in Theorem \ref{theorem: general convergence rates simplified}.
	    \item For any linear network with exponential loss the convergence rates are $1/g(t)$ where $g(t)$ is the asymptotic solution of the ode in Theorem \ref{theorem: general convergence rates simplified with ode}.
	\end{itemize}}
	\remove{
	\subsection{Implication: general tail with $L=1$}
	Using theorem \ref{theorem: general convergence rates} with general tail and $L=1$ we have that $\tilde{g}(t)$ is the asymptotic solution of
	\[
	    \e\left(-f\left(\tilde{g}(t)\right)\right)=\frac{1}{\eta_t\fsym_1(t)}\frac{d}{dt}\tilde{g}(t)
	\]
	and
	\[
	        \norm{\op_1\rhoVec}=\begin{cases} \fsym_2(t)\left(f'\left(\tilde{g}(t)\right)\right)^{-1}+o\left(\left(f'\left(\tilde{g}(t)\right)\right)^{-1}\right), & \text{if } \left(f'\left(\tilde{g}(t)\right)\right)^{-1}=\Omega(1)\\
	        O\left(1\right), & \text{otherwise}
	        \end{cases}
    \]
    for positive functions $\fsym_1(t)=\Theta(1),\ \fsym_2(t)=\Theta(1)$ independent of $f$. This implies the convergence rates in table \ref{table: L=1 generic convergence rates} (the calculation is in appendix section \ref{sec: CALCULATION OF CONVERGENCE RATES}).
    \begin{table} [ht]
        \centering
			\normalsize	
			\begin{tabular}{c c c} 
				\toprule
				 & $\left(f'\left(g\left(t\right)\right)\right)^{-1}=\Omega(1)$ & \text{otherwise}\\ 
				\midrule
				$ \left \Vert \frac{\wvec(t)}{\Vert \wvec(t)\Vert} - \frac{\what}{\whatNorm} \right \Vert$ \text{ or } $ \gamma - \min\limits_{n} \frac{\xnT \wvec (t)}{\Vert \wvec(t)\Vert}$
				&$  \displaystyle \frac{C_1\fsym_2(t)}{g(t)f'(g(t))} + o\left(\frac{1}{g(t)f'(g(t))}\right) $ &  $O\left(g^{-1}(t)\right)$\\ 
				$1 - \dfrac{\eqw(t)^\top\what}{\norm{\eqw}\norm{\what}}$ &$ \displaystyle \frac{C_2\fsym_2^2(t)}{\left(g(t)f'(g(t))\right)^2} + o\left(\frac{1}{\left(
				g\left(t\right)f'\left(g\left(t\right)\right)\right)^{2}}\right) $ &  $O\left(g^{-2}(t)\right)$\\
				\bottomrule
			\end{tabular}
			\captionsetup{width=\textwidth}
			\caption{Summary of convergence rates derived from Theorem \ref{theorem: general convergence rates} for $L=1$ and general tail loss function. The first line is the convergence rate for both the distance and the suboptimality of the margin. The second line is the angle convergence rate. } \label{table: L=1 generic convergence rates}
	\end{table}
	\begin{restatable}{corollary}{corollarySingleLayyerOptimalLoss} \label{corollary: L=1 optimal rate is obtained for exp loss}
	    For $L=1$, the optimal rate is obtained for exponential loss (\mnote{We need to say that in the case $f(t)=o(t)$ we also need to assume $f'(t)=o(1)$ and concave to show this result}).
	\end{restatable}
	We show this Corollary in appendix section \ref{sec: prving corollary L=1 optimal rate is obtained for exp loss}.
	For a specific family of functions this results can be proved without assuming the loss is minimized and the incremental update converge in direction. results on poly-exp tail are on appendix section \ref{sec: appendix results on poly-exp tails}.
	The rates in this case are consistent with the rates in table \ref{table: L=1 generic convergence rates}. 
	
	\subsection{Implication: For any linear network with exponential loss}
	Using theorem \ref{theorem: general convergence rates} with exponential loss we get that $\tilde{g}(t)$ is the asymptotic solution of
	\[
	    \frac{\e\left(-\tilde{g}(t)\right)\left(\tilde{g}(t)\right)^{2\left(1-\frac{1}{L}\right)}\eta_t\fsym_1(t)}{\displaystyle \frac{d}{dt}\tilde{g}(t)}=1,
	\]
	where $\fsym_1(t)=\Theta(1)$.\\
	From numerical approximation $\tilde{g}(t)=\Theta\left(\log(t)\right)$. This result implies the convergence rates specified in table \ref{table: linear network convergence rates} for the network equivalent linear predictor (the calculation is in appendix section \ref{sec: CALCULATION OF CONVERGENCE RATES}).
	\begin{table} [h]
		\centering
			\normalsize	
			\begin{tabular}{c c} 
				\toprule				
				$\left \Vert \dfrac{\eqw(t)}{\Vert \eqw(t)\Vert} - \dfrac{\what}{\norm{\what}} \right \Vert$ \text{ or } $\displaystyle \gamma - \min_{n} \dfrac{\xnT \eqw (t)}{\Vert \eqw(t)\Vert}$ & $O\left(\dfrac{1}{\log(t)}\right)$ \vphantom{$\displaystyle O\left(\frac{1}{\log^{\frac{1}{3}}(t)}\right)$} \\ 
				 $1 - \dfrac{\eqw(t)^\top\what}{\norm{\eqw}\norm{\what}}$ &$O\left(\dfrac{1}{\log^2(t)}\right)$ \vphantom{$\displaystyle O\left(\frac{1}{\log^{\frac{1}{3}}(t)}\right)$} \\
				\bottomrule
			\end{tabular}
			\captionsetup{width=\textwidth}
			\caption{Summary of convergence rates for the linear network equivalent predictor for exponential loss function. The first line is the convergence rate for both the distance and the suboptimality of the margin. The second line is the angle convergence rate.} \label{table: linear network convergence rates}
	\end{table}
	}
	\subsection{Proof that $\gamma - \min\limits_{n} \frac{\xnT \wvec (t)}{\Vert \wvec(t)\Vert}=\Omega\left(\frac{1}{\log(t)}\right)$} \label{sec: proof that rate is Omega(1/log(t))}
	In this section, we need to prove that if $f'(u)=\omega(1)$ and $\abs{\frac{f'(u)}{f(u)}}=O(u^{-1})$ then $\gamma - \min\limits_{n} \frac{\xnT \wvec (t)}{\Vert \wvec(t)\Vert}=\Omega\left(\frac{1}{\log(t)}\right)$. From the calculation in appendix sections \ref{section: Asymptotic rates for depth L linear networks}, \ref{sec: proving corollary L=1 optimal rate is obtained for exp loss}, we have that exists $C>0$ so that $\gamma - \min\limits_{n} \frac{\xnT \wvec (t)}{\Vert \wvec(t)\Vert} = C\frac{1}{g(t)f'(g(t))}$ where 
	$g(t) = f^{-1}(\log(t)) + o(f^{-1}(\log(t)))$ from \cite{odeSolMathOverFlow}. From $\abs{\frac{f'(u)}{f(u)}}=O(u^{-1})$ we have that
	\[
	    f^{-1}(\log(t))f'(f^{-1}(\log(t))) = O\left( f^{-1}(\log(t)) \frac{f\left(f^{-1}(\log(t)\right)}{f^{-1}(\log(t)} \right) = O(\log(t))\,.
	\]
	Thus, combining this result with $g(t) = f^{-1}(\log(t)) + o(f^{-1}(\log(t)))$ and $\gamma - \min\limits_{n} \frac{\xnT \wvec (t)}{\Vert \wvec(t)\Vert} = C\frac{1}{g(t)f'(g(t))}$ we get that
	\[
	    \gamma - \min\limits_{n} \frac{\xnT \wvec (t)}{\Vert \wvec(t)\Vert}=\Omega\left(\frac{1}{\log(t)}\right)
	\]
	as required.
    \section{Proof of Theorem \ref{theorem: general convergence rates}} \label{sec: proof of theorem about general tail rates}
    \subsection{Preliminaries and Auxiliary Lemma}
    Recall Assumption~\ref{assumption: f can be expended}
    \assumptionfexpansion*
    \begin{restatable}{claim}{claimfexpansion} \label{claim: f(g+h)=f(g)+f'(g)h}
	    For any function $f(u)$ that satisfies assumption \ref{assumption: f can be expended}, and any two functions $g(t),\ h(t)$ such that $h(t)=o(g(t))$ and $\lim_{t\to\infty}g(t)=\infty$, $\exists t_1$ so that $\forall t>t_1$:
	    \[
	        f\left( g(t)+h(t)\right)=f\left(g(t)\right)+f^\prime \left(g(t)\right) h(t) + R(t),
	    \]
	    where $R(t)=o\left( f^\prime \left(g(t)\right)h(t)\right)$.
	\end{restatable}
    \begin{proof}
	    Since $h(t)=o(g(t))$ and $\lim_{t\to\infty}g(t)=\infty$ we have that $\exists t_1$ so that $\forall t>t_1: g(t)+h(t)>0$. From our assumption that $f$ is real analytic on $\mathbb{R}_{++}$ we get that $\forall t>t_1$:
	    \[
	        f\left(g(t)+h(t)\right) = f\left(g(t)\right) + f'\left(g(t)\right) h(t) + \sum_{k=2}^{\infty} \frac{1}{k!}f^{(k)}(g(t))h^k(t).
	    \]
	    We denote $R(t) = \sum_{k=2}^{\infty} \frac{1}{k!}f^{(k)}(g(t))h^k(t)$.
	    We need to show that $\lim\limits_{t\to\infty} \frac{R(t)}{f^\prime \left(g(t)\right) h(t)}=0.$
	    
	    Since $\forall k\in \mathbb{N}:\ \abs{\dfrac{f^{(k+1)}(t)}{f^\prime(t)}}=O\left(t^{-k}\right)$ we have that $\exists t_2>t_1$ and positive constants $\left\{ C_k\right\}_{k=2}^{\infty}$, so that $\forall t>t_2$:
	    \begin{align}
	        \nonumber&0\le \abs{\frac{R(t)}{f^\prime \left(g(t)\right) h(t)}}\le 
	        \abs{\frac{1}{f^\prime \left(g(t)\right) h(t)}\sum_{k=2}^{\infty} C_k f^{\prime}(g(t))\frac{h^k(t)}{\left(g(t)\right)^{k-1}}} \\
	        &= \abs{\sum_{k=2}^{\infty} C_k \left( \frac{h(t)}{g(t)} \right)^{k-1}}
	        \le \max_k C_k \abs{\frac{\frac{h(t)}{g(t)}}{1-\frac{h(t)}{g(t)}}}\xrightarrow{t\to\infty} 0
	    \end{align}
	    where in the last transition we used $h(t)=o(g(t))$. Thus, by the squeeze theorem $\displaystyle \lim_{t\to\infty} \frac{R(t)}{f^\prime \left(g(t)\right) h(t)}=0$.
	\end{proof}
	The following lemma will be useful in characterizing $\eqw(t)$ asymptotic behaviour.
    \begin{lemma} \label{eq: lemma about vector limit}
	    If $\displaystyle \lim_{t\to\infty} \sumn f_n(t)\xn=\sumn \alpha_n \xn$ where $\xn$ are linearly independent vectors then $\forall n\in\{1,...,N\}:\ \lim_{t\to\infty} f_n(t)=\alpha_n$.
	\end{lemma}
	\begin{proof}
	    $\displaystyle \lim_{t\to\infty} \sumn f_n(t)\xn=\sumn \alpha_n \xn$ implies that $\forall \epsilon>0$, $\exists t'$ so that $\forall t>t':$ 
	    \[
	        \norm{\sumn f_n(t)\xn-\sumn \alpha_n \xn}<\epsilon.
	    \]
	    Since $\vect{x}_1,...,\xn$ are linearly independent, $\forall k$ $\exists \vect{u}$ such that $\forall n\neq k: \ip{\xn}{\vect{u}}=0$ and $\ip{\xk}{\vect{u}}=1$. Using Cauchy-Schwarz inequality we have that, $\forall k=1,...,n$ and $\forall t>t'$:
	    \[
	        \abs{f_k(t)-\alpha_k}=\abs{\ip{\sumn f_n(t)\xn-\sumn \alpha_n \xn}{\vect{u}}}\le \norm{\sumn f_n(t)\xn-\sumn \alpha_n \xn}\norm{\vect{u}}< \norm{\vect{u}} \epsilon .
	    \]
	\end{proof}
	\noindent Using Theorem \ref{Theorem: convergence to max margin for general tail}, Lemma \ref{lemma: every accumulation point of z/norm(z) is a linear combination of sv}, Lemma \ref{eq: lemma about vector limit} and the claim \ref{claim: f(g+h)=f(g)+f'(g)h} we will prove Theorem \ref{theorem: general convergence rates} that characterizes $\eqw(t)$ asymptotic behavior.
    \subsection{Proof of Theorem \ref{theorem: general convergence rates}}
    \theoremGeneralRates*
    \begin{proof}
	    From Theorem \ref{Theorem: convergence to max margin for general tail} we have that $\displaystyle \lim_{t\to\infty} \frac{\eqw(t)}{\norm{\eqw(t)}}=\frac{\what}{\norm{\what}}$ where $\what$ is the maximum-margin separator. In addition, from lemma \ref{lemma: every accumulation point of z/norm(z) is a linear combination of sv} we have that $\lim_{t\to\infty} \norm{\eqw(t)}=\infty$. Combining these two results we can write
	    \begin{equation} \label{eq: wvec model}
	        \eqw(t)=\tilde{g}(t)\what + \rhoVec,
	    \end{equation}
	    where $\lim_{t\to\infty}\tilde{g}(t)=\infty$, $\norm{\rhoVec}=o\left(\tilde{g}(t)\right)$ and $\rhoVec^\top\what=0$.

	    Using Claim \ref{claim: f(g+h)=f(g)+f'(g)h} we have,
	    \begin{equation} \label{eq: claim f(g+h)=f(g)+f'(g)h result}
	        \forall n\in \set :\ f\left(\tilde{g}(t) + \rhoVec^\top\xn\right) = f\left(\tilde{g}(t)\right)+f^{\prime}\left(\tilde{g}(t) \right)\rhoVec^\top\xn+R_n(t),
	    \end{equation}
	    where $R_n(t)=o\left(f^{\prime}\left(\tilde{g}(t) \right)\rhoVec^\top\xn \right)$. 
	    We denote $\set=\left\{ n: \what^\top\xn = 1 \right\}$ (the indices of the support vectors) and recall that the maximum-margin separator can be expresses as a linear combination of the support vectors $\what=\sumnsv \alpha_n \xn$.
	    In order to calculate the convergence rates we need to characterize the asymptotic behavior of $\norm{\rhoVec}$ and $g(t)$.
	    
	    \paragraph{Step 1: }Proving that $\forall n\in\set:\ \fsym_{n}(t)\triangleq f^{\prime}\left(\tilde{g}(t) \right)\rhoVec^\top\xn+R_n(t)=O(1)$.

	    We want to prove that $\forall n\in\set$ $\fsym_{n}(t)$ is asymptotically bounded, i.e. $\exists t_1>0,m,M$ so that $\forall t>t_1:$ $m\le\fsym_{n}(t)\le M$. 
	    
	    We assume in contradiction that $\exists k\in\set$ so that $\fsym_k(t)$ is not asymptotically bounded from above. Using Bolzano-Weierstrass theorem we have that $\exists \left\{ \bar{t}_i\right\}_{i=1}^{\infty}$ so that the sequence $\fsym_k(\bar{t}_i)\xrightarrow{i\to\infty}\infty$.
	    
	    From Theorem \ref{Theorem: convergence to max margin for general tail} and Lemma \ref{lemma: every accumulation point of z/norm(z) is a linear combination of sv} $\exists \left\{ \tilde{\alpha}_n\ge0\ :\ n\in\set\right\}$ which are a positive scaling of $\alpha_n$ so that
	    \begin{equation} \label{eq: z/norm(z) limit}
	         \frac{\sumnsv \e\left( -f\left(\eqw(t)^\top\xn\right)\right)\xn}{\norm{\sumnsv \e\left( -f\left(\eqw(t)^\top\xn\right)\right)\xn}}
	        \to \sumnsv \tilde{\alpha}_n \xn.
	    \end{equation}
	     Substituting eq.~\eqref{eq: wvec model} into eq.~\eqref{eq: z/norm(z) limit} we get that
	     \begin{align}
	        \nonumber &\frac{\sumnsv \e\left( -f\left(\eqw(t)^\top\xn\right)\right)\xn}{\norm{\sumnsv \e\left( -f\left(\eqw(t)^\top\xn\right)\right)\xn}}
	        \overset{(1)}{=} \frac{\sumnsv \e\left( -f\left(\tilde{g}(t)\what^\top\xn+\rhoVec^\top\xn\right)\right)\xn}{\norm{\sumnsv \e\left( -f\left(\tilde{g}(t)\what^\top\xn+\rhoVec^\top\xn\right)\right)\xn}}\\
	        &\overset{(2)}{=}  \frac{\sumnsv \e\left( -f\left(\tilde{g}(t)\right)-\fsym_n(t)\right)\xn}{\norm{\sumnsv \e\left( -f\left(\tilde{g}(t)\right)-\fsym_n(t)\right)\xn}}
	        =\frac{\sumnsv \e\left( -\fsym_n(t)\right)\xn}{\norm{\sumnsv \e\left( -\fsym_n(t)\right)\xn}}
	        \to \sumnsv \tilde{\alpha}_n \xn\ ,
	     \end{align}
	     where in (1) we used eq.~\eqref{eq: wvec model}, in (2) we used $\forall n\in\set: \what^\top\xn=1$ and eq.~\eqref{eq: claim f(g+h)=f(g)+f'(g)h result}.
	     From the last equation we also have that
	     \begin{equation}
	        \frac{\sumnsv \e\left( -\fsym_n(\bar{t}_i)\right)\xn}{\norm{\sumnsv \e\left( -\fsym_n(\bar{t}_i)\right)\xn}}
	        \xrightarrow{i\to\infty} \sumnsv \tilde{\alpha}_n \xn.
	     \end{equation}
	     Combining the last equation with the facts that for almost every dataset $\forall n\in\set: \tilde{\alpha}_n>0$ (as a positive scaling of $\alpha_n$) and $\xn$ are linearly independent (from Lemma 8 in \cite{soudry2017implicit}) and Lemma \ref{eq: lemma about vector limit} we have that $\forall n\in\set:\ \e\left( -\fsym_n(\bar{t}_i)\right)=\Theta\left(\norm{\sum_{m=1}^{N} \e\left( -\fsym_{m}(\bar{t}_i)\right)\vect{x}_m}\right)$.

	     Thus, if for some $k\in \set$, $\fsym_k(\bar{t}_i)\xrightarrow{i\to\infty}\infty$, then this implies that $\forall n\in\set:\ \fsym_n(\bar{t}_i)\xrightarrow{i\to\infty}\infty$. In addition, since $\fsym_{n}(t)\triangleq f^{\prime}\left(\tilde{g}(t) \right)\rhoVec^\top\xn+R_n(t)$ where $R_n(t)=o\left(f^{\prime}\left(\tilde{g}(t) \right)\rhoVec^\top\xn \right)$ we get that $\forall n\in\set: f^{\prime}\left(\tilde{g}(\bar{t}_i) \right)\bm{\rho}(\bar{t}_i)^\top\xn\to\infty$. However, this implies
	     \[
	        0\overset{(1)}{=}f^{\prime}\left(\tilde{g}(\bar{t}_i) \right) \what^\top\bm{\rho}(\bar{t}_i) \overset{(2)}{=} \sumnsv \alpha_n f^{\prime}\left(\tilde{g}(\bar{t}_i) \right) \xn^\top\bm{\rho}(\bar{t}_i) \xrightarrow[i\to\infty]{(3)}\infty
	     \]
	     where in $(1)$ we used $\forall t:\ \what^\top\rhoVec=0$, in $(2)$ we used $\what=\sumnsv \alpha_n \xn$ and in $(3)$ we used that $\alpha_n>0$. We got a contradiction and thus our contradiction assumption must be false $\Rightarrow$ $\forall k\in\set: \fsym_k(t)$ is bounded from above. Similarly, $\forall k\in\set: \fsym_k(t)$ is bounded from below since $\tilde{\alpha}_n>0$ for all $n\in\set$. Combining these results we have that $\forall n\in\set: \fsym_n(t)=O(1)$.
	     
	     \paragraph{Step 2:} Characterizing $\rhoVec$ and $\tilde{g}(t)$ asymptotic behavior.
	     
	     From the previous step we have that $\forall n\in\set: \fsym_n(t)=O(1)$. We recall $\fsym_n$ definition $\fsym_{n}(t)\triangleq f^{\prime}\left(\tilde{g}(t) \right)\rhoVec^\top\xn+R_n(t)$ where $R_n(t)=o\left(f^{\prime}\left(\tilde{g}(t) \right)\rhoVec^\top\xn \right)$. This implies that $\forall n\in\set:  f^{\prime}\left(\tilde{g}(t) \right)\rhoVec^\top\xn=O(1)$. 
	     
	     Since this is true $\forall n\in\set$ we have that $\rhoVec$ components that are in subspace spanned by the support vectors are bounded, i.e. $f^{\prime}\left(\tilde{g}(t) \right)\norm{\op_1\rhoVec}=O(1) \Rightarrow \norm{\op_1\rhoVec}=O\left(\left(f^{\prime}\left(\tilde{g}(t) \right)\right)^{-1}\right)$. In the next steps we will further characterize $\norm{\op_1\rhoVec}$ behaviour.
	     We denote $\vect{z}(t)\triangleq -\nabla_{\eqw} \mathcal{L}\left( \eqw(t)\right)=-\sumn \ell^{\prime}\left(\eqw(t)^\top\xn\right)\xn$.\\
	     \textbf{Step 2.a:} Showing that $\displaystyle \lim_{t\to\infty} \frac{\vect{z}(t)}{\norm{\vect{z}(t)}}=\lim_{t\to\infty} \frac{1}{\norm{\vect{z}(t)}}\e\left( -f\left(\tilde{g}\left(t\right)\right)\right)\sumnsv \e\left(-f'\left(\tilde{g}(t)\right)\rhoVec^\top\xn \right)\xn$\\
	    From Theorem \ref{Theorem: convergence to max margin for general tail} we have that $\displaystyle \lim_{t\to\infty} \frac{\vect{z}(t)}{\norm{\vect{z}(t)}}$ exists (and finite). In addition,
	    \begin{align*}
    	    \lim_{t\to\infty} \frac{\vect{z}(t)}{\norm{\vect{z}(t)}}&\overset{(1)}{=}\lim_{t\to\infty}
    		\frac{-\sumnsv \ell^{\prime}\left(\eqw(t)^\top\xn\right)\xn}{\norm{\vect{z}(t)}}\\
    		&\overset{(2)}{=} \lim_{t\to\infty} \frac{\sumnsv \e\left( -f\left(\tilde{g}(t)\what^\top\xn+\rhoVec^\top\xn\right)\right)\xn}{\norm{\vect{z}(t)}}\\
            &\overset{(3)}{=}  \lim_{t\to\infty} \frac{\e\left( -f\left(\tilde{g}(t)\right)\right)\sumnsv \e\left( - f^{\prime}\left(\tilde{g}(t) \right)\rhoVec^\top\xn-R_n(t)\right)\xn}{\norm{\vect{z}(t)}}\\
            &\overset{(4)}{=}\lim_{t\to\infty} \frac{\e\left( -f\left(\tilde{g}(t)\right)\right)\sumnsv \e\left(-f'\left(\tilde{g}(t)\right)\rhoVec^\top\xn \right)\xn}{\norm{\vect{z}(t)}}\ ,
		\end{align*}
		where in (1) we used $\vect{z}(t)$ definition and Lemma \ref{lemma: every accumulation point of z/norm(z) is a linear combination of sv}, in (2) we used $-\ell'(u)=-\e\left(-f\left(u\right)\right)$ and $\eqw(t)=\tilde{g}(t)\what + \rhoVec$ (eq.~\eqref{eq: wvec model}), in 3 we used eq.~\eqref{eq: claim f(g+h)=f(g)+f'(g)h result} and in 4 we used the fact that $\forall n\in\set:\ \fsym_{n}(t)\triangleq f^{\prime}\left(\tilde{g}(t) \right)\rhoVec^\top\xn+R_n(t)=O(1)$ and $R_n(t)=o\left(f^{\prime}\left(\tilde{g}(t) \right)\rhoVec^\top\xn \right)$ which implies $R_n(t)\to0$.\\
		\textbf{Step 2.b:}  Showing that for a constant $C$: $\displaystyle \lim_{t\to\infty} \frac{\vect{z}(t)}{\norm{\vect{z}(t)}}=\lim_{t\to\infty} C\frac{\displaystyle  \frac{d}{dt}\tilde{g}(t)}{\norm{\vect{z}(t)}\left(\tilde{g}(t)\right)^{\frac{2}{L}\left(L-1\right)}}\sumnsv \alpha_n\xn$\\
		\remove{
		\mnote{new part}
		    Recall we denoted $\vect{z}(t)=-\nabla_{\eqw}\mathcal{L}(\eqw(t))$.
		    Computing the gradients descent updates for $\lw{1}(t)$, we have 
            \[
            \lw{1}(t+1)-\lw{1}(t)=-\eta_t\nabla_{\lw{1}}\mathcal{L}_{\P}(\np(t))=-\eta_t\nabla_{\lw{1}}\mathcal{L}(\lw{1:L})=-\eta_t\nabla_{\eqw}\mathcal{L}(\eqw(t)){\lw{2:L}(t)}^\top=\eta_t\vect{z}(t){\lw{2:L}(t)}^\top
            \]
             From the assumption that $\vect{z}(t)$ converge in direction we have that
            \begin{align}
            \vect{z}(t)=\bar{\vect{z}}^\infty p(t)+\bdelta_{\vect{z}}(t)\,p(t),
            \label{eq:z}
            \end{align}
            and thus
            \begin{align*}
                \Delta \lw{1}\triangleq\lw{1}(t+1)-\lw{1}(t) &= \eta_t \Big( \bar{\vect{z}}^\infty p(t)+\bdelta_{\vect{z}}(t)\,p(t) \Big) \left(\lwinfty{2:L}\, g(t)^{L-1}+\bdelta_{\lw{2:L}}(t)\,g(t)^{L-1}\right)^\top \\
                &\overset{(1)}{=}\left(\eta_t p(t)g(t)^{L-1}\right)\left(\bar{\vect{z}}^\infty \right(\lwinfty{2:L}\left)^\top  +\bdelta(t)\right)\,,
            \end{align*}
            where in $(1)$ $\bdelta(t) = \bdelta_{\vect{z}}(t)\left(\lwinfty{2:L}+\bdelta_{\lw{2:L}}(t)\right)^\top + \bdelta_{\vect{z}}(t) \bdelta_{\lw{2:L}}(t)^\top \to0$.
            This implies that $\lw{1}(t+1)-\lw{1}(t)$ converge in direction with positive margin (see Claim 1 in \cite{gunasekar2018conv}). In addition, combining this result with eq. 25 in \cite{gunasekar2018conv}, we have that $\lim\limits_{t\to\infty} \frac{\lw{1}}{\norm{\lw{1}}}=\lim\limits_{t\to\infty}\frac{\Delta \lw{1}}{\norm{\Delta \lw{1}}} = \bar{\vect{z}}^{\infty} \left(\lwinfty{2:L}\right)^\top\triangleq \lwinfty{1}$. 
            Using eq. \mnote{TBD}:
            \begin{equation}
                \lw{l}(t) = \lwinfty{l}g(t)+\delta_{\lw{l}}(t)g(t) \label{eq:ufcn}
            \end{equation}
            we obtain that
            \begin{align}
                \Delta\lw{l}(t)&=\lwinfty{l} h(t)+\bdelta_{\Delta\lw{l}}(t)\,h(t)\label{eq:dufcn}
                \end{align}
        where $h(t)=\norm{\Delta\np(t)}$, $g(t)=\sum_{u<t} \eta_u h(u)\to\infty$, and $\bdelta_{\Delta\lw{l}}(t),\bdelta_{\lw{l}}[t]\to 0$. 
            Consider the following arguments on $\Delta\lw{1}(t) \lw{2:L}(t)$,
            \begin{equation}
            \begin{split}
            &\Delta\lw{1}(t) \lw{2:L}(t)=\vect{z}(t)\norm{\lw{2:L}(t)}^2\\
            \overset{(a)}\implies &\left(\lwinfty{1} h(t)+\bdelta_{\Delta\lw{1}}(t)\,h(t)\right)\left(\lwinfty{2:L}\, g(t)^{L-1}+\bdelta_{\lw{2:L}}(t)\,g(t)^{L-1}\right)=\vect{z}(t)\norm{\lw{2:L}(t)}^2\\
            \overset{(b)}\implies &\frac{\vect{z}(t)\norm{\lw{2:L}(t)}^2}{h(t)\,g(t)^{L-1}}=\lwinfty{1:L}+\bdelta(t)=\bar{\eqw}_\infty+\tilde{\bdelta}(t),
            \end{split}
            \label{eq:fcn-slackness}
            \end{equation}
            where in $(a)$, we used eqs. \ref{eq:dufcn}-\ref{eq:ufcn}, and in $(b)$ we have $\tilde\bdelta(t)=\bdelta_{\Delta\lw{1}}(t)\bdelta_{\lw{2:L}}(t)+\bdelta_{\Delta\lw{1}}(t)\lwinfty{2:L}+\lwinfty{1}\bdelta_{\lw{2:L}}(t)\to0$. 
             Denote $s(t):=\frac{\norm{\vect{z}(t)}\norm{\lw{2:L}(t)}^2}{h(t)\,g(t)^{L-1}}$ From eq.~\eqref{eq:fcn-slackness} using triangle inequality we have that 
            \begin{equation}
            \norm{\bar{\eqw}_\infty}-\norm{\tilde\bdelta(t)}\le s(t)\le \norm{\bar{\eqw}_\infty}+\norm{\tilde\bdelta(t)}.
            \end{equation}
            Since $\tilde\bdelta(t)\to0$,  by squeeze theorem, we have that $\lim_{t\to\infty}s(t)=\norm{\bar{\eqw}_\infty}$. Using this in eq.~\eqref{eq:fcn-slackness}, we get the following:
            \begin{equation}
            \frac{\vect{z}(t)}{\norm{\vect{z}(t)}}=\frac{\bar{\eqw}_\infty}{s(t)}+\frac{\bdelta(t)}{s(t)}.
            \label{eq:fcn-slackness1}
            \end{equation}
		\mnote{end of new proof}}
		
		From eq.~\eqref{eq:fcn-slackness1} we have that
		\begin{equation} \label{eq: equation from theorem 1 in convNet paper}
    	    \lim_{t\to\infty} \frac{\vect{z}(t)}{\norm{\vect{z}(t)}}=\lim_{t\to\infty} \frac{\bar{\eqw}_\infty h(t)g^{L-1}(t)}{\norm{\vect{z}(t)}\norm{\lw{2:L}(t)}^2}.
		\end{equation}
	     In this equation, $\lw{2:L}^{(t)}\triangleq\lw{2}(t)\lw{3}(t)...\lw{L}(t)$ and $h(t)$ and $g(t)$ were used to model each layer weights:
	     \begin{align*}
	        \Delta \lw{l}(t)&=\lwinfty{l}h(t)+\delta_{\Delta\lw{l}}(t)h(t),\\
	         \lw{l}(t) &= \lwinfty{l}g(t)+\delta_{\lw{l}}(t)g(t),
	     \end{align*}
	     where $g(t)=\sum_{u<t} \eta_u h(u)$, $\Delta \lw{l}(t)\triangleq\eta_t^{-1}\left(\lw{l}(t+1)-\lw{l}(t)\right)$, $\winfty=\lim_{t\to\infty} \frac{\np(t)}{\norm{\np(t)}}$ and $\winfty=\left[\lwinfty{l}\right]$. From definition we have $h(t)=\eta_t^{-1}\left(g(t)-g(t-1)\right)$. Additionally, we can define $\bar{g}(u)=\bar{g}\left(\frac{t}{c}\right)=g(t),\ u=\frac{t}{c}$ (time rescaling) to show that $\lim\limits_{t\to\infty}\dfrac{g(t)-g(t-1)}{g'(t)}=1$ since:
	     \begin{align*}
            &\lim_{c\to\infty} \frac{\bar{g}(u+\frac{1}{c})-\bar{g}(u)}{\frac{1}{c}}=\bar{g}'(u)\implies  \lim_{c\to\infty} \frac{\bar{g}(u+\frac{1}{c})-\bar{g}(u)}{\frac{1}{c}\bar{g}'(u)}=1.\\
            &\text{Also, }\frac{d}{dt}g(t)=\frac{1}{c}\bar{g}'(u)\implies 1 = \lim_{c\to\infty} \frac{\bar{g}(u+\frac{1}{c})-\bar{g}(u)}{\frac{1}{c}\bar{g}'(u)}=
            \lim_{t\to\infty} \frac{g(t+1)-g(t)}{g'(t)}
	    \end{align*}
	     where in the last transition we used $\bar{g}(u)$ definition and $t=cu$. Combining this result with eq.~\eqref{eq: equation from theorem 1 in convNet paper} (using the fact that both limits exist and finite) we have:
	     \begin{equation} \label{eq: equation from theorem 1 in convNet paper with g'(t)}
    	    \lim_{t\to\infty} \frac{\vect{z}(t)}{\norm{\vect{z}(t)}}=\lim_{t\to\infty} \frac{\bar{\eqw}_\infty \eta_t^{-1}g'(t)g^{L-1}(t)}{\norm{\vect{z}(t)}\norm{\lw{2:L}(t)}^2}.
		\end{equation}
	     In addition, since we defined $\eqw(t)=\tilde{g}(t)\what + \rhoVec$ we have that $\tilde{g}(t)=\gamma \left(g(t)\right)^L$ where $\bar{\eqw}_\infty = \lwinfty{1}...\lwinfty{L}$ and $\gamma\triangleq \min_n \bar{\eqw}_\infty^\top \xn$. Using $\tilde{g}(t)$ eq.~\eqref{eq: equation from theorem 1 in convNet paper with g'(t)} can be written as
	     \begin{equation} \label{eq: equation from theorem 1 in convNet paper with tilde_g}
    	    \lim_{t\to\infty} \frac{\vect{z}(t)}{\norm{\vect{z}(t)}}=\lim_{t\to\infty} \frac{\displaystyle \frac{\bar{\eqw}_\infty}{\gamma L \eta_t}\cdot \frac{d}{dt}\tilde{g}(t)}{\norm{\vect{z}(t)}\left(\gamma^{-1}\tilde{g}(t)\right)^{\frac{2}{L}\left(L-1\right)}}=\lim_{t\to\infty} \frac{\displaystyle \gamma^{1-\nicefrac{2}{L}}}{L\eta_t}\cdot\frac{\displaystyle \frac{d}{dt}\tilde{g}(t)\sumnsv\alpha_n\xn}{\norm{\vect{z}(t)}\left(\tilde{g}(t)\right)^{\frac{2}{L}\left(L-1\right)}},
		\end{equation}
		where in the last step we used $\frac{\bar{\eqw}_\infty}{\gamma}=\what=\sumnsv \alpha_n\xn$.\\
	    \textbf{Step 2.c:} \\
	    Combining the result from the previous steps (using the fact that both limits exist and finite, $\xn$ are linearly independent and Lemma \ref{eq: lemma about vector limit}), we get, $\forall n\in\set$:
	    \[
	        \lim_{t\to\infty} \frac{\eta_tL\e\left( -f\left(\tilde{g}\left(t\right)\right)\right)\left(\tilde{g}(t)\right)^{\frac{2}{L}\left(L-1\right)}}{ \gamma^{1-\nicefrac{2}{L}}\frac{d}{dt}\tilde{g}(t)}\cdot\frac{ \e\left(-f'\left(\tilde{g}(t)\right)\rhoVec^\top\xn \right)}{\alpha_n}= 1.
	    \]
	    We denote $\displaystyle \fsymSec_n(t) \triangleq \frac{1}{\alpha_n} \e\left(-f'\left(\tilde{g}(t)\right)\rhoVec^\top\xn \right)>0$. $\fsymSec_n(t)=\Theta(1)$ since $f'\left(\tilde{g}(t)\right)\rhoVec^\top\xn=O(1)$.
	    Substituting this into the limit we get $\forall n\in\set$
	    \[
	        \lim_{t\to\infty} \frac{\e\left( -f\left(\tilde{g}\left(t\right)\right)\right)\left(\tilde{g}(t)\right)^{\frac{2}{L}\left(L-1\right)}}{ \frac{d}{dt}\tilde{g}(t)}\cdot\frac{L\eta_t\fsymSec_n(t)}{\gamma^{1-\nicefrac{2}{L}}}= 1,
	    \]
	    which implies $\forall n_1,n_2\in\set: \frac{\fsymSec_{n_1}(t)}{\fsymSec_{n_2}(t)}\to1$. Therefore, we can write
	    \[
	         \forall n\in\set:\ 
	        \e\left(-f'\left(\tilde{g}(t)\right)\rhoVec^\top\xn \right) = \alpha_n\left(\fsymSec(t)+\fsymSec(t)\tilde{\delta}_n(t)\right),
	    \]
	    where $\fsymSec(t)\triangleq\fsymSec_1(t)$ and $\forall n\in\set:\ \tilde{\delta}_n(t)\to0$.
	    \begin{claim}
	        For a given depth $L$, $\fsymSec(t)$ asymptotic behaviour is independent of $f$, i.e., $\displaystyle \lim_{t\to\infty}\frac{\fsymSec^{(1)}(t)}{\fsymSec^{(2)}(t)}=1$ where $\fsymSec^{(1)}(t)$ and $\fsymSec^{(2)}(t)$ correspond to two different function $f_1,f_2$.
	    \end{claim}
	    \begin{proof}
	        Let $f_1(t),f_2(t)$ be different functions with $\fsymSec^{(1)}(t),\fsymSec^{(2)}(t)>0$ so that $ \forall n\in\set:$
	        \begin{align}
	            \e\left(-f_1'\left(\tilde{g}_1(t)\right)\bm{\rho}_1(t)^\top\xn \right) &= \alpha_n\left(\fsymSec^{(1)}(t)+\fsymSec^{(1)}(t)\tilde{\delta}_n^{(1)}(t)\right)\\ \e\left(-f_2'\left(\tilde{g}_2(t)\right)\bm{\rho}_2(t)^\top\xn \right) &=\alpha_n\left(\fsymSec^{(2)}(t)+\fsymSec^{(2)}(t)\tilde{\delta}_n^{(2)}(t)\right).
	        \end{align}
	        We need to show that $\displaystyle \lim_{t\to\infty}\frac{\fsymSec^{(1)}(t)}{\fsymSec^{(2)}(t)}=1$. Since we know $\fsymSec(t)>0$, $\exists t_2$ such that $\forall t>t_2:\ \fsymSec^{(2)}(t)+\fsymSec^{(2)}(t)\tilde{\delta}_n^{(2)}(t)>0$ and $\fsymSec^{(1)}(t)+\fsymSec^{(1)}(t)\tilde{\delta}_n^{(1)}(t)>0$.
	        We define $\displaystyle \forall n\in\set: \tilde{\fsymSec}_n(t)=\log\left( \frac{\fsymSec^{(2)}(t)+\fsymSec^{(2)}(t)\tilde{\delta}_n^{(2)}(t)}{\fsymSec^{(1)}(t)+\fsymSec^{(1)}(t)\tilde{\delta}_n^{(1)}(t)}\right)$ (this is well defined $\forall t>t_2$). From the last two equations we have:
	        \[
	            \forall n\in\set:\ \left(f_1'\left(\tilde{g}_1(t)\right)\bm{\rho}_1(t)-f_2'\left(\tilde{g}_2(t)\right)\bm{\rho}_2(t)\right)^\top\xn = \tilde{\fsymSec}_n(t).
	        \]
	        Additionally, since $\bm{\rho}_1(t)^\top\what=\bm{\rho}_2(t)^\top\what=0$ (from definition) we get $\forall t>t_2$:
	        \begin{align*}
	            &0 = \left(f_1'\left(\tilde{g}_1(t)\right)\bm{\rho}_1(t)-f_2'\left(\tilde{g}_2(t)\right)\bm{\rho}_2(t)\right)^\top\what = \sumnsv \alpha_n \left(f_1'\left(\tilde{g}_1(t)\right)\bm{\rho}_1(t)-f_2'\left(\tilde{g}_2(t)\right)\bm{\rho}_2(t)\right)^\top\xn \\
	            &= \sumnsv \alpha_n \tilde{\fsymSec}_n(t)=\sumnsv \alpha_n \log\left( \frac{\fsymSec^{(2)}(t)+\fsymSec^{(2)}(t)\tilde{\delta}_n^{(2)}(t)}{\fsymSec^{(1)}(t)+\fsymSec^{(1)}(t)\tilde{\delta}_n^{(1)}(t)}\right)=\sumnsv \alpha_n \log\left(\frac{\fsymSec^{(2)}(t)}{\fsymSec^{(2)}(t)}\cdot \frac{1+\tilde{\delta}_n^{(2)}(t)}{1+\tilde{\delta}_n^{(1)}(t)}\right)\\
	            &=\log\left(\frac{\fsymSec^{(2)}(t)}{\fsymSec^{(2)}(t)}\right) \sumnsv \alpha_n+\sumnsv \alpha_n \log\left( \frac{1+\tilde{\delta}_n^{(2)}(t)}{1+\tilde{\delta}_n^{(1)}(t)}\right).
	        \end{align*}
	        Additionally, $\lim\limits_{t\to\infty}\sumnsv \alpha_n \log\left( \frac{1+\tilde{\delta}_n^{(2)}(t)}{1+\tilde{\delta}_n^{(1)}(t)}\right)=0$ since $\forall n\in\set:\ \tilde{\delta}_n^{(1)}(t)\to0,\tilde{\delta}_n^{(2)}(t)\to0$. Combining both results we get $\log\left(\frac{\fsymSec^{(2)}(t)}{\fsymSec^{(2)}(t)}\right) \sumnsv \alpha_n\to0$. Since for almost every dataset $\forall n\in\set: \alpha_n>0$, this implies $\log\left(\frac{\fsymSec^{(2)}(t)}{\fsymSec^{(2)}(t)}\right)\to0
	        \Rightarrow \displaystyle \lim_{t\to\infty}\frac{\fsymSec^{(1)}(t)}{\fsymSec^{(2)}(t)}=1$.
	    \end{proof}
	    \noindent Summarizing, we have  that $\rhoVec$ and $\tilde{g}(t)$ are the asymptotic solutions of:
	    \begin{align}
	         \frac{\e\left( -f\left(\tilde{g}\left(t\right)\right)\right)\left(\tilde{g}(t)\right)^{\frac{2}{L}\left(L-1\right)}}{ \frac{d}{dt}\tilde{g}(t)}\cdot\frac{L\eta_t\fsymSec(t)}{\gamma^{1-\nicefrac{2}{L}}}=1, \text{ and}\\
	         \forall n\in\set:\ 
	        \e\left(-f'\left(\tilde{g}(t)\right)\rhoVec^\top\xn \right) = \alpha_n\left(\fsymSec(t)+\fsymSec(t)\tilde{\delta}_n(t)\right). 
	    \end{align}
	    where $\fsymSec(t)=\Theta(1)$ is independent of $f$ and $\tilde{\delta}_n(t)\to0$. Thus, for almost every dataset\footnote{Note that this excludes the degenerate case in which $\forall n\in\set: \alpha_n=1$. We can show this similarly to Lemma 12 in \cite{soudry2018journal}.}, $\exists \fsymThird(t)=\Theta(1)$, that is only dependent on the data set and $L$ (specifically, it is not dependent on the loss function) such that $\displaystyle \norm{\op\rhoVec}= {\fsymThird(t)}{\left(f'\left(\tilde{g}(t)\right)\right)^{-1}}+o\left(\left({f'\left(\tilde{g}(t)\right)}\right)^{-1}\right)$ where $\op\in\R^{d\times d}$ is the  projection matrix to the subspace spanned by the support vectors.
	    This completes our proof with $\phi_1(t)=\fsymSec(t)$ and $\phi_2(t)=\tau(t)$.
	\end{proof}
	
	\section{Proof that non-support vectors direction converge for $L=1$} \label{section: Proof that non-support vectors direction converge for L=1}
	\lemmaOrthogonalProj*
	\begin{proof}
	From Theorem \ref{theorem: general convergence rates} we have that
	\[
	    \rvec(t)=\eqw(t)-\what g(t)
	\]
	where $\rvec(t)=o(g(t))$.
	We aim to show that $\norm{\bar{\op}\rvec(t)}$ is bounded. We have
	\begin{equation} \label{eq: Q_bar r 'ODE'}
	    \norm{\bar{\op} \rvec(t+1)}^2=\norm{\bar{\op}\rvec(t+1)-\bar{\op}\rvec(t)}^2+2\left(\bar{\op}\rvec(t+1)-\bar{\op}\rvec(t) \right)^\top\bar{\op}\rvec(t)+\norm{\bar{\op} \rvec(t)}^2.
	\end{equation}
	\begin{enumerate}
	    \item 
	    \begin{equation} \label{eq: Q_bar r(t) diff equation}
	        \norm{\bar{\op}\rvec(t+1)-\bar{\op}\rvec(t)}^2=\norm{-\eta \bar{\op} \nabla \mathcal{L}\left(\wvec(t) \right)}^2\le \eta^2 \norm{\nabla \mathcal{L}\left(\wvec(t) \right)}^2
	    \end{equation}
	    Additionally, From Lemma \ref{Lemma: L converge} we have
	    \begin{equation} \label{eq: norm (L) is integrable}
	        \sum_{u=1}^\infty \norm{\nabla \mathcal{L}\left(\wvec(t) \right)}^2<\infty.
	    \end{equation}
	    \item 
	    \begin{align} \label{eq: Q_bar(r(t+1)-r(t))r(t) bound}
	        &\left(\bar{\op}\rvec(t+1)-\bar{\op}\rvec(t) \right)^\top\bar{\op}\rvec(t)\nonumber\\
	        &\overset{(1)}{=}\eta \sumnnsv \e\left(-f\left(\eqw^\top\xn\right)\right)\xn^\top\bar{\op}\rvec(t)\nonumber\\
	        &= \eta \sum\limits_{n\notin\set \atop \xnT\bar{\op}\rvec(t)>0} \e\left(-f\left(g(t)\what^\top\xn+\rvec(t)^\top\xn\right)\right)\xn^\top\bar{\op}\rvec(t)\nonumber\\
	        & \overset{(2)}{\le} \eta \sum\limits_{n\notin\set \atop \xnT\bar{\op}\rvec(t)>0} \e\left(-f\left(g(t)\theta+\rvec(t)^\top\xn\right)\right)\xn^\top\bar{\op}\rvec(t)\nonumber\\
	        &\overset{(3)}{=} \eta \sum\limits_{n\notin\set \atop \xnT\bar{\op}\rvec(t)>0} \e\left(-f\left(g(t)\theta\right)-f'\left(\theta g(t)\right)\rvec(t)^\top\xn\left(1+\delta_n(t)\right)\right)\xn^\top\bar{\op}\rvec(t)\nonumber\\
	        &= \eta \sum\limits_{n\notin\set \atop \xnT\bar{\op}\rvec(t)>0} \e\left(-f\left(g(t)\theta\right)-\left(1+\delta_n(t)\right)f'\left(\theta g(t)\right)\op\rvec(t)^\top\xn-\left(1+\delta_n(t)\right)f'\left(\theta g(t)\right)\bar{\op}\rvec(t)^\top\xn\right)\xn^\top\bar{\op}\rvec(t)\nonumber\\
	        &\overset{(4)}{\le} \eta \sum\limits_{n\notin\set \atop \xnT\bar{\op}\rvec(t)>0} \e\left(-f\left(g(t)\theta\right)+\left(1+\delta_n(t)\right)f'\left(\theta g(t)\right)\norm{\op\rvec(t)}\norm{\xn}\right)\norm{\xn^\top\bar{\op}}\norm{\rvec(t)}\nonumber\\
	        &\overset{(5)}{\le} \eta \sum\limits_{n\notin\set \atop \xnT\bar{\op}\rvec(t)>0} \e\left(-f\left(g(t)\theta\right)+\left(1+\delta_n(t)\right)f'\left(\theta g(t)\right)\norm{\op\rvec(t)}\norm{\xn}\right)g(t)\nonumber\\
	        & \overset{(6)}{\le} \eta \sum\limits_{n\notin\set \atop \xnT\bar{\op}\rvec(t)>0} C\e\left(-f\left(\theta g(t) \right) \right) g(t) = \eta \sum\limits_{n\notin\set \atop \xnT\bar{\op}\rvec(t)>0} C\e\left(-f\left(\theta g(t) \right)+\log\left(g(t)\right) \right)\nonumber\\
	        & \le \eta N C\e\left(-f\left(\theta g(t) \right)+\log\left(g(t)\right) \right), \forall t>t_2,
	    \end{align}
	
	where in 1 we used eq.~\eqref{GD} (gradient descent dynamic) and the fact that $\forall n\in\set: \bar{\op}\xn=0$, in 2 we used the fact that $f(t)$ is monotonically decreasing and $\theta=\min_{n\notin\set}\what^\top\xn>1$. In 3 we define $\delta_n\to0$ and used Claim \ref{claim: f(g+h)=f(g)+f'(g)h} and the fact that $\abs{\rvec(t)^\top\xn}=o(g(t))$. In 4 we used Cauchy-Schwarz and the fact that $\exists t_1$ so that $\forall t>t_1$: $\left(1+\delta_n(t)\right)f'\left(\theta g(t)\right)\bar{\op}\rvec(t)^\top\xn\ge0$. In 5 we used $\norm{\rvec(t)}=o\left(g\left(t\right)\right)$. In 6, we used the fact that $\left(1+\delta_n(t)\right)f'\left(\theta g(t)\right)\norm{\op\rvec(t)}\norm{\xn}=\Theta(1)$ since $\norm{\op\rvec(t)}=\Theta\left(\frac{1}{f'\left(g\left(t\right)\right)}\right)$ and $\frac{f'(\theta g(t)}{f'(g(t))}=\Theta(1)$ ($f'(\theta g(t))=f'(g(t))+(\theta-1)f''(g(t))g(t)+o\left(f''(g(t))g(t)\right)=O\left(f'(g(t))\right)$ from assumption \ref{assumption: f can be expended}). This implies that
	$\exists C_1,t_2$ so that $\forall t>t_2>t_1$: $\left(1+\delta_n(t)\right)f'\left(\theta g(t)\right)\norm{\op\rvec(t)}\norm{\xn}\le C_1$ and we define $C = \exp(C_1)$.
	\item
	\begin{align*}
	    \norm{\bar{\op} \rvec(t)}^2 - \norm{\bar{\op} \rvec(t_1)}^2 &= \sum_{u=t_1}^{t-1} \left[ \norm{\bar{\op} \rvec(u+1)}^2 - \norm{\bar{\op} \rvec(u)}^2
	 \right]\\
	 &\overset{(1)}{=} \sum_{u=t_1}^{t-1} \left[ \norm{\bar{\op}\rvec(u+1)-\bar{\op}\rvec(u)}^2+2\left(\bar{\op}\rvec(u+1)-\bar{\op}\rvec(u) \right)^\top\bar{\op}\rvec(t)
	 \right]\\
	 &\le \eta^2 \sum_{u=t_1}^{t-1} \norm{\nabla \mathcal{L}\left(\wvec(u) \right)}^2
	 + 2 \eta N C\sum_{u=t_1}^{t-1} \e\left(-f\left(\theta g(u) \right)+\log\left(g(u)\right) \right)
	 \overset{(3)}{<}\infty
	\end{align*}
	 where in (1) we used eq.~\eqref{eq: Q_bar r 'ODE'}, in (2) we used eqs. \ref{eq: Q_bar r(t) diff equation} and \ref{eq: Q_bar(r(t+1)-r(t))r(t) bound} and in (3) we used 
	 the last transition we used eq.~\eqref{eq: norm (L) is integrable} and Claim \ref{claim: L=1 Integrability claim}.
	 \end{enumerate}
	 \end{proof}
	\subsection{Integrability proof}
    \begin{claim} \label{claim: L=1 Integrability claim}
        If $\exists \epsilon>0$ so that $f'\left(u\right)=\Omega\left(\frac{\log^{1+\epsilon}\left(u\right)}{u}\right)$ and $g(t)$ satisfies the following equation
        \begin{equation}
            \frac{dg\left(t\right)}{dt}=\exp\left(-f\left(g\left(t\right)\right)\right)\,.\label{eq: dg/dt}
        \end{equation}
        then 
        \[
            \int_{0}^{\infty}\exp\left(-f\left(g\left(t\right)c\right)+\log(g(t)\right)dt<\infty,
        \]
        where $c>1$.
    \end{claim}
    \begin{proof}
    We have
    \begin{align*}
    &\int_{0}^{\infty} \left[\exp\left(-f\left(g\left(t\right)c\right)+\log(g(t)\right)\right]dt \\ 
     & \overset{\left(1\right)}{=}\int_{g(0)}^{\infty}\exp\left(-\left(f\left(gc\right)-f\left(g\right)\right)+\log\left(g\right)\right)dg\\
     & \overset{\left(2\right)}{\leq}
      C
     +\int_{t_{0}}^{\infty}\exp\left(-\frac{C_{1}}{1+\epsilon_{1}}\left(\log^{2+\epsilon}\left(gc\right)-\log^{2+\epsilon}\left(g\right)\right)+\log(g)\right)dg\\
     & \overset{\left(3\right)}{\leq}C+\int_{t_{0}}^{\infty}\exp\left(-\frac{C_{1}}{1+\epsilon_{1}}\left(2+\epsilon\right)\log\left(c\right)\log^{1+\epsilon}\left(g\right)+\log(g)\right)dg\\
     & =C+\int_{t_{0}}^{\infty}\exp\left(-\log(g)\left(\frac{C_{1}}{1+\epsilon_{1}}\left(2+\epsilon\right)\log^{\epsilon}(t)-1\right)\right)dg\\
     & \overset{\left(4\right)}{\leq}C+\int_{t_{0}}^{\infty}\exp\left(-C'\log(g)\right)dg<\infty
    \end{align*}
    where in $\left(1\right)$ we used variable change, $g(t)\to\infty$, eq.~\eqref{eq: dg/dt} and $\frac{d}{dx}f^{-1}\left(x\right)=\left[f^{\prime}\left(f^{-1}\left(x\right)\right)\right]^{-1}$,
    in $\left(2\right)$ we used $g(t)\to\infty$ and $f'\left(u\right)=\Omega\left(\frac{\log^{1+\epsilon}\left(u\right)}{u}\right)$
    for some $\epsilon>0$ and therefore $\exists t_1$ so that $\forall t>t_1$:
    \begin{align*}
     & f\left(gc\right)-f\left(g\right)\\
     & =\int_{g(t)}^{cg(t)}f'(u)du\\
     & \ge\int_{g(t)}^{cg(t)}C_{1}\frac{\log^{1+\epsilon}(u)}{u}du\\
     & =\frac{C_{1}}{1+\epsilon}\left(\log^{2+\epsilon}(cg(t))-\log^{2+\epsilon}(g(t))\right).
    \end{align*}
    Additionally, we defined $C=\int_{0}^{t_{0}}\exp\left(-\left(f\left(gc\right)-f\left(g\right)\right)\right)dg$.
    In $\left(3\right)$ we used the fact that $\forall a\geq1$:
    \[
    \left(\log\left(gc\right)\right)^{a}=\left(\log\left(c\right)+\log\left(g\right)\right)^{a}=\log^{a}\left(g\right)\left(1+\frac{\log\left(c\right)}{\log\left(g\right)}\right)^{a}\ge\log^{a}\left(g\right)+a\log\left(c\right)\log^{a-1}\left(g\right)
    \]
    from Bernoulli's inequality (since $\frac{\log\left(c\right)}{\log\left(g\right)}\ge-1$
    for sufficently large t). In (4) we used the fact that $\exists C'>1$
    since for sufficently large t $\frac{C_{1}}{1+\epsilon_{1}}\left(2+\epsilon\right)\log^{\epsilon}(t)-1>1$.
    \end{proof}
	
\remove{    \section{Calculation of convergence rates\label{sec: CALCULATION OF CONVERGENCE RATES}} 
    
        \subsection{Generic tails} \label{sec: generic tail rates calculation}
        From Theorem \ref{theorem: general convergence rates}, we can write $\wvec(t) = \what g(t) + \rhoVec$ where $\rhoVec = o(g(t))$.	
	We can use this to calculate the normalized weight vector:
	\begin{flalign} \label{Calculation of the normalized weight vector}
	&\frac{\wvec(t)}{\Vert \wvec(t)\Vert} =\frac{g(t)\what +\rhoVec}{\sqrt{g(t)^2\what^\top\what + \rhoVec^\top\rhoVec+2g(t)\what^\top\rhoVec}} =\frac{\what + g^{-1}(t)\rhoVec}{\whatNorm\sqrt{1 +2\frac{\what^\top\rhoVec}{g(t)\whatNorm^2}+ \frac{\Vert\rhoVec\Vert^2}{g^2(t)\whatNorm^2}}}\nonumber\\
	& \overset{(1)}= \frac{\what +g^{-1}(t)\rhoVec}{\whatNorm}\left[1-\frac{\what^\top\rhoVec}{g(t)\whatNorm^2}+\left[\frac{3}{4}\left(2\frac{\what^\top\rhoVec}{\whatNorm^2}\right)^2-\frac{\Vert\rhoVec\Vert^2}{2\whatNorm^2}\right]\frac{1}{g^2(t)}+O\left(\left(\frac{\what^\top\rhoVec}{g(t)}\right)^3\right)\right]\nonumber\\
	& = \frac{\what}{\whatNorm}+\left(\frac{\rhoVec}{\whatNorm}-\frac{\what}{\whatNorm}\frac{\what^\top\rhoVec}{\whatNorm^2}\right)\frac{1}{g(t)}
	+O\left(\left(\frac{\what^\top\rhoVec}{g(t)}\right)^2\right)
	\frac{\hat{w}}{\norm{\hat{w}}}\nonumber\\
	& = \frac{\what}{\whatNorm}+\left(I-\frac{\what\what^\top}{\whatNorm^2}\right)\frac{1}{\whatNorm}\frac{\rhoVec}{g(t)}+O\left(\left(\frac{\what^\top\rhoVec}{g(t)}\right)^2\right)
	\end{flalign}
	where in $(1)$ we used $\frac{1}{\sqrt{1+x}}=1-\frac{1}{2}x+\frac{3}{4}x^2+O(x^3)$.\\
	We use eq.~\eqref{Calculation of the normalized weight vector} to calculate the angle:
	\begin{align} \label{eq: Calculation of the angle}
	&\frac{\wvec(t)^\top\what}{\wNorm\whatNorm}&\nonumber\\
	&=\frac{\what^\top}{\whatNorm^2}\left(\what +g^{-1}(t)\rhoVec\right)\left[1-\frac{\what^\top\rhoVec}{g(t)\whatNorm^2}+\left[\frac{3}{4}\left(2\frac{\what^\top\rhoVec}{\whatNorm^2}\right)^2-\frac{\Vert\rhoVec\Vert^2}{2\whatNorm^2}\right]\frac{1}{g^2(t)}+O\left(\left(\frac{\what^\top\rhoVec}{g(t)}\right)^3\right)\right]\nonumber\\
	& = 1 + \frac{2}{\whatNorm^2}\left[\left(\frac{\rhoVec^\top\what}{\whatNorm \Vert\rhoVec\Vert}\right)^2 - \frac{1}{4}\right]\frac{\Vert\rhoVec\Vert^2}{g^2(t)}+O\left(\left(\frac{\what^\top\rhoVec}{g(t)}\right)^3\right)
	\end{align}
	Calculation of the margin:
	\begin{flalign} \label{eq: Calculation of the margin}
		&\min_{n} \frac{\xnT \wvec(t)}{\Vert \wvec(t)\Vert}\overset{(1)}{=}\min_{n\in\set} \frac{\xnT \wvec(t)}{\Vert \wvec(t)\Vert}\nonumber\\
		& =\min_{n\in\set} \xnT\left[\frac{\what}{\whatNorm}+\left(\frac{\rhoVec}{\whatNorm}-\frac{\what}{\whatNorm}\frac{\what^\top\rhoVec}{\whatNorm^2}\right)\frac{1}{g(t)}+O\left(\left(\frac{\what^\top\rhoVec}{g(t)}\right)^2\right)\right]\nonumber\\
		& =\frac{1}{\whatNorm}+\frac{1}{\whatNorm}\left(\min_{n\in\set}\xnT\rhoVec-\frac{\what^\top\rhoVec}{\whatNorm^2}\right)\frac{1}{g(t)}
		+O\left(\left(\frac{\what^\top\rhoVec}{g(t)}\right)^2\right),
	\end{flalign}
	where in (1) we used the fact that $\frac{\wvec(t)}{\Vert \wvec(t)\Vert}$ converge to the maximum-margin separator and thus the minimal value is obtained on the support vectors.\\
    From Theorem \ref{theorem: general convergence rates}, we can also characterize $\rhoVec$: 
	\begin{equation} \label{eq: rho def generic case appendix}
	        \norm{\op_1\rhoVec}=\begin{cases} \fsym_2(t)\left(f'\left(\tilde{g}(t)\right)\right)^{-1}+o\left(\left(f'\left(\tilde{g}(t)\right)\right)^{-1}\right), & \text{if } \left(f'\left(\tilde{g}(t)\right)\right)^{-1}=\Omega(1)\\
	        O\left(1\right), & \text{otherwise}
	        \end{cases}
	\end{equation}	
	where $\fsym_2(t)=\Theta(1)$.
      Substituting eq.~\eqref{eq: rho def generic case appendix} into eqs. \ref{Calculation of the normalized weight vector}, \ref{eq: Calculation of the angle}, \ref{eq: Calculation of the margin} we get, for some constant vector $\vect{a}$ independent of $f$:
      \begin{equation} \label{eq: generic margin results}
      \frac{1}{\whatNorm} - \min_{n} \frac{\xnT \wvec(t)}{\Vert \wvec(t)\Vert}  = \begin{cases}
      \dfrac{1}{\whatNorm}\left(\dfrac{\what^\top\genericConstVec}{\whatNorm^2} - \min_{n\in\set}\xnT\genericConstVec\right)\dfrac{\fsym_2(t)}{g(t)f'(g(t))}, & \text{if ${f'(g(t))}=\Omega(1)$}\\
      O(g^{-1}(t))\vphantom{\dfrac{0}{0}}, & \text{otherwise}
      \end{cases}
      \end{equation}

    \remove{
	From Conjecture \ref{conjecture: generic tail}, we can write $\wvec(t) = \what g(t) + \rhoVec$ where $\rhoVec = o(g(t))$.	
	We can use this to calculate the normalized weight vector:
	\begin{flalign} \label{Calculation of the normalized weight vector}
	&\frac{\wvec(t)}{\Vert \wvec(t)\Vert}&\nonumber\\
	& =\frac{g(t)\what +\rhoVec}{\sqrt{g(t)^2\what^\top\what + \rhoVec^\top\rhoVec+2g(t)\what^\top\rhoVec}}\nonumber\\
	& =\frac{\what + g^{-1}(t)\rhoVec}{\whatNorm\sqrt{1 +2\frac{\what^\top\rhoVec}{g(t)\whatNorm^2}+ \frac{\Vert\rhoVec\Vert^2}{g^2(t)\whatNorm^2}}}\nonumber\\
	& = \frac{1}{\whatNorm}\left(\what +g^{-1}(t)\rhoVec\right)\left[1-\frac{\what^\top\rhoVec}{g(t)\whatNorm^2}+\left[\frac{3}{4}\left(2\frac{\what^\top\rhoVec}{\whatNorm^2}\right)^2-\frac{\Vert\rhoVec\Vert^2}{2\whatNorm^2}\right]\frac{1}{g^2(t)}+O\left(\left(\frac{\what^\top\rhoVec}{g(t)}\right)^3\right)\right]\nonumber\\
	& = \frac{\what}{\whatNorm}+\left(\frac{\rhoVec}{\whatNorm}-\frac{\what}{\whatNorm}\frac{\what^\top\rhoVec}{\whatNorm^2}\right)\frac{1}{g(t)}+O\left(\left(\frac{\what^\top\rhoVec}{g(t)}\right)^2\right)\nonumber\\
	& = \frac{\what}{\whatNorm}+\left(I-\frac{\what\what^\top}{\whatNorm^2}\right)\frac{1}{\whatNorm}\frac{\rhoVec}{g(t)}+O\left(\left(\frac{\what^\top\rhoVec}{g(t)}\right)^2\right)
	\end{flalign}
	we used $\frac{1}{\sqrt{1+x}}=1-\frac{1}{2}x+\frac{3}{4}x^2+O(x^3)$.\\
	We use eq.~\eqref{Calculation of the normalized weight vector} to calculate the angle:
	\begin{align} \label{eq: Calculation of the angle}
	&\frac{\wvec(t)^\top\what}{\wNorm\whatNorm}&\nonumber\\
	&=\frac{\what^\top}{\whatNorm^2}\left(\what +g^{-1}(t)\rhoVec\right)\left[1-\frac{\what^\top\rhoVec}{g(t)\whatNorm^2}+\left[\frac{3}{4}\left(2\frac{\what^\top\rhoVec}{\whatNorm^2}\right)^2-\frac{\Vert\rhoVec\Vert^2}{2\whatNorm^2}\right]\frac{1}{g^2(t)}+O\left(\left(\frac{\what^\top\rhoVec}{g(t)}\right)^3\right)\right]\nonumber\\
	& = 1 + \frac{2}{\whatNorm^2}\left[\left(\frac{\rhoVec^\top\what}{\whatNorm \Vert\rhoVec\Vert}\right)^2 - \frac{1}{4}\right]\frac{\Vert\rhoVec\Vert^2}{g^2(t)}+O\left(\left(\frac{\what^\top\rhoVec}{g(t)}\right)^3\right)
	\end{align}
	Calculation of the margin:
	\begin{flalign} \label{eq: Calculation of the margin}
		&\min_{n} \frac{\xnT \wvec(t)}{\Vert \wvec(t)\Vert}\nonumber\\
		& =\min_{n} \xnT\left[\frac{\what}{\whatNorm}+\left(\frac{\rhoVec}{\whatNorm}-\frac{\what}{\whatNorm}\frac{\what^\top\rhoVec}{\whatNorm^2}\right)\frac{1}{g(t)}+O\left(\left(\frac{\what^\top\rhoVec}{g(t)}\right)^2\right)\right]\nonumber\\
		& =\frac{1}{\whatNorm}+\frac{1}{\whatNorm}\left(\min_{n}\xnT\rhoVec-\frac{\what^\top\rhoVec}{\whatNorm^2}\right)\frac{1}{g(t)}
		+O\left(\left(\frac{\what^\top\rhoVec}{g(t)}\right)^2\right)
	\end{flalign}
    From Conjecture \ref{conjecture: generic tail}, we can also characterize $\rhoVec$: 
	\begin{equation} \label{eq: rho def generic case appendix}
			\bm{\rho}(t) = \begin{cases}
\left(f'(g(t))\right)^{-1}\genericConstVec+o\left(\left(f'(g(t))\right)^{-1}\right), & \text{if $\frac{1}{f'(g(t))}=\Omega(1)$}\\
			O(1), & \text{otherwise}
			\end{cases}
		\end{equation}	
      Substituting eq.~\eqref{eq: rho def generic case appendix} into eqs. \ref{Calculation of the normalized weight vector}, \ref{eq: Calculation of the angle}, \ref{eq: Calculation of the margin} we get:
	\begin{flalign} \label{eq: generic norm results}
		&\left \Vert \frac{\wvec(t)}{\Vert \wvec(t)\Vert} - \frac{\what}{\whatNorm} \right \Vert =\begin{cases}
		\left\Vert\left(I-\dfrac{\what\what^\top}{\whatNorm^2}\right)\genericConstVec\right\Vert\dfrac{1}{\whatNorm}\dfrac{1}{g(t)f'(g(t))}, & \text{if $\dfrac{1}{f'(g(t))}=\Omega(1)$}\\
			O(g^{-1}(t))\vphantom{\dfrac{0}{0}}, & \text{otherwise}
		\end{cases}&
	\end{flalign}
    \begin{equation} \label{eq: generic angle results}
	 1 - \frac{\wvec(t)^\top\what}{\wNorm\whatNorm}  = \begin{cases}
		\left(\dfrac{1}{4}-\left(\dfrac{\genericConstVec^\top\what}{\whatNorm \Vert\genericConstVec\Vert}\right)^2\right)\dfrac{2\Vert\genericConstVec\Vert^2}{\whatNorm^2}\dfrac{1}{\left(g(t)f'(g(t))\right)^2}, & \text{if $\dfrac{1}{f'(g(t))}=\Omega(1)$}\\
			O(g^{-2}(t))\vphantom{\dfrac{0}{0}}, & \text{otherwise}
		\end{cases}
	\end{equation}
	\begin{equation} \label{eq: generic margin results}
	\frac{1}{\whatNorm} - \min_{n} \frac{\xnT \wvec(t)}{\Vert \wvec(t)\Vert}  = \begin{cases}
		\dfrac{1}{\whatNorm}\left(\dfrac{\what^\top\genericConstVec}{\whatNorm^2} - \min_{n}\xnT\genericConstVec\right)\dfrac{1}{g(t)f'(g(t))}, & \text{if $\dfrac{1}{f'(g(t))}=\Omega(1)$}\\
			O(g^{-1}(t))\vphantom{\dfrac{0}{0}}, & \text{otherwise}
		\end{cases}
	\end{equation}
   }
    
	    \subsection{Corollary \ref{corollary: L=1 optimal rate is obtained for exp loss}} \label{sec: proving corollary L=1 optimal rate is obtained for exp loss}
	    In this section we want to show that the optimal margin convergence rate is obtained for exponential loss.
	    
	    Using theorem \ref{theorem: general convergence rates} with general tail and $L=1$ we have that $\tilde{g}(t)$ is the asymptotic solution of
	\[
	    \e\left(-f\left(\tilde{g}(t)\right)\right)=\frac{1}{\eta_t\fsym_1(t)}\frac{d}{dt}\tilde{g}(t)
	\]
	and
	\[
	        \norm{\op_1\rhoVec}=\begin{cases} \fsym_2(t)\left(f'\left(\tilde{g}(t)\right)\right)^{-1}+o\left(\left(f'\left(\tilde{g}(t)\right)\right)^{-1}\right), & \text{if } \left(f'\left(\tilde{g}(t)\right)\right)^{-1}=\Omega(1)\\
	        O\left(1\right), & \text{otherwise}
	        \end{cases}
    \]
    for positive functions $\fsym_1(t)=\Theta(1),\ \fsym_2(t)=\Theta(1)$ independent of $f$. This implies the convergence rates in table \ref{table: L=1 generic convergence rates}.
	    We denote $\tilde{\fsym}_1(t)=\eta_t\fsym_1=\Theta(1)$. We define $u(t)=\int\limits_{0}^{t} \tilde{\fsym}_1(x)dx =H(t)\Rightarrow t=H^{-1}(u)$ (this is well defined since $H(t)$ is monotonic increasing) and $\hat{g}(u)=\tilde{g}\left(H^{-1}(u)\right)=\tilde{g}(t)$. Using these definition we have
        \[
            \frac{d}{du} \hat{g}(u)= \frac{1}{\tilde{\fsym}_1(t)}\frac{d}{dt}\tilde{g}(t)=\e\left(-f\left(\hat{g}(u)\right)\right).
        \]
        The asymptotic solution for this equation is $\hat{g}(u)=f^{-1}\left(\log(u)\right)$, i.e. \mbox{$\displaystyle \lim_{u\to\infty} \frac{\hat{g}(u)}{f^{-1}\left(\log(u)\right)}=1$} (this result is proved in \cite{odeSolMathOverFlow}).
        Since $\tilde{\fsym}_1(t)=\Theta(1)$ we know that exists positive constants $C_L,C_U,t_1$ so that $\forall t>t_1:$ $C_L\le\tilde{\fsym}_1(t)\le C_U\Rightarrow C_L t\le H(t)\le C_U t$. Using the previous result we have that asymptotically $f^{-1}\left( \log\left(C_L t\right)\right)\le\tilde{g}(t)=\hat{g}\left(H(t)\right)\le f^{-1}\left( \log\left(C_U t\right)\right)$ (since $f^{-1}(t)$ is an increasing function).
	    
	    For functions with tight exponential tail ($f(t)=\Theta(t)$) the margin convergence rate is $O(\frac{1}{\log(t)})$ and we know that this bound is tight (this result was proved in \cite{soudry2017implicit}).\\ 
	    For $f(t)=\omega(t)$ (the tail goes to zero faster than exponential tail) the rates are proportional to $1/\tilde{g}(t)$. This implies slower convergence rates than the rates obtained with exponential tail ($1/\log(t)$) since in this case $f^{-1}\left(\log(t)\right)=o\left(\log(t)\right)$.
	    
	    For $f(t)=o(t)$ we first prove the following claim.
	    \begin{restatable}{claim}{claimidealrate}\label{claim: relation for f(t)=o(t)}
	    For a concave function $f$ that satisfies $f'(t)>0,\ f(t)=o(t)$ $\exists x'$ so that $\forall x>x'$:
	    \begin{equation}
	            \frac{1}{f^{-1}\left(x\right)f'\left(f^{-1}\left(x\right)\right)}\ge \frac{1}{x}.
        \end{equation}
	    \end{restatable}
	\begin{proof}
	        We denote $h(x)=f^{-1}(x)$. $h(x)$ is convex since $f$ is strictly increasing and concave. 
	        Substituting $h(x)$ and $h'(x)=\frac{1}{f'\left(f^{-1}\left(x\right)\right)}$ into the equation, we need to show that $\exists x_1$ so that $\forall x>x_1$
	        \begin{equation}
	            \frac{h'(x)}{h\left(x\right)}\ge \frac{1}{x}.
	        \end{equation}
	        From the gradient inequality, $\forall x>x'>0$:
	        \begin{align*}
	            h'(x)\left(x-x'\right)&\ge h(x)-h(x')\\
	            h'(x)&\ge \frac{h(x)-h(x')}{x-x'}.
	        \end{align*}
	        Additionally, since $g(t)=\omega(t)$ (from definition and $f(t)=o(t)$) $\exists x''$ so that $\forall x>x''$:
	        \[
	            h(x)\ge \frac{h(x')}{x'} x \Leftrightarrow  -xh(x')\ge -x'h(x) \Leftrightarrow \frac{h(x)-h(x')}{x-x'}\ge \frac{h(x)}{x} .
 	        \]
 	        Thus, for $x>\max(x',x'')$
 	        \[
 	            h'(x)\ge \frac{h(x)}{x} \Leftrightarrow \frac{h'(x)}{h(x)}\ge \frac{1}{x}.
 	        \]
	    \end{proof}
	    
	    For $f(t)=o(t)$ we have
	    \[
	        \gamma - \min_n \frac{\xnT\eqw(t)}{\norm{\eqw(t)}}=\frac{C_1\fsymSec_2(t)}{g(t)f'(g(t))}+o\left(\frac{1}{g(t)f'(g(t))}\right),
	    \]
	    where $C_1$ is a known constant and $\fsymSec_2(t)=\Theta(1)$ is a positive function independent of f. In order to show that the best rate is obtaind for exponential loss we need to show that, asymptotically,
	    \begin{equation}
	        \frac{1}{g(t)f'\left(g\left(t\right)\right)}\ge \frac{1}{\log(t)}.
	    \end{equation}
	    From assumption \ref{assumption: f can be expended} we have that $\abs{\frac{f''(x)}{f'(x)}}=o\left(x^{-1}\right)$. This implies that for sufficiently large $x$, $h(x)=xf'(x)$ is an increasing function ($h'(x)=f'(x)+xf''(x)>0$). Additionally,
	    assuming $f'(t)=o(1)$ we have
	    \begin{align*}
	        \dot{g}(t)=\e\left(-f\left(g\left(t\right)\right)\right)&\leq \frac{\e\left(-f\left(g\left(t\right)\right)\right)}{f'\left(g\left(t\right)\right)}\Leftrightarrow\\
	         \e\left(f\left(g\left(t\right)\right)\right)f'\left(g\left(t\right)\right)\dot{g}(t)&\le 1\Leftrightarrow\\
	        \frac{d}{dt}\e\left(f\left(g\left(t\right)\right)\right)&\le 1 \Rightarrow\\
	        \e\left(f\left(g\left(t\right)\right)\right) &\le t+C \Leftrightarrow\\
	        g(t)&\le f^{-1}\left(\log(t+C)\right)
	    \end{align*}
	    where in the last transition we used the fact that $f$ is an increasing function (and so does $f^{-1}$). \mnote{This set of equations is true for $\bar{g}(u)$ and not $g(t)$. We need to use time rescaling (as we did above) to show that this doesn't matter for our result}
	    Using the last two results we have
	    \[
	        \frac{1}{g(t)f'\left(g\left(t\right)\right)}\ge \frac{1}{f^{-1}\left(\log(t)\right)f'\left(f^{-1}\left(\log(t)\right)\right)}
	    \]
	    and thus it is sufficient to show that $\exists t'$ so that $\forall t>t'$:
	    \begin{equation}
	            \frac{1}{f^{-1}\left(\log(t)\right)f'\left(f^{-1}\left(\log(t)\right)\right)}\ge \frac{1}{\log(t)}.
        \end{equation}
        Using claim \ref{claim: relation for f(t)=o(t)} with $x=\log(t)$ we get the desired result.
        }
\section{Examples\label{sec: Examples}} 
\remove{
We recall Conjecture  \ref{conjecture: generic tail}, regarding general tails:
\conjectureGenericTail*
In this section we demonstrate this conjecture using two examples.
}
We recall Theorem  \ref{Theorem: convergence to max margin for general tail}, regarding general tails:
\theoremMaxMarginGeneralTail*
In this section we demonstrate this Theorem using two examples.
\subsection{Example: non-convergence to the max-margin separator}

\remove{Conjecture \ref{conjecture: generic tail} assumes that $f(t)=\omega(\log(t))$, and therefore, if $f(t)=O(\log(t))$, we may not converge to the max margin separator,}
Theorem  \ref{Theorem: convergence to max margin for general tail} assumes that $f'(t)=\omega\left(t^{-1}\right)$ which implies $f(t)=\omega(\log(t))$, and therefore, if $f(t)=O(\log(t))$, we may not converge to the max margin separator,
i.e. $\lim_{t \rightarrow \infty} \wvec (t) / \wNorm \neq \what / \whatNorm$. Next, we give an example for such a case.\\
Consider optimization with a power-law tailed loss $$\ell\left(u\right)=\begin{cases}
u^{-1} & ,u>1\\
2-u & ,u\leq1
\end{cases},$$ with two data points $\mathbf{x}_{1}=\left(1,0\right)$ and $\mathbf{x}_{2}=\left(0,2\right)$.
In this case $\hat{\mathbf{w}}=\left(1,0.5\right)$ and $\left\Vert \hat{\mathbf{w}}\right\Vert =\sqrt{5}/2$.
We take the limit $\eta\rightarrow0$, and obtain the continuous time
version of GD: 
\begin{align*}
\dot{w}_{1}\left(t\right) & =\frac{1}{w_{1}^2\left(t\right)}\,\,;\,\,\dot{w}_{2}\left(t\right)=\frac{0.5}{w_{2}^2\left(t\right)}.
\end{align*}
We can analytically integrate these equations to obtain 
\begin{align*}
\frac{1}{3}w_{1}^{3}\left(t\right) & =t+C\,\,;\,\,\frac{1}{3}w_{2}^{3}\left(t\right)=0.5t+C.
\end{align*}
so
\begin{align*}
w_{1}\left(t\right) & =\sqrt[3]{3t+w_{1}^3\left(0\right)}\,\,;\,\,w_{2}\left(t\right)=\sqrt[3]{1.5t+w_{2}^3\left(0\right)}.
\end{align*}
therefore, as $t\rightarrow\infty$
\begin{align*}
w_{1}\left(t\right) & /w_{2}\left(t\right)\rightarrow \sqrt[3]{2} \neq2=\hat{w}_{1}/\hat{w}_{2}.
\end{align*}
In this case, asymptotically we have that $f(t) = 2\log(t)$  which is not $\omega(\log(t))$, and therefore in this case Theorem \ref{Theorem: convergence to max margin for general tail} should not apply. Thus, we expect that $h(t)$ will not be $o(g(t))$, as we assumed, and this will break the analysis in appendix section \ref{sec: Generic Tails} (specifically,  $g(t)\asymp t^{\frac{1}{3}}$ and $h(t)=1/f'(g(t)) \asymp t^{\frac{1}{3}}$). Using this example, it is easy to verify that we do not converge to the max-margin separator whenever $-\ell^{\prime}(u)$ is polynomial.

In contrast, it is straightforward to verify, that similar analysis on the same example, only with poly-exponential tails, does yield convergence to the max-margin, as expected.
For example, with exponential loss we obtain $-\ell'\left(u\right)=e^{-u}$ 
\begin{align*}
\lim_{t\rightarrow \infty } w_{1}\left(t\right) & /w_{2}\left(t\right) = 2 =\hat{w}_{1}/\hat{w}_{2}
\end{align*}
In this case, $f(u)=u$, $g(t) \asymp \log(t)$, and $h(t)=f'(g(t))=1$, and so these results are consistent with Theorem \ref{theorem: ICLR theorem 3}.\\

\subsection{Example: sub-poly-exponential tails that converge to the max margin separator}

Theorem \ref{Theorem: convergence to max margin for general tail} implies that if $-\ell'\left(u\right)$ has a tail that decays faster than any polynomial tail, we will still converge to the max margin. To demonstrate this we analyze the same example as before, only with  $$-\ell'(u) = \begin{cases}
\e\left(-\log^\epsilon(u)-\log\left(\epsilon \log^{\epsilon-1}(u)\right)+\log(u)\right) & ,u>2\\
\e\left(-\log^\epsilon(2)-\log\left(0.5\epsilon \log^{\epsilon-1}(2)\right)\right) & ,u\leq 2
\end{cases} $$ for constant $\epsilon>1$. In this case $f(t)=\Theta(\log^\epsilon(t))=\omega(\log(t))$. We get:
	\begin{align*}
	\dot{w}_{1}\left(t\right) & =\e\left(-\log^\epsilon(w_1(t))-\log\left(\epsilon \log^{\epsilon-1}(w_1(t))\right)+\log(w_1(t))\right)\\
	\dot{w}_{2}\left(t\right) & =2\e\left(-\log^\epsilon(2w_2(t))-\log\left(\epsilon \log^{\epsilon-1}(2w_2(t))\right)+\log(2w_2(t))\right)
	\end{align*}
	We can analytically integrate these equations to obtain 
	\begin{align*}
	\e\left(\log^\epsilon w_1(t)\right) & =t+C\,\,;\,\,\e\left(\log^\epsilon(2w_2(t))\right)= 4t+C.
	\end{align*}
	so
	\begin{align*}
	w_{1}\left(t\right) & = \e\left(\log^{\epsilon^{-1}}(t+\tilde{C}_1)\right)\,\,;\,\,w_{2}\left(t\right)=\frac{1}{2}\e\left(\log^{\epsilon^{-1}}(4t+\tilde{C}_2)\right),
	\end{align*}
    where
    \begin{align*}
	\tilde{C}_1 & = \e\left(\log^{\epsilon}(w_1(0))\right)\,\,;\,\,\tilde{C}_2=\e\left(\log^{\epsilon}(2w_2(0))\right),
	\end{align*}
	therefore, as $t\rightarrow\infty$
	\begin{align*}
	w_{1}\left(t\right) & /w_{2}\left(t\right)\rightarrow 2=\hat{w}_{1}/\hat{w}_{2}
	\end{align*}
	However, we note that for $\epsilon < 1$ $w_{1}\left(t\right)  /w_{2}\left(t\right)\rightarrow 0$ and for $\epsilon=1$  $w_{1}\left(t\right)  /w_{2}\left(t\right)\rightarrow 0.5$, meaning that for $\epsilon \le 1$ we do not converge to the max margin separator, which is consistent with the conjecture, since then  $f(t) = O(\log(t))$.

\subsection{Example: Demonstrating that the upper bound in Theorem \ref{theorem: general convergence rates simplified}.1 is not always obtained} \label{sec: f=w(u) rate negative example}
    We analyze the same example as before, only with $$\ell'\left(u\right)=-\frac{1}{\nu}\exp\left( -u^\nu - (\nu-1)\log\left(u\right)\right),$$ for some $\nu>1$. In this case we get:
    \begin{align*}
	w_{1}\left(t\right) & = \log^{\frac{1}{\nu}}\left(t+C_1\right)\,\,;\,\,w_{2}\left(t\right)=\frac{1}{2}\log^{\frac{1}{\nu}}\left(2t+C_2\right),
	\end{align*}
	where $C_1=\exp\left(w_{1}(0)^\nu\right)$ and $C_2=\exp\left(\left(2w_{2}(0)\right)^\nu\right)$.
    Therefore, as $t\rightarrow\infty$
\begin{align*}
w_{1}\left(t\right) & /w_{2}\left(t\right)\rightarrow 2=\hat{w}_{1}/\hat{w}_{2}.
\end{align*}
Recall that the max-margin solution for this case is $\hat{\mathbf{w}}=\left(1,0.5\right)$ and $\left\Vert \hat{\mathbf{w}}\right\Vert =\sqrt{5}/2$.
We can write
\[
    \wvec(t) = \begin{bmatrix} w_1(t) \\ w_2(t)\end{bmatrix} = \left( \log^{\frac{1}{\nu}}\left(t+C_1\right) + \frac{1}{4}\log^{\frac{1}{\nu}}\left(2t+C_2\right)  \right)\cdot\frac{4}{5} \what
    + \left( \log^{\frac{1}{\nu}}\left(t+C_1\right) - \log^{\frac{1}{\nu}}\left(2t+C_2\right)  \right)\cdot\frac{2}{5} \what^\perp\,,
\]
where $\what^\perp = \begin{bmatrix} \frac{1}{2} \\ -1\end{bmatrix}$, $\ip{\what}{\what^\perp}=0$. Therefore, in this example $\wvec(t) = g(t)\what + \rhoVec$ where $g(t) = \left( \log^{\frac{1}{\nu}}\left(t+C_1\right) + \frac{1}{4}\log^{\frac{1}{\nu}}\left(2t+C_2\right)  \right)\cdot\frac{4}{5}$ and $\rhoVec = \left( \log^{\frac{1}{\nu}}\left(t+C_1\right) - \log^{\frac{1}{\nu}}\left(2t+C_2\right)  \right)\cdot\frac{2}{5} \what^\perp \approx C\frac{1}{\nu} \log^{\frac{1}{\nu}-1}(t)\to0$ for some constant $C>0$ independent of $\nu$. This implies that the margin convergence rate is proportional to $\frac{1}{\nu\log(t)}$, i.e., we obtain the same asymptotic rate as exponential loss, only with better constants.
\subsection{Example: Demonstrating that the upper bound in Theorem \ref{theorem: general convergence rates simplified}.1 is tight} \label{sec: f=w(u) rate positive example}
    Next, we give an example to show that the rate upper bound $O\left(\dfrac{1}{f^{-1}\left(\log(t)\right)}\right)$ for $f'(u)=\omega(1)$ is tight. Consider optimization with a loss that satisfies $$-\ell'\left(u\right)=\exp\left(-f(u)\right)$$ for some function $f'(u)=\omega(1)$ with one data point $\mathbf{x}_{1}=\left(1,0\right)$.
In this case $\hat{\mathbf{w}}=\left(1,0\right)$.
We take the limit $\eta\rightarrow0$, and obtain the continuous time
version of GD:
    \begin{align*}
\dot{w}_{1}\left(t\right) & =\exp\left(-f(w_{1}\left(t\right))\right)\,\,;\,\,\dot{w}_{2}\left(t\right)=0.
\end{align*}
    From \cite{odeSolMathOverFlow} we have that ${w}_{1}(t)=\Theta\left(f^{-1}(\log(t)\right)$ and from integrating the right equation we obtain $w_2(t) = w_2(0)$. Thus, using this example with $w_{2}\left(0\right)>0$, we see that the above upper bound is tight.
    
\section{Numerical results: additional details \label{sec:fig details}} 

\subsection{Implementation details of Figure \ref{Fig: NormGD} \label{sec:fig1 details}} 

The original dataset included four support vectors: $\mathbf{x}_{1}=\left(0.5,1.5\right),\mathbf{x}_{2}=\left(1.5,0.5\right)$
with $y_{1}=y_{2}=1$, and $\mathbf{x}_{3}=-\mathbf{x}_{1}$, $\mathbf{x}_{4}=-\mathbf{x}_{2}$
with $y_{3}=y_{4}=-1$. The $L_{2}$ normalized max margin vector
in this case was $\hat{\mathbf{w}}=\frac{1}{2}\left(1,1\right)$ with
margin equal to $\sqrt{2}$. Additional $6$ random data points were added from each class. These additional points are sufficiently far from the origin so they are not support vectors. Lastly, we re-scaled all datapoints so that $\max_n{||\x_n||}<1$, according to our assumption.

For training, we initialized $\w(0) \sim \mathcal{N}(0,\mathbf{I}_d)$, and used the optimal $\eta=1/\beta$ for GD, and the same as initial step size for normalized GD. 

Note that, in panel C, the training error $\mathcal{L}(\mathbf{w}(t))$ of normalized GD converges to zero (much faster than GD) --- until it disappears when reaching the lowest numerical precision level. Also, the margin gap figure for normalized GD appears less stable for Normalized GD. We suspect that this is because the index of the datapoint with the smallest margin rapidly switched due to the aggressive learning rate used.

\subsection{Neural Networks on a Toy Dataset} \label{sec-ngd_linear}

In what follows we compare GD to Normalized GD on linear and non-linear neural networks. For this purpose, we generate a 2-dimensional synthetic dataset composed of 600 data points, where positive and negative samples are generated from $\mathcal N(\boldsymbol \mu^+, \boldsymbol \Sigma^+)$ and $\mathcal N(\boldsymbol \mu^-, \boldsymbol \Sigma^-)$, respectively, with $\boldsymbol \mu^+ = (-5,2)$, $\boldsymbol \Sigma^+ = \bigl[ \begin{smallmatrix} 3 & -1\\ -1 & 3 \end{smallmatrix} \bigl]$, $\boldsymbol \mu^- = (5,-2)$ and $\boldsymbol \Sigma^- = \bigl[ \begin{smallmatrix} 2 & 3\\ 3 & 9 \end{smallmatrix} \bigl]$. Once the dataset was generated, the same points were used for all the following experiments.

We use a learning rate $\eta$ of $0.005$, which was empirically chosen so that optimization is stable but not slow. Larger learning rates would often result in both GD and Normalized GD presenting convergence issues, as in difficulty to reach (or stay at) a solution that separates the data. The weights were initialized from $\mathcal N(0, 0.1)$, and gradients were normalized together: $\sqrt{ \sum_{i=1}^d \lVert \nabla_{\bold W_i} \mathcal L \rVert_F^2 }$ was used to normalize each parameter's gradient, where $\bold W_i$ denotes the weight matrix of the $i$'th layer and $d$ the total number of layers of the network. Finally, each hidden layer contains $10$ hidden neurons.

\begin{figure}%
    \centering
    \subfloat[Dataset]{{\includegraphics[width=0.35\textwidth]{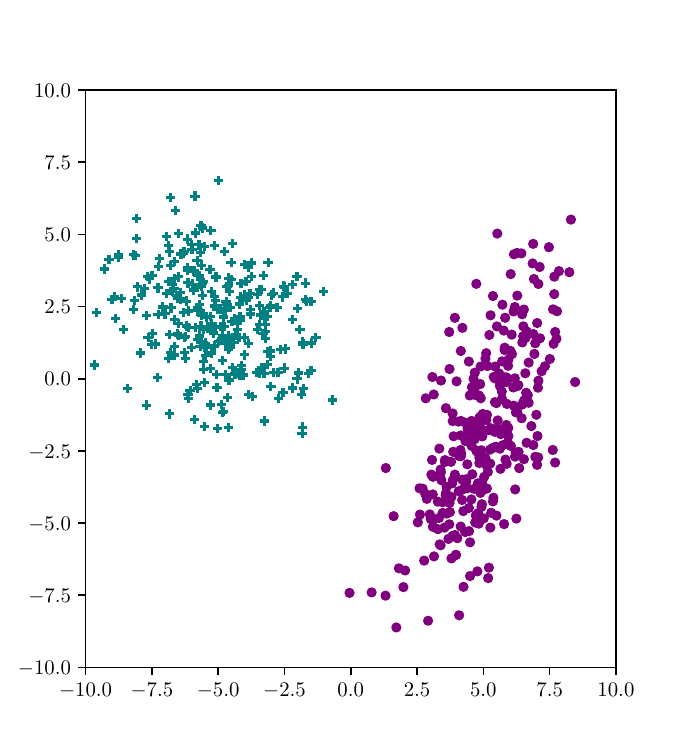} }}%
    \subfloat[Logistic Regression (1 layer Linear Network)]{{\includegraphics[width=0.65\textwidth]{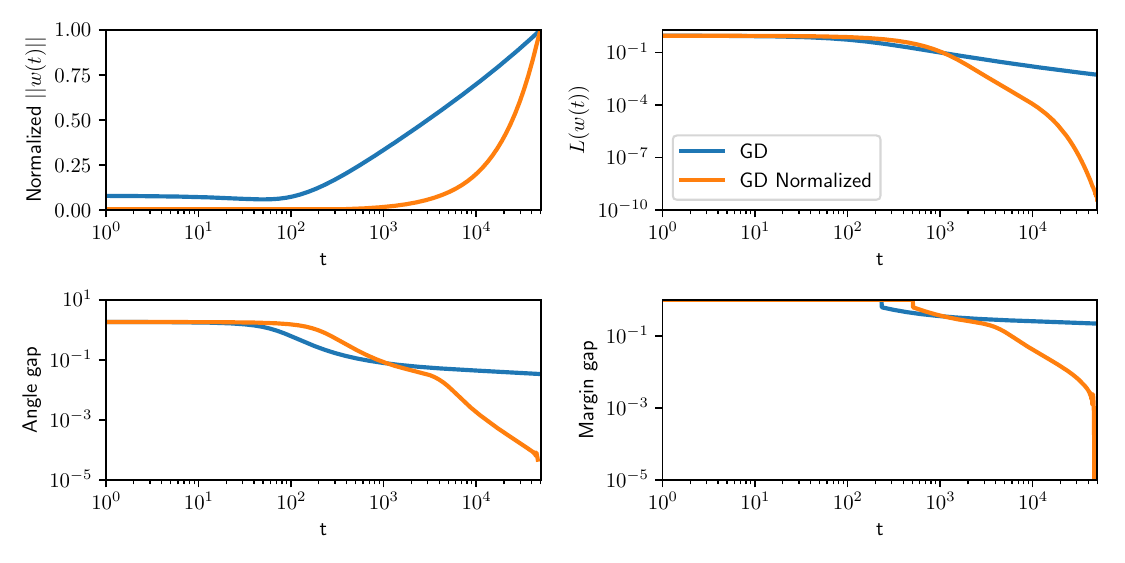} }}%
    \caption{a) Visualization of the synthetic dataset composed of 600 points: 300 labeled positive and 300 negative, again respectively denoted by $'+'$ and $'\circ'$. b) Convergence plots for a logistic regression trained with GD and Normalized GD for $5 \times 10^4$ epochs. Similarly to what is observed in Figure \ref{fig:Synthetic-dataset}, Normalized GD converges significantly faster to the max-margin solution.}%
    \label{fig:logistic-regression}%
\end{figure}

Figure \ref{fig:logistic-regression} shows the dataset and the convergence of GD and Normalized GD on logistic regression. We can see that Normalized GD converges significantly faster, similarly to Figure 1. To compute angle and margin gaps, we obtain the $L_2$ max margin vector $\hat {\bold w}$ from a SVM solver, along with the max margin itself.

\begin{figure}%
    \centering
    \subfloat[2 layer Linear Network]{{\includegraphics[width=0.5\textwidth]{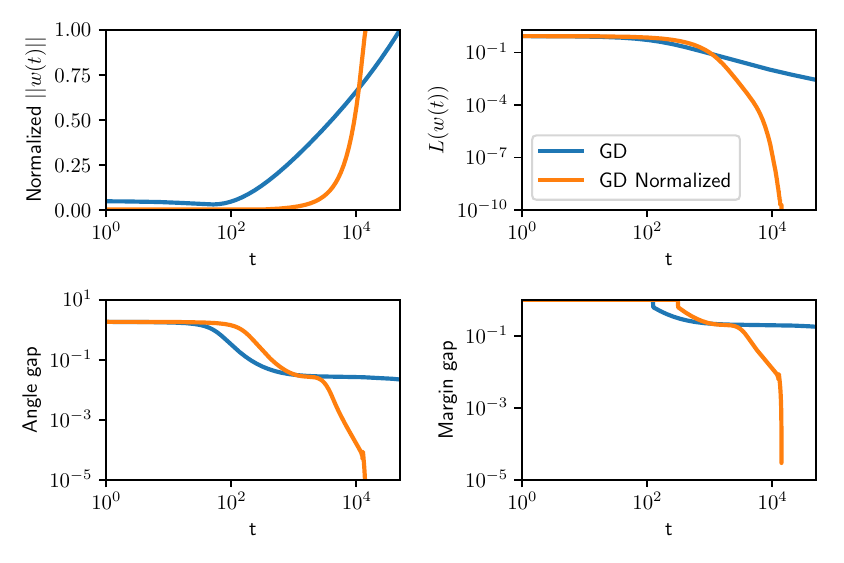} }}%
    \subfloat[2 layer ReLU Network]{{\includegraphics[width=0.5\textwidth]{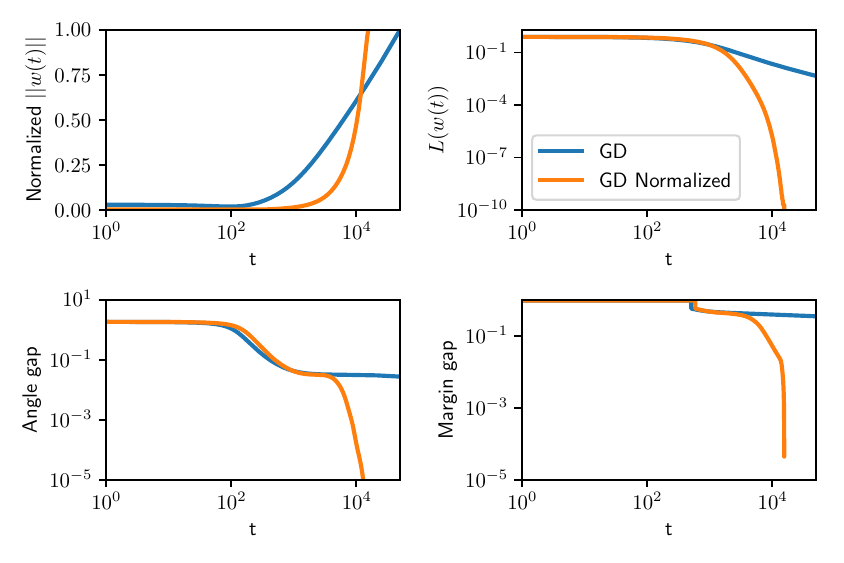} }}%
    \caption{Convergence plots for 2-layered neural networks with a $2 \times 10 \times 1$ architecture, trained for $5 \times 10^4$ epochs with GD and Normalized GD. (a,b): networks with linear / ReLU activations, respectively. We can observe that the plots for linear and ReLU networks look similar, and for both models Normalized GD still converged noticeably faster to the max margin solution. Additionally, we can see that Normalized GD converged faster in the 2-layer setting when compared to Figure \ref{fig:logistic-regression}, achieving $0$ numerical loss in roughly $10^4$ epochs.}%
    \label{fig:nn-2-layers}%
\end{figure}

In Figure \ref{fig:nn-2-layers} we see the convergence of GD and Normalized GD for 2-layer neural networks, with and without a ReLU non-linearity. We can observe that there is little difference between all plots, suggesting that our results might translate to more complex models, at least in well-behaved settings such as when the data is linearly separable. Note that for the non-linear network, the angle and margin were computed using $\bold w = \bold W_1 \bold W_2 \dots \bold W_d$, as if the model was a linear network. The same observation can be drawn from Figure \ref{fig:nn-3-layers}, which depicts convergence for 3-layered networks.

\begin{figure}%
    \centering
    \subfloat[3 layer Linear Network]{{\includegraphics[width=0.5\textwidth]{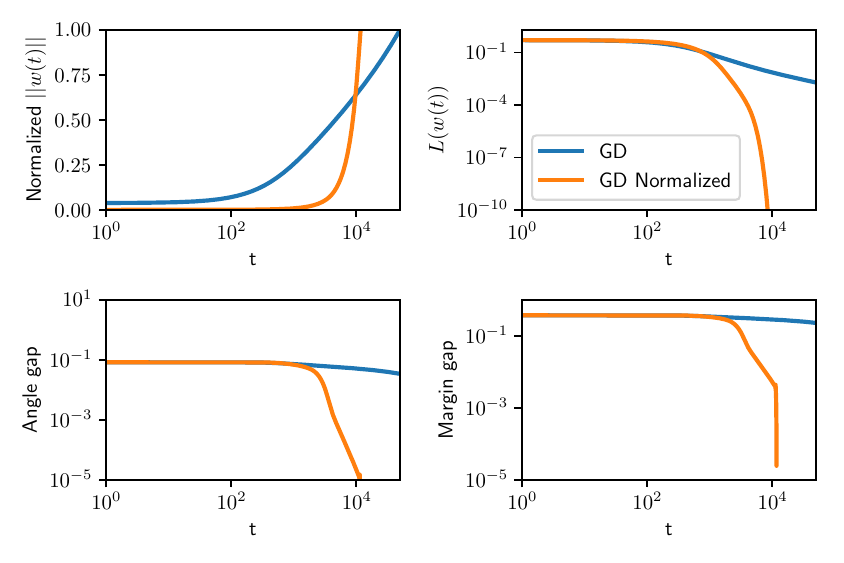} }}%
    \subfloat[3 layer ReLU Network]{{\includegraphics[width=0.5\textwidth]{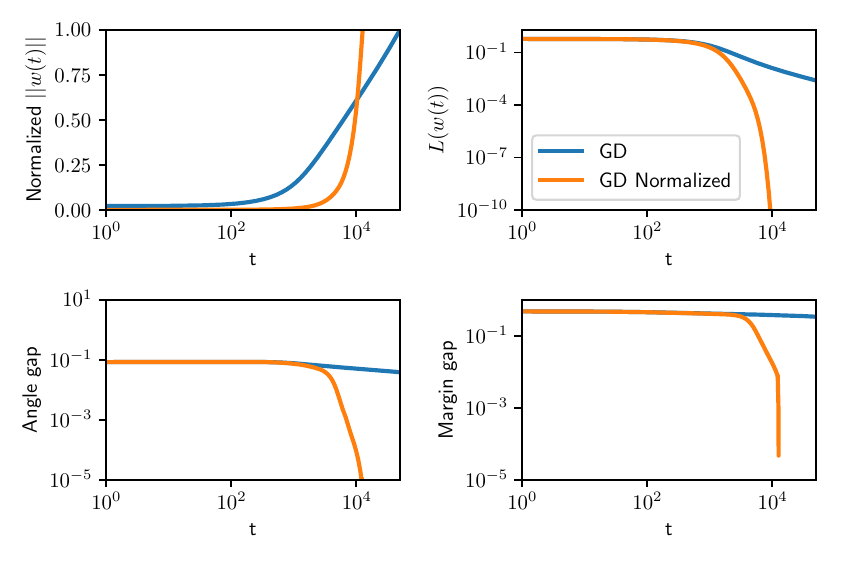} }}%
    \caption{Convergence plots for 3-layered neural networks with a $2 \times 10 \times 10 \times 1$ architecture, trained for $5 \times 10^4$ epochs with GD and Normalized GD. (a,b): networks with linear / ReLU activations, respectively. As in Figure \ref{fig:nn-2-layers}, we can observe that the plots of linear and ReLU networks are similar.}
    \label{fig:nn-3-layers}%
\end{figure}

\remove{
\subsection{Classification on MNIST with a ReLU network} 

In order to better evaluate whether Normalized GD can be useful for neural network optimization, we trained a 2-layer feedforward ReLU network on the MNIST digit classification dataset. It is composed of 70,000 grayscale images of 0-9 digits (10 classes total), each having $28 \times 28$ pixels. We used 10,000 images for testing and the rest for training and validation.

We trained a network with one hidden layer containing 5,000 hidden neurons and a ReLU activation $ReLU(x) = \max(0,x)$ with full-batch GD and Normalized GD, without any form of regularization, momentum or data augmentation. The network was trained for a total of 3,000 epochs, and the learning rate was divided by 5 at epochs 1,500, 2,250 and 2,625.

Since the optimal learning rate $\eta$ for GD and Normalized GD might differ, for each method we performed grid-search over learning rate values $\{0.1, 0.3, 0.5, 1.0, 2.5, 5.0\}$ using 5,000 images randomly chosen from the training set as validation. Both GD and Normalized GD performed the best with $\eta = 1.0$, and presented convergence issues with larger values. During validation we noticed that Normalized GD makes less progress than GD in the first epochs, when the gradient norms are typically large. To remedy this, we only normalized the gradients if the norm was lesser than $1$, which happened from the $11$'th epoch onwards.

 Figure \ref{fig:mnist} shows training loss and test error at each epoch $t$. The final test errors were $1.91\%$ and $1.4\%$ for GD and Normalized GD, respectively, which suggests that Normalized GD might also provide an advantage when training more complex networks on non-linearly separable data.

\begin{figure}%
    \centering
    \subfloat[Training loss]{{\includegraphics[width=0.5\textwidth]{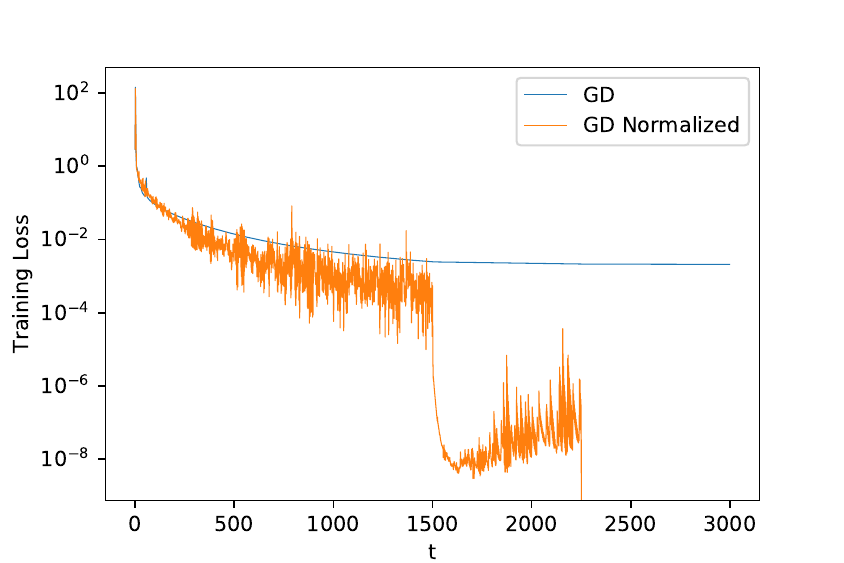} }}%
    \subfloat[Test error]{{\includegraphics[width=0.5\textwidth]{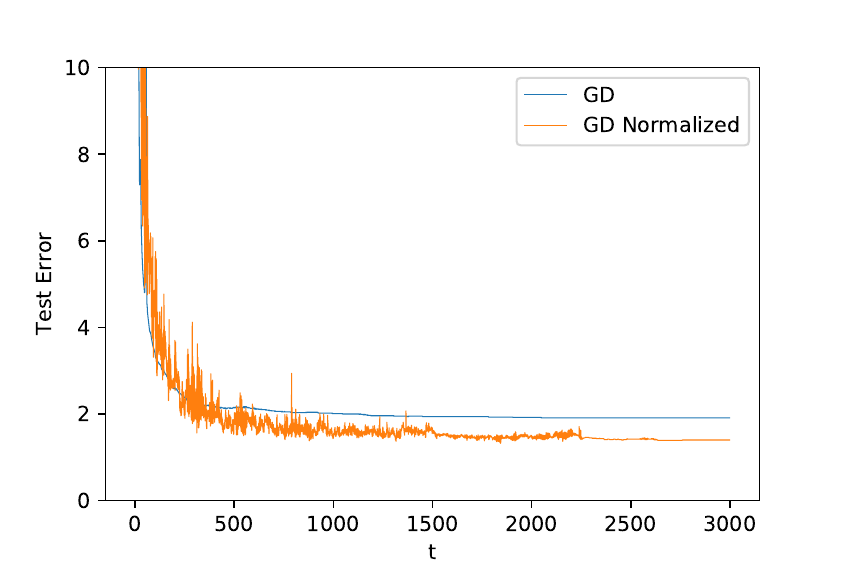} }}%
    \caption{MNIST digit classification with a 2-layered feedforward neural network with architecture $784 \times 5000 \times 10$ and a ReLU non-linearity in the hidden layer, trained using full-batches with GD and Normalized GD. a) Training loss per epoch $t$: we can see that Normalized GD converges to significantly lower values for the training loss, while GD stagnates after a fraction of the training. b) Test error ($\%$) per epoch $t$: GD achieves a final error of $1.91\%$, while Normalized GD achieves $1.4\%$.}%
    \label{fig:mnist}%
\end{figure}
}

\remove{
\subsection{Classification on CIFAR-10 with Residual Networks} \label{sec-ngd_cifar}

To check whether Normalized GD also provides advantages over GD in more realistic tasks, we train a Wide ResNet \citep{wideresnet} on the CIFAR-10 dataset \citep{krizhevsky2009} and compare the two methods. This model family is capable of reaching less than $4\%$ test error on CIFAR-10, a result close to the current state-of-the-art, making Wide ResNets a strong model baseline. For this experiment, we train a Wide ResNet 28-4: a 28-layered convolutional neural network with residual connections and a total of 5.8M parameters.

The CIFAR-10 dataset consists of 60,000 colored $32 \times 32$ images belonging to one of 10 possible classes, and is split into 50,000 training and 10,000 test points. We use standard data augmentation (horizontal flips and random crops), along with channel-wise normalization as pre-processing. Networks are trained with GD and Normalized GD, with no weight decay or momentum as to follow the presented theory.

We follow a similar schedule as in \cite{wideresnet}, decaying the learning rate by a factor of $5$ at $30\%$, $60\%$ and $80\%$ of the total iterations, along with a warm-up phase consisting of using a decayed rate during the first 150 iterations. To choose the initial learning rate for each method, we train the network for 3,000 iterations and use the learning rate that yields the lowest validation accuracy (using a set of 5,000 randomly chosen images). Two advantages of Normalized GD are observed in these initial experiments: first, it is less sensitive to different learning rates; second, it avoids sharp drops in performance which occur when using GD with large learning rates. Figure \ref{fig:cifar1} shows the training loss and test error for GD and normalized GD for initial learning rates $\eta=1.0$ and $\eta=2.5$.

Both GD and Normalized GD presented better results for $\eta = 2.0$, which was chosen as the initial learning rate for the final run. Additionally, we changed the warm-up training phase: instead of starting with a learning rate decayed by 5 and increasing it at epoch 150, we slowly increase it back to its original value, with constant increases at epochs 50, 100, 150 and 200. The curves are presented in Figure \ref{fig:cifar2}: GD reaches a test error of $9.1\%$, while Normalized GD yields $6.93\%$.

Note that, while Normalized GD outperformed GD in this full-batch setting, its performance is still subpar when compared to the standard optimization for Wide ResNets, which include SGD with Nesterov momentum and weight decay. To confirm whether momentum and weight decay can have strong negative impacts in a model's performance, we also trained a Wide ResNet 28-4 using SGD, with and without momentum/weight decay. We observed that removing momentum and weight decay resulted in a test error increase from $4.45\%$ to $7.75\%$ (larger error than Normalized GD). This suggests an importance in reconciling weight decay, momentum and gradient normalization.

\begin{figure}%
    \centering
    \subfloat[Training loss]
    {{\includegraphics[width=0.5\textwidth]{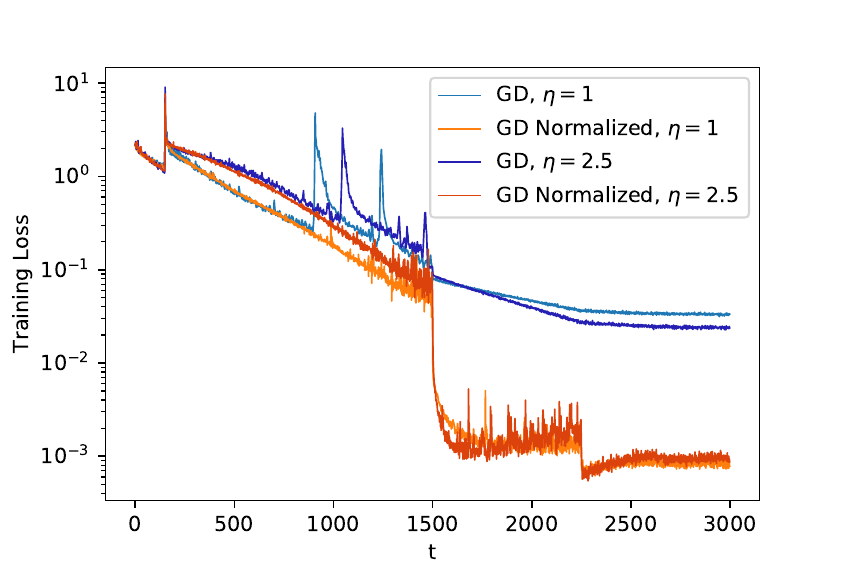} }}%
    \subfloat[Validation error]
    {{\includegraphics[width=0.5\textwidth]{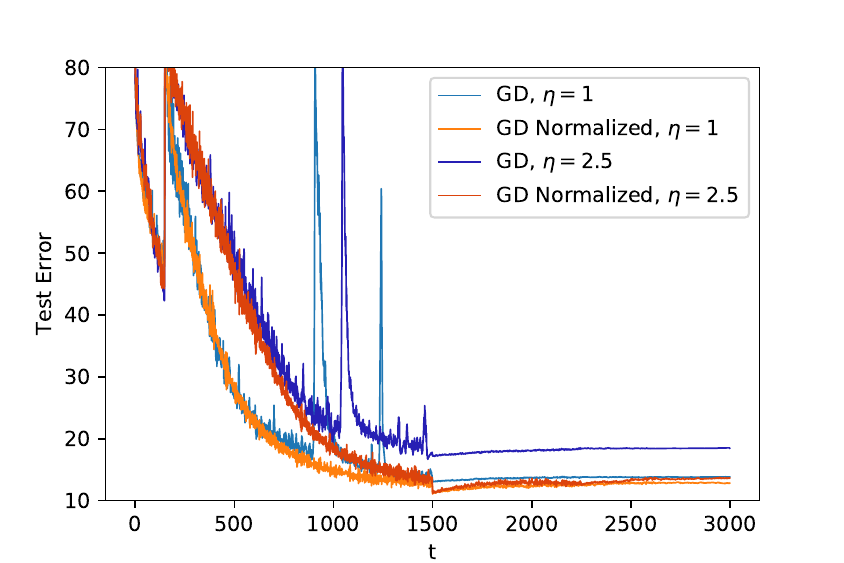} }}%
    \caption{Training a Wide ResNet 28-4 on CIFAR-10 with learning rates $\eta=1.0$ and $\eta=2.5$ for 3,000 epochs, as part of hyperparameter tuning. Normalized GD converges to similar training loss and test error for both learning rates, while $\eta = 2.5$ for GD yields a $4.66\%$ larger test error than $\eta = 1.0$. Additionally, sudden performance drops are observed for GD (between iterations 1,000 and 1,500), but not for Normalized GD (the spike at $t=150$ is due to the increase in learning rate).}%
    \label{fig:cifar1}%
\end{figure}

\begin{figure}%
    \centering
    \subfloat['Best yet' Training loss]
    {{\includegraphics[width=0.5\textwidth]{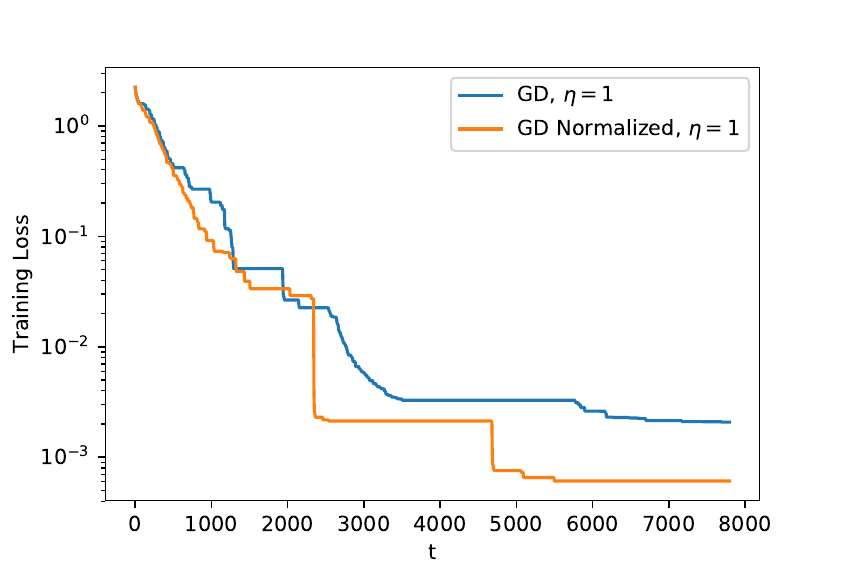} }}%
    \subfloat['Best yet' Test error]
    {{\includegraphics[width=0.5\textwidth]{figures/cifar_testerr.pdf} }}%
    \caption{Final curves while training a Wide ResNet 28-4 on CIFAR-10, with a learning rate $\eta = 2.0$ for a total of 7,800 iterations. For clarity, the curves are 'best yet' training loss and test error: the loss/error plotted at iteration $t$ is the minimum loss/error up to $t$. Normalized GD outperforms GD by absolute $2.17\%$, and dominates GD's test error during the whole training. However, unlike in curves reported in (cite wrn), progress stops early in training: for both GD and Normalized GD, there is no change in the 'best yet' test error after $t=2350$, even with the decays in learning rate. This suggests that regularization and/or momentum might be required to achieve better results.}%
    \label{fig:cifar2}%
\end{figure}
}

\section{Losses with poly-exponential tails} \label{sec: appendix results on poly-exp tails}
    In Theorem \ref{Theorem: convergence to max margin for general tail} we assume that the gradient descent iterates $\np(t)$ minimizes the objective, i.e., $\mathcal{L}_\P \left(\np(t)\right)\to0$, and that the incremental updates $\np(t+1)-\np(t)$ converge in direction. In this section we show that in the case of a single layer, $L=1$, and for a specific type of loss function these assumptions can be omitted.
    \begin{definition} \label{def: poly-exponential tail}
		A function $f(u)$ has a ``tight poly-exponential tail", if there exist positive constants $\mu_+,\mu_-,\lossPower$, and $\bar{u}$ such that $\forall u>\bar{u}$:
		$$  (1-\e(-\mu_- u^\lossPower))e^{- u^\lossPower} \le f(u) \le (1+\e(-\mu_+ u^\lossPower))e^{- u^\lossPower} $$
	\end{definition}
	\begin{theorem} \label{theorem: main}
		For almost all datasets that are linearly separable  and any $\beta$-smooth $\mathcal{L}$, with strictly monotone loss function $\ell $ (Definition \ref{def: l(u) assumptions}) for which $-\ell'(u)$ has a tight poly-exponential tail (Definition \ref{def: poly-exponential tail}) with $\lossPower>0.25$, given step size $\eta<2\beta^{-1}$ and any initialization $\wvec(0)$, the iterates of gradient descent in eq.~\eqref{GD} will behave as:
		\begin{equation} \label{define wVec}
			\wvec(t) = \what g(t)+\bm{\rho}(t),
		\end{equation}
        where $\hat{\mathbf{w}}$ is the following $L_{2}$ max margin separator:
    \begin{equation}
    \hat{\mathbf{w}}=\underset{\mathbf{\mathbf{w}}\in\mathbb{R}^{d}}{\mathrm{argmin}}\left\lVert \mathbf{w}\right\rVert^2 \,\,\mathrm{s.t.}\,\,\mathbf{w}^{\top}\mathbf{x}_{n}\geq1,
    \end{equation}
		and for a constant $\bm{a}$ independent of $\lossPower$,
		\begin{flalign}
			& g(t) = \log^{\frac{1}{\lossPower}}(t) + \frac{1}{\lossPower} \log(\lossPower\log^{1-\frac{1}{\lossPower}}(t))\log^{\frac{1}{\lossPower}-1}(t)
		\end{flalign}
		\begin{equation}
			\Vert\bm{\rho}(t)\Vert = \begin{cases}
			O(1), & \text{if $\lossPower>1$}.\\
			\frac{1}{\lossPower}g^{1-\lossPower}(t)\Vert\bm{a}\Vert+o(g^{1-\lossPower}(t)), & \text{if $\frac{1}{4}<\lossPower\le1$},
			\end{cases}
		\end{equation}	
	\end{theorem}
    
    	\begin{table} [t]
		\centering
			\normalsize	
			\begin{tabular}{c c c} 
				\toprule				
				& $\lossPower\ge 1$\vphantom{$\displaystyle\int$} & $\frac{1}{4}<\lossPower\le 1$ \\ 
				\midrule
				$\left \Vert \frac{\wvec(t)}{\Vert \wvec(t)\Vert} - \frac{\what}{\whatNorm} \right \Vert$ \text{ or } $\gamma - \min_{n} \frac{\xnT \wvec (t)}{\Vert \wvec(t)\Vert}$ & $O\left(\log^{-\frac{1}{\lossPower}}(t)\right)$ \vphantom{$\displaystyle O\left(\frac{1}{\log^{\frac{1}{\lossPower}(t)}}\right)$} & $\frac{C_1}{\lossPower}\log^{-1}(t) + o(\log^{-1}(t)) $ \\ 
				 $1 - \frac{\wvec(t)^\top\what}{\wNorm\whatNorm}$ &$O\left(\log^{-\frac{2}{\lossPower}}(t)\right)$ \vphantom{$\displaystyle O\left(\frac{1}{\log^{\frac{1}{\lossPower}}(t)}\right)$} & $\frac{C_2}{\lossPower^2}\log^{-2}(t) + o(\log^{-2}(t)) $\\
				\bottomrule
			\end{tabular}
			\captionsetup{width=\textwidth}
			\caption{Summary of convergence rates for Theorem \ref{theorem: main} for loss functions with exponential tail, when $-\ell^{\prime}(u) \asymp \exp(-u^{\lossPower})$. The first line is the convergence rate for both the distance and the suboptimality of the margin (with $C_3$ instead of $C_1$). The second line is the angle convergence rate. The constants are:\\ $C_1 = \left\Vert\left(I-\frac{\what\what^\top}{\whatNorm^2}\right)\frac{\genericConstVec}{\whatNorm}\right\Vert,\;
            C_2 = \left(\frac{1}{4}-\left(\frac{\genericConstVec^\top\what}			{\whatNorm \Vert\genericConstVec\Vert}\right)^2\right)\frac{2\Vert\genericConstVec\Vert^2}{\whatNorm^2},\;
				C_3 = \frac{1}{\whatNorm}\left(\frac{\what^\top\bm{a}}{\whatNorm^2} - \min_{n}\xnT\bm{a} \right)$.} \label{table: convergence rates}
	\end{table}

As we show in the next section Theorem~\ref{theorem: main} implies that $\wvec(t)/\Vert\wvec(t)\Vert$ converges to the normalized max margin separator $\what/\whatNorm$ for poly-exponential tails with $\lossPower>0.25$, but with a different rate than exponential loss. In Appendix section \ref{sec: CALCULATION OF CONVERGENCE RATES} we show that Theorem \ref{theorem: main} implies the convergence rates specified in Table \ref{table: convergence rates}. From this table, we can see that the optimal convergence rate for poly-exponential tails is achieved at $\lossPower=1$. Moreover, this rate becomes slower as $|\lossPower - 1|$ increases, at least in the range $\lossPower>0.25$. 

    Theorem \ref{theorem: main} is proved in appendix section \ref{sec:main-proof}. This Theorem is a generalization of Theorem \ref{theorem: ICLR theorem 3}, and therefore builds the ideas of \cite{soudry2017implicit}, as described non-rigorously in appendix section \ref{sec: Generic Tails}. The main proof is rather long, as we calculate exact asymptotic behavior, including constants in some cases, and do not assume the existence of limits.
    
    \section{Calculation of convergence rates\label{sec: CALCULATION OF CONVERGENCE RATES} for poly-exponential tails} 
    
        From Theorem \ref{theorem: main}, we can write $\wvec(t) = \what g(t) + \rhoVec$ where $\rhoVec = o(g(t))$.	
    	We can use this to calculate the normalized weight vector:
    	\begin{flalign} \label{Calculation of the normalized weight vector}
    	&\frac{\wvec(t)}{\Vert \wvec(t)\Vert} =\frac{g(t)\what +\rhoVec}{\sqrt{g(t)^2\what^\top\what + \rhoVec^\top\rhoVec+2g(t)\what^\top\rhoVec}} =\frac{\what + g^{-1}(t)\rhoVec}{\whatNorm\sqrt{1 +2\frac{\what^\top\rhoVec}{g(t)\whatNorm^2}+ \frac{\Vert\rhoVec\Vert^2}{g^2(t)\whatNorm^2}}}\nonumber\\
    	& \overset{(1)}= \frac{\what +g^{-1}(t)\rhoVec}{\whatNorm}\left[1-\frac{\what^\top\rhoVec}{g(t)\whatNorm^2}+\left[\frac{3}{4}\left(2\frac{\what^\top\rhoVec}{\whatNorm^2}\right)^2-\frac{\Vert\rhoVec\Vert^2}{2\whatNorm^2}\right]\frac{1}{g^2(t)}+O\left(\left(\frac{\what^\top\rhoVec}{g(t)}\right)^3\right)\right]\nonumber\\
    	& = \frac{\what}{\whatNorm}+\left(\frac{\rhoVec}{\whatNorm}-\frac{\what}{\whatNorm}\frac{\what^\top\rhoVec}{\whatNorm^2}\right)\frac{1}{g(t)}
    	+O\left(\left(\frac{\what^\top\rhoVec}{g(t)}\right)^2\right)
    	\frac{\hat{w}}{\norm{\hat{w}}}\nonumber\\
    	& = \frac{\what}{\whatNorm}+\left(I-\frac{\what\what^\top}{\whatNorm^2}\right)\frac{1}{\whatNorm}\frac{\rhoVec}{g(t)}+O\left(\left(\frac{\what^\top\rhoVec}{g(t)}\right)^2\right)
    	\end{flalign}
    	where in $(1)$ we used $\frac{1}{\sqrt{1+x}}=1-\frac{1}{2}x+\frac{3}{4}x^2+O(x^3)$.\\
    	We use eq.~\eqref{Calculation of the normalized weight vector} to calculate the angle:
    	\begin{align} \label{eq: Calculation of the angle}
    	&\frac{\wvec(t)^\top\what}{\wNorm\whatNorm}&\nonumber\\
    	&=\frac{\what^\top}{\whatNorm^2}\left(\what +g^{-1}(t)\rhoVec\right)\left[1-\frac{\what^\top\rhoVec}{g(t)\whatNorm^2}+\left[\frac{3}{4}\left(2\frac{\what^\top\rhoVec}{\whatNorm^2}\right)^2-\frac{\Vert\rhoVec\Vert^2}{2\whatNorm^2}\right]\frac{1}{g^2(t)}+O\left(\left(\frac{\what^\top\rhoVec}{g(t)}\right)^3\right)\right]\nonumber\\
    	& = 1 + \frac{2}{\whatNorm^2}\left[\left(\frac{\rhoVec^\top\what}{\whatNorm \Vert\rhoVec\Vert}\right)^2 - \frac{1}{4}\right]\frac{\Vert\rhoVec\Vert^2}{g^2(t)}+O\left(\left(\frac{\what^\top\rhoVec}{g(t)}\right)^3\right)
    	\end{align}
    	Calculation of the margin:
    	\begin{flalign} \label{eq: Calculation of the margin}
    		&\min_{n} \frac{\xnT \wvec(t)}{\Vert \wvec(t)\Vert}\overset{(1)}{=}\min_{n\in\set} \frac{\xnT \wvec(t)}{\Vert \wvec(t)\Vert}\nonumber\\
    		& =\min_{n\in\set} \xnT\left[\frac{\what}{\whatNorm}+\left(\frac{\rhoVec}{\whatNorm}-\frac{\what}{\whatNorm}\frac{\what^\top\rhoVec}{\whatNorm^2}\right)\frac{1}{g(t)}+O\left(\left(\frac{\what^\top\rhoVec}{g(t)}\right)^2\right)\right]\nonumber\\
    		& =\frac{1}{\whatNorm}+\frac{1}{\whatNorm}\left(\min_{n\in\set}\xnT\rhoVec-\frac{\what^\top\rhoVec}{\whatNorm^2}\right)\frac{1}{g(t)}
    		+O\left(\left(\frac{\what^\top\rhoVec}{g(t)}\right)^2\right),
    	\end{flalign}
    	where in (1) we used the fact that $\frac{\wvec(t)}{\Vert \wvec(t)\Vert}$ converge to the maximum-margin separator and thus the minimal value is obtained on the support vectors.
    	\remove{
        From Theorem \ref{theorem: general convergence rates}, we can also characterize $\rhoVec$: 
    	\begin{equation} \label{eq: rho def generic case appendix}
    	        \norm{\op_1\rhoVec}=\begin{cases} \fsym_2(t)\left(f'\left(\tilde{g}(t)\right)\right)^{-1}+o\left(\left(f'\left(\tilde{g}(t)\right)\right)^{-1}\right), & \text{if } \left(f'\left(\tilde{g}(t)\right)\right)^{-1}=\Omega(1)\\
    	        O\left(1\right), & \text{otherwise}
    	        \end{cases}
    	\end{equation}	
	where $\fsym_2(t)=\Theta(1)$.
      Substituting eq.~\eqref{eq: rho def generic case appendix} into eqs. \ref{Calculation of the normalized weight vector}, \ref{eq: Calculation of the angle}, \ref{eq: Calculation of the margin} we get, for some constant vector $\vect{a}$ independent of $f$:
      \begin{equation} \label{eq: generic margin results}
      \frac{1}{\whatNorm} - \min_{n} \frac{\xnT \wvec(t)}{\Vert \wvec(t)\Vert}  = \begin{cases}
      \dfrac{1}{\whatNorm}\left(\dfrac{\what^\top\genericConstVec}{\whatNorm^2} - \min_{n\in\set}\xnT\genericConstVec\right)\dfrac{\fsym_2(t)}{g(t)f'(g(t))}, & \text{if ${f'(g(t))}=\Omega(1)$}\\
      O(g^{-1}(t))\vphantom{\dfrac{0}{0}}, & \text{otherwise}
      \end{cases}
      \end{equation}
      
	\subsection{Poly-exponential tails } 
    }
      From equations \ref{Calculation of the normalized weight vector}, \ref{eq: Calculation of the angle}  we have:
	\begin{flalign} \label{eq: generic norm results}
		&\left \Vert \frac{\wvec(t)}{\Vert \wvec(t)\Vert} - \frac{\what}{\whatNorm} \right \Vert =\begin{cases}
		\left\Vert\left(I-\dfrac{\what\what^\top}{\whatNorm^2}\right)\genericConstVec\right\Vert\dfrac{1}{\whatNorm}\dfrac{\fsym_2(t)}{g(t)f'(g(t))}, & \text{if $\dfrac{1}{f'(g(t))}=\Omega(1)$}\\
			O(g^{-1}(t))\vphantom{\dfrac{0}{0}}, & \text{otherwise}
		\end{cases}&
	\end{flalign}
    \begin{equation} \label{eq: generic angle results}
	 1 - \frac{\wvec(t)^\top\what}{\wNorm\whatNorm}  = \begin{cases}
		\left(\dfrac{1}{4}-\left(\dfrac{\genericConstVec^\top\what}{\whatNorm \Vert\genericConstVec\Vert}\right)^2\right)\dfrac{2\Vert\genericConstVec\Vert^2}{\whatNorm^2}\dfrac{\fsym_2^2(t)}{\left(g(t)f'(g(t))\right)^2}, & \text{if $\dfrac{1}{f'(g(t))}=\Omega(1)$}\\
			O(g^{-2}(t))\vphantom{\dfrac{0}{0}}, & \text{otherwise}
		\end{cases}
	\end{equation}

	Additionally, from Theorem \ref{theorem: main}, we can write $\wvec(t) = \what g(t) + \rhoVec$, where:
	\begin{flalign} \label{eq: g def in sec about rates}
	& g(t) = \log^{\frac{1}{\lossPower}}(t) + \frac{1}{\lossPower} \log(\lossPower\log^{1-\frac{1}{\lossPower}}(t))\log^{\frac{1}{\lossPower}-1}(t)
	\end{flalign}
	\begin{equation} \label{eq: rho def in sec about rates}
	\bm{\rho}(t) = \begin{cases}
	O(1), & \text{if $\lossPower>1$}.\\
	\frac{1}{\lossPower}g^{1-\lossPower}(t)\bm{a}+o(g^{1-\lossPower}(t)), & \text{if $\frac{1}{4}<\lossPower<1$},
	\end{cases}
	\end{equation}
    and $\bm{a}$ is not dependent on $\lossPower$.\\
We can obtain the normalized weight vector convergence to normalized max margin vector in $L_2$ norm from substituting eqs. \ref{eq: g def in sec about rates}, \ref{eq: rho def in sec about rates} into eq.~\eqref{eq: generic norm results} :
    \begin{equation} 
		\left \Vert \frac{\wvec(t)}{\Vert \wvec(t)\Vert} - \frac{\what}{\whatNorm} \right \Vert =\begin{cases}
		\left\Vert\left(I-\dfrac{\what\what^\top}{\whatNorm^2}\right)\dfrac{\genericConstVec}{\whatNorm}\right\Vert\dfrac{1}{\lossPower \log(t)}+o(\log^{-1}(t)), & \frac{1}{4}<\lossPower\le 1\\
			O(\log^{-\frac{1}{\lossPower}}(t)), & \lossPower\ge1
		\end{cases}
	\end{equation}
	We can also obtain the angle convergence from substituting eqs. \ref{eq: g def in sec about rates}, \ref{eq: rho def in sec about rates} into eq.~\eqref{eq: generic angle results}:
    \begin{equation}
	 1 - \frac{\wvec(t)^\top\what}{\wNorm\whatNorm}  = \begin{cases}
		\left(\dfrac{1}{4}-\left(\dfrac{\genericConstVec^\top\what}{\whatNorm \Vert\genericConstVec\Vert}\right)^2\right)\dfrac{2\Vert\genericConstVec\Vert^2}{\whatNorm^2}\dfrac{1}{\lossPower^2\log^2(t)}, & \frac{1}{4}<\lossPower\le 1\\
			O(\log^{-\frac{2}{\lossPower}}(t)), & \lossPower\ge1
		\end{cases}
	\end{equation}
    We obtain the margin convergence from substituting eqs. \ref{eq: g def in sec about rates}, \ref{eq: rho def in sec about rates} into eq.~\eqref{eq: Calculation of the margin}:
	\begin{equation}
	\frac{1}{\whatNorm} - \min_{n} \frac{\xnT \wvec(t)}{\Vert \wvec(t)\Vert}  = \begin{cases}
		\dfrac{1}{\whatNorm}\left(\dfrac{\what^\top\genericConstVec}{\whatNorm^2} - \min_{n}\xnT\genericConstVec\right)\dfrac{1}{\lossPower \log(t)}, & \frac{1}{4}<\lossPower\le 1\\
			O(\log^{-\frac{1}{\lossPower}}(t)), & \lossPower\ge1
		\end{cases}
	\end{equation}
We can see that in the case of $\lossPower<1$ the rates are smaller for larger $\lossPower$ and that the optimal rates are achieved for $\lossPower=1$.
	
	\subsection{Proof of Theorem \ref{theorem: main} \label{sec:main-proof}}
	In the following proofs, we define $\what$ as the $L_2$ max margin vector, which satisfies eq.~\eqref{w_hat equation}:
	\begin{equation*}
		\what = \argmin_{\wvec\in \mathbb{R}^d} \norm{\wvec}^2 \text{ s.t. } \wvec^\top\xn\ge1
	\end{equation*}
    Let $\mathcal{S}=\{n:\what^\top \xn=1\}$ denote indices of support vectors of $\what$. From the KKT optimality conditions, we have for some $\alpha_n\ge 0$, 
    \begin{equation} \label{eq: what def with alpha}
    	\what = \sumnsv \alpha_n \xn
    \end{equation}
	Let  $\wtilde$ be a vector which satisfies the equations:
	\begin{flalign} \label{eq: wtilde def}
	    &\forall n\in \mathcal{S}\ :\ \eta\e(-\lossPower\xnT\wtilde)=\alpha_n,\ \bar{\mathbf{P}}_1 \wtilde = 0,
	\end{flalign}
	where we recall that we defined $\mathbf{P}_1\in \mathbb{R}^d$ as the orthogonal projection matrix to the subspace spanned by the support vectors, and $\bar{\mathbf{P}}_1=I-\mathbf{P}_1$ as the complementary projection matrix. Equation \ref{eq: wtilde def} has a unique solution for almost every dataset from Lemma 8 in \cite{soudry2017implicit}. 
Furthermore, let $C_i,\epsilon_i,t_i$ ($i\in \mathbb{N}$) be various positive constants which are independent of $t$, and denote, 
	\begin{equation} \label{theta definition}
		\theta = \min_{n\notin \mathcal{S}} \xnT\what > 1
	\end{equation}
    
    
    The following lemmata were proved in \cite{soudry2017implicit} [Lemma~1 and Lemma 5].
	\begin{lemma} \label{Lemma: w(t)->infty}
		Let $\mathbf{w}\left(t\right)$
    be the iterates of gradient descent (eq.~\eqref{GD})
    on a $\beta$-smooth $\mathcal{L}$ and any starting point $\wvec(0)$. If the data is linearly separable, $\ell$ is a strict monotone loss (Definition \ref{def: l(u) assumptions}), and $\eta<2\beta^{-1}$ then we have: (1) $\lim_{t\rightarrow\infty}\mathcal{L}\left(\mathbf{w}\left(t\right)\right)=0$,
	(2) $\lim_{t\rightarrow\infty}\left\Vert \mathbf{w}\left(t\right)\right\Vert =\infty$, and
	(3) $\forall n:\,\lim_{t\rightarrow\infty}\mathbf{w}\left(t\right)^{\top}\mathbf{x}_{n}=\infty$.
	\end{lemma}
    \begin{lemma} \label{Lemma: L converge}
		Let $\mathcal{L}(\wvec)$ be a $\beta$-smooth non-negative objective. If $\eta<2\beta^{-1}$, then for any $\wvec(0)$, with the GD sequence
		\begin{equation}
		\wvec(t+1) = \wvec(t) -\eta \derL(\wvec (t))
		\end{equation}
		we have that $\sum_{u=0}^{\infty}\Vert\derL(\wvec(u))\Vert^2<\infty$ and therefore $\lim_{u\to\infty}\Vert\derL(\wvec(u))\Vert^2=0$.
	\end{lemma}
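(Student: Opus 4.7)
The plan is to apply the standard descent lemma for $\beta$-smooth functions to the GD update, show a per-step decrease in $\mathcal{L}$ proportional to $\|\nabla \mathcal{L}(\w(t))\|^2$, and then telescope using the non-negativity of $\mathcal{L}$ to bound the sum.

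First, I would invoke the descent lemma: since $\mathcal{L}$ is $\beta$-smooth,
\begin{equation*}
\mathcal{L}(\w(t+1)) \le \mathcal{L}(\w(t)) + \nabla \mathcal{L}(\w(t))^\top \bigl(\w(t+1)-\w(t)\bigr) + \frac{\beta}{2}\bigl\|\w(t+1)-\w(t)\bigr\|^2.
\end{equation*}
Substituting the GD update $\w(t+1)-\w(t) = -\eta \nabla \mathcal{L}(\w(t))$ yields
\begin{equation*}
\mathcal{L}(\w(t+1)) \le \mathcal{L}(\w(t)) - \eta\Bigl(1 - \tfrac{\beta\eta}{2}\Bigr)\,\bigl\|\nabla \mathcal{L}(\w(t))\bigr\|^2.
\end{equation*}
Because $\eta < 2\beta^{-1}$, the constant $c := \eta(1-\beta\eta/2)$ is strictly positive, so rearranging gives $c\,\|\nabla \mathcal{L}(\w(t))\|^2 \le \mathcal{L}(\w(t)) - \mathcal{L}(\w(t+1))$.

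Next, I would telescope this inequality from $u=0$ to $u=T$:
\begin{equation*}
c\sum_{u=0}^{T}\bigl\|\nabla \mathcal{L}(\w(u))\bigr\|^2 \le \mathcal{L}(\w(0)) - \mathcal{L}(\w(T+1)) \le \mathcal{L}(\w(0)),
\end{equation*}
where the last inequality uses non-negativity of $\mathcal{L}$. Sending $T\to\infty$ gives $\sum_{u=0}^\infty \|\nabla \mathcal{L}(\w(u))\|^2 \le \mathcal{L}(\w(0))/c < \infty$. The final conclusion $\|\nabla \mathcal{L}(\w(u))\|^2 \to 0$ then follows since the terms of a convergent non-negative series must tend to zero.

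There is no significant obstacle here: the argument is entirely routine and uses only (i) $\beta$-smoothness to get the quadratic upper bound, (ii) the step-size condition $\eta < 2\beta^{-1}$ to extract a positive coefficient on $\|\nabla \mathcal{L}(\w(t))\|^2$, and (iii) non-negativity of $\mathcal{L}$ to bound the telescoping sum. The only mild care needed is to verify the sign of $c$ (which forces the strict inequality $\eta < 2\beta^{-1}$ rather than $\eta \le 2\beta^{-1}$) and to note that the convergence of $\sum \|\nabla \mathcal{L}(\w(u))\|^2$ automatically implies the second conclusion, so no separate argument for $\lim_{u\to\infty}\|\nabla \mathcal{L}(\w(u))\|^2 = 0$ is required.
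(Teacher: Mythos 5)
Your proof is correct and follows exactly the standard descent-lemma-plus-telescoping argument; the paper itself does not reprove this lemma but cites Lemma 5 of \citet{soudry2017implicit}, whose proof is the same argument you give. No gaps.
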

	\noindent From Lemma \ref{Lemma: w(t)->infty}, $\forall n\ :\ \lim_{t\to\infty}\wvec(t)^\top\xn=\infty$. In addition, we assume that the negative loss derivative $-\ell'(u)$ has a poly-exponential tail $e^{-{u^\lossPower}}$. Combining both facts, we have positive constants $\mu_+,\mu_-$ and $\bar{t}$ such that $\forall n, \forall t> \bar{t}$:
	\begin{equation} \label{-l' upper bound}
	\ -\ell'(\wvec(t)^\top \xn) \le (1+\e(-\mu_+(\wvec(t)^\top \xn)^\lossPower))\e(-(\wvec(t)^\top \xn)^\lossPower)
	\end{equation}
	\begin{equation} \label{-l' lower bound}
	\ -\ell'(\wvec(t)^\top \xn) \ge (1-\e(-\mu_- (\wvec(t)^\top \xn)^\lossPower))\e(-(\wvec(t)^\top \xn)^\lossPower)
	\end{equation}
	
	\subsection{Case: $\lossPower>1$}  
    \subsubsection{Definitions and auxiliary calculations \label{sec: Definitions And Auxiliary Lemma}}
	In the following proofs, for any solution $\wvec(t)$, we define 
	\begin{equation} \label{define r(t), beta>1}
		\rvec(t) = \wvec(t)-g(t)\what  - g^{1-\lossPower}(t)\wtilde
	\end{equation}
and the following functions:
	\begin{align}
	g(t) &= \log^{\frac{1}{\lossPower}}(t) + \frac{1}{\lossPower} \log(\lossPower\log^{1-\frac{1}{\lossPower}}(t))\log^{\frac{1}{\lossPower}-1}(t),\\
	h(t) &= \left(1-\frac{1}{\lossPower}\right)\log^{-1}(t)\left(1-\log\left(\lossPower \log^{1-\frac{1}{\lossPower}}(t)\right)\right).
	\end{align}
	We notice that $\exists t_h$ such that $\forall t>t_h: h(t)<0$.\\
	Additionally, $g(t)$ has the following properties, as can be shown using basic analysis:\\
	\begin{equation} \label{diff g o(t^-1), beta>1}
	g(t+1) - g(t) = \Theta(t^{-1}\log^{\frac{1}{\lossPower}-1}(t))
	\end{equation}
	\begin{equation}\label{diff g-beta Theta(...), beta>1}
	g^{1-\lossPower}(t) - g^{1-\lossPower}(t+1) = \log^{\frac{1}{\lossPower}-1}(t) - \log^{\frac{1}{\lossPower}-1}(t+1) = \Theta(t^{-1}\log^{\frac{1}{\lossPower}-2}(t))
	\end{equation}
	\begin{equation} \label{g''(t)}
	\frac{1}{\lossPower t} \log^{\frac{1}{\lossPower}-1}(t)\left[1+h(t)\right] - (g(t+1)-g(t)) = o(t^{-2})
	\end{equation}
	We denote $\bar{C_1} = \frac{1-\lossPower}{\lossPower}$. $\exists m_1(t)=o(\log^{\epsilon}(t)),m_2(t)=o(\log^{\epsilon}(t))$ such that $\forall \epsilon>0$:
	\begin{equation} \label{eq: g(1-beta) first derivative, beta>1}
	    \left( g^{1-\lossPower}(t)\right)' = (1-\lossPower)g^{-\lossPower}(t)g'(t) = \bar{C}_1\frac{1}{t} \log^{\frac{1}{\lossPower}-2}(t) + \frac{1}{t} \log^{\frac{1}{\lossPower}-3}(t)m_1(t)
	\end{equation}
	\begin{equation} \label{eq: g(1-beta) second derivative, beta>1}
	    \left( g^{1-\lossPower}(t)\right)'' = \frac{1}{t^2}\log^{\frac{1}{\lossPower}-2}(t)m_2(t)
	\end{equation}
	In addition:
	\begin{flalign}
	&g^\lossPower(t) = \left(\log^{\frac{1}{\lossPower}}(t) + \frac{1}{\lossPower} \log(\lossPower\log^{1-\frac{1}{\lossPower}}(t))\log^{\frac{1}{\lossPower}-1}(t)\right)^\lossPower &\nonumber\\
	&  =\left(\log^{\frac{1}{\lossPower}}(t) \left(1 + \frac{1}{\lossPower} \log(\lossPower\log^{1-\frac{1}{\lossPower}}(t))\log^{-1}(t)\right)\right)^\lossPower \nonumber
	\ge \log(t) + \log(\lossPower\log^{1-\frac{1}{\lossPower}}(t)),\ \forall t>t_B \,,
	\end{flalign}
	where in the last transition we used $\lossPower \ge 1$, $ \exists t_B $ such that $$ \forall t>t_B:\ \frac{1}{\lossPower} \log(\lossPower\log^{1-\frac{1}{\lossPower}}(t))\log^{-1}(t)\ge0  $$ and Bernoulli's inequality:
	\begin{equation} \label{Bernoulli's inequality,r>=1, x>=-1}
	\forall r \ge 1,x \ge -1: (1+x)^r \ge 1+rx
	\end{equation}
	Therefore, $ \forall a>0 $
	\begin{flalign} \label{e^g^beta upper bound}
	\e(-a g^\lossPower(t)) \le \e\left(-a \left(\log(t) + \log(\lossPower\log^{1-\frac{1}{\lossPower}}(t))\right)\right)=t^{-a}\left(\frac{1}{\lossPower}\log^{\frac{1}{\lossPower}-1}(t)\right)^{a}.
	\end{flalign}
	\subsubsection{Proof Of Theorem~\ref{theorem: main} for $\lossPower>1$}
	Our goal is to show that $\Vert \rvec(t) \Vert$ is bounded, and therefore $\rhoVec = \rvec(t)+g^{1-\lossPower}(t)\wtilde$ is bounded. To show this, we will upper bound the following equation
	\begin{equation} \label{norm r(t+1)}
		\Vert\rvec(t+1)\Vert^2 = \Vert\rvec(t+1)-\rvec(t)\Vert^2 + 2\left(\rvec(t+1)-\rvec(t)\right)^\top\rvec(t)+\Vert\rvec(t)\Vert^2
	\end{equation}
	First, we note that the first term in this equation can be upper bounded by
	\begin{flalign} \label{norm(r(t+1)-r(t)}
	&|| \vect{r}(t+1) - \vect{r}(t) ||^2&\nonumber\\
	&\overset{(1)}{=} ||\wvec(t+1) - g(t+1)\what - g^{1-\lossPower}(t+1) \wtilde - \wvec(t) + g(t)\what + g^{1-\lossPower}(t) \wtilde||^2 \nonumber\\
	& \overset{(2)}{=} ||-\eta\nabla L(\wvec(t)) - \what(g(t+1)-g(t)) - \wtilde(g^{1-\lossPower}(t+1) - g^{1-\lossPower}(t))||^2\nonumber\\
	& = \eta^2||\nabla L(\wvec(t))||^2 + ||\what||^2 (g(t+1)-g(t))^2 + ||\wtilde||^2 (g^{1-\lossPower}(t+1) - g^{1-\lossPower}(t))^2\nonumber\\
	&+2\eta(g(t+1)-g(t)) \what^\top  \nabla L(\wvec(t))
	+2\eta(g^{1-\lossPower}(t+1)-g^{1-\lossPower}(t)) \wtilde^\top  \nabla L(\wvec(t))\nonumber\\
	& + 2(g(t+1)-g(t))(g^{1-\lossPower}(t+1) - g^{1-\lossPower}(t)) \what^\top  \wtilde\nonumber\\
	&\overset{(3)}{\le} \eta^2||\nabla L(\wvec(t))||^2 +o(t^{-1}\log^{\frac{1}{\lossPower}-2}(t)),
	\end{flalign}
	where in (1) we used eq.~\eqref{define r(t), beta>1}, in (2) we used eq.~\eqref{GD} and in (3) we used eq.~\eqref{diff g o(t^-1), beta>1}, eq.~\eqref{diff g-beta Theta(...), beta>1}, Lemma \ref{Lemma: L converge}, and also that
	\begin{equation}
		\what^\top \derL(\wvec(t)) = \sumn \ell'(\wvec(t)^\top\xn)\what^\top\xn\le0
	\end{equation}
	since $\what^\top\xn\ge1$ from the definition of $\what$ and $\ell^{\prime}(u) \leq 0$.\\
	Also, from Lemma \ref{Lemma: L converge}  we know that:
	\begin{equation} \label{eq: derL converge}
		\Vert\derL(\wvec(u))\Vert^2=o(1) \text{ and } \sum_{u=0}^{\infty}\Vert\derL(\wvec(u))\Vert^2<\infty
	\end{equation}
	Substituting eq.~\eqref{eq: derL converge} into eq.~\eqref{norm(r(t+1)-r(t)}, and recalling that the power series $t^{-1}\log^{-v}(t)$ converges for any $\nu>1$, we can find $C_0$ such that
	\begin{equation} \label{sum norm diff r bound}
		|| \vect{r}(t+1) - \vect{r}(t) ||^2 = o(1) \text{ and } \sum_{t=0}^{\infty}|| \vect{r}(t+1) - \vect{r}(t) ||^2 = C_0<\infty
	\end{equation}
	This equation also implies that
	\begin{equation}
		\forall\epsilon_0,\ \exists t_0\ :\ \forall t>t_0\ :\ ||| \vect{r}(t+1) - \vect{r}(t) |||<\epsilon_0
	\end{equation}
	Next, we would like to bound the second term in eq.~\eqref{norm r(t+1)}.
    To do so, will use the following Lemma, which we will prove in appendix \ref{proof of lemma 4}
	\begin{restatable}{lemma}{rBoundLemmaBetaGE} \label{Lemma: (r(t+1)-r(t)r(t) bound for beta>1}
		We have
		\begin{equation} \label{eq: (r(t+1)-r(t)r(t) bound for beta>1}
			\exists C_1,t_1\ :\ \forall t>t_1 \ :\ \left(\rvec(t+1)-\rvec(t)\right)^\top\rvec(t)\le C_1 t^{-1}\left(\log(t)\right)^{-1-\frac{1}{2}\left(1-\frac{1}{\lossPower}\right)}
		\end{equation}
		Additionally, $\forall \epsilon_1>0,\ \exists C_2,t_2$, such that $\forall t>t_2$, if
		\begin{equation} \label{P1r(t)>epsilon, beta>1}
			\Vert \vect{P}_1\rvec(t)\Vert\ge\epsilon_1
		\end{equation}
		then the following improved bounds holds
		\begin{equation} \label{eq: (r(t+1)-r(t)r(t) improved bound for beta>1}
		 \left(\rvec(t+1)-\rvec(t)\right)^\top\rvec(t)\le -C_2 t^{-1}\log^{\frac{1}{\lossPower}-1}(t)<0
		\end{equation}
		
	\end{restatable}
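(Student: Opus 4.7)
The plan is to expand $\rvec(t+1)-\rvec(t) = -\eta \nabla\mathcal{L}(\wvec(t)) - \what(g(t+1)-g(t)) - \wtilde(g^{1-\lossPower}(t+1) - g^{1-\lossPower}(t))$ via the GD update, take the inner product with $\rvec(t)$, and show that the gradient's leading asymptotic behavior is essentially cancelled by the two drift terms, leaving a controlled remainder. First I would split the gradient into support ($n\in\mathcal{S}$) and non-support contributions. Using the tight poly-exponential bounds \eqref{-l' upper bound}--\eqref{-l' lower bound}, replace $-\ell'(u)$ by $\e(-u^\lossPower)$ up to a multiplicative $(1\pm \e(-\mu_\pm u^\lossPower))$ correction. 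For non-support vectors, $\wvec(t)^\top \xn \ge \theta\, g(t)$ to leading order with $\theta > 1$, so by \eqref{e^g^beta upper bound} their contribution decays like $t^{-\theta^{\lossPower}}$ up to log factors, which is far below the target scale.

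For support vectors I would substitute $\wvec(t)^\top \xn = g(t) + g^{1-\lossPower}(t)\wtilde^\top\xn + \rvec(t)^\top \xn$ (using $\what^\top\xn=1$) and Taylor-expand $(\wvec(t)^\top\xn)^\lossPower$ to second order around $g(t)$, obtaining $(\wvec(t)^\top\xn)^\lossPower \approx g^\lossPower(t) + \lossPower\wtilde^\top\xn + \lossPower g^{\lossPower-1}(t)\rvec(t)^\top\xn$ plus quadratic remainders. Exponentiating and applying the definition $\eta\e(-\lossPower \xnT\wtilde)=\alpha_n$ from \eqref{eq: wtilde def} gives $\eta\,\e(-(\wvec(t)^\top\xn)^\lossPower) \approx \alpha_n\,\e(-g^\lossPower(t))\bigl(1-\lossPower g^{\lossPower-1}(t)\rvec(t)^\top\xn\bigr)$. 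Summing against $\xn$, the zeroth-order term reproduces $\e(-g^\lossPower(t))\,\what$ by \eqref{eq: what def with alpha}, and the specific form of $g$ is engineered so that $\e(-g^\lossPower(t))$ matches $g(t+1)-g(t)$ to the order prescribed by \eqref{g''(t)} and \eqref{e^g^beta upper bound}. Similarly, the first-order correction in $\wtilde^\top\xn$ (arriving through the quadratic Taylor term and the multiplicative $\wtilde^\top\xn$ factor) matches the drift term $\wtilde\bigl(g^{1-\lossPower}(t+1)-g^{1-\lossPower}(t)\bigr)$ via \eqref{eq: g(1-beta) first derivative, beta>1}--\eqref{eq: g(1-beta) second derivative, beta>1}, so these leading pieces cancel in $\rvec(t+1)-\rvec(t)$.

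Taking the inner product with $\rvec(t)$, the remaining linear-in-$\rvec(t)$ contribution from the support-vector block is proportional to $-g^{\lossPower-1}(t)\e(-g^\lossPower(t))\sum_{n\in\mathcal{S}}\alpha_n(\rvec(t)^\top\xn)^2$, which is non-positive and, since $\vect{P}_1$ is the projection onto $\mathrm{span}\{\xn\}_{n\in\mathcal{S}}$, is bounded above by $-c\,g^{\lossPower-1}(t)\e(-g^\lossPower(t))\,\Vert\vect{P}_1\rvec(t)\Vert^2$. Using $g^{\lossPower-1}(t)\e(-g^\lossPower(t)) = \Theta(t^{-1}\log^{\frac{1}{\lossPower}-1}(t))$ from \eqref{diff g o(t^-1), beta>1}, this yields the strict decrease \eqref{eq: (r(t+1)-r(t)r(t) improved bound for beta>1} whenever $\Vert\vect{P}_1\rvec(t)\Vert\ge\epsilon_1$. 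Without that lower bound, the Cauchy--Schwarz estimate $|\rvec(t)^\top\xn|\le \Vert\vect{P}_1\rvec(t)\Vert\,\Vert\xn\Vert + O(1)$ (using that $\bar{\vect{P}}_1\rvec(t)$ is controlled separately through a dimension-matching argument) combined with the second-order Taylor remainders and the $\wtilde$-matching error gives the general bound in \eqref{eq: (r(t+1)-r(t)r(t) bound for beta>1}, where the exponent $-1-\tfrac{1}{2}(1-\tfrac{1}{\lossPower})$ arises from pairing $t^{-1}\log^{\frac{1}{\lossPower}-1}(t)$ with a $\log^{\frac{1}{2}(1-\frac{1}{\lossPower})-\frac{1}{\lossPower}}(t)$ factor coming from $\Vert\rvec(t)\Vert\cdot g^{\lossPower-1}(t)$.

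The main obstacle is the bookkeeping: tracking all the log exponents through the Taylor expansion, the $-\ell'$ tail slack, and the $g,g^{1-\lossPower}$ drift corrections, and ensuring every leftover cross-term lies below the advertised $t^{-1}\log^{-1-\frac12(1-1/\lossPower)}(t)$ scale. A secondary subtlety is that the Taylor expansion requires $g^{\lossPower-1}(t)\rvec(t)^\top\xn$ to stay small, which in turn requires $\Vert\rvec(t)\Vert$ to be bounded --- so the application of the lemma is really a bootstrap: the bound on $(\rvec(t+1)-\rvec(t))^\top\rvec(t)$ feeds back into \eqref{norm r(t+1)} (via \eqref{sum norm diff r bound}) to keep $\Vert\rvec(t)\Vert$ bounded, which retroactively justifies the expansion.
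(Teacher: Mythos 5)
Your high-level skeleton (expand the increment, split support from non-support vectors, get $t^{-\theta^\lossPower}$ decay for the latter, and extract a negative term proportional to $\Vert\vect{P}_1\rvec(t)\Vert^2$ for the improved bound) matches the paper, but there is a genuine gap in how you treat the support-vector terms. Your argument linearizes $\e\left(-\lossPower g^{\lossPower-1}(t)\,\rvec(t)^\top\xn\right)\approx 1-\lossPower g^{\lossPower-1}(t)\rvec(t)^\top\xn$, which is only valid when $g^{\lossPower-1}(t)\,|\rvec(t)^\top\xn|$ is small, i.e.\ essentially when $\Vert\rvec(t)\Vert$ is already known to be bounded. But this lemma is exactly the tool the paper uses (through eqs.~\ref{norm r(t+1)} and \ref{sum norm diff r bound}) to \emph{prove} that $\Vert\rvec(t)\Vert$ is bounded, and it is stated unconditionally; your closing remark that the application is ``really a bootstrap'' does not set up the induction that would be needed to break the circularity, and the paper does not need one. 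Instead, the paper's proof of the support-vector bound (eq.~\ref{3rd term bound, beta>1}) never Taylor-expands in $\rvec(t)^\top\xn$: it splits by the sign and magnitude of $\xkT\rvec(t)$, and when $|\xkT\rvec(t)|$ is large it shows the whole bracket is nonpositive directly --- for $\xkT\rvec(t)>0$ using $x\e(-x)\le1$ and the fact that $\e(-\lossPower g^{\lossPower-1}(t)\xkT\rvec(t))$ falls below $1+h(t)+f(t)$, and for $\xkT\rvec(t)<0$ using the lower tail bound \eqref{-l' lower bound} to show the actual loss term dominates the reference term. This unconditional case analysis is the missing idea.

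A second, related problem is your accounting for the exponent in the general bound \eqref{eq: (r(t+1)-r(t)r(t) bound for beta>1}. In the paper it arises from the explicit threshold $|\xkT\rvec(t)|\le C_0\log^{-\frac{1}{2}-\frac{1}{2\lossPower}}(t)$ in the case split: pairing this with the scale $t^{-1}\log^{\frac{1}{\lossPower}-1}(t)$ gives exactly $t^{-1}\log^{-1-\frac{1}{2}\left(1-\frac{1}{\lossPower}\right)}(t)$, and the complementary case is bounded by zero. Your proposed pairing of $t^{-1}\log^{\frac{1}{\lossPower}-1}(t)$ with a factor ``coming from $\Vert\rvec(t)\Vert\cdot g^{\lossPower-1}(t)$'' does not produce this exponent (the powers of $\log t$ do not add up), and without the threshold device there is no mechanism in your sketch that yields a summable bound of the advertised order. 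So the proposal as written would not close: you need either the paper's sign/magnitude case analysis with the $\log^{-\frac{1}{2}-\frac{1}{2\lossPower}}(t)$ cutoff, or a genuinely different (and currently unspecified) way to control the support-vector terms without assuming $\Vert\rvec(t)\Vert$ bounded.
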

   
    From eq.~\eqref{eq: (r(t+1)-r(t)r(t) bound for beta>1} in Lemma \ref{Lemma: (r(t+1)-r(t)r(t) bound for beta>1}, we can find $C_1,t_1$ such that $\forall t>t_1$:
	\begin{equation} \label{eq2: (r(t+1)-r(t)r(t) bound for beta>1}
	 \left(\rvec(t+1)-\rvec(t)\right)^\top\rvec(t)\le C_1 t^{-1}\left(\log(t)\right)^{-1-\frac{1}{2}\left(1-\frac{1}{\lossPower}\right)}
	\end{equation}
	Thus, by combining eqs. \ref{eq2: (r(t+1)-r(t)r(t) bound for beta>1} and \ref{sum norm diff r bound} into eq.~\eqref{norm r(t+1)}, we find
	\begin{align*}
		&\Vert \rvec (t) \Vert^2 - \Vert \rvec (t_1) \Vert^2\\
		& = \sum_{u=t_1}^{t-1}\left[\Vert \rvec (u+1) \Vert^2 - \Vert \rvec (u) \Vert^2\right]\\
		& \le C_0+2 \sum_{u=t_1}^{t-1} C_1 t^{-1}\left(\log(t)\right)^{-1-\frac{1}{2}\left(1-\frac{1}{\lossPower}\right)}
	\end{align*}
	which is bounded, since $\lossPower>1$. Therefore, $\Vert \rvec (t) \Vert$ is bounded.
	
	\subsubsection{Proof Of Lemma \ref{Lemma: (r(t+1)-r(t)r(t) bound for beta>1}} \label{proof of lemma 4}
	Recall that we defined $\rvec(t) = \wvec(t)-g(t)\what  - g^{1-\lossPower}(t)\wtilde$. $\what$ and $\wtilde$ were defined in section \ref{sec:main-proof}.
	\rBoundLemmaBetaGE*
	We examine the expression we wish to bound, recalling that $\rvec(t) = \wvec(t)-g(t)\what  - g^{1-\lossPower}(t)\wtilde$:
	\begin{flalign} \label{(r(t+1)-r(t))r(t) bound, beta>1}
	&(\rvec(t+1) - \rvec(t))^\top  \rvec(t) &\nonumber\\
	&=(-\eta\nabla L(\wvec(t)) - \what(g(t+1)-g(t)) - \wtilde(g^{1-\lossPower}(t+1) - g^{1-\lossPower}(t)))^\top \rvec(t)\nonumber\\
	& = -\eta\sumn \ell'(\wvec(t)^\top \xn)\xnT\rvec(t) - \what^\top \rvec(t)(g(t+1)-g(t)) - \wtilde^\top  \rvec(t) (g^{1-\lossPower}(t+1) - g^{1-\lossPower}(t))\nonumber\\
	& \overset{(1)}{=} \what^\top \rvec(t)\left[\frac{1}{\lossPower}t^{-1}\log^{\frac{1}{\lossPower}-1}(t)\left(1+h(t)\right) - (g(t+1)-g(t))\right]  -\eta\sumnnsv \ell'(\wvec(t)^\top \xn)\xnT\rvec(t)\nonumber\nonumber\\
	&-\eta\sumnsv \left[\frac{1}{\lossPower}t^{-1}\log^{\frac{1}{\lossPower}-1}(t)\left(1+h(t)\right) \e(-\lossPower\xnT \wtilde) + \ell'(\wvec^\top (t)\xn)\right]\xnT\rvec(t)\nonumber\\
	& -\wtilde^\top  \rvec(t)(g^{1-\lossPower}(t+1) - g^{1-\lossPower}(t)-\left( g^{1-\lossPower}(t)\right)')-\wtilde^\top  \rvec(t) \left( g^{1-\lossPower}(t)\right)' \nonumber\\
	& \overset{(2)}{\le} \what^\top \rvec(t)\left[\frac{1}{\lossPower}t^{-1}\log^{\frac{1}{\lossPower}-1}(t)\left(1+h(t)\right) - (g(t+1)-g(t))\right]  -\eta\sumnnsv \ell'(\wvec(t)^\top \xn)\xnT\rvec(t)\nonumber\nonumber\\
	&-\eta\sumnsv \left[\gamma_n  t^{-1}\log^{\frac{1}{\lossPower}-2}h_3(t) +\frac{1}{\lossPower}t^{-1}\log^{\frac{1}{\lossPower}-1}(t)\left(1+h(t)\right) \e(-\lossPower\xnT \wtilde)\right.\nonumber\\
	& \left. + \ell'(\wvec^\top (t)\xn)\right]\xnT\rvec(t) -\wtilde^\top  \rvec(t)(g^{1-\lossPower}(t+1) - g^{1-\lossPower}(t)-\left( g^{1-\lossPower}(t)\right)')\nonumber\\
	& \overset{(3)}{\le} \what^\top \rvec(t)\left[\frac{1}{\lossPower}t^{-1}\log^{\frac{1}{\lossPower}-1}(t)\left(1+h(t)\right) - (g(t+1)-g(t))\right]  -\eta\sumnnsv \ell'(\wvec(t)^\top \xn)\xnT\rvec(t)\nonumber\nonumber\\
	&-\eta\sumnsv \left[\gamma_n  t^{-1}\log^{\frac{1}{\lossPower}-2}h_3(t) +\frac{1}{\lossPower}t^{-1}\log^{\frac{1}{\lossPower}-1}(t)\left(1+h(t)\right) \e(-\lossPower\xnT \wtilde)\right.\nonumber\\
	& \left. + \ell'(\wvec^\top (t)\xn)\right]\xnT\rvec(t) + o(t^{-1}\log^{\frac{1}{\lossPower}-2}(t)),
	\end{flalign}
	where in (1) we used eqs. \ref{eq: what def with alpha}, \ref{eq: wtilde def} to obtain $\what = \sumnsv \alpha_n \xn = \sumnsv \exp(-\wtilde^\top \xn) \xn$. In (2) we defined $h_3(t) = o(\log^{\epsilon}(t)), \forall \epsilon>0$ and used eq.~\eqref{eq: g(1-beta) first derivative, beta>1} and the fact that from $\wtilde$ definition (eq.~\eqref{eq: wtilde def}) we can find $ \{\gamma_n\}_{n=1}^d $ such that:
	\begin{equation}
	    \wtilde = -\eta \sumnsv \gamma_n \xn
	\end{equation}
	In (3) we used eq.~\eqref{eq: g(1-beta) second derivative, beta>1} and the fact that $\wtilde^\top \rvec(t)=o(t)$ since
	\begin{align*}
    	&\wtilde^\top\rvec(t) = \wtilde^\top\left(\wvec(0) - \eta \sum_{u=0}^{t-1}\derL(\wvec(u)) - g(t)\what - g^{1-\lossPower}(t) \wtilde \right)\\
    	&\le -\eta t \min_{0\le u \le t} \wtilde^\top\derL(\wvec(u)) + O(g(t)) = o(t),
	\end{align*}
	where in the last line we used that $\derL(\wvec(t))=o(1)$, from Lemma \ref{Lemma: L converge}.\\
	We examine the three terms in eq.~\eqref{(r(t+1)-r(t))r(t) bound}.
	The first term can be upper bounded $ \forall t>t_1'$ by
	\begin{align} \label{eq: first term bound, beta>1}
	&\what^\top \rvec(t)\left[\frac{1}{\lossPower}t^{-1}\log^{\frac{1}{\lossPower}-1}(t)\left(1+h(t)\right) - (g(t+1)-g(t))\right]\nonumber\\
	&\overset{(1)}{\le} \max\left[\what^\top  \mathbf{P}_1 \rvec(t),0\right] C_1 t^{-2}\log^{\frac{1}{\lossPower}-1}(t)\nonumber\\
	& \overset{(2)}{\le} \left\{
	\begin{array}{ll}
	||\what||\epsilon_1 C_3 t^{-2}\log^{\frac{1}{\lossPower}-1}(t)&  ,\text{if $||\mathbf{P}_1\rvec(t)||\le \epsilon_1$}\\
	o(t^{-1}\log^{\frac{1}{\lossPower}-1}(t))&,  \text{if $||\mathbf{P}_1\rvec(t)||> \epsilon_1$}.
	\end{array}
	\right.
	\end{align}
	where in (1) we used the fact that $$\frac{1}{\lossPower}t^{-1}\log^{\frac{1}{\lossPower}-1}(t)\left(1+h(t)\right) - (g(t+1)-g(t))=\Theta(t^{-2}\log^{\frac{1}{\lossPower}-1}(t))$$
	and therefore $\exists t_1',C_3$ such that 
	\[
	\forall t>t_1': \frac{1}{\lossPower}t^{-1}\log^{\frac{1}{\lossPower}-1}(t)\left(1+h(t)\right) - (g(t+1)-g(t))\le C_3 t^{-2}\log^{\frac{1}{\lossPower}-1}(t).
	\]
	In (2) we used $\what^\top \rvec(t)=o(t)$.\\
	Next, we want to upper bound the second term in eq.~\eqref{(r(t+1)-r(t))r(t) bound, beta>1}. If $\xn^\top\rvec(t)\ge0$ then we can show that $\exists t_c$ so that $\forall t>t_c$
    \begin{equation} \label{eq: (w(t)x)^v expansion for beta>1}
        (\wvec(t)^\top  \xn)^\lossPower \ge (\what^\top  \xn)^{\lossPower}g^\lossPower(t)+\lossPower (\what^\top  \xn)^{\lossPower-1}\wtilde^\top  \xn + \xnT\rvec(t).
    \end{equation}
	\begin{proof}
	    \begin{align*}
	    &(\wvec(t)^\top  \xn)^\lossPower \overset{(1)}{=} -\left(g(t)\what^\top  \xn + g^{1-\lossPower}(t)\wtilde^\top  \xn + \xnT\rvec(t) \right)^\lossPower\nonumber\\
	    &=(g(t)\what^\top  \xn)^{\lossPower} 
    	\left(1 + \frac{g^{1-\lossPower}(t)\wtilde^\top  \xn + \xnT\rvec(t)}{g(t)\what^\top  \xn} \right)^\lossPower \nonumber\\
    	&\overset{(2)}{\ge} (g(t)\what^\top  \xn)^{\lossPower} 
    	\left(1 + \lossPower\frac{g^{1-\lossPower}(t)\wtilde^\top  \xn + \xnT\rvec(t)}{g(t)\what^\top  \xn} \right)\nonumber\\
    	&= (\what^\top  \xn)^{\lossPower}g^\lossPower(t)+\lossPower (\what^\top  \xn)^{\lossPower-1}\wtilde^\top  \xn + \lossPower g^{\lossPower-1}(t) (\what^\top  \xn)^{\lossPower-1} \xnT\rvec(t)\\
    	&\overset{(3)}{\ge} (\what^\top  \xn)^{\lossPower}g^\lossPower(t)+\lossPower (\what^\top  \xn)^{\lossPower-1}\wtilde^\top  \xn + \xnT\rvec(t), \forall t>t_c
	\end{align*}
	where in (1) we used $\rvec(t)$ definition (eq.~\eqref{define r(t), beta>1}),
	in (2) we used $\lossPower \ge 1$, Bernoulli's inequality (eq.~\eqref{Bernoulli's inequality,r>=1, x>=-1}) and the fact that $ \exists t'$ such that $\forall t>t'$: 
	$$ g^{-\lossPower}(t) \frac{\wtilde^\top  \xn }{\what^\top  \xn} \ge -1$$
	and therefore:
	$$\frac{g^{-(\lossPower-1)}(t)\wtilde^\top  \xn + \rvec(t)^\top  \xn}{g(t)\what^\top  \xn}
	= \frac{g^{-\lossPower}(t)\wtilde^\top  \xn }{\what^\top  \xn} 
	+ \frac{\rvec(t)^\top  \xn}{g(t)\what^\top  \xn}
	\ge-1.$$
	In (3) we used the fact that 
	$ \exists t'' $ such that $ \forall t>t'': \lossPower g^{\lossPower-1}(t)(\what^\top  \xn)^{\lossPower-1} \ge 1$ (since $\lossPower \ge 1$).
	We define $t_c=\max\left( t',t_B,t''\right)$.
	\end{proof}
	Using this result, we upper bound the second term in eq.~\eqref{(r(t+1)-r(t))r(t) bound, beta>1}, $ \forall t>t_3 $:
	\begin{flalign} \label{2nd term bound, beta>1}
	&-\eta\sumnnsv \ell'(\wvec(t)^\top \xn)\xnT\rvec(t)&\nonumber\\
	&\overset{(1)}{\le} \eta\sumnnsvp (1+\e(-\mu_+(\wvec(t)^\top  \xn)^\lossPower))\e(-(\wvec(t)^\top  \xn)^\lossPower ) \xnT\rvec(t) \nonumber\\
	&\overset{(2)}{\le} \eta\sumnnsvp 2\e(-(\wvec(t)^\top  \xn)^\lossPower ) \xnT\rvec(t) \nonumber\\
	&\overset{(3)}{\le} 2\eta\sumnnsvp \e(-(\what^\top  \xn)^{\lossPower}g^\lossPower(t))
	\e\left(-\lossPower (\what^\top  \xn)^{\lossPower-1}\wtilde^\top  \xn\right) \e(- \xnT\rvec(t)) \xnT\rvec(t) \nonumber\\
	&\overset{(4)}{\le} 2\eta\sumnnsvp \lossPower^{-\theta^\lossPower} t^{-\theta^\lossPower} (\log(t))^{(\frac{1}{\lossPower}-1)\theta^\lossPower} \e\left(-\lossPower (\what^\top  \xn)^{\lossPower-1}\wtilde^\top  \xn\right) \e(- \xnT\rvec(t)) \xnT\rvec(t) \nonumber\\
	&\overset{(5)}{\le} 2\eta N \lossPower^{-\theta^\lossPower} t^{-\theta^\lossPower} (\log(t))^{(\frac{1}{\lossPower}-1)\theta^\lossPower} \e\left(-\lossPower\min_{n\notin \mathcal{S}}\left( (\what^\top  \xn)^{\lossPower-1}\wtilde^\top  \xn\right) \right)\nonumber\\
	&\le \tilde{C}_2 t^{-\theta^\lossPower} (\log(t))^{(\frac{1}{\lossPower}-1)\theta^\lossPower},\ \forall t>t_3,
	\end{flalign}
	where in (1) we used eq.~\eqref{-l' upper bound}, in (2) we used the fact that $\wvec^\top\xn\to\infty$ (from Lemma \ref{Lemma: w(t)->infty}) and thus $\exists t_L$ so that $\forall t>t_L: (1+\e(-\mu_+(\wvec(t)^\top  \xn)^\lossPower))\le2$, 
	in (3) we used eq.~\eqref{eq: (w(t)x)^v expansion for beta>1}, in (4) we used $\theta = \min_{n\notin \mathcal{S}} \what^\top  \xn >1 $ and eq.~\eqref{e^g^beta upper bound}
	and in (5) we used $ \xnT \rvec\ge0 $ and also that $ x \e(-x) \le 1$.
	We define $t_3=\max(\bar{t},t_L,t_c)$ and $$ \tilde{C}_2= 2\eta N \lossPower^{-\theta^\lossPower} \e\left(-\lossPower\min_{n\notin \mathcal{S}}\left( (\what^\top  \xn)^{\lossPower-1}\wtilde^\top  \xn\right) \right)$$
	Lastly, we will aim to bound the third term in eq.~\eqref{(r(t+1)-r(t))r(t) bound, beta>1}
	\begin{equation} \label{3rd term bound, beta>1}
	\eta \sumnsv \left[-\gamma_n t^{-1}\log^{\frac{1}{\lossPower}-2}h_3(t) -\frac{1}{\lossPower}t^{-1}\log^{\frac{1}{\lossPower}-1}(t)\left(1+h(t)\right) \e(-\lossPower\xnT \wtilde) - \ell'(\wvec^\top (t)\xn)\right]\xnT\rvec(t)
	\end{equation}
	We examine each term $k$ in this sum, and divide into two cases, depending on the sign of $\xkT \rvec (t)$.\\
	First, if $\xkT \rvec(t) \ge 0$ then term $k$ in eq.~\eqref{3rd term bound, beta>1} can be upper bounded $\forall t>\bar{t}$ using eq.~\eqref{-l' upper bound} by
	\begin{flalign} \label{3rd term: xkr>0}
	&\Big[\left(1+\left[\lossPower^{-1} t^{-1} (\log(t))^{\left(\frac{1}{\lossPower}-1\right)}\right]^{\mu_+} \e\left(-\mu_+ \lossPower \wtilde^\top\xk \right)\right) \e\left( - \lossPower g^{\lossPower-1}(t) \xkT\rvec(t) \right)  &\nonumber\\
	&-(1+h(t)+f(t))\Big]\left(\eta \frac{1}{\lossPower}t^{-1}\log^{\frac{1}{\lossPower}-1}(t) \e(-\lossPower\xkT \wtilde)\right)\xkT r(t),
	\end{flalign}
	where we defined $ f(t)=\gamma_k \lossPower \e(\lossPower\xkT \wtilde) \log^{-1}(t)h_3(t)$.\\
	We further divide into cases:\\
	1. If $|\xkT r|\le C_0 \log^{-\frac{1}{2}-\frac{1}{2\lossPower}}(t)$ then we can upper bound eq.~\eqref{3rd term: xkr>0} with
	\begin{equation}
	2 \eta C_0  \frac{1}{\lossPower}t^{-1}\log^{-1-\frac{1}{2}\left(1-\frac{1}{\lossPower}\right)}(t) \e(-\lossPower\xkT \wtilde)
	\end{equation}
	2. If $|\xkT r|> C_0 \log^{-\frac{1}{2}-\frac{1}{2\lossPower}}(t)$ then we can upper bound eq.~\eqref{3rd term: xkr>0} with zero since:
	\begin{flalign}
	&\Big[\left(1+\left[\lossPower^{-1} t^{-1} (\log(t))^{\left(\frac{1}{\lossPower}-1\right)}\right]^{\mu_+} \e\left(-\mu_+ \lossPower \wtilde^\top\xk \right)\right) \e\left( - C_0 \log^{-\frac{1}{2}-\frac{1}{2\lossPower}}(t) \right)-(1+h(t)+f(t))\Big]\nonumber\\
	& \overset{(1)}{\le} \Big[\left(1+\left[\lossPower^{-1} t^{-1} (\log(t))^{\left(\frac{1}{\lossPower}-1\right)}\right]^{\mu_+} \e\left(-\mu_+ \lossPower \wtilde^\top\xk \right)\right) \left( 1- C_0 \log^{-\frac{1}{2}-\frac{1}{2\lossPower}}(t) +  C_0^2 \log^{-1-\frac{1}{\lossPower}}(t) \right)\nonumber\\
    &-(1+h(t)+f(t))\Big]\nonumber\\
	& \le \left[\left( 1- C_0 \log^{-\frac{1}{2}-\frac{1}{2\lossPower}}(t) +  C_0^2 \log^{-1-\frac{1}{\lossPower}}(t) \right)\left[\lossPower^{-1} t^{-1} (\log(t))^{\left(\frac{1}{\lossPower}-1\right)}\right]^{\mu_+} \e\left(-\mu_+ \lossPower \wtilde^\top\xk\right)\right.\nonumber\\
	& \left. - C_0 \log^{-\frac{1}{2}-\frac{1}{2\lossPower}}(t) +  C_0^2 \log^{-1-\frac{1}{\lossPower}}(t)-h(t)-f(t)\right] \overset{(2)}{\le} 0,\ \forall t>t_4,
	\end{flalign}
	where in (1) we used the fact that $e^{-x}\le1-x+x^2$ for $x>0$ and in (2) we defined $t_4$ so that $\forall t>t_4>\bar{t}$ the previous expression is negative - this is possible because $\log^{-\frac{1}{2}-\frac{1}{2\lossPower}}(t)$ decreases slower than $h(t)$ and $f(t)$ ($\forall \lossPower>1:\ -\frac{1}{2}-\frac{1}{2\lossPower}>-1$).\\
	3. If, in addition, $ |\xkT\rvec| \ge \epsilon_2 $, then we can find $C_4$ so that we can upper bound eq.~\eqref{3rd term: xkr>0} with
	\begin{equation} \label{eq: positive case c}
	-C_4 \eta \frac{1}{\lossPower}t^{-1}\log^{\frac{1}{\lossPower}-1}(t) \e(-\lossPower\max_{n}(\xnT \wtilde))\epsilon_2,\ \forall t>\bar{t}
	\end{equation}
	Second, if $\xkT \rvec(t) < 0$, we again further divide into cases:\\
	1. If $ |\xkT\rvec| \le C_0\log^{-\frac{1}{2}-\frac{1}{2\lossPower}} $, then, since $ -\ell'(\wvec(t)^\top \xk)>0 $, we can upper bound term $k$ in equation \ref{3rd term bound, beta>1} by
	\begin{equation*}
	C_0\eta \frac{1}{\lossPower}t^{-1}\log^{-1-\frac{1}{2}\left(1-\frac{1}{\lossPower}\right)}(t) \e(-\lossPower\min_n(\xnT \wtilde))
	\end{equation*}
	2. If $ |\xkT\rvec| \ge C_0\log^{-\frac{1}{2}-\frac{1}{2\lossPower}} $, then, using eq.~\eqref{-l' lower bound} we can upper bound term $k$ in equation \ref{3rd term bound, beta>1}, $\forall t>t_5$ by
	\begin{flalign} \label{3rd term, xr<0 not bounded, beta>1}
	& \eta \left[\frac{1}{\lossPower}t^{-1}\log^{\frac{1}{\lossPower}-1}(t) \e(-\lossPower\xkT \wtilde) -\left(1-\exp\left(-\mu_-\left(\wvec(t)^\top  \xk \right)^\lossPower\right)\right)\e\left(-\left(\wvec(t)^\top  \xk \right)^\lossPower \right) \right]|\xkT\rvec(t)|
	\end{flalign}
	We used the fact that $f(t)=o(h(t))$ and therefore $\exists t_5>\bar{t}$ such that $\forall t>t_5:\ h(t)+f(t)\le0$.\\
	We can use Taylor's theorem to show that:
	\begin{flalign} \label{binomial expansion for ()^beta, beta>1}
	&\left(g(t) + g^{1-\lossPower}(t)\wtilde^\top  \xn \right)^\lossPower= \log(t) + \log(\lossPower \log^{1-\frac{1}{\lossPower}}(t)) + \lossPower\wtilde^\top  \xk+ f_2(t),
	\end{flalign}
	where $|f_2(t)|=o(\log^{-\frac{1}{\lossPower}}(t))$.\\
	From Lemma \ref{Lemma: w(t)->infty} we know that $ \wvec(t)^\top \xn \to \infty $ and therefore $\exists t_6>t_5$ so that $\forall t>t_6$:
	$$\left(1-\exp\left(-\mu_-\left(\wvec(t)^\top  \xk \right)^\lossPower\right)\right)\ge0$$
	$$ \wvec(t)^\top \xn = g(t)\what^\top  \xn + g^{1-\lossPower}(t)\wtilde^\top  \xn + \rvec(t)^\top  \xn >0  $$
	Using the last equations and the fact that  $\forall a>1,\ x\in[-1,0]:\ (1+x)^a\le(1+x) $ and\\ $ -1\le\frac{\rvec(t)^\top  \xk}{g(t) + g^{1-\lossPower}(t)\wtilde^\top  \xk}\le0 $, we can show that $\forall t>t_6$:
	\begin{flalign} \label{g+...+xr bound, xr<0, beta>1}
		&\left(g(t) + g^{1-\lossPower}(t)\wtilde^\top  \xk +\rvec(t)^\top  \xk \right)^\lossPower &\nonumber\\
		& \le \left(g(t) + g^{1-\lossPower}(t)\wtilde^\top  \xk \right)^\lossPower +\left(g(t) + g^{1-\lossPower}(t)\wtilde^\top  \xk \right)^{\lossPower-1}\rvec(t)^\top  \xk\nonumber\\
		& \le \log(t) + \log(\lossPower \log^{1-\frac{1}{\lossPower}}(t)) + \lossPower\wtilde^\top  \xk+ f_2(t) +\left(g(t) + g^{1-\lossPower}(t)\wtilde^\top  \xk \right)^{\lossPower-1}\rvec(t)^\top  \xk\nonumber\\
		& \le \log(t) + \log(\lossPower \log^{1-\frac{1}{\lossPower}}(t)) + \lossPower\wtilde^\top  \xk+ \log^{-\frac{1}{\lossPower}}(t) +\left(g(t) + g^{1-\lossPower}(t)\wtilde^\top  \xk \right)^{\lossPower-1}\rvec(t)^\top  \xk\nonumber\\
	\end{flalign}
	 Using eq.~\eqref{g+...+xr bound, xr<0, beta>1}, eq.~\eqref{3rd term, xr<0 not bounded, beta>1} can be upper bounded by
	 \begin{flalign*}
	 & \eta \frac{1}{\lossPower}t^{-1}\log^{\frac{1}{\lossPower}-1}(t) \e(-\lossPower\xnT \wtilde) \Big[1 -\left(1-e^{-\mu_-\left(\wvec(t)^\top  \xk \right)^\lossPower}\right) \\
     & \e\left(-\left(g(t) + g^{1-\lossPower}(t)\wtilde^\top  \xk \right)^{\lossPower-1}\rvec(t)^\top  \xk - \log^{-\frac{1}{\lossPower}}(t)\right) \Big]|\xkT\rvec(t)|
	 \end{flalign*}
	Next, we will show that $\exists t''>t_6$ such that the last expression is strictly negative $\forall t>t''$. Let $M>1$ be some arbitrary constant. Then, since $\exp\left(-\mu_-\left(\wvec(t)^\top  \xk \right)^\lossPower\right)\to0$ from Lemma \ref{Lemma: w(t)->infty}, $\exists t_M>\bar{t}$ such that $\forall t>t_M$ and if $\e(-\rvec(t)^\top\xk)\ge M>1$ then
	\begin{equation} \label{3rd term []>1, first case beta>1}
	\left(1-\exp\left(-\mu_-\left(\wvec(t)^\top  \xk \right)^\lossPower\right)\right)\e\left(\left(g(t) + g^{1-\lossPower}(t)\wtilde^\top  \xk \right)^{\lossPower-1}|\rvec(t)^\top  \xk| - \log^{-\frac{1}{\lossPower}}(t)\right)\ge M'>1
	\end{equation}
	Furthermore, if $\exists t>t_M$ such that $\e(-\rvec(t)^\top\xk)< M$, then
	\begin{flalign} \label{3rd term []>1, sec case beta>1}
	&\left(1-\exp\left(-\mu_-\left(\wvec(t)^\top  \xk \right)^\lossPower\right)\right)\e\left(\left(g(t) + g^{1-\lossPower}(t)\wtilde^\top  \xk \right)^{\lossPower-1}|\rvec(t)^\top  \xk| - \log^{-\frac{1}{\lossPower}}(t)\right)&\nonumber\\
	&\overset{(1)}{\ge} \left(1-\left[t^{-1}\lossPower^{-1}\log^{\frac{1}{\lossPower}-1}(t)e^{-\lossPower\wtilde^\top\xk}M\right]^{\mu_-}\right)\e\left(\left(g(t) + g^{1-\lossPower}(t)\wtilde^\top  \xk \right)^{\lossPower-1}|\rvec(t)^\top  \xk| - \log^{-\frac{1}{\lossPower}}(t)\right)\nonumber\\
	&\overset{(2)}{\ge} \left(1-\left[t^{-1}\lossPower^{-1}\log^{\frac{1}{\lossPower}-1}(t)e^{-\lossPower\wtilde^\top\xk}M\right]^{\mu_-}\right)\e\left(\log^{-1+\frac{5}{4}\left(1-\frac{1}{\lossPower}\right)}(t)- \log^{-\frac{1}{\lossPower}}(t)\right)\nonumber\\
	&\overset{(3)}{\ge} \left(1-\left[t^{-1}\lossPower^{-1}\log^{\frac{1}{\lossPower}-1}(t)e^{-\lossPower\wtilde^\top\xk}M\right]^{\mu_-}\right)\left(1+\log^{-1+\frac{5}{4}\left(1-\frac{1}{\lossPower}\right)}(t)- \log^{-\frac{1}{\lossPower}}(t)\right)\nonumber\\
	&\ge 1+\log^{-1+\frac{5}{4}\left(1-\frac{1}{\lossPower}\right)}(t)- \log^{-\frac{1}{\lossPower}}(t)\nonumber\\
	&- \left[t^{-1}\lossPower^{-1}\log^{\frac{1}{\lossPower}-1}(t)e^{-\lossPower\wtilde^\top\xk}M\right]^{\mu_-}\left(1+\log^{-1+\frac{5}{4}\left(1-\frac{1}{\lossPower}\right)}(t)- \log^{-\frac{1}{\lossPower}}(t)\right)\nonumber\\
	&>1,
	\end{flalign}
	where in transition (1) we used eq.~\eqref{Bernoulli's inequality,r>=1, x>=-1} and in transition (2) we used:
	\begin{flalign*}
	&\left(g(t) + g^{1-\lossPower}(t)\wtilde^\top  \xk \right)^{\lossPower-1}|\rvec(t)^\top  \xk| - \log^{-\frac{1}{\lossPower}}(t)\ge \left(g(t) + g^{1-\lossPower}(t)\wtilde^\top  \xk \right)^{\lossPower-1}C_0\log^{-\frac{1}{2}-\frac{1}{2\lossPower}} - \log^{-\frac{1}{\lossPower}}(t)\\
	&\ge \log^{-1+\frac{5}{4}\left(1-\frac{1}{\lossPower}\right)}(t)- \log^{-\frac{1}{\lossPower}}(t)
	\end{flalign*}
	and in (3) we used $e^x\ge1+x$.
	eq.~\eqref{3rd term []>1, sec case beta>1} is greater than 1 since $\log^{-1+\frac{5}{4}\left(1-\frac{1}{\lossPower}\right)}(t)$ decrease slower than the other terms.
	Therefore, after we substitute eqs. \ref{3rd term []>1, first case beta>1} and \ref{3rd term []>1, sec case beta>1} into eq.~\eqref{3rd term, xr<0 not bounded, beta>1}, we find that $\exists t'_->t''$ such that $\forall t>t'_-$ term  $k$ in equation \ref{3rd term bound} is strictly negative.\\
	3. If $ |\xkT\rvec| \ge \epsilon_2 $, then $\exists t_7,\epsilon_3$ such that $\forall t>t_7$
	\begin{flalign}
	&\left(g(t) + g^{1-\lossPower}(t)\wtilde^\top  \xk \right)^{\lossPower-1}|\rvec(t)^\top  \xk| - \log^{-\frac{1}{\lossPower}}(t)>\epsilon_3
	\end{flalign}
	and we can find $ C_5 $ such that we can upper bound term $k$ in equation \ref{3rd term bound, beta>1} $\forall t>t_7$ by
	\begin{equation} \label{eq: negative case c}
	-\eta C_6 \frac{1}{\lossPower}t^{-1}\log^{\frac{1}{\lossPower}-1}(t) \e(-\lossPower\max_n(\xnT \wtilde)) \epsilon_2
	\end{equation}
	To conclude, we choose $t_{0}=\max\left[t_1',t_3,t_4,t_6,t_{-}',t_7\right]$: 
	\begin{enumerate}
		\item If $\left\Vert \mathbf{P}_{1}\mathbf{r}\left(t\right)\right\Vert \geq\epsilon_{1}$
		(as in eq.~\eqref{eq: first term bound, beta>1}), we have that 
		\begin{equation}
		\max_{n\in\set}\left|\mathbf{x}_{n}^{\top}\mathbf{r}\left(t\right)\right|^{2}\overset{\left(1\right)}{\geq}\frac{1}{\left|\set\right|}\sum_{n\in\set}\left|\mathbf{x}_{n}^{\top}\mathbf{P}_{1}\mathbf{r}\left(t\right)\right|^{2}=\frac{1}{\left|\set\right|}\left\Vert \mathbf{X}_{\set}^{\top}\mathbf{P}_{1}\mathbf{r}\left(t\right)\right\Vert ^{2}\overset{\left(2\right)}{\geq}\frac{1}{\left|\set\right|}\sigma_{\min}^{2}\left(\mathbf{X}_{\set}\right)\epsilon_{1}^{2}\label{eq: epsilon 2},
		\end{equation}
		where in $\left(1\right)$ we used $\mathbf{P}_{1}^{\top}\mathbf{x}_{n}=\mathbf{x}_{n}$
		$\forall n\in\set$, in $\left(2\right)$ we denoted $\mathbf{X}_{\set}\in \mathbb{R}^{d\times |\mathcal{S}|}$ as the matrix whose columns are the support vectors and by $\sigma_{\min}\left(\mathbf{X}_{\set}\right)$,
		the minimal non-zero singular value of $\mathbf{X}_{\set}$ and used
		eq.  \ref{P1r(t)>epsilon, beta>1}. Therefore, for some $k$, $\left|\mathbf{x}_{k}^{\top}\mathbf{r}\right|\geq\epsilon_{2}\triangleq\left|\set\right|^{-1}\sigma_{\min}^{2}\left(\mathbf{X}_{\set}\right)\epsilon_{1}^{2}$.
		In this case, we denote $C_{0}^{\prime\prime}$ as the minimum between
		$C_{6}\eta\lossPower^{-1}\exp\left(-\lossPower\max_{n}\tilde{\mathbf{w}}^{\top}\mathbf{x}_{n}\right)\epsilon_{2}$ (eq.~\eqref{eq: negative case c}) and $C_{4}\eta\lossPower^{-1}\exp\left(-\lossPower\max_{n}\tilde{\mathbf{w}}^{\top}\mathbf{x}_{n}\right)\epsilon_{2}$
		(eq.~\eqref{eq: positive case c}). Then we find that eq.~\eqref{3rd term bound, beta>1}
		can be upper bounded by $-C_{0}^{\prime\prime}t^{-1}\log^{\frac{1}{\lossPower}-1}(t)+o\left(t^{-1}\log^{\frac{1}{\lossPower}-1}(t)\right)$,
		$\forall t>t_{0}$, given eq.~\eqref{P1r(t)>epsilon, beta>1}. Substituting
		this result, together with eqs. \ref{eq: first term bound, beta>1} and \ref{2nd term bound, beta>1}
		into eq.~\eqref{(r(t+1)-r(t))r(t) bound, beta>1}, we obtain $\forall t>t_{0}$ 
		\[
		\left(\mathbf{r}\left(t+1\right)-\mathbf{r}\left(t\right)\right)^{\top}\mathbf{r}\left(t\right)\leq-C_{0}^{\prime\prime}t^{-1}\log^{\frac{1}{\lossPower}-1}(t)+o\left(t^{-1}\log^{\frac{1}{\lossPower}-1}(t)\right)\,.
		\]
		This implies that $\exists C_{2}<C_{0}^{\prime\prime}$ and $\exists t_{2}>t_{0}$
		such that eq.~\eqref{eq: (r(t+1)-r(t)r(t) improved bound for beta>1} holds. This implies also that
		eq.~\eqref{eq: (r(t+1)-r(t)r(t) bound for beta>1} holds for $\left\Vert \mathbf{P}_{1}\mathbf{r}\left(t\right)\right\Vert \geq\epsilon_{1}$. 
		\item Otherwise, if $\left\Vert \mathbf{P}_{1}\mathbf{r}\left(t\right)\right\Vert <\epsilon_{1}$,
		we find that $\forall t>t_{0}$ , each term in eq.~\eqref{3rd term bound, beta>1}
		can be upper bounded by either zero, or terms proportional to $t^{-1}\log^{-1-\frac{1}{2}\left(1-\frac{1}{\lossPower}\right)}$. Combining this together with eqs. \ref{eq: first term bound, beta>1},
		\ref{2nd term bound, beta>1} into eq.~\eqref{(r(t+1)-r(t))r(t) bound, beta>1}
		we obtain (for some positive constants $C_{7}$, $C_{8}$, $C_{9}$,
		and $C_{6}$) 
		\[
		\left(\rvec(t+1)-\rvec(t)\right)^\top\rvec(t)\le C_7 t^{-2}\log^{\frac{1}{\lossPower}-1}(t) + C_8  t^{-\theta^\lossPower} (\log(t))^{(\frac{1}{\lossPower}-1)\theta^\lossPower} + C_9 t^{-1}\log^{-1-\frac{1}{2}\left(1-\frac{1}{\lossPower}\right)}(t)
		\]
		Therefore, $\exists t_{1}>t_{0}$ and $C_{1}$ such that eq.~\eqref{eq: (r(t+1)-r(t)r(t) bound for beta>1}
		holds.
	\end{enumerate}
	\newpage
	\subsection{$\frac{1}{4}<\lossPower<1$} \label{sec: defenitions for beta<1}
	In the following proofs, for any solution $\wvec(t)$, we define 
	\begin{equation} \label{define r(t), beta<1}
	\vect{r}(t) = \vect{w}(t) - g(t)\what - g^{1-\lossPower}(t) \wtilde + \frac{1}{\lossPower}g^{1-\lossPower}(t)\htilde(t)\what + \frac{1}{\lossPower}g^{1-\lossPower}(t)\log^{-1}(t)\wcheck+ \frac{1}{\lossPower}g^{1-2\lossPower}(t)\wcheckSec,
	\end{equation}
	where $\htilde(t)$ is defined below (subsection \ref{sec: auxRes beta smaller than 1}), $\what$ follows the conditions of Theorem \ref{theorem: main}, that is $\what$ is the $L_2$ max margin vector, which satisfies eq.~\eqref{w_hat equation}:
	\begin{equation*}
	\what = \argmin_{\wvec(t)\in \mathbb{R}^d} \wNorm^2 \text{ s.t. } \wvec(t)^\top\xn\ge1
	\end{equation*}
	and $\wtilde$ is a vector which satisfies the equations:
	\begin{flalign}
	    &\forall n\in \mathcal{S}\ :\ \eta\e(-\lossPower\xnT\wtilde)=\alpha_n,\ \bar{\mathbf{P}}_1 \wtilde = 0
	\end{flalign}
	From $\wtilde$ definition, we know that $\exists \{\lambda_n\}_{n=1}^d$ such that
	\begin{equation} \label{eq: wtilde lc of sv, beta smaller than 1}
	    \forall n\in \set\ : \ \wtilde=\sumnsv \lambda_n \xn
	\end{equation}
	Using the last equation, we define $\wcheck$, a vector which satisfies the equations:
	\begin{flalign} \label{eq: wcheck def}
	    &\forall n\in \set\ : \ \wcheck^\top \xn=\frac{\lossPower\bar{C_1}\lambda_n}{\eta}\exp(\lossPower\xnT\wtilde) \triangleq \tilde{\gamma}_n,\ \bar{\mathbf{P}}_1 \wcheck = 0,
	\end{flalign}
	where $\bar{C_1} = \frac{1-\lossPower}{\lossPower}$.\\
	$\wcheckSec$ is a vector which satisfies the equations:
	\begin{flalign} \label{eq: wcheckSec def}
	    &\forall n\in \set\ : \ \wcheckSec^\top\xn=\frac{\lossPower(\lossPower-1)}{2}(\wtilde^\top\xn)^2,\ \bar{\mathbf{P}}_1 \wcheckSec = 0
	\end{flalign}
	These equations have a unique solution for almost every dataset from Lemma 8 in \cite{soudry2017implicit}. 
	\subsubsection{Auxiliary results} \label{sec: auxRes beta smaller than 1}
	We will denote the functions:
	\begin{flalign} \label{eq: beta smaller than 1 functions def}
		& g(t) = \log^{\frac{1}{\lossPower}}(t) + \frac{1}{\lossPower} \log(\lossPower\log^{1-\frac{1}{\lossPower}}(t))\log^{\frac{1}{\lossPower}-1}(t)\nonumber\\
		& h(t) = \left(1-\frac{1}{\lossPower}\right)\log^{-1}(t)\left(1-\log\left(\lossPower \log^{1-\frac{1}{\lossPower}}(t)\right)\right)\nonumber\\
		& h_2(t) =\frac{1}{t} \left[\left(-1+\left(\frac{1}{\lossPower}-1\right)\log^{-1}(t)\right)\left(1+h(t)\right)
		+\left(1-\frac{1}{\lossPower}\right)\log^{-2}(t)\left(\log\left(\lossPower\log^{1-\frac{1}{\lossPower}}(t)\right)-\frac{1}{\lossPower}\right)\right]\nonumber\\
		& \htilde(t) = h(t) + \frac{(\lossPower-1)}{2\lossPower} \log^{-1}(t)\log^2(\lossPower\log^{1-\frac{1}{\lossPower}}(t))
	\end{flalign}
	Using basic analysis, it is straightforward to show that these functions have the following properties:\\
	\begin{equation} \label{diff g Theta(t^-1), beta<1}
		g(t+1) - g(t) = \Theta(t^{-1}\log^{\frac{1}{\lossPower}-1}(t))
	\end{equation}
	\begin{equation}\label{diff g-beta Theta(t^-1), beta<1}
		g^{1-\lossPower}(t+1) - g^{1-\lossPower}(t) = \log^{\frac{1}{\lossPower}-1}(t+1) - \log^{\frac{1}{\lossPower}-1}(t) = \Theta(t^{-1}\log^{\frac{1}{\lossPower}-2}(t))
	\end{equation}
	\begin{equation} 
	\left\vert \frac{1}{\lossPower t} \log^{\frac{1}{\lossPower}-1}(t)\left[1+h(t)+\frac{1}{2}h_2(t)\right] - (g(t+1)-g(t)) \right\vert = \Theta(t^{-3}\log^{\frac{1}{\lossPower}-1}(t))
	\end{equation}
	\begin{equation} \label{gh=o(...)}
		\left\vert g^{1-\lossPower}(t)h(t) - g^{1-\lossPower}(t+1)h(t+1)\right\vert = o(t^{-1}\log^{\frac{1}{\lossPower}-2})
	\end{equation}
	We denote $\bar{C_1} = \frac{1-\lossPower}{\lossPower}$. $\exists m_1(t)=o(\log^{\epsilon}(t)),m_2(t)=o(\log^{\epsilon}(t))$ such that $\forall \epsilon>0$:
	\begin{equation} \label{eq: g(1-beta) first derivative}
	    \left( g^{1-\lossPower}(t)\right)' = (1-\lossPower)g^{-\lossPower}(t)g'(t) = \bar{C}_1\frac{1}{t} \log^{\frac{1}{\lossPower}-2}(t) + \frac{1}{t} \log^{\frac{1}{\lossPower}-3}(t)m_1(t)
	\end{equation}
	\begin{equation}
	    \left( g^{1-\lossPower}(t)\right)'' = \frac{1}{t^2}\log^{\frac{1}{\lossPower}-2}m_2(t)
	\end{equation}
	\begin{equation}
	    \left( g^{1-\lossPower}(t)\right)''' = o(t^{-2-\epsilon})
	\end{equation}
	There exists $ m_3(t)=o(\log^{\epsilon}(t)),m_4(t)=o(\log^{\epsilon}(t))$ such that $\forall \epsilon>0$:
	\begin{align} 
	    (g^{1-\lossPower}(t)\htilde(t))'&= \frac{1}{t} \log^{\frac{1}{\lossPower}-3}(t)m_3(t) \label{eq: (g^(1-lossPower)htilde)'} \\
	    (g^{1-\lossPower}(t)\htilde(t))''&= \frac{1}{t^2} \log^{\frac{1}{\lossPower}-3}(t)m_4(t)\\
	    (g^{1-\lossPower}(t)\htilde(t))'''&= o(t^{-2-\epsilon})
	\end{align}
	There exists $m_5(t)=o(\log^{\epsilon}(t)),m_6(t)=o(\log^{\epsilon}(t))$ such that $\forall \epsilon>0$:
	\begin{align} 
	    (g^{1-\lossPower}(t)\log^{-1}(t))'&= \frac{1}{t} \log^{\frac{1}{\lossPower}-3}(t)m_5(t) \label{eq: g^(1-v)/log = o(...)}\\
	    (g^{1-\lossPower}(t)\log^{-1}(t))''&= \frac{1}{t^2} \log^{\frac{1}{\lossPower}-3}(t)m_6(t)\\
	    (g^{1-\lossPower}(t)\log^{-1}(t))'''&= o(t^{-2-\epsilon})
	\end{align}
	There exists $\tilde{C}_1$ and $m_7(t)=o(\log^{\epsilon}(t)),m_8(t)=o(\log^{\epsilon}(t))$ such that $\forall \epsilon>0$:
	\begin{align} 
	    \left( g^{1-2\lossPower}(t)\right)' &= \frac{1}{t} \log^{\frac{1}{\lossPower}-3}(t)m_7(t) \label{eq: ( g^(1-2lossPower))'}\\
	    \left( g^{1-2\lossPower}(t)\right)'' &= \frac{1}{t^2}\log^{\frac{1}{\lossPower}-3}m_8(t)\\
	    \left( g^{1-2\lossPower}(t)\right)''' &= o(t^{-2-\epsilon})  \label{eq: g(1-2beta) third derivative}
	\end{align}
	Combining these properties, for arbitrary constants $\alpha_1,...,\alpha_4$ we get:
	\begin{flalign}
	    &\alpha_1\left(\left( g^{1-\lossPower}(t)\right)'+\frac{1}{2}\left( g^{1-\lossPower}(t)\right)'' \right)
	    +\alpha_2\left((g^{1-\lossPower}(t)\htilde(t))'+\frac{1}{2}(g^{1-\lossPower}(t)\htilde(t))'' \right)&\nonumber\\
	    &+\alpha_3\left((g^{1-\lossPower}(t)\log^{-1}(t))'+\frac{1}{2}g^{1-\lossPower}(t)\log^{-1}(t))'' \right)
	     +\alpha_4\left(\left( g^{1-2\lossPower}(t)\right)'+\frac{1}{2}\left( g^{1-2\lossPower}(t)\right)'' \right)\nonumber\\
	     & = \alpha_1\bar{C}_1 \frac{1}{t} \log^{\frac{1}{\lossPower}-2}(t) + \frac{1}{t}\log^{\frac{1}{\lossPower}-3}\tilde{m}_1(t) + \frac{1}{t^2}\log^{\frac{1}{\lossPower}-2}\tilde{m}_2(t),
	\end{flalign}
	where $\forall \epsilon>0:$ $ \tilde{m}_1(t)=o(\log^{\epsilon}(t)),\tilde{m}_2(t)=o(\log^{\epsilon}(t))$.\\
	We can use Taylor's theorem to show that:
	\begin{flalign*}
	&g^\lossPower(t) = \log(t) + \log(\lossPower \log^{1-\frac{1}{\lossPower}}(t)) +
	\frac{(\lossPower-1)}{2\lossPower} \log^{-1}(t)\log^2(\lossPower\log^{1-\frac{1}{\lossPower}}(t))\\
	&+\frac{\left(\lossPower-1\right)\left(\lossPower-2\right)}{6\lossPower^2}\log^{-2}(t)\log^3\left(\lossPower\log^{1-\frac{1}{\lossPower}}(t)\right) + o\left(\log^{-2}(t)\log^3\left(\lossPower\log^{1-\frac{1}{\lossPower}}(t)\right)\right)
	\end{flalign*}
	and also that

	\begin{flalign} \label{binomial expansion for (g+g(1-beta))^beta, beta<1}
	 &\Bigg(g(t)\what^\top  \xn + g^{1-\lossPower}(t)\underbrace{\left(\wtilde^\top  \xn -\frac{1}{\lossPower}\htilde(t)\what^\top\xn - \frac{1}{\lossPower}\log^{-1}(t)\wcheck^\top\xn
	- \frac{1}{\lossPower}g^{-\lossPower}(t)\wcheckSec^\top\xn
	\right)}_{\displaystyle \triangleq \ftilde(t)}\Bigg)^\lossPower \nonumber\\
	& = g^\lossPower(t)(\what^\top  \xn)^\lossPower + \lossPower(\what^\top  \xn)^{\lossPower-1} \ftilde(t)+ \frac{\lossPower(\lossPower-1)}{2} g^{-\lossPower}(t)\left(\ftilde(t)\right)^2 (\what^\top  \xn)^{\lossPower-2} \nonumber \\
	& + \frac{\lossPower\left(\lossPower-1\right)\left(\lossPower-2\right)}{6}g^{-2\lossPower}(t)\left(\ftilde(t)\right)^3 (\what^\top  \xn)^{\lossPower-3}+ o(g^{-2\lossPower}(t))\nonumber\\	
	& = (\what^\top  \xn)^\lossPower\log(t) + (\what^\top  \xn)^\lossPower \log(\lossPower \log^{1-\frac{1}{\lossPower}}(t))+ \lossPower(\what^\top  \xn)^{\lossPower-1} \ftilde(t)+ \ftilde_2(t)+ f_2(t),
	\end{flalign}
	where we defined
	\begin{align*}
	&\ftilde(t) \triangleq \wtilde^\top  \xn -\frac{1}{\lossPower}\htilde(t)\what^\top\xn - \frac{1}{\lossPower}\log^{-1}(t)\wcheck^\top\xn
	- \frac{1}{\lossPower}g^{-\lossPower}(t)\wcheckSec^\top\xn,\\
	&\ftilde_2(t) \triangleq \frac{\lossPower(\lossPower-1)}{2} g^{-\lossPower}(t)\left(\wtilde^\top  \xn \right)^2 (\what^\top  \xn)^{\lossPower-2}+\frac{(\lossPower-1)}{2\lossPower} (\what^\top  \xn)^\lossPower \log^{-1}(t)\log^2(\lossPower\log^{1-\frac{1}{\lossPower}}(t)).
	\end{align*}
	We note that $|f_2(t)|=o(\log^{-2+\epsilon}(t)),\ \forall \epsilon>0$ .
	\newpage
	
	\subsubsection{Proof Of main Theorem ($\frac{1}{4}<\lossPower<1$)}
	Our goal is to show that $\Vert \rvec(t) \Vert=o(g^{1-\lossPower}(t))$ , and therefore\\ $\rhoVec = \rvec(t)+g^{1-\lossPower}(t)\wtilde+o(g^{1-\lossPower}(t))=O(g^{1-\lossPower}(t))$. To show this, we will upper bound the following equation
	\begin{equation} \label{norm r(t+1), beta<1}
	\Vert\rvec(t+1)\Vert^2 = \Vert\rvec(t+1)-\rvec(t)\Vert^2 + 2\left(\rvec(t+1)-\rvec(t)\right)^\top\rvec(t)+\Vert\rvec(t)\Vert^2
	\end{equation}
	First, we note that the first term in eq.~\eqref{norm r(t+1), beta<1} can be upper bounded by
	\begin{flalign} \label{norm(r(t+1)-r(t), beta<1}
	&\left\Vert \vect{r}(t+1) - \vect{r}(t) \right\Vert^2&\nonumber\\
	&\overset{(1)}{=} \left\Vert\wvec(t+1) - \left(g(t+1)-\frac{1}{\lossPower}g^{1-\lossPower}(t+1)\htilde(t+1)\right)\what - g^{1-\lossPower}(t+1) \wtilde +\frac{1}{\lossPower}g^{1-\lossPower}(t+1)\log^{-1}(t+1)\wcheck \right.\nonumber\\
	& \left. + \frac{1}{\lossPower}g^{1-2\lossPower}(t+1)\wcheckSec - \wvec(t) + \left(g(t)-\frac{1}{\lossPower}g^{1-\lossPower}(t)\htilde(t)\right)\what + g^{1-\lossPower}(t) \wtilde-\frac{1}{\lossPower}g^{1-\lossPower}(t)\log^{-1}(t)\wcheck- \frac{1}{\lossPower}g^{1-2\lossPower}(t)\wcheckSec\right\Vert^2 \nonumber\\
	& \overset{(2)}{=} \left\Vert-\eta\nabla L(\wvec(t)) - \what\left(g(t+1)-g(t)\right) +\bm{\psi}_1(t) \right\Vert^2\nonumber\\
	& \overset{(3)}{=} \eta^2\left\Vert \nabla L(\wvec(t))\right\Vert^2 +2\eta\left(g(t+1)-g(t)\right) \what^\top  \nabla L(\wvec(t)) +o\left(t^{-1}\log^{\frac{1}{\lossPower}-2}(t)\right) \nonumber\\
	&\overset{(4)}{\le} \eta^2\left\Vert\nabla L(\wvec(t))\right\Vert^2 +C t^{-1}\log^{\frac{1}{\lossPower}-2}(t),
	\end{flalign}
	where in (1) we used  $$\vect{r}(t) = \vect{w}(t) - g(t)\what - g^{1-\lossPower}(t) \wtilde + \frac{1}{\lossPower}g^{1-\lossPower}(t)\htilde(t)\what + \frac{1}{\lossPower}g^{1-\lossPower}(t)\log^{-1}(t)\wcheck+ \frac{1}{\lossPower}g^{1-2\lossPower}(t)\wcheckSec$$ (eq.~\eqref{define r(t), beta<1}), in (2) we used $\wvec(t+1) = \wvec(t) - \eta \nabla \mathcal{L}(\wvec(t))$ (eq.~\eqref{GD}) and denoted 
	\begin{multline*}
	    \bm{\psi}_1(t)=-\what\left( \frac{1}{\lossPower}g^{1-\lossPower}(t)\htilde(t)-\frac{1}{\lossPower}g^{1-\lossPower}(t+1)\htilde(t+1)\right)- \wtilde\left(g^{1-\lossPower}(t+1) - g^{1-\lossPower}(t)\right)\\
	    -\frac{1}{\lossPower}\wcheck\left(g^{1-\lossPower}(t)\log^{-1}(t)-g^{1-\lossPower}(t+1)\log^{-1}(t+1)\right)
	-\frac{1}{\lossPower}\wcheckSec\left(g^{1-2\lossPower}(t)-g^{1-2\lossPower}(t+1)\right).    
	\end{multline*}
	In (3) we used $g(t+1)-g(t)=\Theta\left( t^{-1}\log^{\frac{1}{\lossPower}-1}(t)\right)$ (from eq.~\eqref{diff g Theta(t^-1), beta<1}) and $\norm{\bm{\psi}_1}=O\left(t^{-1}\log^{\frac{1}{\lossPower}-2}(t)\right)$ (from eqs. \ref{diff g-beta Theta(t^-1), beta<1}, \ref{eq: g(1-beta) first derivative}, \ref{eq: (g^(1-lossPower)htilde)'}, \ref{eq: g^(1-v)/log = o(...)}, \ref{eq: ( g^(1-2lossPower))'}) and also $ \nabla \mathcal{L}(\wvec(t))=o(1)$ from lemma 5. In 4 we used that
	\begin{equation*}
	\what^\top \derL(\wvec(t)) = \sumn \ell'(\wvec(t)^\top\xn)\what^\top\xn\le0
	\end{equation*}
	since $\what^\top\xn\ge1$ from the definition of $\what$ and $\ell^{\prime}(u) \leq 0$ and defined $C>0$.\\
        Next, we will use the following the following Lemma \ref{Lemma: (r(t+1)-r(t)r(t) bound for beta<1} which we prove in appendix \ref{proof of lemma (r(t+1)-r(t)r(t) upper bound for beta<1}:
    	\begin{restatable}{lemma}{rBoundLemmaBetaLE} \label{Lemma: (r(t+1)-r(t)r(t) bound for beta<1}
		We have
		\begin{equation} \label{eq: (r(t+1)-r(t)r(t) bound for beta<1}
		\exists C_1,t_1\ :\ \forall t>t_1 \ :\ \left(\rvec(t+1)-\rvec(t)\right)^\top\rvec(t)\le C_1 t^{-1}\log^{\frac{1}{\lossPower}-2}(t)
		\end{equation}
	\end{restatable}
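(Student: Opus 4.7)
The plan is to adapt the argument used for Lemma \ref{Lemma: (r(t+1)-r(t)r(t) bound for beta>1} to the regime $\frac{1}{4}<\lossPower<1$, where the key complication is that the auxiliary terms $g^{1-\lossPower}(t)\wtilde$, $g^{1-\lossPower}(t)\htilde(t)\what$, $g^{1-\lossPower}(t)\log^{-1}(t)\wcheck$, and $g^{1-2\lossPower}(t)\wcheckSec$ appearing in the definition of $\rvec(t)$ (eq. \ref{define r(t), beta<1}) all diverge rather than vanish, so three orders of subleading corrections must be cancelled simultaneously. First I would expand $(\rvec(t+1)-\rvec(t))^\top\rvec(t)$ using eq. \ref{GD} and eq. \ref{define r(t), beta<1}. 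This produces the gradient term $-\eta\sumn \ell'(\wvec(t)^\top\xn)\xnT\rvec(t)$ plus deterministic contributions from the forward differences of $g$, $g^{1-\lossPower}$, $g^{1-\lossPower}\htilde$, $g^{1-\lossPower}\log^{-1}$ and $g^{1-2\lossPower}$, each dotted against $\what,\wtilde,\wcheck,\wcheckSec$. Using items 5--8 of subsection \ref{sec: auxRes beta smaller than 1}, I would replace each forward difference by its first derivative plus a second-order remainder of size $O(t^{-2}\log^{\frac{1}{\lossPower}-2}(t))$, so that all deterministic pieces consolidate into explicit $\bar C_1 t^{-1}\log^{\frac{1}{\lossPower}-2}(t)$ type terms with polylog corrections.

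Next I would split $\sumn = \sumnsv + \sumnnsv$. For the non-support-vector contribution, the upper bound eq. \ref{-l' upper bound}, the bound $\what^\top\xn \ge \theta>1$ (eq. \ref{theta definition}), and the analogue of eq. \ref{e^g^beta upper bound} combine to yield an estimate $O(t^{-\theta^\lossPower}\mathrm{polylog}(t))$, which is dominated by the target rate $t^{-1}\log^{\frac{1}{\lossPower}-2}(t)$. For the support-vector contribution I would substitute the Taylor expansion of $(\wvec(t)^\top\xn)^\lossPower$ from eq. \ref{binomial expansion for (g+g(1-beta))^beta, beta<1} into $-\ell'(\wvec(t)^\top\xn)=\exp(-(\wvec(t)^\top\xn)^\lossPower)$ via eqs. \ref{-l' upper bound}--\ref{-l' lower bound}. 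The defining equations of $\wtilde$ (eq. \ref{eq: wtilde def}), $\wcheck$ (eq. \ref{eq: wcheck def}), and $\wcheckSec$ (eq. \ref{eq: wcheckSec def}) are engineered precisely so that, after summation over $n\in\mathcal{S}$ using $\what = \sumnsv \alpha_n\xn$, the $O(t^{-1}\log^{\frac{1}{\lossPower}-2}(t))$, $O(t^{-1}\log^{\frac{1}{\lossPower}-3}(t)\,m_1(t))$, and $O(t^{-1}\log^{-1-\lossPower}(t))$ components cancel the corresponding derivative terms from Step 1 exactly. What survives is a product of the surplus $o(t^{-1}\log^{\frac{1}{\lossPower}-2}(t))$ coefficients with $\xnT\rvec(t)$, which, using $|\xnT\rvec(t)|=o(g(t))$ inherited from the definition of $\rhoVec$, is absorbed into $C_1 t^{-1}\log^{\frac{1}{\lossPower}-2}(t)$.

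The main obstacle is the simultaneous multi-scale cancellation in Step 3: three families of correction vectors must neutralize three distinct orders of subleading terms in the expansion of $\exp(-(\wvec(t)^\top\xn)^\lossPower)$, and the bookkeeping has to respect the fact that $-\ell'$ only matches $e^{-u^\lossPower}$ up to multiplicative errors $1\pm e^{-\mu_\pm u^\lossPower}$ (eqs. \ref{-l' upper bound}--\ref{-l' lower bound}), which forces a separate one-sided analysis according to the sign of $\xnT\rvec(t)$ as in the $\lossPower>1$ case. A secondary hurdle is that the condition $\lossPower>\frac{1}{4}$ enters precisely here: it ensures that the cubic Taylor remainder $g^{-2\lossPower}(t)(\wtilde^\top\xn)^3$ is dominated by $\log^{-2}(t)$, so that no fourth correction vector beyond $\wcheckSec$ is needed; for $\lossPower\le\frac{1}{4}$ the chain of cancellations would have to be extended, which is exactly why the theorem is stated only for $\lossPower>\frac{1}{4}$.
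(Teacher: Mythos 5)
Your proposal reproduces the paper's overall architecture (same decomposition of $\rvec(t)$, replacement of forward differences by derivatives via the auxiliary items 5--8, cancellation engineered through the definitions of $\wtilde$, $\wcheck$, $\wcheckSec$, and the split into support and non-support vectors), but the step that actually carries the difficulty is not closed correctly. You finish the support-vector sum by multiplying the surplus coefficients by $\xnT\rvec(t)$ and invoking $|\xnT\rvec(t)|=o(g(t))$ ``inherited from the definition of $\rhoVec$''. Inside this lemma no such bound is available: the estimate $\Vert\rhoVec\Vert=o(g(t))$ (indeed $\Vert\rvec(t)\Vert=o(g^{1-\lossPower}(t))$) is exactly what the proof of Theorem \ref{theorem: main} derives \emph{from} this lemma, so using it here is circular; the only a priori control the paper uses is $\what^\top\rvec(t),\ \wtilde^\top\rvec(t),\ \wcheck^\top\rvec(t),\ \wcheckSec^\top\rvec(t)=o(t)$, obtained by telescoping the GD updates and $\Vert\derL(\wvec(t))\Vert=o(1)$. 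Moreover, even granting $o(g(t))$, the absorption fails quantitatively: after cancellation the coefficient multiplying $\xkT\rvec(t)$ for a support vector is only of size $t^{-1}\log^{\frac{1}{\lossPower}-1-\frac{1}{2\lossPower}}(t)$, so a crude bound $|\xkT\rvec(t)|=o(\log^{1/\lossPower}(t))$ yields a term much larger than the target $t^{-1}\log^{\frac{1}{\lossPower}-2}(t)$. The paper's actual mechanism is a case split on the magnitude of $\xkT\rvec(t)$ at the scale $\log^{-1+\frac{1}{2\lossPower}}(t)$: below the threshold the product is $O(t^{-1}\log^{\frac{1}{\lossPower}-2}(t))$, and above it the whole term is shown to be nonpositive (bounded by zero) via a sign argument using the one-sided tail bounds \ref{-l' upper bound}--\ref{-l' lower bound}, done separately for $\xkT\rvec(t)\ge0$ and $\xkT\rvec(t)<0$. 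Your phrase about a ``one-sided analysis according to the sign'' gestures at this, but the threshold and the negativity argument are the substance of the proof and cannot be replaced by absorption.

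Relatedly, your explanation of where $\lossPower>\frac14$ enters is not correct: since $g(t)\asymp\log^{1/\lossPower}(t)$, the cubic term $g^{-2\lossPower}(t)(\wtilde^\top\xn)^3$ is $\Theta(\log^{-2}(t))$ for \emph{every} $\lossPower$, so it is not controlled ``precisely when $\lossPower>\frac14$''. The restriction comes from the threshold argument above: when $|\xkT\rvec(t)|>C_4\log^{-1+\frac{1}{2\lossPower}}(t)$, the resulting shift of order $\log^{-\frac{1}{2\lossPower}}(t)$ inside the bracket must dominate the uncancelled remainders $f_2(t)$, $h_2(t)$, $\tilde{\lambda}_n\log^{-2}(t)f_3(t)$, which are $o(\log^{-2+\epsilon}(t))$; this requires $\frac{1}{2\lossPower}<2$, i.e.\ $\lossPower>\frac14$ (your broader intuition that smaller $\lossPower$ would force cancelling further orders with additional correction vectors is right in spirit, but this inequality is the actual source of the constant $\frac14$). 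A smaller omission: for $\lossPower<1$ the non-support-vector bound is not a direct analogue of eq. \ref{e^g^beta upper bound}, because Bernoulli's inequality reverses for exponents below one; the paper handles this by splitting on whether $\xnT\rvec(t)\le C_3\log^{\frac{1}{\lossPower}+1}(t)$ and using $(1+x)^a\ge a+x^a$ in the unbounded case (eqs. \ref{eq: 2nd term bound, xr bounded, beta<1}--\ref{eq: 2nd term bound, xr not bounded, beta<1}).
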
    
\noindent From eq.~\eqref{eq: (r(t+1)-r(t)r(t) bound for beta<1} in Lemma \ref{Lemma: (r(t+1)-r(t)r(t) bound for beta<1}, we can find $C_1,t_1$ such that $\forall t>t_1$:
	\begin{equation} \label{eq2: (r(t+1)-r(t)r(t) bound for beta<1}
	\left(\rvec(t+1)-\rvec(t)\right)^\top\rvec(t)\le C_1 t^{-1}\log^{\frac{1}{\lossPower}-2}(t)
	\end{equation}
	Also, from Lemma \ref{Lemma: L converge}  we know that:
	\begin{equation} \label{eq: derL converge, beta<1}
	\Vert\derL(\wvec(u))\Vert^2=o(1) \text{ and } \sum_{u=0}^{\infty}\Vert\derL(\wvec(u))\Vert^2<\infty
	\end{equation}
    
	Substituting eqs. \ref{norm(r(t+1)-r(t), beta<1}, \ref{eq: derL converge, beta<1} and \ref{eq2: (r(t+1)-r(t)r(t) bound for beta<1} into eq.~\eqref{norm r(t+1), beta<1}, we find
	\begin{align*}
	&\Vert \rvec (t) \Vert^2 - \Vert \rvec (t_1) \Vert^2\\
	& = \sum_{u=t_1}^{t-1}\left[\Vert \rvec (u+1) \Vert^2 - \Vert \rvec (u) \Vert^2\right]\\
	& \le \sum_{u=t_1}^{t-1} \tilde{C} t^{-1}\log^{\frac{1}{\lossPower}-2}(t) = O(\log^{\frac{1}{\lossPower}-1}(t))
	\end{align*}
	Therefore, $\Vert \rvec (t) \Vert^2 = O(g^{1-\lossPower}(t))$ and $\Vert \rvec (t) \Vert=o(g^{1-\lossPower}(t))$.
	
	\subsubsection{Proof Of Lemma \ref{Lemma: (r(t+1)-r(t)r(t) bound for beta<1}} \label{proof of lemma (r(t+1)-r(t)r(t) upper bound for beta<1}
	Recall that we defined $$\vect{r}(t) = \vect{w}(t) - g(t)\what - g^{1-\lossPower}(t) \wtilde + \frac{1}{\lossPower}g^{1-\lossPower}(t)\htilde(t)\what + \frac{1}{\lossPower}g^{1-\lossPower}(t)\log^{-1}(t)\wcheck+ \frac{1}{\lossPower}g^{1-\lossPower}(t)g^{-\lossPower}(t)\wcheckSec , $$
	where $\what,\ \wtilde,\ \wcheck$ and $\wcheckSec$ were defined in section \ref{sec: defenitions for beta<1}.
	\rBoundLemmaBetaLE*
	\noindent We examine the expression we wish to bound:
	\begin{flalign} \label{(r(t+1)-r(t))r(t) bound, pre}
	&(\rvec(t+1) - \rvec(t))^\top  \rvec(t) &\nonumber\\
	&\overset{(1)}{=}\left(\vphantom{\frac{1}{a}}-\eta\nabla L(\wvec(t)) - \what\left(g(t+1)-g(t)+\lossPower^{-1}g^{1-\lossPower}(t)\htilde(t)-\lossPower^{-1}g^{1-\lossPower}(t+1)\htilde(t+1)\right)  \right.\nonumber\\
	& \left. - \wtilde(g^{1-\lossPower}(t+1) - g^{1-\lossPower}(t))
	-\frac{1}{\lossPower}\wcheck\left(g^{1-\lossPower}(t)\log^{-1}(t)-g^{1-\lossPower}(t+1)\log^{-1}(t+1)\right) \right.\nonumber\\
	& \left.
	-\frac{1}{\lossPower}\wcheckSec\left(g^{1-2\lossPower}(t)-g^{1-2\lossPower}(t+1)\right)
	 \right)^\top \rvec(t)\nonumber\\
	& = -\eta\sumn \ell'(\wvec(t)^\top \xn)\xnT\rvec(t) - \what^\top \rvec(t)\left(g(t+1)-g(t)+\frac{1}{\lossPower}g^{1-\lossPower}(t)\htilde(t)-\frac{1}{\lossPower}g^{1-\lossPower}(t+1)\htilde(t+1)\right) \nonumber\\
	&- \wtilde^\top  \rvec(t) (g^{1-\lossPower}(t+1) - g^{1-\lossPower}(t))
	 - \wcheck^\top  \rvec(t)\frac{1}{\lossPower}\left(g^{1-\lossPower}(t)\log^{-1}(t)-g^{1-\lossPower}(t+1)\log^{-1}(t+1)\right) \nonumber\\
     &  - \wcheckSec^\top  \rvec(t)\frac{1}{\lossPower}\left(g^{1-2\lossPower}(t)-g^{1-2\lossPower}(t+1)\right) \nonumber \\
	& \overset{(2)}{=} \what^\top \rvec(t)\left[\frac{1}{\lossPower}t^{-1}\log^{\frac{1}{\lossPower}-1}(t)\left(1+h(t)+h_2(t)\right) - (g(t+1)-g(t))\right]  -\eta\sumnnsv \ell'(\wvec(t)^\top \xn)\xnT\rvec(t)\nonumber\nonumber\\
	&-\eta\sumnsv \left[\frac{1}{\lossPower}t^{-1}\log^{\frac{1}{\lossPower}-1}(t)\left(1+h(t)+h_2(t)\right) \e(-\lossPower\xnT \wtilde) + \ell'(\wvec^\top (t)\xn)\right]\xnT\rvec(t)\nonumber\\
	& -\wtilde^\top  \rvec(t)(g^{1-\lossPower}(t+1) - g^{1-\lossPower}(t)) -\what^\top \rvec(t) \left(\frac{1}{\lossPower}g^{1-\lossPower}(t)\htilde(t)-\frac{1}{\lossPower}g^{1-\lossPower}(t+1)\htilde(t+1)\right)\nonumber\\
	& - \wcheck^\top  \rvec(t)\frac{1}{\lossPower}\left(g^{1-\lossPower}(t)\log^{-1}(t)-g^{1-\lossPower}(t+1)\log^{-1}(t+1)\right)\nonumber\\
	& - \wcheckSec^\top  \rvec(t)\frac{1}{\lossPower}\left(g^{1-2\lossPower}(t)-g^{1-2\lossPower}(t+1)\right),
	\end{flalign}
	where in (1) we used eq.~\eqref{define r(t), beta<1} ($\rvec(t)$ definition) and in (2) we used $$\what = \sumnsv \alpha_n \xn = \sumnsv \exp(-\wtilde^\top \xn) \xn.$$
	From $\wcheck$ and $\wcheckSec$ definitions (eqs. \ref{eq: wcheck def} and \ref{eq: wcheckSec def}) we can find $ \{\delta_n\}_{n=1}^d,\{\zeta_n\}_{n=1}^d $ such that:
	\begin{align}
	    \wcheck = \sumnsv \delta_n \xn\\
	    \wcheckSec = \sumnsv \zeta_n \xn
	\end{align}
	Using Taylor's theorem, the last two equations, eq \ref{eq: wtilde lc of sv, beta smaller than 1} and eqs. \ref{eq: g(1-beta) first derivative}-\ref{eq: g(1-2beta) third derivative} we can find $\left\{\tilde{\lambda}_n\right\}_{n=1}^d$ such that:
	\begin{flalign} \label{eq: aux calc for (r(t+1)-r(t))r(t) bound}
	    & -\wtilde^\top  \rvec(t)(g^{1-\lossPower}(t+1) - g^{1-\lossPower}(t)) -\what^\top \rvec(t) \left(\frac{1}{\lossPower}g^{1-\lossPower}(t)\htilde(t)-\frac{1}{\lossPower}g^{1-\lossPower}(t+1)\htilde(t+1)\right) &\nonumber\\
    	& - \wcheck^\top  \rvec(t)\frac{1}{\lossPower}\left(g^{1-\lossPower}(t)\log^{-1}(t)-g^{1-\lossPower}(t+1)\log^{-1}(t+1)\right)\nonumber\\
    	& - \wcheckSec^\top  \rvec(t)\frac{1}{\lossPower}\left(g^{1-2\lossPower}(t)-g^{1-2\lossPower}(t+1)\right)\nonumber\\
    	& = -\wtilde^\top  \rvec(t)\left(g^{1-\lossPower}(t+1)  g^{1-\lossPower}(t)-\left((g^{1-\lossPower}(t))'+\frac{1}{2}(g^{1-\lossPower}(t))''\right)\right)\nonumber\\
    	&- \sumnsv \lambda_n \xnT \rvec(t)  \left((g^{1-\lossPower}(t))'+\frac{1}{2}(g^{1-\lossPower}(t))''\right) \nonumber\\
    	& +\what^\top \rvec(t)\frac{1}{\lossPower} \left(g^{1-\lossPower}(t+1)\htilde(t+1)-g^{1-\lossPower}(t)\htilde(t)-\left((g^{1-\lossPower}(t)\htilde(t))'+\frac{1}{2}(g^{1-\lossPower}(t)\htilde(t))'' \right)\right) \nonumber\\
    	&+ \frac{1}{\lossPower}\sumnsv \alpha_n \xnT\rvec(t)\left((g^{1-\lossPower}(t)\htilde(t))'+\frac{1}{2}(g^{1-\lossPower}(t)\htilde(t))'' \right) &\nonumber\\
    	& + \wcheck^\top  \rvec(t)\frac{1}{\lossPower}\left(g^{1-\lossPower}(t+1)\log^{-1}(t+1)-g^{1-\lossPower}(t)\log^{-1}(t)
    	- \left( (g^{1-\lossPower}(t)\log^{-1}(t))'+\frac{1}{2}(g^{1-\lossPower}(t)\log^{-1}(t))''\right)\right)\nonumber\\
    	& + \frac{1}{\lossPower}\sumnsv \delta_n \xnT\rvec(t) \left( (g^{1-\lossPower}(t)\log^{-1}(t))'+\frac{1}{2}(g^{1-\lossPower}(t)\log^{-1}(t))''\right) \nonumber\\
    	& + \wcheckSec^\top  \rvec(t)\frac{1}{\lossPower}\left(g^{1-2\lossPower}(t+1)-g^{1-2\lossPower}(t) - \left((g^{1-2\lossPower}(t))'+\frac{1}{2}(g^{1-2\lossPower}(t))'' \right)\right)\nonumber\\
    	& + \frac{1}{\lossPower} \sumnsv \zeta_n \xnT \rvec(t) \left((g^{1-2\lossPower}(t))'+\frac{1}{2}(g^{1-2\lossPower}(t))'' \right) \nonumber\\
    	& \le -\sumnsv \lambda_n \bar{C}_1 \xnT \rvec(t) \frac{1}{t}\log^{\frac{1}{\lossPower}-2}(t) - \frac{\eta}{\lossPower}\sumnsv \tilde{\lambda}_n \e(-\lossPower\xnT \wtilde)  \frac{1}{t}\log^{\frac{1}{\lossPower}-3}(t) f_3(t)\xnT \rvec(t) + o(t^{-1-\epsilon}),
	\end{flalign}
	where $\bar{C_1} = \frac{1-\lossPower}{\lossPower}$ was defined in section \ref{sec: auxRes beta smaller than 1}, $\epsilon>0$ and $f_3(t)=o(\log^\epsilon(t))$. In the last transition we used $\what^\top \rvec(t)=o(t),\ \wtilde^\top \rvec(t)=o(t),\ \wcheck^\top \rvec(t)=o(t),\ \wcheckSec^\top \rvec(t)=o(t)$ since:
	\begin{flalign*}
    	& \Vert\rvec(t)\Vert = &\\
    	&\left\Vert\wvec(0) - \eta \sum_{u=0}^{t-1}\derL(\wvec(u)) - g(t)\what - g^{1-\lossPower}(t) \wtilde + \frac{1}{\lossPower}g^{1-\lossPower}(t)\htilde(t)\what + \frac{1}{\lossPower}g^{1-\lossPower}(t)\log^{-1}(t)\wcheck+ \frac{1}{\lossPower}g^{1-\lossPower}(t)g^{-\lossPower}(t)\wcheckSec\right\Vert&\\
    	& \le \eta t \min_{0\le u \le t} \Vert\derL(\wvec(u))\Vert + O(g(t)) = o(t),
	\end{flalign*}
	where in the last line we used that $\derL(\wvec(t))=o(1)$, from Lemma \ref{Lemma: L converge}.\\
	Using eq.~\eqref{eq: aux calc for (r(t+1)-r(t))r(t) bound}, eq.~\eqref{(r(t+1)-r(t))r(t) bound, pre} can be upper bounded by
	\begin{flalign} \label{(r(t+1)-r(t))r(t) bound}
	& \what^\top \rvec(t)\left[\frac{1}{\lossPower}t^{-1}\log^{\frac{1}{\lossPower}-1}(t)\left(1+h(t)+h_2(t)\right) - (g(t+1)-g(t))\right]  -\eta\sumnnsv \ell'(\wvec(t)^\top \xn)\xnT\rvec(t)\nonumber\nonumber\\
	&-\eta\sumnsv \left[ \frac{1}{\lossPower}t^{-1}\log^{\frac{1}{\lossPower}-1}(t)\tilde{f}_4(t) \e(-\lossPower\xnT \wtilde) + \ell'(\wvec(t)^\top (t)\xn)\right]\xnT\rvec(t),
	\end{flalign}
	where we recall we defined $\tilde{\gamma}_n \triangleq \frac{\lossPower\bar{C_1}\lambda_n}{\eta}\exp(\lossPower\xnT\wtilde)$ and we define $\tilde{f}_4(t) = 1+h(t)+\tilde{\gamma}_n \log^{-1}(t)+\tilde{\lambda}_n\log^{-2}(t) f_3(t)+h_2(t)$.\\
	We examine the three terms in eq.~\eqref{(r(t+1)-r(t))r(t) bound}.
	The first term can be upper bounded $ \forall t>t_2$ by
	\begin{align} \label{eq: first term bound, beta<1}
	&\what^\top \rvec(t)\left[\frac{1}{\lossPower}t^{-1}\log^{\frac{1}{\lossPower}-1}(t)\left(1+h(t)+h_2(t)\right) - (g(t+1)-g(t))\right]\nonumber\\
	&\overset{(1)}{\le} \left\vert \what^\top \rvec(t) \right\vert C_2t^{-3}\log^{\frac{1}{\lossPower}-1}(t)\nonumber\\
	& \overset{(2)}{\le} C_2 t^{-2}\log^{\frac{1}{\lossPower}-1}(t),
	\end{align}
	where in (1) we used $$\left \vert \frac{1}{\lossPower}t^{-1}\log^{\frac{1}{\lossPower}-1}(t)\left(1+h(t)+h_2(t)\right) - (g(t+1)-g(t)) \right \vert =\Theta(t^{-3}\log^{\frac{1}{\lossPower}-1}(t))),$$and therefore $\exists t_2,C_2$ such that $ \forall t>t_2$: $$\left \vert \frac{1}{\lossPower}t^{-1}\log^{\frac{1}{\lossPower}-1}(t)\left(1+h(t)+h_2(t)\right) - (g(t+1)-g(t)) \right \vert \le C_2 t^{-3}\log^{\frac{1}{\lossPower}-1}(t) $$. In (2) we used $\what^\top \rvec(t)=o(t)$.\\
	Next, we wish to upper bound the second term in eq.~\eqref{(r(t+1)-r(t))r(t) bound}.
	If $\xnT\rvec(t)\ge 0$ then using eq.~\eqref{define r(t), beta<1} ($\rvec(t)$ definition) we can show that
	\begin{equation}
		\left(\wvec(t)^\top\xn\right)\ge \left(g(t)\what^\top  \xn + g^{1-\lossPower}(t)\left(\wtilde^\top  \xn -\frac{1}{\lossPower}\htilde(t)\what^\top\xn - \frac{1}{\lossPower}\log^{-1}(t)\wcheck^\top\xn
		- \frac{1}{\lossPower}g^{-\lossPower}(t)\wcheckSec^\top\xn
		\right) \right)^\lossPower
	\end{equation}
	Using the last equation, and eq.~\eqref{binomial expansion for (g+g(1-beta))^beta, beta<1} we can also show that
	\begin{flalign} \label{e^{-wx} upper bound for xr>0}
		&\e(-(\wvec(t)^\top \xn)^\lossPower)&\nonumber\\
		& \le t^{-(\what^\top  \xn)^\lossPower}\left(\lossPower\log^{1-\frac{1}{\lossPower}}(t)\right)^{-(\what^\top  \xn)^\lossPower} \e\left(-\lossPower(\what^\top  \xn)^{\lossPower-1} \ftilde(t)-\ftilde_2(t)-f_2(t)\right) \nonumber\\
		& \overset{(1)}{\le} t^{-(\what^\top  \xn)^\lossPower}\left(\lossPower\log^{1-\frac{1}{\lossPower}}(t)\right)^{-(\what^\top  \xn)^\lossPower} \e\left(-\lossPower(\what^\top  \xn)^{\lossPower-1} \left(\wtilde^\top  \xn -1 \right)\right),\ \forall t>t_3,
	\end{flalign}
	where in (1) we used the fact that $\exists t_3>\bar{t}$ such that $\forall t>t_3$:
	$$ \tilde{f}(t)+\frac{(\what^\top\xn)^{1-\lossPower}}{\lossPower}(f_2(t)+\ftilde_2(t))\ge-1$$\\ 
	We divide into two cases.\\
	First if $ \xnT \rvec (t) \le C_3\log^{\frac{1}{\lossPower}+1}(t)$:
	\begin{flalign} \label{eq: 2nd term bound, xr bounded, beta<1}
	&-\eta\sumnnsv \ell'(\wvec(t)^\top \xn)\xnT\rvec(t)&\nonumber\\
	&\overset{(1)}{\le} \eta\sumnnsvp \left(1+\e(-\mu_+(\wvec(t)^\top \xn)^\lossPower)\right)\e(-(\wvec(t)^\top  \xn)^\lossPower ) \xnT\rvec(t) \nonumber\\
	&\overset{(2)}{\le} \eta\sumnnsvp \left(1+\left[t^{-\theta^\lossPower}\left(\lossPower\log^{1-\frac{1}{\lossPower}}(t)\right)^{-\theta^\lossPower} \e\left(-\lossPower(\what^\top  \xn)^{\lossPower-1} \left(\wtilde^\top  \xn -1\right)\right)\right]^{\mu_+}\right) \nonumber\\
	&t^{-\theta^\lossPower}\left(\lossPower\log^{1-\frac{1}{\lossPower}}(t)\right)^{-\theta^\lossPower} \e\left(-\lossPower\theta^{\lossPower-1} \left(\wtilde^\top  \xn -1\right)\right) C_1\log^{\frac{1}{\lossPower}+1}(t) \nonumber\\
	&\overset{(3)}{\le} 2\eta C_3 N t^{-\theta^\lossPower} \lossPower^{-\theta^{\lossPower}}\log^{\theta^{\lossPower}\left(\frac{1}{\lossPower}-1\right)}\e\left(- \lossPower \min_{n\notin \mathcal{S}}(\what^T \xn)^{\lossPower-1}(\wtilde^\top \xn-1) \right)\log^{\frac{1}{\lossPower}+1}(t) \nonumber\\
	&\le \tilde{C}_2 t^{-\theta^\lossPower} \log^{\theta^{\lossPower}\left(\frac{1}{\lossPower}-1\right)+\frac{1}{\lossPower}+1}(t),\ \forall t>t_2^{\prime},
	\end{flalign}
	where in (1) we used eq.~\eqref{-l' upper bound}, in (2) we used \ref{e^{-wx} upper bound for xr>0} and $\theta=\min_{n\notin \mathcal{S}}\what^T \xn>1$, in (3) we used the fact that $\exists t'$ such that $\forall t>t':\ \left[t^{-\theta^\lossPower}\left(\lossPower\log^{1-\frac{1}{\lossPower}}(t)\right)^{-\theta^\lossPower} \e\left(-\lossPower(\what^\top  \xn)^{\lossPower-1} \left(\wtilde^\top  \xn -1\right)\right)\right]^{\mu_+}\le1$. We defined $t_2^{\prime}=\max(t_3,t')$ and $\tilde{C}_2 = 2\eta C_3 N \lossPower^{-\theta^{\lossPower}} \e\left(- \lossPower \min_{n\notin \mathcal{S}}(\what^T \xn)^{\lossPower-1}(\wtilde^\top \xn-1) \right)$.\\
	Second, if $ \xnT \rvec (t) > C_3\log^{\frac{1}{\lossPower}+1}(t)$:
	\begin{flalign} \label{wx lower bound for 2nd term when xr is unbounded}
		&(\wvec(t)^\top  \xn)^\lossPower&\nonumber\\
		& \overset{(1)}{=}\left(g(t)\what^\top  \xn + g^{1-\lossPower}(t)\ftilde(t)+ \xnT\rvec(t) \right)^\lossPower&\nonumber\\
		& = (\xnT \rvec(t) )^{\lossPower} 
		\left(1 + \frac{g(t)\what^\top\xn + g^{1-\lossPower}(t)\ftilde(t)}{\xnT\rvec(t) } \right)^\lossPower \nonumber\\
		& \overset{(2)}{\ge} (\xnT \rvec(t) )^{\lossPower} 
		\left(\lossPower +  \left( \frac{g(t)\what^\top\xn + g^{1-\lossPower}(t)\ftilde(t)}{\xnT\rvec(t) } \right)^\lossPower \right)\nonumber\\
		& = \lossPower (\xnT \rvec(t) )^{\lossPower} +  \left(g(t)\what^\top\xn + g^{1-\lossPower}(t)\ftilde(t)\right)^\lossPower \nonumber\\
		& \overset{(3)}{=} \lossPower (\xnT \rvec(t) )^{\lossPower}+ (\what^\top  \xn)^\lossPower\log(t) + (\what^\top  \xn)^\lossPower \log(\lossPower \log^{1-\frac{1}{\lossPower}}(t)) + \lossPower(\what^\top  \xn)^{\lossPower-1}\ftilde(t)\nonumber\\
		& +\ftilde_2(t)+ f_2(t),
	\end{flalign}
	where in (1) we used eq.~\eqref{define r(t), beta<1}, in (2) we used that $\forall x<0.5,a<1:\ (1+x)^a \ge a+x^a $ and that $ \exists t_4 $ such that $$\forall t>t_4:\ \frac{g(t)\what^\top\xn + g^{1-\lossPower}(t)\ftilde(t)}{\xnT\rvec(t) }<0.5 $$ and in (3) we used eq.~\eqref{binomial expansion for (g+g(1-beta))^beta, beta<1}.\\
	Using \ref{wx lower bound for 2nd term when xr is unbounded} we can upper bound $\e(-(\wvec(t)^\top \xn)^\lossPower),\ \forall t>t_4$:
	\begin{flalign} \label{e^{-wx} upper bound for xr>0 and unbounded}
	&\e(-(\wvec(t)^\top \xn)^\lossPower)&\nonumber\\
	& \overset{(1)}{\le} \e\left(-\lossPower \left(\xnT \rvec(t) \right)^{\lossPower}\right)t^{-(\what^\top  \xn)^\lossPower}\left(\lossPower\log^{1-\frac{1}{\lossPower}}(t)\right)^{-(\what^\top  \xn)^\lossPower}\e\left(-\lossPower(\what^\top  \xn)^{\lossPower-1} \left(\wtilde^\top  \xn -1\right)\right)
	\end{flalign}
	Using the last equation we can upper bound the second term in eq.~\eqref{(r(t+1)-r(t))r(t) bound} $\forall t>t_5$:
	\begin{flalign}  \label{eq: 2nd term bound, xr not bounded, beta<1}
	&-\eta\sumnnsv \ell'(\wvec(t)^\top \xn)\xnT\rvec(t)&\nonumber\\
	&\overset{(1)}{\le} \eta\sumnnsvp \left(1+\e(-\mu_+(\wvec(t)^\top \xn)^\lossPower)\right)\e(-(\wvec(t)^\top  \xn)^\lossPower ) \xnT\rvec(t) \nonumber\\
	&\overset{(2)}{\le} \eta\sumnnsvp \left(1+\left[t^{-\theta^\lossPower}\left(\lossPower\log^{1-\frac{1}{\lossPower}}(t)\right)^{-\theta^\lossPower} \e\left(-\lossPower(\what^\top  \xn)^{\lossPower-1} \left(\wtilde^\top  \xn -1\right)\right)\right]^{\mu_+}\right)\nonumber\\
	&t^{-\theta^\lossPower}\left(\lossPower\log^{1-\frac{1}{\lossPower}}(t)\right)^{-\theta^\lossPower}\e\left(-\lossPower\theta^{\lossPower-1} \left(\wtilde^\top  \xn -1\right)\right)\e\left(-\lossPower \left(\xnT \rvec \right)^{\lossPower}\right) \xnT\rvec(t) \nonumber\\
	&\overset{(3)}{\le} 2\eta N t^{-\theta^\lossPower} \lossPower^{-\theta^{\lossPower}}\log^{\theta^{\lossPower}\left(\frac{1}{\lossPower}-1\right)}\e\left(- \lossPower \min_{n\notin \mathcal{S}}(\what^T \xn)^{\lossPower-1}(\wtilde^\top \xk-1) \right)\nonumber\\
	&\le \tilde{C}_2^\prime t^{-\theta^\lossPower} \log^{\theta^{\lossPower}\left(\frac{1}{\lossPower}-1\right)},
	\end{flalign}
	where in (1) we used \ref{-l' upper bound}, in (2) we used eqs. \ref{e^{-wx} upper bound for xr>0} and \ref{e^{-wx} upper bound for xr>0 and unbounded} and $\theta=\min_{n\notin \mathcal{S}}\what^T \xn>1$ (eq.~\eqref{theta definition}). In (3) we used the fact that $\lim_{t\to\infty} \e\left(-\lossPower(\xnT \rvec(t) )^{\lossPower}\right) \xnT \rvec(t) =0 $, therefore exists $t^{\prime\prime}$ such that $\forall t>t':\ \e\left(-\lossPower(\xnT \rvec(t) )^{\lossPower}\right) \xnT \rvec(t)\le1 $. We define $t_5=\max(t_4,t^{\prime \prime})$ and $$\tilde{C}_2^\prime = 2\eta N \lossPower^{-\theta^{\lossPower}} \e\left(- \lossPower \min_{n\notin \mathcal{S}}(\what^T \xn)^{\lossPower-1}(\wtilde^\top \xk-1) \right).$$
	\pagebreak\\
	Next, we wish to upper bound the third term in eq.~\eqref{(r(t+1)-r(t))r(t) bound}:
	\begin{flalign} \label{3rd term bound}
	&-\eta\sumnsv \left[ \frac{1}{\lossPower}t^{-1}\log^{\frac{1}{\lossPower}-1}(t)\tilde{f}_4(t) \e(-\lossPower\xnT \wtilde) + \ell'(\wvec(t)^\top (t)\xn)\right]\xnT\rvec(t) ,
	\end{flalign}
    where we recall we denoted $\tilde{f}_4(t) = 1+h(t)+\tilde{\gamma}_n \log^{-1}(t)+\tilde{\lambda}_n\log^{-2}(t) f_3(t)+h_2(t)$.\\
	We can use eq.~\eqref{binomial expansion for (g+g(1-beta))^beta, beta<1} to show that for $n\in\mathcal{S}$:
	\begin{flalign} \label{binomial expansion for ()^beta for SV}
		 &\left(g(t) + g^{1-\lossPower}(t)\left(\wtilde^\top  \xn -\frac{1}{\lossPower}\htilde(t) - \frac{1}{\lossPower}\log^{-1}(t)\wcheck^\top\xn
		- \frac{1}{\lossPower}g^{-\lossPower}(t)\wcheckSec^\top\xn
		\right) \right)^\lossPower&\nonumber\\
		& = \log(t) + \log(\lossPower \log^{1-\frac{1}{\lossPower}}(t)) + \lossPower\wtilde^\top  \xn-\htilde(t)-\log^{-1}(t)(\wcheck^\top\xn)-g^{-\lossPower}(t)(\wcheckSec^\top\xn)\nonumber\\
		& + \frac{\lossPower(\lossPower-1)}{2} g^{-\lossPower}(t)\left(\wtilde^\top  \xn \right)^2+\frac{(\lossPower-1)}{2\lossPower} \log^{-1}(t)\log^2(\lossPower\log^{1-\frac{1}{\lossPower}}(t))+ f_2(t)\nonumber\\
		& = \log(t) + \log(\lossPower \log^{1-\frac{1}{\lossPower}}(t)) + \lossPower\wtilde^\top  \xn-h(t)-\log^{-1}(t)(\wcheck^\top\xn)+f_2(t),
	\end{flalign}
	where in the last transition we used that $\wcheckSec^\top\xn=\frac{\lossPower(\lossPower-1)}{2}(\wtilde^\top\xn)^2$ (eq.~\eqref{eq: wcheckSec def}) and $\htilde(t)$ definition (eq.~\eqref{eq: beta smaller than 1 functions def}).\\
	We examine each term $k$ in the sum, and divide into two cases, depending on the sign of $\xkT \rvec(t)$.\\
	From this point we assume $\lossPower>\frac{1}{4}$.\\
	First, if $\xkT \rvec(t) \ge 0$, then using eqs. \ref{-l' upper bound} and \ref{binomial expansion for ()^beta for SV} term $k$ in equation \ref{3rd term bound} can be upper bounded by:
	\begin{flalign}
	&-\eta \frac{1}{\lossPower}t^{-1}\log^{\frac{1}{\lossPower}-1}(t)\e(-\lossPower\xnT \wtilde)\left[ \left(1+h(t)+\tilde{\gamma}_n \log^{-1}(t)+\tilde{\lambda}_n\log^{-2}(t) f_3(t)+h_2(t)\right)  \right.&\nonumber\\
	& \left. - \left(1+\e(-\mu_+(\wvec(t)^\top \xn)^\lossPower)\right) \e\left(h(t)+\log^{-1}(t)\wcheck^\top\xn-f_2(t)\right)\right]\xkT\rvec(t) \nonumber\\
	& \overset{(1)}{\le} -\eta \frac{1}{\lossPower}t^{-1}\log^{\frac{1}{\lossPower}-1}(t)\e(-\lossPower\xnT \wtilde)\Bigg[ \tilde{\lambda}_n\log^{-2}(t) f_3(t)+h_2(t)+f_2(t)- \tilde{f}_5^2(t) &\nonumber\\
	&  - \left[\frac{1}{t}\log^{\frac{1}{\lossPower}-1}(t)\frac{1}{\lossPower}\e\left(-\lossPower\left(\wtilde^\top  \xn -1\right)\right)\right]^{\mu_+} \left(1+\tilde{f}_5(t)+\tilde{f}_5^2(t)\right)\Bigg]\xkT\rvec(t) \nonumber\\
	& \overset{(2)}{\le} \eta \frac{1}{\lossPower}t^{-1}\log^{\frac{1}{\lossPower}-1}(t)\e(-\lossPower\xnT \wtilde)\log^{-\frac{1}{2\lossPower}}(t)\xkT\rvec(t),
	\end{flalign}
	where in (1) we defined $\tilde{f}_5(t) = h(t)+\tilde{\gamma}_n\log^{-1}(t)-f_2(t)$ and used that:
	\begin{flalign}
		&\left(1+\e(-\mu_+(\wvec(t)^\top \xn)^\lossPower)\right) \e\left(h(t)+\log^{-1}(t)\wcheck^\top\xn-f_2(t)\right)&\nonumber\\
		& \overset{(a)}{\le} \left(1+\left[\frac{1}{t}\log^{\frac{1}{\lossPower}-1}(t)\frac{1}{\lossPower}\e\left(-\lossPower\left(\wtilde^\top  \xn -1\right)\right)\right]^{\mu_+}\right) \left(1+\tilde{f}_5(t)+\tilde{f}_5^2(t)\right), \nonumber
	\end{flalign}
	where in (a) we used  $ \forall x\ge0:\ e^{-x}\le1-x+x^2$, $h(t)\le0$ and $\wcheck^\top \xn=\tilde{\gamma}_n$ (eq.~\eqref{eq: wcheck def}). In transition (2) we used the fact that all the terms in the square brackets are $o(\log^{-2+\epsilon}(t)),\ \forall \epsilon>0$ and $-\frac{1}{2\lossPower}>-2$.\\
	1. If $ |\xkT \rvec(t)|\le C_4 \log^{-1+\frac{1}{2\lossPower}}(t) $ then term $k$ in equation \ref{3rd term bound} can be upper bounded by:
	\begin{equation}
		\eta C_4 \frac{1}{\lossPower}t^{-1}\log^{\frac{1}{\lossPower}-2}(t)\e(-\lossPower\xnT \wtilde)
	\end{equation}
	2. If $ |\xkT \rvec(t)|> C_4 \log^{-1+\frac{1}{2\lossPower}}(t) $ term $k$ in equation \ref{3rd term bound} can be upper bounded by:
	\begin{flalign} \label{term k in 3rd term when xr>0 and not bounded}
		&-\eta \frac{1}{\lossPower}t^{-1}\log^{\frac{1}{\lossPower}-1}(t)\e(-\lossPower\xnT \wtilde)\Big[\tilde{f}_4(t) -\left(1+\left[\frac{1}{t}\log^{\frac{1}{\lossPower}-1}(t)\frac{1}{\lossPower}\e\left(-\lossPower\left(\wtilde^\top  \xn -1\right)\right)\right]^{\mu_+}\right) &\nonumber\\
		&   \e\left(\tilde{f}_5(t)-C_5\log^{-\frac{1}{2\lossPower}}(t)-f_4(t)\right) \Big]\xnT\rvec(t),
	\end{flalign}
	where $f_4(t) = o(\log^{-\frac{1}{2\lossPower}}(t))$, since:
	\begin{flalign}
	& (\wvec(t)^\top  \xn)^\lossPower = \left(g(t)\what^\top  \xn + g^{1-\lossPower}(t)\left(\wtilde^\top  \xn -\frac{1}{\lossPower}h(t)\what^\top\xn - \frac{1}{\lossPower}\log^{-1}(t)\wcheck^\top\xn\right)+ \xnT\rvec(t) \right)^\lossPower&\nonumber\\
	&\ge \left(g(t) + g^{1-\lossPower}(t)\tilde{f}(t) +  C_4 \log^{-1+\frac{1}{2\lossPower}}(t) \right)^\lossPower&\nonumber\\
	& = \left(g(t) + g^{1-\lossPower}(t)\tilde{f}(t)\right)^\lossPower
	\left(1 + \frac{ C_4 \log^{-1+\frac{1}{2\lossPower}}(t) }{g(t) + g^{1-\lossPower}(t)\tilde{f}(t)}\right)^\lossPower\nonumber\\
	& = \left(g(t) + g^{1-\lossPower}(t)\tilde{f}(t)\right)^\lossPower + \lossPower \left(g(t) + g^{1-\lossPower}(t)\tilde{f}(t)\right)^{\lossPower-1}C_4 \log^{-1+\frac{1}{2\lossPower}}(t)+f_4(t)\nonumber\\
	& \ge \log(t) + \log(\lossPower \log^{1-\frac{1}{\lossPower}}(t)) + \lossPower \wtilde^\top  \xn -h(t) - \tilde{\gamma}_n \log^{-1}(t) +f_2(t) + C_5\log^{-\frac{1}{2\lossPower}}(t)+f_4(t), \nonumber
	\end{flalign}
	in the last two transitions we used Taylor's theorem and eq.~\eqref{binomial expansion for ()^beta for SV}.\\
	eq.~\eqref{term k in 3rd term when xr>0 and not bounded} can be upper bounded by zero since:
	\begin{flalign*} 
	&\left[\tilde{f}_4(t) -\left(1+\left[\frac{1}{t}\log^{\frac{1}{\lossPower}-1}(t)\frac{1}{\lossPower}\e\left(-\lossPower\left(\wtilde^\top  \xn -1\right)\right)\right]^{\mu_+}\right) \e\left(\tilde{f}_5(t)-C_5\log^{-\frac{1}{2\lossPower}}(t)-f_4(t)\right) \right]\nonumber\\
	&\overset{(1)}{\ge}\left[\tilde{f}_4(t)-\left(1+\left[\frac{1}{t}\log^{\frac{1}{\lossPower}-1}(t)\frac{1}{\lossPower}\e\left(-\lossPower\left(\wtilde^\top  \xn -1\right)\right)\right]^{\mu_+}\right) \right.&\\
	& \left.  \left(1 +\tilde{f}_5(t)-C_5\log^{-\frac{1}{2\lossPower}}(t)-f_4(t)+\left(\tilde{f}_5(t)-C_5\log^{-\frac{1}{2\lossPower}}(t)-f_4(t)\right)^2\right) \right]\nonumber\\
	& \overset{(2)}{\ge} \left[\tilde{\lambda}_n\log^{-2}(t) f_3(t)+h_2(t)+f_2(t)+C_5\log^{-\frac{1}{2\lossPower}}(t)+f_4(t)-\left(\tilde{f}_5(t)-C_5\log^{-\frac{1}{2\lossPower}}(t)-f_4(t)\right)^2 \right.\\
	& \left. -\left[\frac{1}{t}\log^{\frac{1}{\lossPower}-1}(t)\frac{1}{\lossPower}\e\left(-\lossPower\left(\wtilde^\top  \xn -1\right)\right)\right]^{\mu_+} \left(1 + \tilde{f}_5(t)-C_5\log^{-\frac{1}{2\lossPower}}(t)-f_4(t) \right. \right.\\
	&\left. \left. +\left(\tilde{f}_5(t)-C_5\log^{-\frac{1}{2\lossPower}}(t)-f_4(t)\right)^2\right) \right] \overset{(3)}{\ge} 0,
	\end{flalign*}
	where in (1) we used  $ e^{-x}\le1-x+x^2, \forall x\ge0$ and $h(t)<0$, in (2) we used $\tilde{f}_4(t) = 1+h(t)+\tilde{\gamma}_n \log^{-1}(t)+\tilde{\lambda}_n\log^{-2}(t) f_3(t)+h_2(t)$, $\tilde{f}_5(t) = h(t)+\tilde{\gamma}_n\log^{-1}(t)-f_2(t)$ and in (3) we used the fact that $\log^{-\frac{1}{2\lossPower}}(t)$ decreases slower than the other terms.
	Second, if $\xkT \rvec(t) < 0$ , then using eq.~\eqref{-l' lower bound}, term $k$ in equation \ref{3rd term bound} can be upper bounded  by:
	\begin{align} \label{3rd term upper bound, xr<0}
		&\eta \left[ \lossPower^{-1} t^{-1}\log^{\frac{1}{\lossPower}-1}(t)\tilde{f}_4(t) \e(-\lossPower\xnT \wtilde) -(1-\e(-\mu_- (\wvec(t)^\top \xn)^\lossPower))\e(-(\wvec(t)^\top \xn)^\lossPower) \right]|\xkT\rvec(t)|\nonumber\\
		& \le \eta \lossPower^{-1}t^{-1}\log^{\frac{1}{\lossPower}-1}(t) \e(-\lossPower\xkT \wtilde) \left[\tilde{f}_4(t)-(1-\e(-\mu_- (\wvec(t)^\top \xn)^\lossPower))\right.\nonumber\\
		& \left. \e\left(\tilde{f}_5(t)-\lossPower\left(g(t) + g^{1-\lossPower}(t)\ftilde(t) \right)^{\lossPower-1}\xnT\rvec(t)\right) \right]|\xkT\rvec(t)|,
	\end{align}
	where in the last transition we used:
	\begin{flalign*}
	&(\wvec(t)^\top \xn)^\lossPower&\\
	&\overset{(1)}{=}\left(g(t) + g^{1-\lossPower}(t)\ftilde(t)+ \xnT\rvec(t) \right)^\lossPower&\nonumber\\
	& \overset{(2)}{\le} \left(g(t) + g^{1-\lossPower}(t)\ftilde(t) \right)^\lossPower +\lossPower\left(g(t) + g^{1-\lossPower}(t)\ftilde(t) \right)^{\lossPower-1}\xnT\rvec(t)\\
	& \overset{(3)}{\le} \log(t) + \log(\lossPower \log^{1-\frac{1}{\lossPower}}(t)) + \lossPower\wtilde^\top  \xn-h(t)-\log^{-1}(t)(\wcheck^\top\xn)+f_2(t)+\lossPower\left(g(t) + g^{1-\lossPower}(t)\ftilde(t) \right)^{\lossPower-1}\xnT\rvec(t)\\
	& \le \log(t) + \log(\lossPower \log^{1-\frac{1}{\lossPower}}(t)) + \lossPower\wtilde^\top  \xn-h(t)-\log^{-1}(t)(\wcheck^\top\xn)+f_2(t),
	\end{flalign*}
	where in (1) we used eq.~\eqref{define r(t), beta<1} ($\rvec(t)$ definition), in (2) we used Bernoulli's inequality:
	\begin{equation} \label{Bernoulli's inequality,r<1, x>=-1}
		\forall 0< r < 1,x \ge -1: (1+x)^r \le 1+rx
	\end{equation}
	 and the fact that from Lemma \ref{Lemma: w(t)->infty} $\lim_{t\to\infty} \wvec(t)^\top \xn=\infty$ and therefore for sufficiently large t
	\begin{equation}
	\frac{\xnT\rvec(t)}{g(t) + g^{1-\lossPower}(t)\tilde{f}(t)} \ge -1
	\end{equation}
	In (3) we used eq.~\eqref{binomial expansion for ()^beta for SV}.\\
	We further divide into cases. \\
	1. If $ |\xkT \rvec(t)|\le C_4 \log^{-1+\frac{1}{2\lossPower}}(t) $:\\
	We can lower bound $(\wvec(t)^\top \xn)^\lossPower$ as follows
	\begin{flalign*}
	&(\wvec(t)^\top \xn)^\lossPower&\\
	&\overset{(1)}{=}\left(g(t) + g^{1-\lossPower}(t)\ftilde(t)+ \xnT\rvec(t) \right)^\lossPower&\nonumber\\
	& \ge \left(g(t) + g^{1-\lossPower}(t)\ftilde(t) -C_4 \log^{-1+\frac{1}{2\lossPower}}(t) \right)^\lossPower\\
	& = \left(g(t) + g^{1-\lossPower}(t)\tilde{f}(t)\right)^\lossPower
	\left(1 + \frac{ -C_4 \log^{-1+\frac{1}{2\lossPower}}(t) }{g(t) + g^{1-\lossPower}(t)\tilde{f}(t)}\right)^\lossPower\nonumber\\
	& = \left(g(t) + g^{1-\lossPower}(t)\tilde{f}(t)\right)^\lossPower - \lossPower \left(g(t) + g^{1-\lossPower}(t)\tilde{f}(t)\right)^{\lossPower-1}C_4 \log^{-1+\frac{1}{2\lossPower}}(t)+f_5(t)\nonumber\\
	& \ge \log(t) + \log(\lossPower \log^{1-\frac{1}{\lossPower}}(t)) + \lossPower\wtilde^\top  \xn-h(t)-\log^{-1}(t)(\wcheck^\top\xn)+f_2(t) - C_6\log^{-\frac{1}{2\lossPower}}(t)+f_5(t) \nonumber\\
	& \ge \log(t) + \log(\lossPower \log^{1-\frac{1}{\lossPower}}(t)) + \lossPower \wtilde^\top  \xn -1, \nonumber
	\end{flalign*}
	where $f_5(t) = o(\log^{-\frac{1}{2\lossPower}}(t))$. In the last transition we used Taylor's theorem and eq.~\eqref{binomial expansion for ()^beta for SV}.\\
	Using this bound and the fact that $e^x>1+x$ we can find $C_7$ such that eq.~\eqref{3rd term upper bound, xr<0} can be upper bounded by
	\begin{flalign}
	& \eta C_7 \frac{1}{\lossPower}t^{-1}\log^{\frac{1}{\lossPower}-2}(t)\e(-\lossPower\xnT \wtilde)
	\end{flalign}
	2. If $ |\xkT \rvec(t)|> C_4 \log^{-1+\frac{1}{2\lossPower}}(t) $ then we will show that $\exists t'_-$ such that eq.~\eqref{3rd term upper bound, xr<0} is strictly negative $\forall t>t'_-$.\\
	Let $M>1$ be some arbitrary constant. Then, since $\exp\left(-\mu_-\left(\wvec(t)^\top  \xk \right)^\lossPower\right)\to0$ from Lemma \ref{Lemma: w(t)->infty}, $\exists t_M>\bar{t}$ such that $\forall t>t_M$ and if
	\begin{equation}
		\e\left(-\lossPower\left(g(t) + g^{1-\lossPower}(t)\ftilde(t) \right)^{\lossPower-1}\xnT\rvec(t)\right)\ge M>1
	\end{equation} 
	then
	\begin{flalign*}
		&\left(1-\e(-\mu_- (\wvec(t)^\top \xn)^\lossPower)\right)\e\left(\tilde{f}_5(t)\right)\e\left(-\lossPower\left(g(t) + g^{1-\lossPower}(t)\ftilde(t) \right)^{\lossPower-1}\xnT\rvec(t)\right)\ge M'>1
	\end{flalign*}
	Furthermore, if $\exists t>t_M$ such that $$\e\left(-\lossPower\left(g(t) + g^{1-\lossPower}(t)\ftilde(t) \right)^{\lossPower-1}\xnT\rvec(t)\right)< M$$
	then $\exists M'',t_6$ such that $\forall t>t_6$: 
	$ \left\vert \left(g(t) + g^{1-\lossPower}(t)\ftilde(t) \right)^{\lossPower-1}\xkT \rvec(t) \right \vert \le M''$.
	We can use this to show that
	\begin{flalign*}
	&\left(1-\e(-\mu_- (\wvec(t)^\top \xn)^\lossPower)\right)\e\left(\tilde{f}_5(t)-\lossPower\left(g(t) + g^{1-\lossPower}(t)\ftilde(t) \right)^{\lossPower-1}\xnT\rvec(t)\right)&\\
	&\ge \left(1-\left[\frac{1}{t}\log^{\frac{1}{\lossPower}-1}(t)\frac{1}{\lossPower}\e\left(-\lossPower\left(\wtilde^\top  \xn -1-M''\right)\right)\right]^{\mu_-}\right)\e\left(\tilde{f}_5(t)+C_6 \log^{-\frac{1}{2\lossPower}}(t)\right)\\
	&\overset{(1)}{\ge} \left(1-\left[\frac{1}{t}\log^{\frac{1}{\lossPower}-1}(t)\frac{1}{\lossPower}\e\left(-\lossPower\left(\wtilde^\top  \xn -1-M''\right)\right)\right]^{\mu_-}\right)\left(1+\tilde{f}_5(t)+C_6 \log^{-\frac{1}{2\lossPower}}(t)\right),
	\end{flalign*}
	where in (1) we used $e^x\ge1+x$.\\
	Using the last equation we can show that eq.~\eqref{3rd term upper bound, xr<0} is negative since:
	\begin{flalign}
		&\left[\left(1+h(t)+\tilde{\gamma}_n \log^{-1}(t)+\tilde{\lambda}_n\log^{-2}(t) f_3(t)+h_2(t)\right)-(1-\e(-\mu_- (\wvec(t)^\top \xn)^\lossPower))\right.&\nonumber\\
		& \left. \e\left(h(t)+\tilde{\gamma}_n\log^{-1}(t)-f_2(t)-\lossPower\left(g(t) + g^{1-\lossPower}(t)\ftilde(t) \right)^{\lossPower-1}\xnT\rvec(t)\right) \right]\nonumber\\
		& \le \left(\tilde{\lambda}_n\log^{-2}(t) f_3(t)+h_2(t)+f_2(t)-C_6 \log^{-\frac{1}{2\lossPower}}(t)\right.\nonumber\\
		&\left.+ \left[\frac{1}{t}\log^{\frac{1}{\lossPower}-1}(t)\frac{1}{\lossPower}\e\left(-\lossPower\left(\wtilde^\top  \xn -\tilde{M}\right)\right)\right]^{\mu_-} \left(1+h(t)+\tilde{\gamma}_n\log^{-1}(t)-f_2(t)+C_6 \log^{-\frac{1}{2\lossPower}}(t)\right)
		\right)\nonumber\\
		& \le 0 
	\end{flalign}
	In the last transition we used the fact that $\log^{-\frac{1}{2\lossPower}}(t)$ decreases slower than the other terms.\\
    
	To conclude, we choose $t_{0}=\max\left[t_2,t_2',t_5,t_{-}'\right]$. 
	We find that $\forall t>t_{0}$ , each term in eq.~\eqref{3rd term bound}
	can be upper bounded by either zero, or terms proportional to $t^{-1}\log^{-\frac{1}{\lossPower}-2}$. Combining this together with eqs. \ref{eq: first term bound, beta<1}, \ref{eq: 2nd term bound, xr bounded, beta<1}, \ref{eq: 2nd term bound, xr not bounded, beta<1} into eq.~\eqref{(r(t+1)-r(t))r(t) bound}
	we obtain (for some positive constants $C_{8}$, $C_{9}$, $C_{10}$) 
	\[
	\left(\rvec(t+1)-\rvec(t)\right)^\top\rvec(t)\le C_8 t^{-2}\log^{\frac{1}{\lossPower}-1}(t) + C_9  t^{-\theta^\lossPower} (\log(t))^{(\frac{1}{\lossPower}-1)\theta^\lossPower} + C_10 t^{-1}\log^{\frac{1}{\lossPower}-2}(t)
	\]
	Therefore, $\exists t_{1}>t_{0}$ and $C_{1}$ such that eq.~\eqref{eq: (r(t+1)-r(t)r(t) bound for beta>1}
	holds.

\end{document}